\documentclass[11pt]{article}

\usepackage[utf8]{inputenc} 
\usepackage[T1]{fontenc}    
\usepackage{hyperref}       
\usepackage{url,authblk}            
\usepackage{booktabs}       
\usepackage{amsfonts}       
\usepackage{nicefrac}       
\usepackage{microtype}      
\usepackage{xcolor}         

\usepackage[round]{natbib}
\usepackage{graphicx}
\usepackage{amsmath,amssymb,mathtools,amsthm}
\usepackage{mathrsfs}
\usepackage{dirtytalk}
\usepackage{tikz}
\usepackage{placeins}
\usepackage{csvsimple}
\usepackage{siunitx}
\usepackage{algorithm}
\usepackage{algorithmic}
\newcommand{\R}{\mathbb{R}}
\newcommand{\E}{\mathbb{E}}

\newcommand{\N}{\mathbb{N}}

\newcommand{\norm}[1]{\left\|#1\right\|}

\newcommand{\X}{\mathcal{X}}
\newcommand{\Rfunc}{\mathcal{R}}

\newcommand{\tf}{J}
\newcommand{\Ham}{\mathscr{H}}

\newcommand{\op}{\mathrm{op}}
\newcommand{\nuc}{\mathrm{nuc}}
\newcommand{\Phieps}{\Phi_{\varepsilon}}
\newcommand{\Psieps}{\Psi_{\varepsilon}}
\newcommand{\Ortheps}{\operatorname{Orth}_{\varepsilon}}
\newcommand{\Orth}{\operatorname{Orth}}
\newcommand{\gradmf}{\mathrm{grad}_{\mf}}
\newcommand{\ip}[2]{\left\langle #1, #2 \right\rangle}
\newcommand{\mf}{\mathrm{avg}}

\newcommand{\mPhieps}{\mathbf{\Phi}_{\varepsilon}}
\newcommand{\mPsieps}{\mathbf{\Psi}_{\varepsilon}}
\newcommand{\Law}{\operatorname{Law}}

\newcommand{\argmin}{\operatorname*{argmin}}

\newcommand{\e}{\mathrm e}
\newcommand{\eps}{\varepsilon}
\newcommand{\Z}{\mathcal{Z}}
\newcommand{\cH}{\mathcal{H}}
\newcommand{\cP}{\mathcal{P}}

\newcommand{\Thetagate}{\Theta_{\mathrm{gate}}}
\newcommand{\Thetaexp}{\Theta_{\mathrm{exp}}}
\newcommand{\dd}{\,\mathrm d}
\newcommand{\diag}{\operatorname{diag}}

\newcommand{\rank}{\operatorname{rank}}
\newcommand{\softmax}{\operatorname{softmax}}

\newcommand{\CE}{\mathrm{CE}}
\newcommand{\Gate}{\mathrm{gate}}

\newcommand{\Hlogit}{\cH_{\mathrm{logit}}}
\newcommand{\Haug}{\cH_{\mathrm{aug}}}
\newcommand{\qtot}{\mathsf q_{\Theta}}

\newcounter{lemmano}
\newcounter{theoremno}
\newcounter{propositionno}

\newcounter{corollarynno}

\newcounter{remarkno}

\newtheorem{theorem}[theoremno]{Theorem}
\newtheorem{lemma}[lemmano]{Lemma}
\newtheorem{proposition}[propositionno]{Proposition}

\newtheorem{corollary}[corollarynno]{Corollary}

\newtheorem{remark}[remarkno]{Remark}

\newenvironment{theorem*}{{\bf Lemma:}}

\newtheorem{asu}{Assumption}

\title{Move on Muon : A Hamiltonian probability gradient flow perspective of Muon optimizer}

\author{Aratrika Mustafi$^{*}$, Soumya Mukherjee and Bharath K. Sriperumbudur}

\affil{Department of Statistics, Pennsylvania State University}

\begin{document}

\maketitle

\begin{abstract}
 We develop a gradient flow on the space of probability measures defined on matrix-valued parameters induced by regularized Muon, an analytically smoothed version of the idealized Muon optimizer. The key observation is that the regularized orthogonalization map is the gradient of a smooth Fenchel-dual smoothing of the nuclear norm. This identifies the (regularized) Muon update as a mirror/prox step in the update variable, with momentum acting as the dual coordinate. We use this structure to lift Muon from a single matrix parameter to finite-particle probability objectives of the form $J(\rho)=R\left(\int F d \rho\right)$, a setting motivated by mean-field descriptions of neural-network training, and derive the inertial continuous-time limit. Using this structure, we derive the finite-particle continuous-time limit under the inertial scaling of step size and momentum, and then pass to a phase-space mean-field equation over probability laws on parameter-momentum pairs. The resulting flow can be shown to be a damped Hamiltonian probability dynamics whose kinetic energy is induced by the regularized Muon mirror potential.  We prove an exact Hamiltonian dissipation identity, showing that the Hamiltonian energy decreases monotonically. While the target objective itself need not be monotone along the inertial Muon dynamics, under additional gradient-dominance, bounded-momentum, and curvature/alignment assumptions, we obtain continuous and discrete-time exponential convergence rates for the objective gap. We also study the well-posedness of the mean-field limit equation and establish propagation of chaos guarantees for the interacting particle system. Finally, we extend the formulation to Hilbert-valued feature maps on product matrix spaces, yielding a blockwise Muon probability flow applicable to smooth transformer mixture-of-experts models. 
\end{abstract}

\section{Introduction}
Optimization methods for deep neural networks often succeed because they exploit structure that is not visible in the scalar-coordinate view of the parameters. As model sizes grow and parameter blocks become highly structured, the geometry imposed by the optimizer can substantially influence both stability and speed. This is particularly visible for matrix-valued parameters, where the Frobenius geometry used by standard gradient methods is only one possible choice.
Standard first-order optimizers such as SGD, Adam, and their variants treat these parameters largely through coordinatewise Euclidean updates. While this has been extremely effective in practice, recent work has shown that matrix geometry can play a more explicit role in optimization, particularly when the update direction is constrained or normalized according to spectral information.

The Muon optimizer introduced by \cite{jordan2024muon} is a recent and influential example of this matrix-geometric viewpoint. At a high level, Muon maintains a momentum variable for each matrix parameter and updates the parameter in an orthogonalized version of that momentum direction. Empirical studies have shown that such orthogonalized matrix updates can be competitive in language-model training, and recent large-scale implementations have investigated the extent to which Muon can be made practical for LLM training through suitable scaling, weight decay, and implementation choices \citep{liu2025muonscalablellmtraining}.
 In the idealized form, as considered in this paper, the orthogonalization map keeps the singular vectors of the momentum matrix and replaces its nonzero singular values by one. Thus, rather than moving directly in the raw momentum direction, Muon moves in a polar-factor direction. This operation makes the update insensitive to the scale of the singular values but highly sensitive to their singular subspaces. Practical implementations approximate this orthogonalization numerically, for instance using Newton-Schulz iterations, but the exact polar-factor map captures the central geometric mechanism of the method.

 These developments motivate a theoretical question : \textit{\textbf{What continuous-time and probability-space dynamics are naturally associated with a momentum optimizer whose update direction is produced by spectral orthogonalization?}
}

A first answer is already suggested by convex geometry. The polar-factor direction is not an arbitrary normalization: it is a steepest descent direction over a spectral-norm unit ball. Equivalently, the ideal Muon step solves a linear minimization problem under an operator-norm constraint. This perspective has been developed in recent work on non-Euclidean trust-region interpretations of gradient orthogonalization and norm-constrained linear minimization oracles \citep{kovalev2025understanding,pethick2025training}. Related viewpoints also connect Muon-type methods to implicit spectral constraints and broader families of spectral optimizers \citep{chen2025muonoptimizes}.

This paper starts from this existing observation but takes a different route. 
We take the spectral-norm trust-region interpretation of hard Muon as a starting point and develop from it a probabilistic gradient-flow perspective on Muon with momentum.

The key technical obstacle is that the hard orthogonalization map on the momentum variable $P$, \(P\mapsto \operatorname{Orth}(P)\) is nonsmooth. Here, $\operatorname{Orth}(P) = U V^{\top}$ with $ P= U \Sigma V^{\top}, \Sigma = \diag(\sigma_1(P), \cdots, \sigma_q(P))$, is the compact/reduced SVD of $P$.

This nonsmoothness is not merely a technical inconvenience: it changes the nature of the continuous-time limit, replacing an ordinary differential equation by a differential inclusion.  To obtain a well-defined smooth flow, we introduce a regularized Muon map
\[
        \operatorname{Orth}_\varepsilon(P)
        =
        U \operatorname{diag}\!\left(
        \frac{\sigma_i(P)}{\sqrt{\sigma_i(P)^2+\varepsilon^2}}
        \right)V^\top,
        \qquad \varepsilon>0,
\]
which keeps the singular directions of \(P\)  but replaces the hard saturation of singular values by a smooth saturation. As \(\varepsilon\downarrow 0\), this map converges pointwise to the hard polar-factor map on fixed-rank matrices. The role of the present regularization is to enable an exact variational representation of the orthogonalization operation needed for the Hamiltonian probability dynamics developed in the next section. The regularized Muon update then becomes a genuine mirror/prox step in the update or velocity variable, with the momentum serving as the corresponding dual coordinate. This type of regularization has been recently used as a technical tool in \cite{kim2026sharpcapacityscalingspectral}.

The second goal of the paper is to lift this mirror interpretation from a single matrix update to a probability-space description. We consider functionals of the form
\[
J(\rho)=R\!\left(\int F(W)\,d\rho(W)\right),
\]
where $\rho$ is a probability measure over matrix-valued parameters (or tuples of matrix-valued parameters).

This includes finite-particle objectives obtained from empirical measures, and it also provides a convenient mean-field perspective for studying populations of parameters. For an $N$-particle empirical law, the regularized Muon update induces a coupled particle system in the phase variables $(W_i,P_i)$. Under the inertial scaling of step-sizes, the discrete dynamics converge to finite-particle ODEs, leading to a McKean-Vlasov continuity equation for $\mu_t=\mathrm{Law}(W_t,P_t)$.
The resulting probability flow  has a damped Hamiltonian flow structure, instead of being an ordinary Wasserstein gradient flow. Hamiltonian-flow formalisms have been used in literature to study momentum-based acceleration on the space of probability measures \citep{wang2022accelerated, chen2025accelerating}. In our setting, the Hamiltonian structure is specific to Muon and is induced by the mirror map associated with regularized nuclear-norm smoothing.
The resulting flow satisfies an exact Hamiltonian dissipation identity, which clarifies why the objective $J$ itself need not decrease monotonically. As in other inertial or accelerated systems, energy can move between the objective and the momentum variable, while the total damped Hamiltonian decreases. Recent work  \cite{peyre2026muondynamicsspectralwasserstein} considers specral Wasserstein flow whose ODE limit corresponds to a momentum-free Muon update with an extra nuclear norm scaling. Our Hamiltonian probability-flow formulation is closer to the version of Muon largely implemented in practice since it preserves the momentum variable, treats momentum as the dual coordinate of a smooth mirror map, and recovers the hard polar Muon update as \(\varepsilon \downarrow 0\).

We then extend the formulation from a single matrix space to finite product spaces of matrix blocks. This allows the same mirror geometry to act on all blocks of structured models. This is important for neural network layers whose parameters consist of several matrices, and especially for mixture-of-experts models with both expert and router parameters. In the extended product space \(\Theta\), the regularized Muon potential is block-separable, and the corresponding orthogonalization (mirror) map \(\operatorname{Orth}_\varepsilon^\Theta\) applies the spectral regularization to each matrix block in a separable manner. This gives a product-space mirror step, a product-space Hamiltonian, and a corresponding dissipation identity. This generalization to product spaces allows us to analyze equally weighted mixture-of-experts where each expert model is a neural network or a transformer, as well as smooth-routing/selection variants of such mixture-of-experts. For transformer mixture-of-experts models, input-dependent routing is encoded inside the Hilbert-valued feature map $F$. Smooth unnormalized gates and softmax-normalized gates fit directly into the framework, while exact hard top-$k$ routing lies outside the smooth theory and can be treated as a nonsmooth limiting case. This distinction is consistent with prior work on sparse mixture-of-experts routing and noisy router smoothing, emphasizing the discontinuity of sparse routing and the smoothing effect of noisy or softened routers \citep{shazeer2017outrageously, fedus2021switch, chen2022towards,vaswani2017attention}.

\section{ Muon as a mirror regularized
trust-region step}

In this section, we isolate the discrete-time Muon update for a single matrix variable and recast it in a form that admits a smooth mirror-map regularization. This reformulation interprets the orthogonalization step in the Muon update as the solution to a variational problem on matrix space. The resulting variational structure then enables the development of a Hamiltonian formulation of the Muon dynamics in later sections.

Let \(\mathcal{X}=\mathbb{R}^{m \times n}\) with Frobenius pairing \(\langle A, B\rangle_{F}= \operatorname{tr}\left(A^{\top} B\right)\), and let \(q=\min (m, n)\). For \(P=U \Sigma V^{\top}\) define, 
\[
\operatorname{Orth}(P)=U V^{\top}, \quad \operatorname{Orth}(0)=0 .
\]

The idealized Muon update for a smooth objective $F$ : $\mathcal{X} \rightarrow \mathbb{R}$ is given by,
\begin{equation}
\begin{aligned}
P_{k+1} & =\beta P_k+(1-\beta) \nabla F\left(W_k\right), \\
W_{k+1} & =W_k-\eta \operatorname{Orth}\left(P_{k+1}\right) ,
\end{aligned}
\end{equation}
where \(P\) denotes the momentum variable and \(W\) denotes the spatial variable. The following proposition demonstrates the convex-analytic structure behind this update.
\begin{proposition}[Spectral trust-region form of Muon] \label{Proposition 1} For every $P \in \mathcal{X}$,
\begin{equation}\begin{aligned}\label{Constrained optimization form of Orthogonalization operation}
\|P\|_{\text {nuc }} =\sup _{\|G\|_{\mathrm{op}} \leq 1}\langle P, G\rangle_F, \quad
-\operatorname{Orth}(P)  \in \underset{\|G\|_{\mathrm{op}} \leq 1}{\operatorname{argmin}}\langle P, G\rangle_F .
\end{aligned}
\end{equation}
Equivalently, if $\Phi_0(G)=\iota_{\left\{\|G\|_{\mathrm{op}} \leq 1\right\}}(G)$ is the $0/\infty$-indicator of the spectral-norm unit ball, then
\[\label{spectral trust-norm of Muon update}
\begin{aligned}
G_{k+1} & \in \underset{G}{\operatorname{argmin}}\left\{\left\langle P_{k+1}, G\right\rangle_F+\Phi_0(G)\right\}, \\
W_{k+1} & =W_k+\eta G_{k+1} .
\end{aligned}
\]
recovers ideal Muon after choosing the canonical minimizer $G_{k+1}=-\operatorname{Orth}\left(P_{k+1}\right)$.
\end{proposition}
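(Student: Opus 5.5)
The plan is to reduce everything to von Neumann's trace inequality together with an explicit evaluation at $G=\operatorname{Orth}(P)$. I will show two inequalities, $\sup_{\|G\|_{\op}\le 1}\langle P,G\rangle_F\le\|P\|_{\nuc}$ and $\langle P,\operatorname{Orth}(P)\rangle_F=\|P\|_{\nuc}$, with $\|\operatorname{Orth}(P)\|_{\op}\le 1$. Together these give the duality formula and show that $\operatorname{Orth}(P)$ attains the supremum.

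For the upper bound, take any $G$ with $\|G\|_{\op}\le 1$. Von Neumann's inequality gives $\langle P,G\rangle_F=\operatorname{tr}(P^\top G)\le\sum_{i=1}^q\sigma_i(P)\sigma_i(G)\le\sum_i\sigma_i(P)=\|P\|_{\nuc}$. If one prefers to avoid quoting von Neumann, the same bound follows from the full SVD $P=\sum_i\sigma_i u_i v_i^\top$. Then $\langle P,G\rangle_F=\sum_i\sigma_i\, u_i^\top G v_i$, and each term satisfies $|u_i^\top G v_i|\le\|G\|_{\op}\le 1$ because $u_i,v_i$ are unit vectors. This elementary version is the route I will actually write, since it is self-contained.

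For attainment, write the compact SVD $P=U\Sigma V^\top$, where $U^\top U=V^\top V=I_r$ and $r=\rank P$. Then $\operatorname{Orth}(P)=UV^\top$ is a partial isometry, so its nonzero singular values all equal $1$ and $\|UV^\top\|_{\op}\le 1$; when $P=0$ it is the zero matrix. Computing directly, $\langle P,UV^\top\rangle_F=\operatorname{tr}(V\Sigma U^\top UV^\top)=\operatorname{tr}(\Sigma V^\top V)=\operatorname{tr}\Sigma=\|P\|_{\nuc}$. The case $P=0$ is trivial. I expect the only delicate point to be checking that $\operatorname{Orth}(P)$ is well defined when singular values repeat, i.e. that $UV^\top$ does not depend on the choice of SVD. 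This is not needed for the statement itself, since any choice attains the supremum, so I will just remark on it and note that $UV^\top$ is the polar factor $P(P^\top P)^{+1/2}$.

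The argmin claim follows because the feasible set $\{\|G\|_{\op}\le 1\}$ is symmetric under $G\mapsto -G$. Hence $\inf_{\|G\|_{\op}\le1}\langle P,G\rangle_F=-\sup_{\|G\|_{\op}\le1}\langle P,G\rangle_F=-\|P\|_{\nuc}$, and this infimum is attained at $-\operatorname{Orth}(P)$. For the equivalent reformulation, $\langle P_{k+1},G\rangle_F+\Phi_0(G)$ equals $+\infty$ outside the ball and equals $\langle P_{k+1},G\rangle_F$ inside it. Its argmin set is therefore exactly the constrained argmin just computed. Choosing the canonical element $G_{k+1}=-\operatorname{Orth}(P_{k+1})$ gives $W_{k+1}=W_k-\eta\operatorname{Orth}(P_{k+1})$, which is the ideal Muon update. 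I will also note that the argmin need not be a singleton when $P$ is rank deficient, since $G$ can be modified freely on the complements of the column and row spaces of $P$. This is why the statement uses "$\in$" and speaks of a canonical minimizer.
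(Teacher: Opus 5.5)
Your proposal is correct and follows the same route as the paper: the upper bound $\langle P,G\rangle_F\le\|P\|_{\mathrm{nuc}}$ on the spectral unit ball, equality at $UV^\top$, sign reversal to get the argmin, and the indicator reformulation; your rank-one expansion $\sum_i\sigma_i\,u_i^\top G v_i$ with $|u_i^\top G v_i|\le\|G\|_{\mathrm{op}}$ is just a self-contained substitute for the von Neumann trace inequality the paper cites. One small correction to your closing remark: the other minimizers are $-(UV^\top+Z)$ with $U^\top Z=0$, $ZV=0$ and $\|Z\|_{\mathrm{op}}\le 1$, so the modification on the complementary subspaces is restricted by the operator-norm bound rather than being free.
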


The above proposition says that Muon linearizes through the momentum \(P_{k+1}\) and then takes the steepest direction allowed by a spectral norm unit ball. The non-smoothness enters since \(\Phi_0^*=\|\cdot\|_{\text {nuc }}\) is not differentiable at rank-deficient matrices. To regularize this singular structure and obtain a smooth dynamics we use the scalar Fenchel (convex) conjugate pair of functions $\psi_{\varepsilon}(a) =\sqrt{a^2+\varepsilon^2}-\varepsilon$ and $\phi_{\varepsilon}(b) =\varepsilon\left(1-\sqrt{1-b^2}\right)+\iota_{[-1,1]}(b)$
for \(\varepsilon>0\). Lifting through singular values to the matrix space gives the corresponding Fenchel conjugates
\begin{equation}\label{Fenchel conjugates}
\begin{aligned}
\Psi_{\varepsilon}(P)=\sum_{i=1}^q\left(\sqrt{\sigma_i(P)^2+\varepsilon^2}-\varepsilon\right), \qquad \Phi_{\varepsilon}(G)= \begin{cases}\varepsilon \sum_{i=1}^q\left(1-\sqrt{1-\sigma_i(G)^2}\right), & \|G\|_{\mathrm{op}} \leq 1, \\
+\infty, & \text { otherwise } .\end{cases}
\end{aligned}
\end{equation}
Then \(\Phi_{\varepsilon}^*=\Psi_{\varepsilon}\) and \(\Psi_{\varepsilon}^*=\Phi_{\varepsilon}\). If \(P=U \operatorname{diag}\left(\sigma_1, \ldots, \sigma_s\right) V^{\top}\), we define
\begin{equation}\label{regularized Orth}
\operatorname{Orth}_{\varepsilon}(P)=U \operatorname{diag}\left(\frac{\sigma_i}{\sqrt{\sigma_i^2+\varepsilon^2}}\right)_{i=1}^s V^{\top} .
\end{equation}

The map is \(1 / \varepsilon\)-Lipschitz in Frobenius norm, \(\left\|\operatorname{Orth}_{\varepsilon}(P)\right\|_{\mathrm{op}}<1\) and \(\operatorname{Orth}_{\varepsilon}(P) \rightarrow \operatorname{Orth}(P)\) as \(\varepsilon \downarrow 0\) for every fixed \(P\).
The pair \((\phi_{\varepsilon}, \psi_{\varepsilon})\) should be viewed as a spectral analogue of a smooth saturation. At the scalar level, \(a \mapsto \psi_{\varepsilon}^{\prime}(a)=\frac{a}{\sqrt{a^2+\varepsilon^2}}\) maps the real line into \((-1,1)\) and tends pointwise to the sign map as \(\varepsilon \downarrow 0\). Thus \(\operatorname{Orth}_{\varepsilon}\) keeps the singular directions of \(P\) and replaces each singular value by a softened value in \([0,1)\). The conjugate \(\phi_{\varepsilon}\) keeps the spectral-ball domain, but it replaces the hard indicator by a smooth barrier-like penalty inside the ball. This is precisely the structure needed to pass between the nonsmooth trust-region interpretation and a smooth mirror map. Further, due to the identity \(\nabla\Psieps(P)
    =
    U\diag\left(
    \frac{\sigma_1(P)}{\sqrt{\sigma_1(P)^2+\eps^2}},
    \ldots,
    \frac{\sigma_s(P)}{\sqrt{\sigma_s(P)^2+\eps^2}}
    \right)V^\top=\operatorname{Orth}_{\varepsilon}(P)\), the soft-orthogonalization operation is exactly a smooth mirror-map, with its inverse on the open unit spectral-norm ball being exactly \(\nabla\Phieps(G)
    =
    \widetilde U\diag\left(
    \frac{\eps\sigma_1(G)}{\sqrt{1-\sigma_1(G)^2}},
    \ldots,
    \frac{\eps\sigma_s(G)}{\sqrt{1-\sigma_s(G)^2}}
    \right)\widetilde V^\top\), with reduced SVD \(G=\widetilde U\diag(\sigma_1(G),\ldots,\sigma_s(G))\widetilde V^\top\) where $s=\rank(G)$. Hence the regularized direction is not chosen by an ad hoc smoothing of \(\operatorname{Orth}\); it is the primal minimizer associated with the Fenchel-dual kinetic potential \(\Psi_{\varepsilon}\). The same conjugate pair becomes the kinetic term and the mirror geometry in the Hamiltonian probability flow later on.
\begin{proposition}[Smooth regularized Muon step]\label{Proposition 2}
    For every \(\varepsilon>0\), the problem
\begin{equation}\label{regularized G}
G_{\varepsilon}(P)=\underset{G}{\operatorname{argmin}}\left\{\langle P, G\rangle_F+\Phi_{\varepsilon}(G)\right\}
\end{equation}
has the unique solution \(G_{\varepsilon}(P)=-\operatorname{Orth}_{\varepsilon}(P)\). Hence the regularized Muon update is
\begin{equation}
\begin{aligned}
P_{k+1} & =\beta P_k+(1-\beta) \nabla F\left(W_k\right), \\
W_{k+1} & =W_k-\eta \operatorname{Orth}_{\varepsilon}\left(P_{k+1}\right) .
\end{aligned}
\end{equation}

Moreover, if \(G_{\varepsilon, k}=-\operatorname{Orth}_{\varepsilon}\left(P_k\right)\), then
\(
G_{\varepsilon, k+1}=\underset{G}{\operatorname{argmin}}\left\{\left\langle P_{k+1}-P_k, G\right\rangle_F+D_{\Phi_{\varepsilon}}\left(G, G_{\varepsilon, k}\right)\right\} .
\)
\end{proposition}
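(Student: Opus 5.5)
The plan is to reduce everything to convex duality for the conjugate pair $(\Phieps,\Psieps)$ stated before the proposition, together with the gradient identity $\nabla\Psieps=\Ortheps$.

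\textbf{Step 1: the minimizer is $-\Ortheps(P)$.} First, $\Phieps$ is proper, lower semicontinuous and strictly convex on its domain, the closed unit spectral ball. Convexity follows from the Lewis/von Neumann theory of unitarily invariant spectral functions: $\phi_\eps$ is even, convex and strictly convex on $[-1,1]$, so the lifted function $\Phieps$ is convex and inherits strict convexity. The objective $G\mapsto\langle P,G\rangle_F+\Phieps(G)$ is then strictly convex and coercive, since its domain is compact. Hence it has a minimizer, and that minimizer is unique.

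To identify the minimizer, I use the variational form of the conjugate:
\[
\min_G\{\langle P,G\rangle_F+\Phieps(G)\} = -\sup_G\{\langle -P,G\rangle_F-\Phieps(G)\} = -\Psieps(-P).
\]
The minimizers are exactly $\partial\Phieps^*(-P)=\partial\Psieps(-P)$. Since $\Psieps$ is differentiable everywhere, this subdifferential is the singleton $\{\nabla\Psieps(-P)\}=\{\Ortheps(-P)\}$, and $\Ortheps(-P)=-\Ortheps(P)$ by oddness of the singular-value map.

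I also want to verify $\nabla\Psieps=\Ortheps$ directly. The scalar $\psi_\eps$ is smooth and even, so the derivative formula for spectral functions, $\nabla(\psi\circ\sigma)(P)=U\diag(\psi'(\sigma_i))V^\top$, applies even at repeated or zero singular values; the zero singular values contribute nothing because $\psi_\eps'(0)=0$. This gives the update formula directly.

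\textbf{Step 2: the Bregman (mirror) form.} The point $G_{\eps,k}=-\Ortheps(P_k)$ lies in the open ball $\{\|G\|_{\op}<1\}$, where $\Phieps$ is differentiable. There, $\nabla\Phieps$ is the inverse of $\nabla\Psieps$ restricted to the open ball. By the Fenchel--Young equality case and oddness,
\[
\nabla\Phieps(G_{\eps,k})=-P_k .
\]
Expanding the Bregman divergence,
\[
D_{\Phieps}(G,G_{\eps,k})=\Phieps(G)-\Phieps(G_{\eps,k})-\langle\nabla\Phieps(G_{\eps,k}),G-G_{\eps,k}\rangle_F .
\]
Dropping terms constant in $G$, the objective becomes
\[
\langle P_{k+1}-P_k,G\rangle_F+\Phieps(G)+\langle P_k,G\rangle_F = \langle P_{k+1},G\rangle_F+\Phieps(G).
\]
By Step 1, its unique minimizer is $-\Ortheps(P_{k+1})=G_{\eps,k+1}$.

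\textbf{Main obstacle.} The main obstacle is rigor at the boundary $\|G\|_{\op}=1$, where $\Phieps$ is not differentiable because $\phi_\eps'(b)\to\infty$ as $b\to\pm1$. Two things need care. First, $D_{\Phieps}(G,G_{\eps,k})$ must be well defined for all $G$; it is, taking the value $+\infty$ off the domain, because only differentiability at the base point $G_{\eps,k}$ is required. Second, the inverse relation $\nabla\Phieps=(\nabla\Psieps)^{-1}$ on the open ball must be justified. For this I will check at the scalar level that $b\mapsto\eps b/\sqrt{1-b^2}$ inverts $a\mapsto a/\sqrt{a^2+\eps^2}$, and then lift it through a common SVD. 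The common SVD exists because both maps preserve singular vectors.
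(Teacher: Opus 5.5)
Your proposal is correct and follows the paper's route: the conjugacy $\Phieps^*=\Psieps$, the identity $\nabla\Psieps=\Ortheps$ together with oddness, and the same Bregman expansion based on $\nabla\Phieps(G_{\eps,k})=-P_k$. The one refinement is in Step 1, where you identify the minimizer set as $\partial\Psieps(-P)$ instead of using the paper's first-order condition $0=P+\nabla\Phieps(G)$; that condition tacitly assumes the minimizer lies in the open spectral ball, whereas your route gives existence, uniqueness and interiority at once, which also makes your separate compactness and strict-convexity argument unnecessary.
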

\begin{remark}[Why regularization matters]
The hard \(\operatorname{map} P \mapsto \operatorname{Orth}(P)\) is a selected element of \(\partial\|P\|_{\text {nuc }}\) and is set-valued at rank-deficient momenta. The smooth family above preserves the spectral trust-region geometry while giving a single-valued Lipschitz vector field. This is the ingredient that makes the ODE, PDE, and propagation-of-chaos arguments standard rather than differential-inclusion arguments.
\end{remark}
\textbf{Three equivalent views of the same update}. For later use it is helpful to keep three interpretations in parallel. The hard step is a spectral trust-region steepest descent direction. The nonsmooth mirror form says that the same step is generated by the indicator of the spectral unit ball, whose conjugate is the nuclear norm. The regularized step says that Muon is the zero-temperature limit of a smooth mirror family. These are not competing descriptions: the trust-region view explains the geometry, the mirror view provides the variational update, and the smooth Fenchel pair provides the analytic regularity needed for continuous-time and mean-field limits.

\textbf{Why not regularize by adding \(\varepsilon\|G\|_F^2\) ?} A Euclidean quadratic regularization would also make the direction unique, but it would change the saturation geometry of Muon. The construction above preserves the spectral unit-ball domain and only smooths the singular-value saturation. Consequently, the limit $\varepsilon \downarrow 0$ returns the canonical polar-factor direction rather than a Euclidean steepest-descent direction. This is the reason the regularization is tied to a Fenchel pair rather than introduced as a generic numerical smoothing.

\section{Probability lift and finite particle Muon}
We now minimize a general class of functionals defined over the space of probability measures on \(\mathcal{X}\) denoted as \(\mathcal{P}(\mathcal{X})\) using the dynamics induced by (regularized) Muon. Let
\[
J(\rho)=R\left(\int_{\mathcal{X}} F(W) \mathrm{d} \rho(W)\right), \quad \rho \in \mathcal{P}(\mathcal{X}),
\]
where \(F: \mathcal{X} \rightarrow \mathbb{R}\) and \(R: \mathbb{R} \rightarrow \mathbb{R}\). For particles \(W= \left(W_1, \ldots, W_N\right)\) define the empirical law \(\rho_W^N=\frac{1}{N} \sum_{i=1}^{N} \delta_{W_i}\) and the empirical particle lift
\[
J_N(W)\coloneqq J\left(\rho_W^N\right)=R\left(\frac{1}{N} \sum_{i=1}^N F\left(W_i\right)\right) .
\]
The natural product geometry over particles is the mean-field pairing
\(
\langle U, V\rangle_{\mathrm{avg}}=\frac{1}{N} \sum_{i=1}^N\left\langle U_i, V_i\right\rangle_F .
\)
The mean-field pairing makes the empirical objective \(J_N=J\left(\rho_W^N\right)\) an intensive energy and makes \(\operatorname{grad}_{\text {avg }} J_N\) coincide with the particle discretization of the Wasserstein force and keeps the probability objective and particle geometry aligned.

The class \(\tf(\rho)=\Rfunc\left(\int F \mathrm{~d} \rho\right)\) is deliberately simple, yet sufficiently expressive to capture population objectives in which a distribution over parameters induces an averaged feature or prediction. For simplicity, we restrict attention in the main text to real-valued functions \(F\). However, ML applications involving neural networks or transformers naturally require extensions to vector-valued outputs and inputs defined on product spaces of matrices. The results and proofs extend to these settings in a standard manner.

For deriving a well-posed dynamics, we need to impose some standard regularity conditions.
\begin{asu}[Basic smoothness]\label{ass:Basic smoothness}\(F \in C^1(\mathcal{X}), \nabla F\) is globally Lipschitz, and \(\|\nabla F(W)\|_F \leq M_F\) for all \(W\). The derivative \(R^{\prime}\) is globally Lipschitz with constant \(L_R\) and is globally bounded with constant $M_R$.
\end{asu}
Under Assumption ~\ref{ass:Basic smoothness}, the probability functional $\tf$ admits a well-defined first variation and Wasserstein gradient that drives the Muon dynamics. 

\begin{proposition}[First variation and particle gradient]\label{Proposition 3}
    Let \(m_\rho=\int F(W) \mathrm{d} \rho(W)\). A valid first variation of \(J\) and its Wasserstein gradient \(\nabla_{W_2} J(\rho)(W)\) are given by
\begin{equation}\label{Wasserstein gradient of J_N}
\begin{aligned}
\frac{\delta J}{\delta \rho}(\rho)(W) =R^{\prime}\left(m_\rho\right) F(W), \qquad
\nabla_{W_2} J(\rho)(W) =R^{\prime}\left(m_\rho\right) \nabla F(W) .
\end{aligned}
\end{equation}
For \(F^N(W)=\frac{1}{N} \sum_{j=1}^{N} F\left(W_j\right)\), the gradient of \(J_N\) under the natural product geometry is
\begin{equation}\label{gradient of J_N}
\begin{aligned}
\operatorname{grad}_{\text {avg }} J_N(W)  =a(W), \quad
a_i(W) =R^{\prime}\left(F^N(W)\right) \nabla F\left(W_i\right) .
\end{aligned}
\end{equation}
\end{proposition}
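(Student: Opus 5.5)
We compute the first variation directly from its defining property: for $\rho,\nu\in\mathcal P(\X)$ with $\int|F|\,d\rho,\int|F|\,d\nu<\infty$ and $h\in[0,1]$, set $\rho_h=\rho+h(\nu-\rho)$. By linearity of the integral, $m_{\rho_h}=m_\rho+h\int F\,d(\nu-\rho)$. Since $\Rfunc$ is $C^1$ by Assumption~\ref{ass:Basic smoothness}, the chain rule gives
\[
\frac{d}{dh}\Big|_{h=0^+}\tf(\rho_h)=\Rfunc'(m_\rho)\int F\,d(\nu-\rho)=\int \Rfunc'(m_\rho)F(W)\,d(\nu-\rho)(W).
\]
This identifies $\frac{\delta \tf}{\delta\rho}(\rho)(W)=\Rfunc'(m_\rho)F(W)$, which is unique up to an additive constant; this is why the statement says ``a valid'' first variation. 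To be safe, I would quantify the remainder. Because $\Rfunc'$ is $L_R$-Lipschitz, the mean value theorem bounds the error of the linearization by $\tfrac{L_R}{2}h^2\big(\int F\,d(\nu-\rho)\big)^2$, so the limit is genuine and not merely formal.

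Next comes the Wasserstein gradient, defined as the spatial gradient of the first variation. The factor $\Rfunc'(m_\rho)$ is a scalar that does not depend on $W$, so $\nabla_{W_2}\tf(\rho)(W)=\Rfunc'(m_\rho)\nabla F(W)$. Assumption~\ref{ass:Basic smoothness} guarantees that $F\in C^1$ and that this field is bounded by $M_RM_F$. It therefore lies in $L^2(\rho)$ for every $\rho$, which is what the velocity-field interpretation requires. If desired, one can confirm this by checking the identity $\frac{d}{dt}\tf(\rho_t)=\int\langle\nabla_{W_2}\tf,v_t\rangle_F\,d\rho_t$ along $\partial_t\rho_t+\nabla\cdot(\rho_tv_t)=0$ for smooth curves. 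This follows from the computation above together with integration by parts.

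For the particle statement, $J_N(W)=\Rfunc(F^N(W))$ is $C^1$ on $\X^N$. Its Euclidean partial derivative in the block $W_i$ is $\frac1N\Rfunc'(F^N(W))\nabla F(W_i)$. The mean-field pairing $\langle U,V\rangle_{\mf}=\frac1N\sum_i\langle U_i,V_i\rangle_F$ is the Frobenius pairing on $\X^N$ scaled by $1/N$. The Riesz representative with respect to it is therefore $N$ times the Euclidean gradient. In other words, $\gradmf J_N$ is characterized by $dJ_N(W)[V]=\langle\gradmf J_N(W),V\rangle_{\mf}$ for all $V$, and this cancels the $1/N$ to give $a_i(W)=\Rfunc'(F^N(W))\nabla F(W_i)$. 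Finally, because $m_{\rho^N_W}=F^N(W)$, we get $a_i(W)=\nabla_{W_2}\tf(\rho^N_W)(W_i)$, which is the consistency remark stated before the proposition.

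No step here is a real obstacle. The only delicate point is deciding which class of $\nu$ to allow in the first variation so that $\int F\,d\nu$ is finite. Since $\nabla F$ is bounded, $F$ grows at most linearly, so it suffices to restrict to $\mathcal P_1(\X)$, or to $\mathcal P_2(\X)$ in the Wasserstein setting.
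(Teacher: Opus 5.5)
Your proposal is correct and follows essentially the same route as the paper: you perturb $\rho$ linearly (you use $\rho+h(\nu-\rho)$, the paper uses $\rho+s\nu$ with $\nu(\mathcal{X})=0$), apply the chain rule to $R(m_{\rho_h})$, take the spatial gradient, and identify $\operatorname{grad}_{\mathrm{avg}}J_N$ from $DJ_N(W)[U]=\langle a(W),U\rangle_{\mathrm{avg}}$. Your quantitative remainder bound and the restriction to $\mathcal{P}_1(\mathcal{X})$, which makes $\int F\,d\nu$ finite, are harmless refinements of details the paper leaves implicit.
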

The regularized Muon mirror potentials on \(\mathcal{X}^N\) are block-separable:
\[
\Phi_{\varepsilon}^N(G)=\frac{1}{N} \sum_{i=1}^{N} \Phi_{\varepsilon}\left(G_i\right), \quad \Psi_{\varepsilon}^N(P)=\frac{1}{N} \sum_{i=1}^{N} \Psi_{\varepsilon}\left(P_i\right) .
\]

With respect to \(\langle\cdot, \cdot\rangle_{\text {avg }}\) they remain Fenchel conjugates. The exact finite-particle regularized Muon scheme is therefore
\begin{equation}\label{Finite particle regularized Muon scheme}
\begin{aligned}
P_{i, k+1} & =\beta P_{i, k}+(1-\beta) R^{\prime}\left(F_k\right) \nabla F\left(W_{i, k}\right), \\
W_{i, k+1} & =W_{i, k}-\eta \operatorname{Orth}_{\varepsilon}\left(P_{i, k+1}\right),
\end{aligned}
\end{equation}
where \(F_k=\frac{1}{N} \sum_{j=1}^{N} F\left(W_{j, k}\right)\).
The scheme is exact for the regularized mirror problem: no approximation has been made beyond the choice of \(\varepsilon>0\). The only coupling among particles is through the empirical scalar \(F_k\), while the orthogonalization is blockwise and particlewise. This separation is central to the mean-field analysis. The interaction enters through the force, while the non-Euclidean geometry enters through the kinetic mirror map \(P \mapsto \operatorname{Orth}_{\varepsilon}(P)\).

A useful way to understand Equation \eqref{Finite particle regularized Muon scheme} is as a relaxation system. The momentum \(P_{i, k}\) is a moving average of the current first-variation force, and the position is transported by the mirror-dual velocity generated by that momentum. The limiting ODE derived below preserves this two-time-scale structure.
\section{Continuous-time and Hamiltonian probability dynamics}
To analyze the continuous-time dynamics, we choose the inertial scaling
\begin{equation}\label{inertial scaling}
\eta_h=h, \quad \beta_h=1-\gamma h+r_h, \quad r_h / h \rightarrow 0,
\end{equation}
with \(\gamma>0\). The state variable is the matrix and momentum pair \(Y=(W, P)\) belonging to the state space \(\mathcal{Z}\coloneqq \mathcal{X} \times \mathcal{X}\). Let \(\mathcal P_1(\mathcal Z)\) denote the space of probability measures on
\(\mathcal Z\) with finite first moment, equipped with the \(W_1\)-Wasserstein topology. Under the stated regularity conditions, we derive the continuous-time evolution both at the particle as well as the distributional law level for the state-space variables.
\begin{theorem}[Finite-particle ODE limit and phase space PDE]\label{McKean-Vlasov formulation}
Under Assumption \ref{ass:Basic smoothness}, for fixed \(N\) and fixed initial data, the piecewise linear interpolation of Equation \eqref{Finite particle regularized Muon scheme} converges uniformly on every $[0, T]$ to the unique global solution of
\begin{equation}\label{Finite particle ODE limit}
\begin{aligned}
\dot{W}_i(t) & =-\operatorname{Orth}_{\varepsilon}\left(P_i(t)\right) \\
\dot{P}_i(t) & =\gamma\left(R^{\prime}\left(F^N(t)\right) \nabla F\left(W_i(t)\right)-P_i(t)\right),
\end{aligned}
\end{equation}
where \(F^N(t)=\frac{1}{N} \sum_{j=1}^N F\left(W_j(t)\right)\). If \(r_h=O\left(h^2\right)\), the convergence rate is \(O(h)\) on finite horizons.

Let \(Y_t=\left(W_t, P_t\right) \in \mathcal{Z}= \mathcal{X} \times \mathcal{X}\), let \(\mu_t=\operatorname{Law}\left(Y_t\right)\), let \(\rho_t=\left(\pi_W\right)_{\#} \mu_t\), and set \(a_t(W)=\Rfunc^{\prime}\left(\int F \mathrm{~d} \rho_t\right) \nabla F(W)\). The mean-field law solves
\begin{equation}\label{Mc-Kean Vlasov equation}
\partial_t \mu_t+\nabla_W \cdot\left(-\operatorname{Orth}_{\varepsilon}(P) \mu_t\right)+\nabla_P \cdot\left(\gamma\left(a_t(W)-P\right) \mu_t\right)=0 .
\end{equation}
The weak solution is unique in \(C\left([0, T] ; \mathcal{P}_1(\mathcal{Z})\right)\) under the corresponding global Lipschitz hypotheses, and under localized hypotheses as long as the trajectory remains in the bounded region on which the constants are finite.
\end{theorem}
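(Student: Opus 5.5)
The plan has three parts. First, we show that the finite-particle vector field is globally Lipschitz. Second, we treat the discrete scheme as a perturbed explicit Euler method and close with a discrete Gronwall argument. Third, we lift to the law level using a Dobrushin-type stability estimate in $W_1$.

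\textbf{Lipschitz field.} Write $Y=(W_1,\dots,W_N,P_1,\dots,P_N)\in\mathcal Z^N$ and
\[
b(Y)=\bigl(-\Ortheps(P_i),\ \gamma(R'(F^N(W))\nabla F(W_i)-P_i)\bigr)_{i=1}^N .
\]
$\Ortheps=\nabla\Psieps$ is $1/\varepsilon$-Lipschitz, as stated after \eqref{regularized Orth}. By Assumption~\ref{ass:Basic smoothness}, $W\mapsto F^N(W)$ is Lipschitz with constant at most $M_F$, since $|F(W)-F(W')|\le M_F\|W-W'\|_F$. Consequently
\[
|R'(F^N)\nabla F(W_i)-R'(F^N{}')\nabla F(W_i')|\le L_R M_F^2\max_j\|W_j-W_j'\|+M_R L_{\nabla F}\|W_i-W_i'\|.
\]
Hence $b$ is globally Lipschitz and of linear growth. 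Picard--Lindel\"of gives a unique global solution of \eqref{Finite particle ODE limit}.

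\textbf{Discrete scheme as perturbed Euler.} Substitute $\beta_h=1-\gamma h+r_h$ into \eqref{Finite particle regularized Muon scheme}. The momentum update becomes
\[
P_{k+1}=P_k+h\gamma(a_k-P_k)+r_h(P_k-a_k),
\]
where $a_k$ is the force $R'(F_k)\nabla F(W_{i,k})$. The position update is $W_{k+1}=W_k-h\Ortheps(P_{k+1})$. Replacing $P_{k+1}$ by $P_k$ in the position update costs at most $h\cdot\varepsilon^{-1}\|P_{k+1}-P_k\|=O(h^2(1+\|P_k\|))$.

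For the linear-growth control, we bound $\|P_k\|\le\max(\|P_0\|,M_RM_F)$, since $P_{k+1}$ is a convex combination of $P_k$ and a vector of norm at most $M_RM_F$ whenever $\beta_h\in[0,1]$. Positions move at most $h$ per step because $\|\Ortheps\|_{\op}<1$. So the iterates stay bounded on $[0,T]$.

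The local truncation error of the exact solution is $O(h^2)+O(|r_h|)$ per step. This uses that $Y(t)$ is $C^1$ with Lipschitz derivative along bounded trajectories, since $b$ is Lipschitz. A discrete Gronwall argument then gives
\[
\max_{kh\le T}|Y_k-Y(kh)|\le C_T\bigl(h+|r_h|/h\bigr)\to0.
\]
This bound is $O(h)$ when $r_h=O(h^2)$. Piecewise-linear interpolation adds another $O(h)$, which gives uniform convergence on $[0,T]$.

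\textbf{Mean-field PDE.} For the law, consider the McKean--Vlasov ODE
\[
\dot W=-\Ortheps(P),\qquad \dot P=\gamma\Bigl(R'\Bigl(\int F\,d\rho_t\Bigr)\nabla F(W)-P\Bigr),
\]
with $\mu_t=\Law(W_t,P_t)$. Testing with $\varphi\in C_c^1(\mathcal Z)$ and applying the chain rule gives the weak form of \eqref{Mc-Kean Vlasov equation}.

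Existence and uniqueness in $C([0,T];\mathcal P_1(\mathcal Z))$ go through a fixed point. Given a curve $\nu\in C([0,T];\mathcal P_1)$, we define the flow $X^\nu$ of the now non-interacting Lipschitz field and set $\Gamma(\nu)_t=(X_t^\nu)_\#\mu_0$. The key estimate is
\[
\Bigl|\int F\,d\rho-\int F\,d\rho'\Bigr|\le M_F\,W_1(\rho,\rho').
\]
This follows from Kantorovich--Rubinstein duality, because $F$ is $M_F$-Lipschitz. Hence the two fields differ by at most $\gamma L_RM_F^2W_1(\nu_t,\nu'_t)$, uniformly in $(W,P)$. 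Gronwall on the coupled flows gives
\[
W_1(\Gamma(\nu)_t,\Gamma(\nu')_t)\le C\int_0^t W_1(\nu_s,\nu'_s)\,ds,
\]
so $\Gamma$ is a contraction in a weighted sup-norm $\sup_t e^{-\lambda t}W_1$. This yields a unique fixed point.

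Uniqueness of arbitrary weak solutions, not only those arising from flows, is the main obstacle. For a given weak solution $\mu$, the field $b_\mu$ is linear in $\mu$ only through the scalar $\int F\,d\rho_t$, which is continuous in $t$. The resulting linear continuity equation has a globally Lipschitz field, so by the superposition/characteristics uniqueness for Lipschitz fields, $\mu_t=(X^{\mu}_t)_\#\mu_0$. Therefore $\mu$ is a fixed point of $\Gamma$ and coincides with the one found above.

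For the localized version, we repeat the argument with the constants computed on the bounded region containing the trajectories. That region exists by the momentum bound above and by finite speed of $W$, and we stop at its exit time.
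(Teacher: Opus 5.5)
Your proposal is correct and follows essentially the same route as the paper. It uses a global Lipschitz bound on the particle field with Picard--Lindel\"of, a perturbed explicit-Euler consistency estimate closed by a discrete Gronwall argument, and a $W_1$ fixed point for the nonlinear characteristics combined with superposition/uniqueness for the Lipschitz continuity equation to identify every weak solution with the characteristic flow. The only slip is cosmetic: $\|\Ortheps(P)\|_{\op}<1$ bounds each position step by $h$ in operator norm, i.e.\ by $\sqrt{q}\,h$ in Frobenius norm, which is all your boundedness argument needs.
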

Equation \eqref{Mc-Kean Vlasov equation} is the probability-flow counterpart of Muon with momentum. The position velocity is the negative regularized orthogonalized momentum and the momentum velocity relaxes toward the Wasserstein force $a_t$.
\begin{remark}[Role of scaling]
  If \(\beta\) is fixed as \(h \downarrow 0\), the momentum variable relaxes on a fast scale and the second-order structure is lost. The scaling chosen in Equation \eqref{inertial scaling} is therefore part of the model and allows a second-order flow dynamics instead of a singular overdamped limit, not merely a technical convenience.
\end{remark}

We now express the evolution PDE in Theorem \ref{McKean-Vlasov formulation} as an exact regularized Muon Hamiltonian probability flow.
\begin{theorem}[Damped Hamiltonian structure]\label{Damped Hamiltonian structure}
 Define
\begin{equation}\label{Hamiltonian energy}
\Ham_{\varepsilon, \gamma}(\mu)=\int_{\mathcal{Z}} \Psi_{\varepsilon}(P) \mathrm{d} \mu(W, P)+\gamma \Rfunc\left(\int_{\mathcal{Z}} F(W) \mathrm{d} \mu(W, P)\right)
\end{equation}
Then Equation \eqref{Mc-Kean Vlasov equation} is equivalent to
\begin{equation}\label{Hamiltonian equation}
\begin{aligned}
\partial_t \mu_t+\nabla_W \cdot  \left(\mu_t\left[-\nabla_P \frac{\delta \Ham_{\varepsilon, \gamma}}{\delta \mu_t}\right]\right)  +\nabla_P \cdot\left(\mu_t\left[\nabla_W \frac{\delta \Ham_{\varepsilon, \gamma}}{\delta \mu_t}-\gamma P\right]\right)=0
\end{aligned}
\end{equation}
Moreover, along solutions,
\begin{equation}\label{Hamiltonian dissipation}
\frac{\mathrm{d}}{\mathrm{~d} t} \Ham_{\varepsilon, \gamma}\left(\mu_t\right)=-\gamma \int_{\mathcal{Z}}\left\langle P, \operatorname{Orth}_{\varepsilon}(P)\right\rangle_{\mathrm{F}} \mathrm{~d} \mu_t(W, P) \leq 0
\end{equation}
The integrand has the singular-value form \(\left\langle P, \operatorname{Orth}_{\varepsilon}(P)\right\rangle_{F}=\sum_{r=1}^q \frac{\sigma_r(P)^2}{\sqrt{\sigma_r(P)^2+\varepsilon^2}}\)
and vanishes if and only if $P=0$.
\end{theorem}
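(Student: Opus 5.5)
The plan is to prove the three assertions of Theorem~\ref{Damped Hamiltonian structure} in order: the rewriting of the PDE, the dissipation identity, and the singular-value formula. Each follows from computing the first variation of $\Ham_{\varepsilon,\gamma}$ and then applying a chain rule along the characteristic flow of Theorem~\ref{McKean-Vlasov formulation}.

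\emph{Step 1 (equivalence of the PDEs).} Write $m_\mu=\int F(W)\,\mathrm d\mu(W,P)$. This equals $\int F\,\mathrm d\rho$ with $\rho=(\pi_W)_\#\mu$. Arguing exactly as in Proposition~\ref{Proposition 3}, where the linear term contributes its integrand and $\Rfunc$ is handled by the chain rule, a valid first variation is
\[
\frac{\delta \Ham_{\varepsilon,\gamma}}{\delta\mu}(W,P)=\Psi_\varepsilon(P)+\gamma\Rfunc'(m_\mu)F(W).
\]
Using the identity $\nabla\Psi_\varepsilon=\operatorname{Orth}_\varepsilon$ from Section~2, we get $\nabla_P\frac{\delta\Ham}{\delta\mu}=\operatorname{Orth}_\varepsilon(P)$ and $\nabla_W\frac{\delta\Ham}{\delta\mu}=\gamma\Rfunc'(m_\mu)\nabla F(W)=\gamma a_t(W)$.

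Substituting these gradients into \eqref{Hamiltonian equation} gives position velocity $-\operatorname{Orth}_\varepsilon(P)$ and momentum velocity $\gamma a_t(W)-\gamma P=\gamma(a_t(W)-P)$. These are exactly the vector fields in \eqref{Mc-Kean Vlasov equation}, so the two weak formulations coincide term by term.

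\emph{Step 2 (dissipation).} Formally, differentiating gives
\[
\frac{\mathrm d}{\mathrm dt}\Ham=\int\Big\langle\nabla_W\tfrac{\delta\Ham}{\delta\mu},v_W\Big\rangle+\Big\langle\nabla_P\tfrac{\delta\Ham}{\delta\mu},v_P\Big\rangle\,\mathrm d\mu_t
=\int\langle\gamma a_t,-\operatorname{Orth}_\varepsilon(P)\rangle+\langle\operatorname{Orth}_\varepsilon(P),\gamma a_t-\gamma P\rangle\,\mathrm d\mu_t .
\]
The two cross terms involving $a_t$ cancel, which leaves $-\gamma\int\langle P,\operatorname{Orth}_\varepsilon(P)\rangle_F\,\mathrm d\mu_t$.

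Making this rigorous is the main obstacle. The test functions $\Psi_\varepsilon$ and $F$ are not compactly supported: $\Psi_\varepsilon$ grows linearly in $\|P\|_F$. I would avoid a weak-form argument and instead use the Lagrangian representation from the well-posedness in Theorem~\ref{McKean-Vlasov formulation}. There, $\mu_t=(X_t)_\#\mu_0$, where $X_t$ is the flow of the nonautonomous Lipschitz field $(-\operatorname{Orth}_\varepsilon(P),\gamma(a_t(W)-P))$ and $t\mapsto m_{\mu_t}$ is given.

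This representation lets us write
\[
\Ham(\mu_t)=\int\Psi_\varepsilon(P_t)\,\mathrm d\mu_0+\gamma\Rfunc\Big(\int F(W_t)\,\mathrm d\mu_0\Big),
\]
where $(W_t,P_t)$ denotes the flow started from a point drawn from $\mu_0$. We then differentiate under the integral by dominated convergence. The needed bounds are available: $\|\operatorname{Orth}_\varepsilon\|_{\mathrm{op}}<1$ and $\|\nabla F\|\le M_F$, and $|\Rfunc'|\le M_R$. Together with Gr\"onwall these give $\|P_t\|_F\le \|P_0\|_F+\gamma M_RM_F t$. So all derivative integrands are dominated by $C(1+\|P_0\|_F)$, which is integrable since $\mu_0\in\mathcal P_1(\mathcal Z)$. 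Continuity of $t\mapsto m_{\mu_t}$ together with the $C^1$ regularity of $\Rfunc$ then handles the outer term, and the pointwise chain rule reproduces the computation above.

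\emph{Step 3 (sign and singular-value form).} $\operatorname{Orth}_\varepsilon(P)=U\diag(\sigma_r/\sqrt{\sigma_r^2+\varepsilon^2})V^\top$ shares the singular vectors of $P=U\Sigma V^\top$. Hence $\langle P,\operatorname{Orth}_\varepsilon(P)\rangle_F=\operatorname{tr}\big(\Sigma\,\diag(\sigma_r/\sqrt{\sigma_r^2+\varepsilon^2})\big)=\sum_r\sigma_r^2/\sqrt{\sigma_r^2+\varepsilon^2}$. Every summand is nonnegative, so the integrand is nonnegative and $\frac{\mathrm d}{\mathrm dt}\Ham\le0$. The sum vanishes iff every $\sigma_r(P)=0$, i.e.\ iff $P=0$.
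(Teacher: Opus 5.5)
Your argument is correct. Steps 1 and 3 match the paper's: the same first variation $\Psi_\varepsilon(P)+\gamma R'(m_\mu)F(W)$, the same substitution of its $P$- and $W$-gradients into the continuity equation, and the same trace computation using the shared singular vectors.

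You depart from the paper in how the dissipation computation is made rigorous. The paper stays Eulerian. It inserts $\zeta=F(W)$ and $\zeta=\Psi_\varepsilon(P)$ into the weak formulation and justifies these non-compactly supported test functions with a cutoff $\chi_R\zeta$; this works because both functions have linear growth and bounded gradients. The annulus error then vanishes thanks to the linear growth of the drift and the finite first moment of $\mu_t$. The paper computes $\dot m_t$ and $\frac{d}{dt}K_\varepsilon$ separately and adds them.

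You go Lagrangian instead. You write $\mu_t=(X_t)_\#\mu_0$ and differentiate $\int\Psi_\varepsilon(P_t)\,d\mu_0$ and $\int F(W_t)\,d\mu_0$ under the integral. The dominating function $C(1+\|P_0\|_F)$ comes from $\|P_t\|_F\le\|P_0\|_F+\gamma M_RM_Ft$.

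Each route buys something different:
\begin{itemize}
\item Your route needs no cutoff and gives a classical derivative at every $t$; in fact $t\mapsto\Ham_{\varepsilon,\gamma}(\mu_t)$ is $C^1$. The paper settles for absolute continuity.
\item The price is that you rely on every weak solution being the push-forward of $\mu_0$ under the characteristic flow. That is uniqueness for the linear continuity equation with the frozen, globally Lipschitz field, i.e.\ the superposition step behind Theorem~\ref{McKean-Vlasov formulation}. It is available under Assumption~\ref{ass:Basic smoothness}.
\item The paper's argument uses only the weak formulation plus first-moment control of $\mu_t$ on $[0,T]$.
\end{itemize}

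One small precision: for the outer term you need $t\mapsto m_{\mu_t}$ to be $C^1$, not merely continuous. This does follow from your domination argument, since $|\langle\nabla F(W_t),\operatorname{Orth}_\varepsilon(P_t)\rangle_F|\le M_F\sqrt q$.
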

The theorem explains why the objective \(J\left(\rho_t\right)\) alone need not decrease monotonically, but the Hamiltonian descent drives the \say{potential energy} component $\tf$,  which is the target functional, to decrease as the dynamics progress through the relation in Equation \ref{Hamiltonian energy}. Momentum stores \say{kinetic energy} $\int \Psieps d \mu$, and the dissipated quantity is the Hamiltonian defined in Equation \ref{Hamiltonian energy}.
\begin{remark}
    [Dissipation as a Fenchel coupling] The integrand in Equation \eqref{Hamiltonian dissipation} is 
\[d_{\varepsilon}(P)=\left\langle P, \operatorname{Orth}_{\varepsilon}(P)\right\rangle_F=\Psi_{\varepsilon}(P)+\Phi_{\varepsilon}\left(\operatorname{Orth}_{\varepsilon}(P)\right),\]
where the second equality is  Fenchel equality. Therefore the dissipated quantity is not simply the kinetic energy \(\int \Psi_{\varepsilon} d \mu\), it is the coupling between the momentum and the mirror-dual velocity. For the chosen spectral potential,
\(
0 \leq d_{\varepsilon}(P)-\Psi_{\varepsilon}(P) \leq q \varepsilon, \quad d_{\varepsilon}(P) \leq\|P\|_{\text {nuc. }}
\)
Hence \(d_{\varepsilon}\) is asymptotically equivalent to the nuclear-norm kinetic energy as \(\varepsilon \downarrow 0\), but it has the stronger property that it vanishes if and only if the momentum vanishes.
\end{remark}
\section{Convergence, particle approximation, and hard Muon}\label{Convergence, particle approximation, and hard Muon}
Let \(a_t(W)=R^{\prime}\left(m_t\right) \nabla F(W)\), \(J_{\star}=\inf_{\rho\in\cP_1(\X)}\tf(\rho)\) and define
\[
\begin{aligned}
A_t & =\int\left\|a_t(W)\right\|_F^2 \mathrm{~d} \mu_t, & K_t & =\int \Psi_{\varepsilon}(P) \mathrm{d} \mu_t &D_t & =\int\left\langle P,\operatorname{Orth}_{\varepsilon}(P)\right\rangle_F \mathrm{~d} \mu_t, & \\ & U_t=J\left(\rho_t\right)-J_{\star} 
&H_t & =K_t+\gamma U_t, & C_t & =\int\left\langle a_t(W), P\right\rangle_F \mathrm{~d} \mu_t.
\end{aligned}
\]
The alignment \(C_t\) is the term that records whether momentum is consistent with descent.
The basic Hamiltonian identity gives \(H_t^{\prime}=-\gamma D_t\), but this alone does not control the objective gap because \(D_t\) controls momentum rather than force. The force norm \(A_t\) is connected to the gap by the PL assumption, while the momentum dissipation is connected to the kinetic energy by coercivity. The bridge between them is the alignment \(C_t\). Positive alignment means that the momentum points in a descent-compatible direction, negative alignment records a transient inertial mismatch.
For this reason the convergence proof uses the modified Lyapunov functional
\(
L_t=H_t-\alpha C_t .
\) The role of \(\alpha\) is to reward descent-compatible alignment without letting \(C_t\) dominate the Hamiltonian. The upper-gradient condition and kinetic coercivity imply \(\left|C_t\right| \leq M_C H_t\), so \(L_t\) is equivalent to \(H_t\) when \(\alpha M_C<1\). The curvature assumption then guarantees that the derivative of \(C_t\) introduces the force norm \(A_t\) up to a controlled remainder. This is the mechanism behind the exponential rate below.
\begin{asu}[Trajectory-level convergence hypotheses]\label{ass:Trajectory-level convergence hypotheses} Along the trajectory, \(U_t \geq 0\) and the following hold.\\
(a) Bounded-momentum kinetic coercivity: for constants \(\kappa_K, \kappa_D>0, \chi \geq 1\) and \(L_G<\infty\),
\[
\begin{aligned}
\Psi_{\varepsilon}(P)  &\geq \frac{\kappa_K}{2}\|P\|_F^2, 
&\left\langle P, \operatorname{Orth}_{\varepsilon}(P)\right\rangle_F  &\geq \kappa_D\|P\|_F^2, \\
\Psi_{\varepsilon}(P)  &\leq \chi\left\langle P, \operatorname{Orth}_{\varepsilon}(P)\right\rangle_F, 
&\left\|\operatorname{Orth}_{\varepsilon}(P)\right\|_F &\leq L_G\|P\|_F .
\end{aligned}
\]
For the regularized Muon potential these inequalities hold on \(\|P\|_F \leq B_P\) with \(\kappa_K=\varepsilon^2 /\left(B_P^2+\varepsilon^2\right)^{3 / 2}\), \(\kappa_D=\left(B_P^2+\varepsilon^2\right)^{-1 / 2}, L_G=1 / \varepsilon\), and \(\chi=1\).\\
(b) Functional PL and upper-gradient bounds: \(A_t \geq 2 \lambda U_t\) and \(A_t \leq 2 \Lambda U_t\) for constants \(\lambda>0, \Lambda<\infty\).\\
(c) Curvature control: \(C_t^{\prime}=\gamma A_t-\gamma C_t+S_t\) with \(\left|S_t\right| \leq \sigma D_t\).    
\end{asu}
\begin{theorem}[Exponential convergence under explicit assumptions]\label{Theorem:Exponential convergence under explicit assumptions} Let Assumptions \ref{ass:Basic smoothness} and \ref{ass:Trajectory-level convergence hypotheses} hold true.
 Let \(M_C=\sqrt{\frac{\Lambda}{\gamma \kappa_K}}\).
Choose \(r \in(0,2)\) and \(\alpha>0\) such that \(\alpha M_C<1, \quad d_{\alpha, r}\coloneqq \gamma-\alpha \sigma-\frac{\alpha \gamma}{2 r \kappa_D}>0\). Finally, let us set 
\(c_{\alpha, r} \coloneqq \frac{1}{1+\alpha M_C} \min \left\{\frac{d_{\alpha, r}}{\chi}, 2 \lambda \alpha\left(1-\frac{r}{2}\right)\right\}\). Then,
\begin{equation}\label{Disipation rate}
J\left(\rho_t\right)-J_{\star} \leq \frac{e^{-c_{\alpha, r} t}}{\gamma\left(1-\alpha M_C\right)}\left(H_0-\alpha C_0\right) .
\end{equation}
\end{theorem}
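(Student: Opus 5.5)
We work with the modified Lyapunov functional \(L_t=H_t-\alpha C_t\) introduced before Assumption~\ref{ass:Trajectory-level convergence hypotheses}. The plan has four steps: show \(L_t\) is comparable to \(H_t\), derive a differential inequality \(L_t'\le -c_{\alpha,r}L_t\), apply Grönwall, and convert back to the objective gap through \(H_t\ge\gamma U_t\).

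\textbf{Step 1: equivalence of \(L_t\) and \(H_t\).} By Cauchy--Schwarz,
\[
|C_t|\le A_t^{1/2}\Bigl(\int\|P\|_F^2\,d\mu_t\Bigr)^{1/2}.
\]
By (b), \(A_t\le 2\Lambda U_t\). By (a), \(\int\|P\|_F^2\,d\mu_t\le 2K_t/\kappa_K\). Hence
\[
|C_t|\le 2\sqrt{\Lambda/\kappa_K}\,\sqrt{U_tK_t}.
\]
Writing \(\sqrt{U_tK_t}=\gamma^{-1/2}\sqrt{(\gamma U_t)K_t}\) and using AM--GM, \(2\sqrt{(\gamma U_t)K_t}\le \gamma U_t+K_t=H_t\). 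This gives \(|C_t|\le M_C H_t\) with \(M_C=\sqrt{\Lambda/(\gamma\kappa_K)}\). Consequently
\[
(1-\alpha M_C)H_t\le L_t\le(1+\alpha M_C)H_t .
\]

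\textbf{Step 2: differential inequality.} Theorem~\ref{Damped Hamiltonian structure} gives \(H_t'=-\gamma D_t\), and (c) gives \(C_t'\). Together,
\[
L_t'=-\gamma D_t-\alpha\gamma A_t+\alpha\gamma C_t-\alpha S_t .
\]
The remainder satisfies \(|\alpha S_t|\le\alpha\sigma D_t\). For the cross term, apply Young's inequality with parameter \(r\):
\[
\gamma|C_t|\le \gamma\Bigl(\tfrac r2 A_t+\tfrac1{2r}\int\|P\|_F^2\,d\mu_t\Bigr),
\qquad
\int\|P\|_F^2\,d\mu_t\le D_t/\kappa_D .
\]
Inserting these bounds yields
\[
L_t'\le -d_{\alpha,r}D_t-\alpha\gamma\bigl(1-\tfrac r2\bigr)A_t .
\]
The constant \(d_{\alpha,r}\) appears exactly as in the statement, which is a consistency check on the choice of Young weights.

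\textbf{Step 3: bound \(L_t'\) by \(-c\,H_t\).} Assumption (a) gives \(D_t\ge K_t/\chi\), and (b) gives \(A_t\ge 2\lambda U_t\). So
\[
L_t'\le-\frac{d_{\alpha,r}}{\chi}K_t-2\lambda\alpha\gamma\bigl(1-\tfrac r2\bigr)U_t .
\]
Since \(H_t=K_t+\gamma U_t\), the right-hand side is at most \(-\min\{d_{\alpha,r}/\chi,\;2\lambda\alpha(1-r/2)\}\,H_t\). By Step 1, \(H_t\ge L_t/(1+\alpha M_C)\). This gives \(L_t'\le -c_{\alpha,r}L_t\), and Grönwall yields \(L_t\le e^{-c_{\alpha,r}t}L_0\). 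Finally, Step 1 and \(K_t\ge 0\) give
\[
\gamma U_t\le H_t\le \frac{L_t}{1-\alpha M_C},
\]
which is the claim \eqref{Disipation rate}.

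\textbf{Main obstacle: the Young step in Step 2.} It is the only place where constants must match the statement exactly. Checking the coefficient on \(D_t\): the cross term contributes \(\alpha\gamma\cdot\frac{1}{2r\kappa_D}D_t\), which is precisely the term subtracted in \(d_{\alpha,r}\). The coefficient on \(A_t\) becomes \(\alpha\gamma(1-r/2)\), positive for \(r\in(0,2)\).

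A second point to confirm is the \(\gamma\) bookkeeping in the \(U\)-coefficient of Step 3. There \(2\lambda\alpha\gamma(1-r/2)U_t\) is compared against \(\gamma U_t\), so the factor \(\gamma\) cancels and leaves \(2\lambda\alpha(1-r/2)\), as in the definition of \(c_{\alpha,r}\). This step also needs the differentiability of \(C_t\) along the trajectory, which is implicit in (c).
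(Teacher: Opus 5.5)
Your proposal is correct and follows the paper's argument essentially step for step. The paper obtains this theorem as the one-block, scalar-output case of Theorem~\ref{thm:convergence-gated}, and that proof runs exactly as yours does: the bound $|C_t|\le M_C H_t$, the functional $L_t=H_t-\alpha C_t$, the Young split with parameter $r$ giving $L_t'\le -d_{\alpha,r}D_t-\alpha\gamma(1-r/2)A_t$, and then Gronwall combined with $\gamma U_t\le H_t\le L_t/(1-\alpha M_C)$.
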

The proof in Appendix \ref{app:proof-thm3-corollary} obtains Theorem \ref{Theorem:Exponential convergence under explicit assumptions} as the one-block scalar-output specialization of the Hilbert-domain Theorem \ref{thm:convergence-gated}.
\begin{remark}[Interpretation of assumptions]
    The theorem separates geometry from objective landscape. Kinetic coercivity is a property of the regularized Muon potential on bounded momentum sets. PL and upper-gradient conditions are properties of \(J\) along the probability trajectory. As observed in practice and demonstrated in our numerical experiments, the exponential convergence behavior holds true. Precise sufficient conditions for the validity of the PL and upper-gradient conditions for $J$ for specific models of interest can be determined.
\end{remark}
\begin{remark}
    The constant \(c_{\alpha, r}\) is the minimum of a momentum-dissipation contribution and a force-gap contribution. Increasing \(\alpha\) helps expose the PL force term, but too large an \(\alpha\) either destroys equivalence of \(L_t\) and \(H_t\) or makes the dissipation tradeoff negative. The parameter \(r\) is the Young-inequality split between force and momentum. These constraints are not artifacts of notation, they express the fact that accelerated dynamics converge only when momentum alignment, curvature, and dissipation are balanced.
\end{remark}
For finite $N$, the particle update Equation \eqref{Finite particle regularized Muon scheme} gives an interacting approximation of the nonlinear characteristic system Equation \eqref{Finite particle ODE limit}. The next theorem states the quantitative mean-field consistency result under a second-moment assumption on the initialization law of \(W,P\).
\begin{theorem}[Propagation of chaos]\label{Propagation of chaos}
    Let Assumption \ref{ass:Basic smoothness} hold.  Equivalently, after identifying
    \(\Theta=\mathcal{X}\) and \(\mathcal{H}=\mathbb{R}\), the scalar data satisfy the global Hilbert smoothness Assumption \ref{ass:global-hilbert}. Let \(\mu_0 \in \mathcal{P}_2(\mathcal{Z})\). Couple the \(N\)-particle ODE Equation \eqref{Finite particle ODE limit}) with i.i.d. nonlinear mean-field copies having law \(\mu_t\) and the same initial data. Then, for every \(T<\infty\), there is \(C_{\mathrm{poc}}(T, \varepsilon)<\infty\) such that, for every fixed \(i\),
\begin{equation}\label{eq:POC}
\sup _{t \leq T} \mathbb{E}\left[\left\|W_i^N(t)-\bar{W}_i(t)\right\|_F^2+\left\|P_i^N(t)-\bar{P}_i(t)\right\|_F^2\right] \leq \frac{C_{\mathrm{poc}}(T, \varepsilon)}{N}
\end{equation}
Consequently every fixed \(k\) particles converge in law to \(\mu_t^{\otimes k}\), uniformly on finite horizons.
\end{theorem}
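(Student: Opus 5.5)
We use the standard synchronous coupling argument of Sznitman type. On one probability space we take i.i.d. initial data $(W_i(0),P_i(0))\sim\mu_0$, $i=1,\dots,N$. Each pair drives both the interacting system \eqref{Finite particle ODE limit} and the nonlinear characteristic system
\[
\dot{\bar W}_i=-\Ortheps(\bar P_i),\qquad \dot{\bar P}_i=\gamma\big(\Rfunc'(\bar m_t)\nabla F(\bar W_i)-\bar P_i\big),\qquad \bar m_t=\int F\,\mathrm d\rho_t .
\]
Theorem \ref{McKean-Vlasov formulation} gives well-posedness of the mean-field law $\mu_t$, so $\bar m_t$ is a deterministic continuous function and the copies $(\bar W_i,\bar P_i)$ are i.i.d. with law $\mu_t$. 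Since the dynamics are deterministic given the initial data, the only randomness is in $\mu_0$, and $\mu_0\in\cP_2(\Z)$ ensures all second moments below are finite. In fact $\|\dot W\|_F\le \sqrt q$ and $\|\dot P\|_F\le\gamma(M_RM_F+\|P\|_F)$, so these moments stay bounded on $[0,T]$ by Gronwall.

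Set $\Delta_i(t)=\|W_i^N-\bar W_i\|_F^2+\|P_i^N-\bar P_i\|_F^2$ and differentiate. The $W$-part uses that $\Ortheps$ is $1/\eps$-Lipschitz, which gives $2\eps^{-1}\|\delta W\|\|\delta P\|$. For the $P$-part we split the force difference as
\[
\Rfunc'(F^N)\nabla F(W_i^N)-\Rfunc'(\bar m_t)\nabla F(\bar W_i)=\Rfunc'(F^N)\big(\nabla F(W^N_i)-\nabla F(\bar W_i)\big)+\big(\Rfunc'(F^N)-\Rfunc'(\bar m_t)\big)\nabla F(\bar W_i).
\]
By Assumption \ref{ass:Basic smoothness} the first term is bounded by $M_R L_{\nabla F}\|\delta W\|$ and the second by $M_F L_R|F^N(t)-\bar m_t|$. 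We then bound
\[
|F^N-\bar m_t|\le \frac1N\sum_j|F(W_j^N)-F(\bar W_j)|+\Big|\frac1N\sum_j F(\bar W_j)-\bar m_t\Big|\le \frac{M_F}{N}\sum_j\|W_j^N-\bar W_j\|_F+\Big|\frac1N\sum_j F(\bar W_j)-\bar m_t\Big|,
\]
using that $F$ is $M_F$-Lipschitz.

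The key step, and the only place where the $1/N$ rate appears, is the fluctuation term $\xi_N(t)=\frac1N\sum_j F(\bar W_j(t))-\bar m_t$. The $\bar W_j(t)$ are i.i.d. with law $\rho_t$, so $\E\xi_N(t)^2=\mathrm{Var}_{\rho_t}(F)/N$. This variance is at most $M_F^2\,\E\|\bar W(t)-\E\bar W(t)\|_F^2$ by Lipschitzness, which is uniformly bounded on $[0,T]$ by the moment bounds from the first paragraph. This is exactly where the second-moment hypothesis on $\mu_0$ enters.

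Using Young's inequality and exchangeability, $\E\Delta_i=\E\Delta_1=:\delta(t)$ for all $i$, and $\E\big(\frac1N\sum_j\|\delta W_j\|\big)^2\le \delta(t)$. We therefore get $\delta'(t)\le C_1\delta(t)+C_2/N$ with $\delta(0)=0$, where $C_1$ depends on $\eps^{-1},\gamma,M_R,M_F,L_R,L_{\nabla F}$ and $C_2$ on $\gamma,L_R,M_F$ and $\sup_{t\le T}\mathrm{Var}_{\rho_t}(F)$. Gronwall yields \eqref{eq:POC} with $C_{\mathrm{poc}}(T,\eps)=C_2Te^{C_1T}$.

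For the consequence, take any fixed $k$. The vector $(\bar W_i,\bar P_i)_{i\le k}$ has law $\mu_t^{\otimes k}$ exactly. The $W_2$-distance between the law of $(W_i^N,P_i^N)_{i\le k}$ and $\mu_t^{\otimes k}$ is bounded by the coupling cost, which is at most $(kC_{\mathrm{poc}}/N)^{1/2}$ uniformly on $[0,T]$. This gives convergence in law.

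The main obstacle is bookkeeping the $\eps$-dependence through the Lipschitz constant of $\Ortheps$. We also must check that the Lipschitz property of $\Rfunc'$ is applied to a deterministic argument $\bar m_t$, so no dependence issues arise in the variance bound.
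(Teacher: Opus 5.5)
Your proposal is correct and is essentially the paper's own argument. The paper proves this statement as the one-block, $\mathcal{H}=\mathbb{R}$ case of Theorem~\ref{thm:poc-gated}, and each step of your proof appears there: the same synchronous coupling, the same splitting of the force error into a Lipschitz term in $\|W_i^N-\bar W_i\|_F$ plus the moment error $F^N(t)-\bar m_t$, and the same Jensen/Lipschitz bound plus i.i.d. variance term of size $\mathrm{Var}_{\rho_t}(F)/N$ (finite because $\mu_0\in\mathcal{P}_2$), with exchangeability and Gronwall closing $u'\le C_T u+C_T/N$, $u(0)=0$, and the $k$-particle consequence read off from the coupling cost $W_2^2\le kC_{\mathrm{poc}}/N$.
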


\begin{remark}[Mean-field relevance] Theorem \ref{Propagation of chaos} justifies using the nonlinear PDE as the large-population limit of the regularized Muon particle system. The constant depends on \(1 / \varepsilon\) through the Lipschitz constant of \(\operatorname{Orth}_{\varepsilon}\), which is expected since the hard Muon map is not Lipschitz.    
\end{remark}
Finally, the regularized flow has a compactness limit as \(\varepsilon \downarrow 0\). Let \(\Psi_0(P)=\|P\|_{\text {nuc }}\). We discuss this in detail in the Appendix (Theorem \ref{Hard Muon limit theorem}).
\section{Transformer mixture-of-experts (MoE) optimization using Hamiltonian probability flow}
In this section, we discuss how the Hamiltonian probability flow formulation can be extended to optimize ML models with tuples of matrix-valued parameters, such as Transformer mixture-of-experts (MoE). A shallow transformer MoE uses a product parameter space
\(
\Theta=\Theta_{\exp } \times \Theta_{\text {gate }}=\prod_{b=1}^B \mathbb{R}^{m_b \times n_b}
\)
where the expert blocks may include \(Q, K, V, O, W_1, W_2\) corresponding to the Query, Key, Value and Output projection matrices in the attention module, together with the weight matrices in the FFN layer and the gate blocks contain router parameters for the expert routing scheme. The Muon mirror map is applied blockwise to the full tuple of matrices, including the router.
For a training input $L$-token sequence \(X \in \mathbb{R}^{L \times d}\), a single-head expert can be written as
\begin{equation}
\begin{aligned}
 A_\omega(X)=\operatorname{softmax}\left(\frac{(X Q)(X K)^{\top}}{\sqrt{d_k}}\right) X V O,
 \quad\psi_\omega(X)=\operatorname{Rout}\left(\sigma\left(A_\omega(X) W_1\right) W_2\right) .
\end{aligned}
\end{equation}
A smooth router score \(s_\phi(X)\) is included in the feature map. For softmax-normalized MoE routing, the empirical output
\(
M_N(X)_{t,:}=\frac{\sum_i e^{s_{\phi_i}(X)_t} \psi_{\omega_i}(X)_{t,:}}{\sum_j e^{s_{\phi_j}(X)_t}}
\)
is represented by an augmented Hilbert-space valued feature map \(F_{\text {soft }}(\omega, \phi)=\left(e^{s_\phi} \psi_\omega, e^{s_\phi}\right)\) and a smooth normalization map applied after averaging. Another gating choice can be considered as well. For an unnormalized non-negative smooth input-dependent gate \(g_\phi(X)\), one may take \(F_{\text {un }}(\omega, \phi)= \left(g_\phi\left(X_r\right) \odot \psi_\omega\left(X_r\right)\right)_{r=1}^n\) in logit Hilbert space. Thus the Transformer MoE model with these two choices of gating fits the form \(J(\rho)=R\left(\int F \mathrm{~d} \rho\right)\).

The theorem below analyzes the dynamics of parameter optimization for Transformer MoE and uses the following concrete conditions, which are the transformer versions of the localized product-space assumptions in the Appendix:
\begin{itemize}
    \item[(T1)] training inputs are bounded, $\left\|X_r\right\|_F \leq B_X$;
    \item[(T2)] the activation and router score maps are $C^2$ in the parameters on bounded sets;
    \item[(T3)] in the normalized-gate case, denominators remain bounded below by a positive constant on the moment set reached by the trajectory;
    \item[(T4)] the parameter and momentum trajectory remains in a bounded region for the time horizon or asymptotic regime under consideration;
    \item[(T5)] for exponential convergence, the product-space PL and upper-gradient assumptions hold along the trajectory.
\end{itemize}

Conditions $(T1)$-$(T4)$ are analytic well-posedness conditions, while condition $(T5)$ is an optimization-landscape condition. Separating them prevents the transformer statement from overstating what follows from smoothness alone.
\begin{theorem}[Transformer MoE consequence]
    Assume bounded training inputs, \(C^2\) expert and router maps on bounded parameter sets, a denominator lower bound for normalized gates, and bounded parameter/momentum trajectories on the time interval considered. Then, under the inertial scaling $\eta_h=h, \beta_h=1-\gamma h+o(h)$, the regularized Muon scheme on expert-router particles,
\[
\begin{aligned}
P_{i, k+1}  =\beta P_{i, k}+(1-\beta) a_i^N\left(\theta_k\right), \quad
\theta_{i, k+1}  =\theta_{i, k}-\eta \operatorname{Orth}_{\varepsilon}^{\Theta}\left(P_{i, k+1}\right) .
\end{aligned}
\]
where \(a_i^N=D F\left(\theta_i\right)^* \nabla R\left(N^{-1} \sum_j F\left(\theta_j\right)\right)\), satisfies the finite-particle ODE limit, phase-space Hamiltonian PDE, dissipation identity and, under the \(P L\) and upper-gradient assumptions, the exponential convergence estimate. The general Hilbert valued product-space PDE and the proof of this theorem are discussed in detail in the Appendix.
\end{theorem}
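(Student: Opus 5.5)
The plan is to reduce the Transformer MoE statement to the general Hilbert-valued product-space versions of Theorems \ref{McKean-Vlasov formulation}, \ref{Damped Hamiltonian structure} and \ref{Theorem:Exponential convergence under explicit assumptions}. The general objective is $J(\rho)=R\left(\int F\,\mathrm d\rho\right)$ with $F:\Theta\to\cH$ and $R:\cH\to\R$. In the unnormalized case $\cH$ is the logit space $\Hlogit$. In the softmax case $\cH=\Haug$ carries the pair $(e^{s_\phi}\psi_\omega,e^{s_\phi})$, and $R$ is the loss composed with the normalization map $(u,v)\mapsto u/v$. The argument has three parts: identify the data $(\Theta,\cH,F,R)$, verify localized versions of Assumption \ref{ass:Basic smoothness} on the bounded region from (T4), and then invoke the product-space theorems block by block.

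\textbf{Step 1 (block-separable geometry).} On $\Theta=\prod_b\R^{m_b\times n_b}$ with Frobenius pairing summed over blocks, set $\Psi_\eps^\Theta(P)=\sum_b\Psi_\eps(P_b)$ and $\Phi_\eps^\Theta(G)=\sum_b\Phi_\eps(G_b)$. Fenchel conjugacy holds blockwise, so Proposition \ref{Proposition 2} applies block by block. This gives $\nabla\Psi_\eps^\Theta=\Orth_\eps^\Theta$, which is $1/\eps$-Lipschitz. It also gives $\langle P,\Orth_\eps^\Theta(P)\rangle=\sum_b\sum_r\sigma_r(P_b)^2/\sqrt{\sigma_r(P_b)^2+\eps^2}$, which vanishes only at $P=0$. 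The coercivity constants in Assumption \ref{ass:Trajectory-level convergence hypotheses}(a) carry over with $B_P$ bounding the full momentum norm.

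\textbf{Step 2 (regularity of $F$ and $R$).} By (T1)--(T2), $\omega\mapsto A_\omega(X_r)$ is $C^2$ on bounded parameter sets, since it is a composition of matrix products and a row softmax. The same holds for $\psi_\omega$ and $s_\phi$, with $e^{s_\phi}$ bounded there. Hence $F$, $DF$ and $D^2F$ are bounded on the bounded set $\mathcal K$ supplied by (T4), so $DF$ is Lipschitz on $\mathcal K$. The set of averages $\int F\,\mathrm d\rho$ over $\rho$ supported in $\mathcal K$ is bounded. On it, (T3) keeps the denominator of the normalization map away from zero, so $R$ is $C^2$ with bounded and Lipschitz $\nabla R$. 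The field $a(\theta;m)=DF(\theta)^*\nabla R(m)$ is therefore bounded and jointly Lipschitz on $\mathcal K\times\{\text{moments}\}$. Extending these functions to globally Lipschitz maps outside $\mathcal K$, for example by a smooth cutoff, puts us under the global Hilbert hypotheses. By (T4) the extended trajectories coincide with the original ones.

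\textbf{Step 3 (invoking the earlier results).} First, the inertial-scaling argument of Theorem \ref{McKean-Vlasov formulation} goes through verbatim. The one-step error is $O(h\cdot r_h/h)+O(h^2)$ because the vector field $(-\Orth_\eps^\Theta(P),\gamma(a_i^N-P))$ is Lipschitz, and a discrete Grönwall estimate gives uniform convergence to the finite-particle ODE.

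Second, the weak phase-space PDE and its $\cP_1$ uniqueness follow by a fixed point on the moment curve $t\mapsto m_t\in\cH$. Third, the Hamiltonian form is a direct computation. With $\Ham=\int\Psi^\Theta_\eps\,\mathrm d\mu+\gamma R\left(\int F\,\mathrm d\mu\right)$, we have $\delta\Ham/\delta\mu=\Psi_\eps^\Theta(P)+\gamma\langle\nabla R(m),F(\theta)\rangle_\cH$. Its $\nabla_\theta$ gives $\gamma a$ and its $\nabla_P$ gives $\Orth_\eps^\Theta$. Differentiating $\Ham(\mu_t)$ along the PDE cancels the cross terms and leaves $-\gamma D_t$.

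Finally, the exponential estimate follows from the Lyapunov argument with $L_t=H_t-\alpha C_t$. We use $|C_t|\le M_CH_t$, which comes from Cauchy--Schwarz together with the upper-gradient and coercivity bounds. We then differentiate, using (T5) and the curvature hypothesis. In the Hilbert setting the force is $DF^*\nabla R$ rather than $R'\nabla F$; the algebra is otherwise unchanged.

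\textbf{Main obstacle.} The delicate point is Step 2 for the softmax gate: $R$ is smooth only where the averaged denominator stays positive. I expect the cleanest route is to assume (T3) on a neighbourhood of the trajectory's moment set, so that the cutoff extension does not alter the dynamics. A second subtlety is that the attention softmax $\partial$-bounds scale with $B_X^2\|Q\|\|K\|$. This is harmless only because (T4) bounds the parameters, so all Lipschitz constants, and hence $C_{\mathrm{poc}}$, depend on $(B_X,B_\theta,B_P,\eps)$.
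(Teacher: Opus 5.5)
Your proposal is correct and follows essentially the same route as the paper. You use (T1)--(T4) to obtain local $C^2$ regularity of the attention/FFN/router features, bounded and Lipschitz gradients of the (normalized) cross-entropy risk under the denominator bound, and bounded momenta; you then invoke the general Hilbert product-space results (finite-particle ODE limit, mean-field PDE, Hamiltonian form, dissipation identity, and the $L_t=H_t-\alpha C_t$ Lyapunov argument). The differences are minor: you reduce to global hypotheses by a smooth cutoff outside the bounded region, whereas the paper cites the localized finite-horizon versions of those theorems; and the curvature control you take as a hypothesis is exactly what Lemma \ref{lem:curvature-gated} gives from the second-order bounds you already derived in Step 2.
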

\section{Numerical Experiments}
We use two deterministic synthetic experiment classes to test the finite-particle dynamics developed above.  The goal is not to benchmark large-scale training, but to isolate the phenomena predicted by the Hamiltonian formulation. The first experiment focuses on a $M$-particle matrix mean matching problem with the finite objective functional on $N$ particles given by $J_N(W_1,\ldots,W_N) =\frac12\norm{\frac1N\sum_{i=1}^N W_i-\bar W_\star}_F^2$ with $W_\star=\frac1M\sum_{j=1}^M W_{j,\star}$. The second experiment is a nonlinear teacher-student problem on a product matrix space with each particle $\theta_i=(A_i,B_i)\in \Theta=\R^{p\times r}\times \R^{r\times d}$ with the $(M,N)$-particle teacher-student tanh neural network objective $J_N((A_i,B_i)_{i=1}^N)=\frac1{2Sp}\sum_{s=1}^S
\norm{\frac1N\sum_{i=1}^N A_i\tanh\!\left(\frac{B_i x_s}{\sqrt d}\right)-y_s}_2^2$ trained on $S$ input-output $(x_s,y_s)$ pairs. In both cases $J^{*} = 0$ is attainable. The experimental results are reported in part in Figure \ref{fig:exp-main paper}, with the complete numerical experiments being discussed in the Appendix. The synthetic experiments support the Hamiltonian interpretation developed in the paper. In the matrix mean-matching problem, regularized Muon avoids the finite-step residual floor exhibited by hard polar and Newton-Schulz updates, allowing the objective and Hamiltonian energies to decay smoothly towards zero. In the nonlinear product-space neural network setting, the regularized dynamics preserve the advantages of spectral Muon geometry while exhibiting the stability predicted by the smooth mirror formulation.
\begin{figure}[!htbp]
\centering
\begin{minipage}{0.44\linewidth}
\centering
\includegraphics[width=\linewidth]{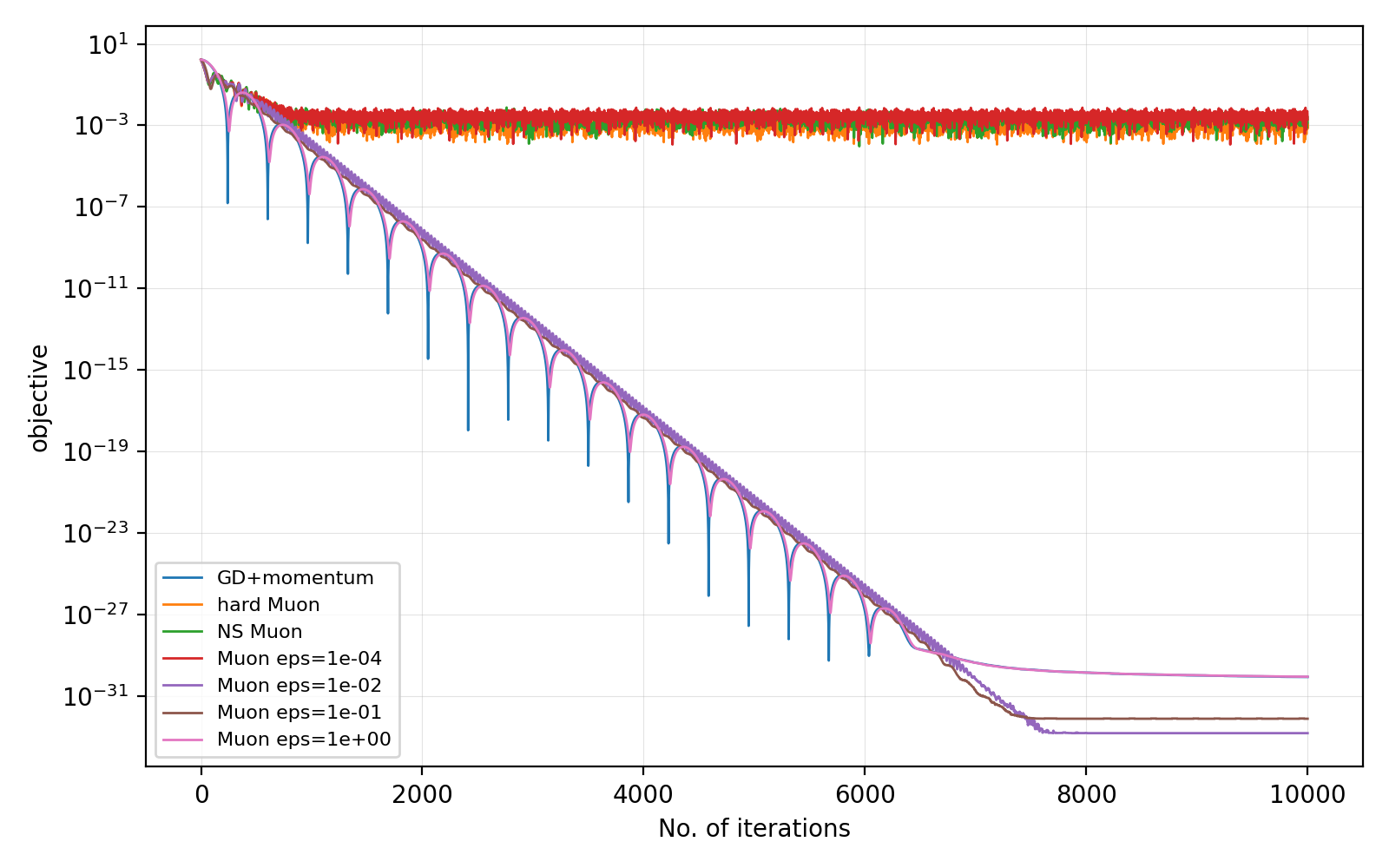}
\end{minipage}\hfill
\begin{minipage}{0.44\linewidth}
\centering
\includegraphics[width=\linewidth]{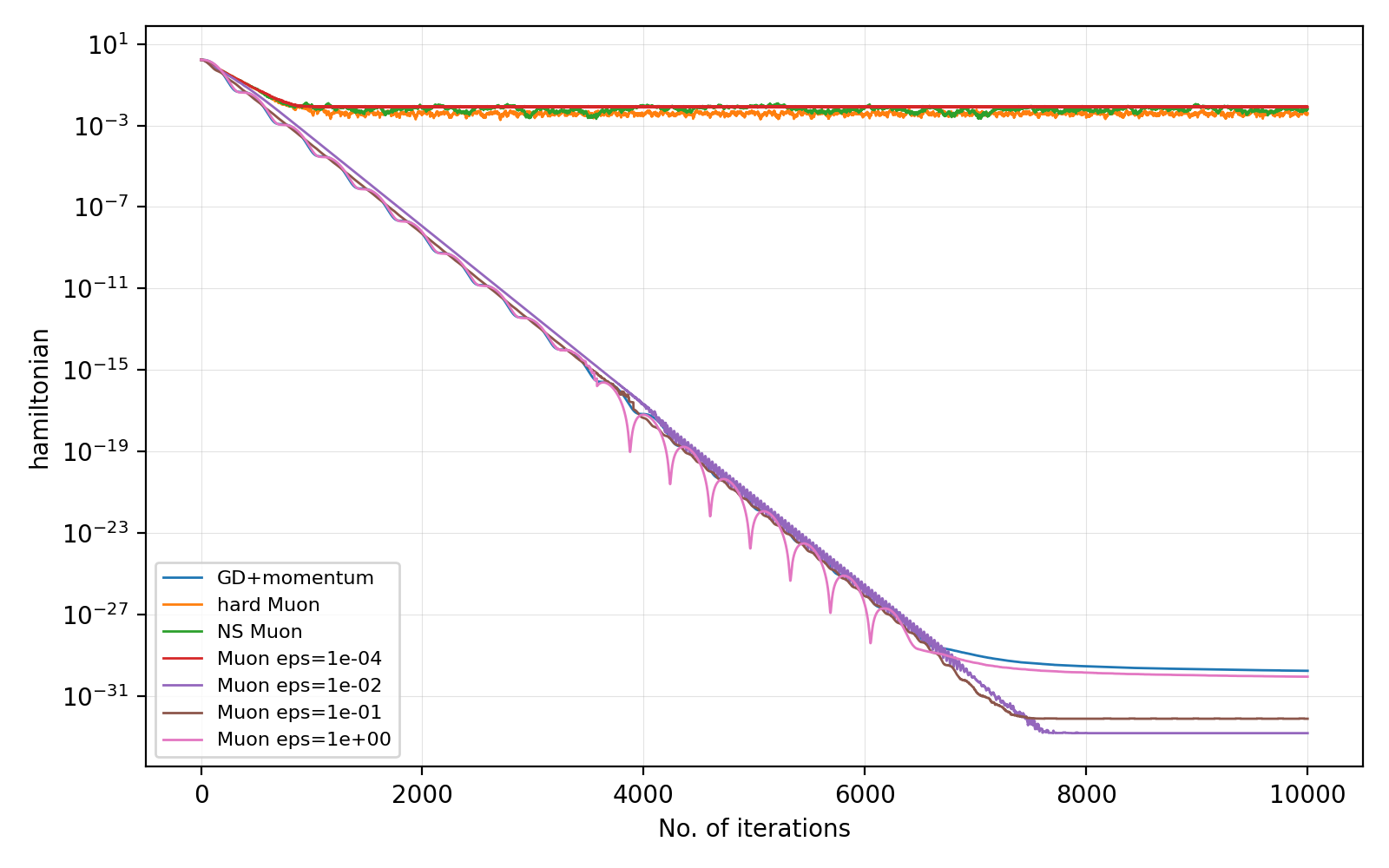}
\end{minipage}
\vspace{0.3em}
\begin{minipage}{0.44\linewidth}
\includegraphics[width=\linewidth]{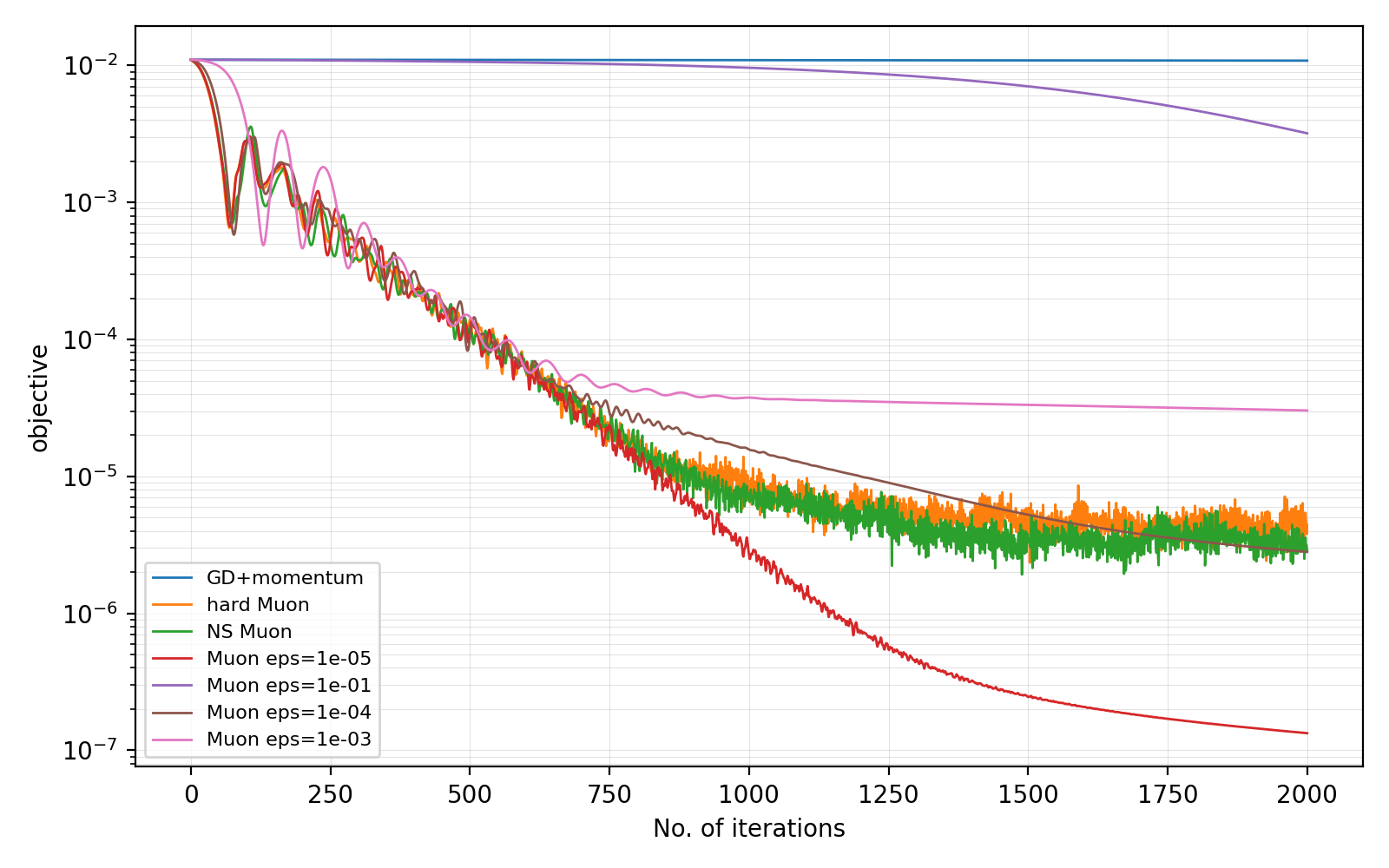}
\end{minipage}\hfill
\begin{minipage}{0.44\linewidth}
\centering
\includegraphics[width=\linewidth]{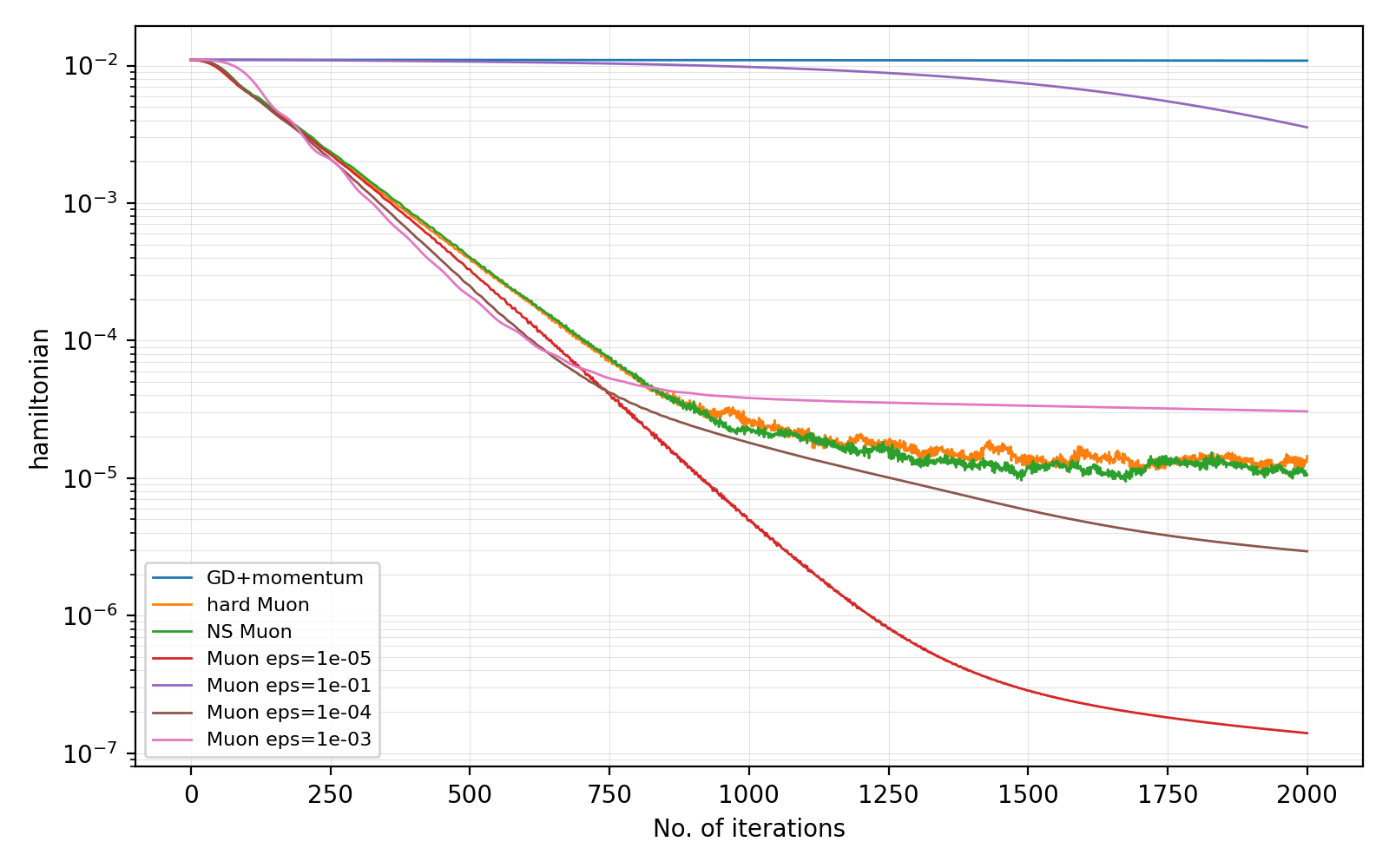}
\end{minipage}
\caption{Top: Experiment~1 on matrix mean matching with $(M,N)=(4,32)$. Bottom: Experiment~2 on product-space teacher-student particles with $(d,r,p)=(10,6,4)$ and $S=320$ training points and $(M,N)=(3,12)$. For both experiments, left panels show $J_N$ and right panels show the Hamiltonian $K+\gamma J_N$ on a loglinear scale.}
\label{fig:exp-main paper}
\end{figure}
\section{Conclusion}
This work identifies Muon as a spectral trust-region mirror method and derives the damped Hamiltonian probability flow induced by its regularized orthogonalization map. It shows that Muon's matrix-level update admits a coherent variational, mean-field, and Lyapunov structure.

\section*{References}
\bibliographystyle{plainnat}
\bibliography{ref.bib}
\nocite{*}

\newpage
\appendix

\section{Technical appendices and supplementary material}
\subsection{Proof of Proposition \ref{Proposition 1}}
Let $P=U \Sigma V^{\top}$ be a reduced SVD with $s=\operatorname{rank}(P)$ and singular values $\sigma_1(P), \ldots, \sigma_s(P)$. By the duality between the operator norm and the nuclear norm,

$$
\|P\|_{\mathrm{nuc}}=\sup _{\|M\|_{\mathrm{op}} \leq 1}\langle P, M\rangle_F .
$$

For completeness, this follows from von Neumann's trace inequality:

$$
\langle P, M\rangle_F \leq \sum_{i=1}^q \sigma_i(P) \sigma_i(M) \leq \sum_{i=1}^q \sigma_i(P)=\|P\|_{\mathrm{nuc}}
$$

whenever $\|M\|_{\mathrm{op}} \leq 1$. Taking $M=U V^{\top}$ gives equality, since the nonzero singular values of $U V^{\top}$ are all one and the singular directions align with those of $P$. Therefore $\operatorname{Orth}(P)$ is a maximizer of the support-function problem, and by sign reversal $-\operatorname{Orth}(P)$ is a minimizer of $\langle P, G\rangle_F$ over $\|G\|_{\mathrm{op}} \leq 1$. Since $\Phi_0$ is the indicator of this spectral unit ball, the constrained problem is exactly Equation \eqref{Constrained optimization form of Orthogonalization operation}. The canonical choice of minimizer gives the ideal Muon step.
When $P$ is rank deficient, the optimizer need not be unique. The subdifferential formula
$$
\partial\|P\|_{\text {nuc }}=\left\{U V^{\top}+Z: U^{\top} Z=0, Z V=0,\|Z\|_{\mathrm{op}} \leq 1\right\}
$$
shows the source of nonuniqueness. Muon chooses the canonical element with $Z=0$.
\subsection{Scalar and spectral Fenchel conjugacy}
For $\varepsilon>0$, define

$$
\psi_{\varepsilon}(a)=\sqrt{a^2+\varepsilon^2}-\varepsilon, \quad \phi_{\varepsilon}(b)=\varepsilon\left(1-\sqrt{1-b^2}\right)+\iota_{[-1,1]}(b) .
$$

Both are proper, closed, convex, and even. On their differentiability domains,

$$
\psi_{\varepsilon}^{\prime}(a)=\frac{a}{\sqrt{a^2+\varepsilon^2}}, \quad \quad \phi_{\varepsilon}^{\prime}(b)=\frac{\varepsilon b}{\sqrt{1-b^2}}
$$

These maps are inverse: if $b=a / \sqrt{a^2+\varepsilon^2}$, then $a=\varepsilon b / \sqrt{1-b^2}$, and conversely. Hence $\phi_{\varepsilon}^*=\psi_{\varepsilon}$ and $\psi_{\varepsilon}^*=\phi_{\varepsilon}$.
The spectral lifts Equation \eqref{Fenchel conjugates} are conjugate by the standard conjugacy theorem for unitarily invariant spectral functions. Equivalently, von Neumann's trace inequality reduces

$$
\sup _G\left\{\langle P, G\rangle_F-\Phi_{\varepsilon}(G)\right\}
$$

to the scalar singular-value optimization, whose value is $\sum_i \psi_{\varepsilon}\left(\sigma_i(P)\right)=\Psi_{\varepsilon}(P)$. The reverse conjugacy is identical. The gradient formula Equation \eqref{regularized Orth} follows from spectral-function calculus. The derivative of $a \mapsto a / \sqrt{a^2+\varepsilon^2}$ is

$$
\frac{\varepsilon^2}{\left(a^2+\varepsilon^2\right)^{3 / 2}} \leq \frac{1}{\varepsilon},
$$
so $\operatorname{Orth}_{\varepsilon}=\nabla \Psi_{\varepsilon}$ is $1 / \varepsilon$-Lipschitz in Frobenius norm. Finally, each nonzero singular value is transformed as $\sigma_i / \sqrt{\sigma_i^2+\varepsilon^2} \rightarrow 1$, proving $\operatorname{Orth}_{\varepsilon}(P) \rightarrow \operatorname{Orth}(P)$.
\subsection{Proof of Proposition \ref{Proposition 2}}
The first-order condition for Equation \eqref{regularized G} is
$$
0=P+\nabla \Phi_{\varepsilon}\left(G_{\varepsilon}(P)\right) .
$$
Since $\nabla \Phi_{\varepsilon}$ and $\nabla \Psi_{\varepsilon}$ are inverse on the interior of the effective domain of $\Phi_{\varepsilon}$,
$$
G_{\varepsilon}(P)=\nabla \Psi_{\varepsilon}(-P) .
$$
The function $\Psi_{\varepsilon}$ is even, so its gradient is odd, and $G_{\varepsilon}(P)=-\nabla \Psi_{\varepsilon}(P)=-\operatorname{Orth}_{\varepsilon}(P)$.
For the Bregman form, use $\nabla \Phi_{\varepsilon}\left(G_{\varepsilon, k}\right)=-P_k$. Then
$$
\begin{aligned}
D_{\Phi_{\varepsilon}}\left(G, G_{\varepsilon, k}\right) & =\Phi_{\varepsilon}(G)-\Phi_{\varepsilon}\left(G_{\varepsilon, k}\right)-\left\langle\nabla \Phi_{\varepsilon}\left(G_{\varepsilon, k}\right), G-G_{\varepsilon, k}\right\rangle_F \\
& =\Phi_{\varepsilon}(G)+\left\langle P_k, G\right\rangle_F+C_k
\end{aligned}
$$
where $C_k$ is independent of $G$. Adding $\left\langle P_{k+1}-P_k, G\right\rangle_F$ gives the same minimizers as $\left\langle P_{k+1}, G\right\rangle_F+\Phi_{\varepsilon}(G)$.
\subsection{Proof of Proposition \ref{Proposition 3}}
Let $\nu$ be a finite signed measure with $\nu(\mathcal{X})=0$ and set $\rho_s=\rho+s \nu$. Then

$$
m_{\rho_s}=m_\rho+s \int_{\mathcal{X}} F(W) \mathrm{d} \nu(W)
$$

Differentiating $J\left(\rho_s\right)=R\left(m_{\rho_s}\right)$ at $s=0$ gives

$$
\left.\frac{\mathrm{d}}{\mathrm{~d} s} J\left(\rho_s\right)\right|_{s=0}=R^{\prime}\left(m_\rho\right) \int_{\mathcal{X}} F(W) \mathrm{d} \nu(W)
$$

This is the defining identity for the first variation, modulo constants independent of $W$. Taking the spatial gradient gives the Wasserstein gradient in Equation \eqref{Wasserstein gradient of J_N}.

For the particle gradient, let $U=\left(U_1, \ldots, U_N\right)$. By the chain rule,

$$
\begin{aligned}
D J_N(W)[U] & =R^{\prime}\left(F^N(W)\right) \frac{1}{N} \sum_{i=1}^N\left\langle\nabla F\left(W_i\right), U_i\right\rangle_F \\
& =\frac{1}{N} \sum_{i=1}^N\left\langle a_i(W), U_i\right\rangle_F=\langle a(W), U\rangle_{\text {avg }} .
\end{aligned}
$$
Hence $\operatorname{grad}_{\text {avg }} J_N(W)=a(W)$.
\subsection{Product-space Fenchel duality for particles}
Define
$$
\Phi_{\varepsilon}^N(G)=\frac{1}{N} \sum_{i=1}^N \Phi_{\varepsilon}\left(G_i\right), \quad \Psi_{\varepsilon}^N(P)=\frac{1}{N} \sum_{i=1}^N \Psi_{\varepsilon}\left(P_i\right) .
$$

With respect to $\langle\cdot, \cdot\rangle_{\text {avg }}$,
$$
\begin{aligned}
\left(\Phi_{\varepsilon}^N\right)^*(P) & =\sup _G\left\{\frac{1}{N} \sum_i\left\langle P_i, G_i\right\rangle_F-\frac{1}{N} \sum_i \Phi_{\varepsilon}\left(G_i\right)\right\} \\
& =\frac{1}{N} \sum_i \sup _{G_i}\left\{\left\langle P_i, G_i\right\rangle_F-\Phi_{\varepsilon}\left(G_i\right)\right\}=\Psi_{\varepsilon}^N(P) .
\end{aligned}
$$

The reverse conjugacy is identical. Thus the finite-particle regularized Muon step is the blockwise minimization stated in Equation \eqref{Finite particle regularized Muon scheme}.
\subsection{Proof of Theorem \ref{McKean-Vlasov formulation} and \ref{Damped Hamiltonian structure}}
\begin{theorem}
    [Finite- $N$ ODE limit]. Assume the vector field $W \mapsto a(W)$ is Lipschitz on the region visited by the discrete and continuous trajectories, and $\operatorname{Orth}_{\varepsilon}$ is $1 / \varepsilon$-Lipschitz. Under $\eta_h=h$ and $\beta_h=1-\gamma h+O\left(h^2\right)$, the piecewise-linear interpolation of the particle scheme converges uniformly on compact intervals to

$$
\dot{W}_i=-\operatorname{Orth}_{\varepsilon}\left(P_i\right), \quad \dot{P}_i=\gamma\left(a_i(W)-P_i\right) .
$$

If the Lipschitz bound is global, the solution is global and the convergence rate is $O(h)$ when $\beta_h=1-\gamma h+O\left(h^2\right)$.
\end{theorem}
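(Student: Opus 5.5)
Write the scheme as a one-step map on $Y=(W,P)\in\mathcal Z^N$. Substituting $\beta_h=1-\gamma h+r_h$ gives
\[
P_{i,k+1}=P_{i,k}+h\gamma\bigl(a_i(W_k)-P_{i,k}\bigr)+r_h\bigl(P_{i,k}-a_i(W_k)\bigr),
\qquad
W_{i,k+1}=W_{i,k}-h\,\operatorname{Orth}_\varepsilon(P_{i,k+1}).
\]
This is a perturbed explicit Euler scheme for the vector field $V(W,P)=\bigl(-\operatorname{Orth}_\varepsilon(P),\,\gamma(a(W)-P)\bigr)$. The position update uses $P_{k+1}$ rather than $P_k$, so it is semi-implicit.

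First, under the global hypothesis, $V$ is globally Lipschitz on $\mathcal Z^N$. The constant is $L_V\le \max\{1/\varepsilon,\ \gamma(1+L_a)\}$ up to a factor, using the $1/\varepsilon$-Lipschitz bound for $\operatorname{Orth}_\varepsilon$ from the spectral conjugacy subsection. Here $L_a$ is the Lipschitz constant of $W\mapsto a(W)$, which follows from Assumption \ref{ass:Basic smoothness}: $a_i$ is a product of the bounded Lipschitz maps $R'(F^N(\cdot))$ and $\nabla F(W_i)$. Cauchy--Lipschitz then gives a unique global $C^1$ solution $Y(t)$, and $V(Y(t))$ is bounded and Lipschitz in time on $[0,T]$. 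In the localized version I would argue identically up to the exit time from the region and show a posteriori that neither trajectory leaves it.

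Second, I would do a local truncation estimate. Plugging the exact solution into the scheme, Taylor expansion gives $Y(t_{k+1})=Y(t_k)+hV(Y(t_k))+O(h^2)$. The discrepancy with the scheme splits into three pieces:
\begin{itemize}
\item[(i)] the $r_h$ term, of size $O(|r_h|)(\|P\|+\|a\|)$;
\item[(ii)] the semi-implicit shift, $h\|\operatorname{Orth}_\varepsilon(P_{k+1})-\operatorname{Orth}_\varepsilon(P_k)\|\le (h/\varepsilon)\|P_{k+1}-P_k\|=O(h^2)$;
\item[(iii)] the Euler remainder, $O(h^2)$.
\end{itemize}
A discrete Gronwall argument on $e_k=\|Y_k-Y(t_k)\|$ gives $e_{k+1}\le (1+L_Vh)e_k+C(h^2+|r_h|)$. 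Hence $\max_{k\le T/h}e_k\le C e^{L_VT}(h+|r_h|/h)$, which tends to $0$, and is $O(h)$ when $r_h=O(h^2)$. Bounded momenta on $[0,T]$ follow from the same recursion, since $\|a\|\le M_RM_F$.

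Third, for the interpolation: on $[t_k,t_{k+1}]$ the linear interpolant differs from $Y_k$ by at most $h\sup\|V\|$, and $Y(t)$ differs from $Y(t_k)$ by the same order. So uniform convergence on $[0,T]$ follows with the same rate.

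The main obstacle is the localized case, where one needs a priori confinement of both the discrete and continuous trajectories. I would first get it from $\|\operatorname{Orth}_\varepsilon\|_{\op}<1$, which bounds the growth of $W$ linearly in $t$, and from the contraction structure of the $P$-update, $\|P_{k+1}\|\le\beta_h\|P_k\|+(1-\beta_h)\sup\|a\|$. Together these yield a bounded region that depends only on $T$ and the initial data.
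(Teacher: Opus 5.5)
Your proposal is correct and follows essentially the same route as the paper: a globally Lipschitz vector field gives a unique global solution, and the one-step consistency error splits into the $r_h$ term and the semi-implicit shift $h\bigl(\operatorname{Orth}_{\varepsilon}(P_{k+1})-\operatorname{Orth}_{\varepsilon}(P_k)\bigr)=O(h^2)$. A discrete Gronwall estimate then closes the argument. You spell out more explicitly the error recursion $e_{k+1}\le(1+Lh)e_k+C(h^2+|r_h|)$, the interpolation step, and the a priori confinement for the localized case, which the paper defers to a later remark.
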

\begin{proof}

Let
$$
B_N(W, P)=\left(-\operatorname{Orth}_{\varepsilon}\left(P_i\right), \gamma\left(a_i(W)-P_i\right)\right)_{i=1}^N,
$$
where $a_i(W)=R^{\prime}\left(F^N(W)\right) \nabla F\left(W_i\right)$. The map $P \mapsto \operatorname{Orth}_{\varepsilon}(P)$ is globally $1 / \varepsilon$-Lipschitz. Under Assumption \ref{ass:Basic smoothness}, $a(W)$ is globally Lipschitz in the mean-field norm. Indeed, for two particle configurations $W, \widetilde{W}$,
$$
\begin{aligned}
\left\|a_i(W)-a_i(\widetilde{W})\right\|_F \leq & \left|R^{\prime}\left(F^N(W)\right)\right|\left\|\nabla F\left(W_i\right)-\nabla F\left(\widetilde{W}_i\right)\right\|_F \\
& +\left|R^{\prime}\left(F^N(W)\right)-R^{\prime}\left(F^N(\widetilde{W})\right)\right|\left\|\nabla F\left(\widetilde{W}_i\right)\right\|_F .
\end{aligned}
$$

The first term is controlled by the global bound $\left|R^{\prime}\right| \leq M_R$ and the Lipschitzness of $\nabla F$. The second term is controlled by the Lipschitzness of $R^{\prime}$, the boundedness of $\nabla F$, and the Lipschitzness of $F$, which follows from $\|\nabla F\| \leq M_F$. Thus the ODE has a unique global solution. The global existence follows from bounded $W$-velocity, $\left\|\operatorname{Orth}_{\varepsilon}\left(P_i\right)\right\|_F \leq \sqrt{q}$, and at most linear growth in $P_i$.

Then,

$$
P_{i, k+1}-P_{i, k}=\left(1-\beta_h\right)\left(a_i\left(W_k\right)-P_{i, k}\right)=\left(\gamma h-r_h\right)\left(a_i\left(W_k\right)-P_{i, k}\right),
$$

which is $h \gamma\left(a_i\left(W_k\right)-P_{i, k}\right)+o(h)$ locally uniformly on bounded sets. Also,

$$
\begin{aligned}
W_{i, k+1}-W_{i, k} & =-h \operatorname{Orth}_{\varepsilon}\left(P_{i, k+1}\right) \\
& =-h \operatorname{Orth}_{\varepsilon}\left(P_{i, k}\right)-h\left(\operatorname{Orth}_{\varepsilon}\left(P_{i, k+1}\right)-\operatorname{Orth}_{\varepsilon}\left(P_{i, k}\right)\right) .
\end{aligned}
$$

The last term is $O\left(h^2\right)+o(h) h$ on bounded sets by the Lipschitzness of Orth ${ }_{\varepsilon}$. The one-step consistency error therefore tends to zero after division by $h$, and it is $O(h)$ when $r_h=O\left(h^2\right)$. The standard Euler convergence estimate with discrete Gronwall gives uniform convergence on $[0, T]$ and rate $O(h)$ in the second-order-consistent case.
\end{proof}
\begin{remark}
    The statement is intentionally phrased in a local form. For neural-network parameterizations, global Lipschitz constants are often unavailable, while finite-horizon bounded-trajectory constants are natural.
\end{remark} 
Let $\mu_t \in \mathcal{P}_1(\mathrm{Z})$ with $\mathrm{Z}=\mathrm{X} \times \mathrm{X}$ and $\rho_t=\left(\pi_W\right)_{\#} \mu_t$. Define

$$
b_{\mu_t}(W, P)=\left(-\operatorname{Orth}_{\varepsilon}(P), \gamma\left(a_{\rho_t}(W)-P\right)\right) .
$$

\begin{theorem}[Well-posed McKean-Vlasov equation] Under global Lipschitz assumptions on $b_\mu$ in ( $W, P$ ) and in $\mu$ with respect to $W_1$, for every $\mu_0 \in \mathcal{P}_1(\mathrm{Z})$ there is a unique solution $\mu \in C\left([0, \infty) ; \mathcal{P}_1(\mathrm{Z})\right)$ of

$$
\partial_t \mu_t+\nabla_W \cdot\left(-\operatorname{Orth}_{\varepsilon}(P) \mu_t\right)+\nabla_P \cdot\left(\gamma\left(a_{\rho_t}(W)-P\right) \mu_t\right)=0 .
$$

Equivalently, for every $\zeta \in C_c^{\infty}(\mathrm{Z})$,

$$
\frac{\mathrm{d}}{\mathrm{~d} t} \int \zeta \mathrm{~d} \mu_t=\int\left[\left\langle\nabla_W \zeta,-\operatorname{Orth}_{\varepsilon}(P)\right\rangle+\left\langle\nabla_P \zeta, \gamma\left(a_{\rho_t}(W)-P\right)\right\rangle\right] \mathrm{d} \mu_t
$$
\end{theorem}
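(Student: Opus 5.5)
We follow the classical Sznitman/Dobrushin route: solve the nonlinear characteristic (McKean--Vlasov) SDE-free ODE by a fixed point on measure-valued curves, then identify its law with the weak solution and prove uniqueness of weak solutions via the superposition principle for Lipschitz fields. Throughout, write $b_\mu(W,P)=(-\Ortheps(P),\gamma(a_{\rho}(W)-P))$ and assume
\[
|b_\mu(y)-b_\nu(y')|\le L\bigl(\|y-y'\|+W_1(\mu,\nu)\bigr),\qquad \|b_\mu(y)\|\le C(1+\|y\|).
\]
For our concrete field, the constant $L$ is built from $1/\eps$, $\gamma$, $M_R$, $L_R$, $M_F$, and $\mathrm{Lip}(\nabla F)$, exactly as in the finite-$N$ proof. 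The $W_1$-dependence enters only through $|R'(m_\rho)-R'(m_{\rho'})|\le L_R M_F W_1(\rho,\rho')$, since $F$ is $M_F$-Lipschitz, and projection to $W$ does not increase $W_1$.

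\textbf{Step 1 (fixed point).} Fix $T>0$ and a curve $\nu\in C([0,T];\cP_1(\Z))$. The field $y\mapsto b_{\nu_t}(y)$ is globally Lipschitz with linear growth, so the flow $X^\nu_t$ of $\dot y=b_{\nu_t}(y)$ exists globally. Define $\Gamma(\nu)_t=(X^\nu_t)_\#\mu_0$. Linear growth together with Gronwall gives
\[
\int\|X^\nu_t\|\,d\mu_0\le C_T\Bigl(1+\int\|y\|\,d\mu_0\Bigr),
\]
so $\Gamma$ maps $C([0,T];\cP_1)$ into itself, with time-continuity coming from $\|X_t-X_s\|\le C|t-s|(1+\sup\|X\|)$. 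Next, for two curves $\nu,\nu'$ couple the two flows through the same initial point. Gronwall yields
\[
\|X^\nu_t-X^{\nu'}_t\|\le Le^{Lt}\int_0^t W_1(\nu_s,\nu'_s)\,ds,
\qquad\text{hence}\qquad
W_1(\Gamma\nu_t,\Gamma\nu'_t)\le Le^{LT}\int_0^t W_1(\nu_s,\nu'_s)\,ds.
\]
Iterating this estimate gives $\|\Gamma^k\nu-\Gamma^k\nu'\|_\infty\le (Le^{LT}T)^k/k!\,\|\nu-\nu'\|_\infty$. Alternatively, one can use a weighted sup norm $\sup_t e^{-\lambda t}W_1$ with $\lambda$ large, which makes $\Gamma$ a contraction. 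Either way we obtain a unique fixed point $\mu$. Because the construction is consistent across horizons, it extends to $[0,\infty)$.

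\textbf{Step 2 (weak formulation).} For $\zeta\in C_c^\infty(\Z)$ we have
\[
\int\zeta\,d\mu_t=\int\zeta(X^\mu_t)\,d\mu_0.
\]
Differentiate under the integral, using the chain rule along characteristics and dominated convergence; this is justified because $\nabla\zeta$ is bounded and compactly supported and $b$ is continuous. The result is exactly the stated weak identity. Hence the fixed point is a weak solution.

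\textbf{Step 3 (uniqueness among weak solutions) — the main obstacle.} Let $\tilde\mu\in C([0,\infty);\cP_1)$ be any weak solution. Freeze its coefficients: $\tilde\mu$ then solves the \emph{linear} continuity equation with the given field $\tilde b_t=b_{\tilde\mu_t}$, which is Lipschitz in $y$ and continuous in $t$. We need uniqueness for this linear equation among narrowly continuous curves with $\int_0^T\!\int\|\tilde b_t\|\,d\tilde\mu_t\,dt<\infty$, which holds by linear growth and the $\cP_1$ bound. This can be obtained in two ways. The first is the superposition principle (Ambrosio--Gigli--Savar\'e, Prop.~8.1.8), which for Lipschitz fields forces $\tilde\mu_t=(X^{\tilde\mu}_t)_\#\mu_0$. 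The second is a duality argument: solve the backward transport equation $\partial_s\varphi+\tilde b_s\cdot\nabla\varphi=0$ with smooth compactly supported terminal data, which is possible after mollifying $\tilde b$, and control the commutator error by the Lipschitz bound. This step needs care because test functions must be extended from $C_c^\infty$ to $C^1$ functions with linear growth, which requires a cutoff argument using the first moment. Once it is done, we get $\tilde\mu=\Gamma(\tilde\mu)$, and uniqueness of the fixed point from Step 1 gives $\tilde\mu=\mu$.
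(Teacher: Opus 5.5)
Your proposal is correct and takes the same route as the paper. Both arguments use Picard iteration for the nonlinear characteristic ODE, the chain rule along characteristics to get the weak form, and the superposition principle on the frozen linear continuity equation, so that any weak solution is a fixed point of $\Gamma$ and therefore equals $\mu$. You additionally write out the $W_1$ contraction estimate that the paper only invokes; the cutoff extension to linearly growing test functions you flag in Step 3 is not actually needed, since the AGS representation result works directly with compactly supported test functions under your integrability bound.
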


\begin{proof}
The nonlinear characteristic system is

$$
W_t=W_0-\int_0^t \operatorname{Orth}_{\varepsilon}\left(P_s\right) \mathrm{d} s, \quad P_t=P_0+\int_0^t \gamma\left(a_{\rho_s}\left(W_s\right)-P_s\right) \mathrm{d} s
$$

where $m_s=\mathbb{E} F\left(W_s\right)$. The drift is Lipschitz in ( $W, P$ ) and in the law with respect to $W_1$ under the preceding estimates, with at most linear growth in $P$. Picard iteration for McKean-Vlasov ODEs gives existence and uniqueness. Setting $\mu_t=\operatorname{Law}\left(W_t, P_t\right)$ and applying the chain rule to any $\zeta \in C_c^{\infty}(\mathcal{Z})$ gives
 \begin{equation*}
\begin{aligned}
\frac{d}{d t} \zeta\left(W_t, P_t\right)
=&\left\langle\nabla_W \zeta\left(W_t, P_t\right), \dot{W}_t\right\rangle
+\left\langle\nabla_P \zeta\left(W_t, P_t\right), \dot{P}_t\right\rangle \\
=& \left\langle\nabla_W \zeta,-\operatorname{Orth}_{\varepsilon}\left(P_t\right)\right\rangle
+\left\langle\nabla_P \zeta, \gamma\left(a_i(W)-P_i\right)\right\rangle .
\end{aligned}
\end{equation*}
   
Conversely, the superposition principle for Lipschitz continuity equations represents any weak solution by characteristics, and uniqueness of the nonlinear characteristic equation gives uniqueness of the PDE solution.
\end{proof}
\begin{remark} For the scalar functional, Lipschitz dependence on the measure follows from Lipschitzness of $R^{\prime}$ and $F$ together with boundedness of $\nabla F$ and the boundedness/local boundedness of $R^{\prime}$ needed for the spatial Lipschitz term.\end{remark}

\begin{theorem}
    [Hamiltonian form and dissipation] Let
$$
\Ham_{\varepsilon, \gamma}(\mu)=\int \Psi_{\varepsilon}(P) \mathrm{d} \mu(W, P)+\gamma R\left(\int F(W) \mathrm{d} \mu(W, P)\right)
$$
A first variation is
$$
\frac{\delta \Ham_{\varepsilon, \gamma}}{\delta \mu}(\mu)(W, P)=\Psi_{\varepsilon}(P)+\gamma R^{\prime}\left(m_\mu\right) F(W) .
$$

Therefore Equation \eqref{Mc-Kean Vlasov equation} is the damped Hamiltonian equation. Along weak solutions for which the cutoff argument is justified,

$$
\frac{\mathrm{d}}{\mathrm{~d} t} \Ham_{\varepsilon, \gamma}\left(\mu_t\right)=-\gamma \int\left\langle P, \operatorname{Orth}_{\varepsilon}(P)\right\rangle_{\mathrm{F}} \mathrm{~d} \mu_t \leq 0 .
$$
\end{theorem}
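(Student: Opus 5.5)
We first compute the first variation of $\Ham_{\varepsilon,\gamma}$ by perturbing $\mu_s=\mu+s\nu$ with $\nu$ a finite signed measure of zero mass on $\mathcal Z$. This is the computation in Proposition \ref{Proposition 3}, applied to the first term as well. The kinetic term is linear in $\mu$, so its derivative is $\int\Psi_\varepsilon\,d\nu$. The potential term gives $\gamma R'(m_\mu)\int F\,d\nu$ by the chain rule, where $m_\mu=\int F(W)\,d\mu$. Together these yield $\frac{\delta\Ham}{\delta\mu}=\Psi_\varepsilon(P)+\gamma R'(m_\mu)F(W)$, up to constants.

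Next we take phase-space gradients. Since $\nabla\Psi_\varepsilon=\operatorname{Orth}_\varepsilon$, we get $\nabla_P\frac{\delta\Ham}{\delta\mu}=\operatorname{Orth}_\varepsilon(P)$. We also get $\nabla_W\frac{\delta\Ham}{\delta\mu}=\gamma R'(m_\mu)\nabla F(W)=\gamma a_t(W)$. Substituting these into Equation \eqref{Hamiltonian equation} reproduces the drift $(-\operatorname{Orth}_\varepsilon(P),\,\gamma(a_t(W)-P))$ of Equation \eqref{Mc-Kean Vlasov equation}, which proves the equivalence.

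For the dissipation identity, we formally test the weak form against $\zeta=\frac{\delta\Ham}{\delta\mu_t}$, with $\mu_t$ frozen in the coefficient. The chain rule for $\Ham$ along the curve gives $\frac{d}{dt}\Ham(\mu_t)=\int\langle\nabla_W\frac{\delta\Ham}{\delta\mu},-\operatorname{Orth}_\varepsilon(P)\rangle+\langle\nabla_P\frac{\delta\Ham}{\delta\mu},\gamma(a_t-P)\rangle\,d\mu_t$. Substituting the two gradients gives
\[
\int\Big[-\gamma\langle a_t,\operatorname{Orth}_\varepsilon(P)\rangle+\gamma\langle\operatorname{Orth}_\varepsilon(P),a_t\rangle-\gamma\langle\operatorname{Orth}_\varepsilon(P),P\rangle\Big]d\mu_t .
\]
The two cross terms cancel, which is the Hamiltonian antisymmetry, and only $-\gamma D_t$ remains.

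An alternative, cleaner route uses characteristics. We use the representation $\mu_t=\operatorname{Law}(W_t,P_t)$ from the well-posedness theorem and differentiate $\mathbb E\Psi_\varepsilon(P_t)+\gamma R(\mathbb E F(W_t))$ directly. The same computation then holds pathwise in expectation.

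The main obstacle is justifying the test function, which is not compactly supported because $\Psi_\varepsilon$ grows linearly. We would use the cutoff argument: take $\zeta_R=\chi(\cdot/R)\,\Psi_\varepsilon(P)$ and let $R\to\infty$. This is controlled by $\Psi_\varepsilon(P)\le\|P\|_{\nuc}\le\sqrt q\|P\|_F$ and $\|\operatorname{Orth}_\varepsilon\|_F\le\sqrt q$. We also need the linear growth of the drift and finite first moments propagated in $C([0,T];\mathcal P_1)$, plus dominated convergence.

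For the $R$ term, we differentiate $m_t=\int F\,d\mu_t$. Here $F$ is Lipschitz with at most linear growth, so the same cutoff applies, and then we use $R\in C^1$.

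Finally, we check the sign and the singular-value form. Von Neumann alignment gives $\langle P,\operatorname{Orth}_\varepsilon(P)\rangle=\sum_r\sigma_r^2/\sqrt{\sigma_r^2+\varepsilon^2}\ge0$, with equality iff all $\sigma_r=0$, i.e. $P=0$.
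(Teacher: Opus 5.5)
Your proposal is correct and follows essentially the same route as the paper. It computes the perturbative first variation, substitutes $\nabla_P\frac{\delta \Ham_{\varepsilon,\gamma}}{\delta\mu}=\operatorname{Orth}_\varepsilon(P)$ and $\nabla_W\frac{\delta \Ham_{\varepsilon,\gamma}}{\delta\mu}=\gamma a_t(W)$ into the Hamiltonian form, tests the weak form separately with $\Psi_\varepsilon(P)$ and $F(W)$ (justified by cutoff) plus the scalar chain rule for $R(m_t)$ so that the cross terms $\pm\gamma\int\langle a_t,\operatorname{Orth}_\varepsilon(P)\rangle_F\,d\mu_t$ cancel, and finishes with the shared-singular-vector computation for the sign and vanishing statement.
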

\begin{proof}
     Let $\nu$ be a finite signed measure on $\Z$ with
    $\nu(\Z)=0$ and finite first moment. For small $s$, formally set
    $$
    \mu_s=\mu+s \nu .
    $$
    Then
    $$
    m_{\mu_s}=\int_\Z F(W) d(\mu+s \nu)(W, P)=m_\mu+s \int_\Z F(W) d \nu(W, P)
    $$
    Note that,
    $$
    \Ham_{\varepsilon, \gamma}\left(\mu_s\right)=\int_\Z \Psi_{\varepsilon}(P) d(\mu+s \nu)+\gamma R\left(m_{\mu_s}\right) = \int_\Z \Psi_{\varepsilon}(P) d \mu+s \int_\Z \Psi_{\varepsilon}(P) d \nu+\gamma R\left(m_\mu+s \int_\Z F(W) d \nu\right).
    $$
    Differentiating at $s=0$, we have that
    $$
    \left.\frac{d}{d s} \Ham_{\varepsilon, \gamma}\left(\mu_s\right)\right|_{s=0}=\int_Z\left[\Psi_{\varepsilon}(P)+\gamma R^{\prime}\left(m_\mu\right) F(W)\right] d \nu(W, P).
    $$
    Hence a valid first variation is
    $$
    \frac{\delta \Ham_{\varepsilon, \gamma}}{\delta \mu}(\mu)(W, P)=\Psi_{\varepsilon}(P)+\gamma R^{\prime}\left(m_\mu\right) F(W),
    $$
    up to an additive constant depending on $\mu$ but independent of $(W, P)$. Such constants do not affect $\nabla_W$ or $\nabla_P$. At a fixed $\mu = \mu_t$, the scalar $m_t \coloneqq m_{\mu_t}$ is constant with respect to the variables $(W, P)$. Therefore
$$
\nabla_P \frac{\delta \Ham_{\varepsilon, \gamma}}{\delta \mu}(\mu_t)(W, P)=\nabla \Psi_{\varepsilon}(P)
$$
and
$$
\nabla_W \frac{\delta \Ham_{\varepsilon, \gamma}}{\delta \mu}(\mu_t)(W, P)=\gamma R^{\prime}\left(m_t\right) \nabla F(W).
$$

   Let
$ G_{\varepsilon}(P)\coloneqq \nabla \Psi_{\varepsilon}(P),
$ and define $a_t(W)\coloneqq R^{\prime}\left(m_t\right) \nabla F(W)$. Then the PDE in Equation \eqref{Mc-Kean Vlasov equation} is the continuity equation with velocity field
$$
b_t(W, P)=\left(b_W(t, W, P), b_P(t, W, P)\right),
$$
where
$$
b_W(t, W, P)=-G_{\varepsilon}(P) \quad\textrm{ and }\quad b_P(t, W, P)=\gamma\left(a_t(W)-P\right).
$$
Thus
$$
\partial_t \mu_t+\nabla_W \cdot\left(\mu_t b_W\right)+\nabla_P \cdot\left(\mu_t b_P\right)=0.
$$
In weak form, for every smooth compactly supported test function $\zeta \in C_c^{\infty}(\Z)$,
\begin{equation}\label{weak form of Hamiltonian flow}
\frac{d}{d t} \int_\Z \zeta(W, P) d \mu_t(W, P)=\int_\Z\left[\left\langle\nabla_W \zeta, b_W\right\rangle_F+\left\langle\nabla_P \zeta, b_P\right\rangle_F\right] d \mu_t.
\end{equation}
The functions $F(W)$ and $\Psi_{\varepsilon}(P)$ are not compactly supported, but they have at most linear growth and bounded gradients:
$$
|F(W)| \leq|F(0)|+M_F\|W\|_F,
$$
and
$$
0 \leq \Psi_{\varepsilon}(P) \leq\|P\|_{\mathrm{nuc}} \leq \sqrt{q}\|P\|_F,
$$
while
$$
\|\nabla F(W)\|_F \leq M_F, \quad\left\|\nabla \Psi_{\varepsilon}(P)\right\|_F \leq \sqrt{q}.
$$
Hence $F$ and $\Psi_{\varepsilon}$ can be used as test functions by a standard cutoff argument. For completeness, here is the cutoff justification. Let $\chi \in C_c^{\infty}([0, \infty))$ satisfy

$$
0 \leq \chi \leq 1, \quad \chi(r)=1 \text { for } r \leq 1, \quad \chi(r)=0 \text { for } r \geq 2 .
$$
Define
$\chi_R(W, P)\coloneqq \chi\left(\frac{\sqrt{\|W\|_F^2+\|P\|_F^2}}{R}\right)$. For a $C^1$ function $\zeta$ with at most linear growth and bounded gradient, set $
\zeta_R\coloneqq \chi_R \zeta$. Then $\zeta_R \in C_c^1(\Z), \zeta_R \rightarrow \zeta$ pointwise, and

$$
\nabla \zeta_R=\chi_R \nabla \zeta+\zeta \nabla \chi_R
$$

The first term converges to $\nabla \zeta$. The second term is supported on the annulus

$$
R \leq \sqrt{\|W\|_F^2+\|P\|_F^2} \leq 2 R,
$$

and satisfies

$$
\left|\zeta \nabla \chi_R\right| \leq \frac{C}{R}\left(1+\|W\|_F+\|P\|_F\right) \mathbf{1}_{\{R \leq\|(W, P)\| \leq 2 R\}} \leq C \mathbf{1}_{\{\|(W, P)\| \geq R\}} .
$$
Since the vector field has at most linear growth in $P$, and $\mu_t$ has finite first moment, the contribution of this annulus vanishes as $R \rightarrow \infty$. Therefore Equation \ref{weak form of Hamiltonian flow} remains valid for $\zeta=F(W)$ and $\zeta=\Psi_{\varepsilon}(P)$.

By definition, $m_t=\int_\Z F(W) d \mu_t(W, P)$. Take $\zeta(W, P)=F(W)$. Then
$$
\nabla_W \zeta(W, P)=\nabla F(W), \quad \nabla_P \zeta(W, P)=0 .
$$

Using Equation \ref{weak form of Hamiltonian flow},

$$
\frac{d}{d t} m_t=\int_\Z\left\langle\nabla F(W), b_W(t, W, P)\right\rangle_F d \mu_t(W, P) .
$$

Since $b_W=-G_{\varepsilon}(P)$,

\begin{equation}\label{derivative of mt}
\frac{d}{d t} m_t=-\int_\Z\langle\nabla F(W), G_{\varepsilon}(P)\rangle_F d \mu_t(W, P) =-\int_\Z\left\langle\nabla F(W), \nabla \Psi_{\varepsilon}(P)\right\rangle_F d \mu_t(W, P).
\end{equation}
Since both $\nabla F$ and $\nabla \Psi_{\varepsilon}$ are bounded, $m_t$ is absolutely continuous. Further, $R \in C^1$ and $m_t$ is absolutely continuous, the ordinary chain rule and Equation \ref{derivative of mt} gives
$$
\frac{d}{d t}\left[\gamma R\left(m_t\right)\right]=\gamma R^{\prime}\left(m_t\right) m_t^{\prime} =-\gamma R^{\prime}\left(m_t\right) \int_\Z\langle\nabla F(W), G_{\varepsilon}(P)\rangle_F d \mu_t.
$$

Since $R^{\prime}\left(m_t\right)$ is a scalar independent of $(W, P)$, this is

$$
\frac{d}{d t}\left[\gamma R\left(m_t\right)\right]=-\gamma \int_\Z\left\langle R^{\prime}\left(m_t\right) \nabla F(W), G_{\varepsilon}(P)\right\rangle_F d \mu_t.
$$
Using $a_t(W)=R^{\prime}\left(m_t\right) \nabla F(W)$,
\begin{equation}\label{time derivative of potential energy}
\frac{d}{d t}\left[\gamma R\left(m_t\right)\right]=-\gamma \int_\Z\left\langle a_t(W), G_{\varepsilon}(P)\right\rangle_F d \mu_t.
\end{equation}
Define
$$
K_{\varepsilon}(t)\coloneqq \int_\Z \Psi_{\varepsilon}(P) d \mu_t(W, P)
$$

Take $\zeta(W, P)=\Psi_{\varepsilon}(P)$. Then

$$
\nabla_W \zeta(W, P)=0, \quad \nabla_P \zeta(W, P)=G_{\varepsilon}(P) .
$$

Using Equation \ref{weak form of Hamiltonian flow},

$$
\frac{d}{d t} K_{\varepsilon}(t)=\int_\Z\left\langle G_{\varepsilon}(P), b_P(t, W, P)\right\rangle_F d \mu_t
$$

Since

$$
b_P(t, W, P)=\gamma\left(a_t(W)-P\right),
$$

we obtain

$$
\frac{d}{d t} K_{\varepsilon}(t)=\gamma \int_\Z\left\langle G_{\varepsilon}(P), a_t(W)-P\right\rangle_F d \mu_t .
$$

Expanding the inner product,

$$
\frac{d}{d t} K_{\varepsilon}(t)=\gamma \int_\Z\left\langle G_{\varepsilon}(P), a_t(W)\right\rangle_F d \mu_t-\gamma \int_\Z\langle G_{\varepsilon}(P), P\rangle_F d \mu_t.
$$
Therefore
\begin{equation}\label{time derivative of kinetic energy}
\frac{d}{d t} K_{\varepsilon}(t)=\gamma \int_\Z\left\langle a_t(W), G_{\varepsilon}(P)\right\rangle_F d \mu_t-\gamma \int_\Z\langle P, G_{\varepsilon}(P)\rangle_F d \mu_t.
\end{equation}
The total Hamiltonian energy is
$$
\Ham_{\varepsilon, \gamma}(t)=K_{\varepsilon}(t)+\gamma R\left(m_t\right) .
$$
Using Equations \ref{time derivative of potential energy} and \ref{time derivative of kinetic energy},
$$
\frac{d}{d t} \Ham_{\varepsilon,\gamma}(t)=\left[\gamma \int_\Z\left\langle a_t(W), G_{\varepsilon}(P)\right\rangle_F d \mu_t-\gamma \int_\Z\langle P, G_{\varepsilon}(P)\rangle_F d \mu_t\right]-\gamma \int_\Z\left\langle a_t(W), G_{\varepsilon}(P)\right\rangle_F d \mu_t
$$
The mixed force-transport terms cancel exactly:
$$
\gamma \int_\Z\left\langle a_t(W), G_{\varepsilon}(P)\right\rangle_F d \mu_t-\gamma \int_\Z\left\langle a_t(W), G_{\varepsilon}(P)\right\rangle_F d \mu_t=0
$$
Hence, we obtain
$$
\frac{d}{d t} \Ham_{\varepsilon,\gamma}(t)=-\gamma \int_\Z\langle P, G_{\varepsilon}(P)\rangle_F d \mu_t=-\gamma \int_\Z\left\langle P, \nabla \Psi_{\varepsilon}(P)\right\rangle_F d \mu_t(W, P).
$$
Let $P=U \operatorname{diag}\left(\sigma_1, \ldots, \sigma_s\right) V^{\top}
$ be a reduced SVD of $P$, with $\sigma_i>0$. Then, we have that
$$
\nabla \Psi_{\varepsilon}(P)=U \operatorname{diag}\left(\frac{\sigma_1}{\sqrt{\sigma_1^2+\varepsilon^2}}, \ldots, \frac{\sigma_s}{\sqrt{\sigma_s^2+\varepsilon^2}}\right) V^{\top} .
$$
Therefore
$$
\left\langle P, \nabla \Psi_{\varepsilon}(P)\right\rangle_F=\operatorname{tr}\left[P^{\top} \nabla \Psi_{\varepsilon}(P)\right] .
$$
Substituting the SVD expressions, we have that $P^{\top}=V \operatorname{diag}\left(\sigma_1, \ldots, \sigma_s\right) U^{\top}$, so $
P^{\top} \nabla \Psi_{\varepsilon}(P)=V \operatorname{diag}\left(\sigma_1, \ldots, \sigma_s\right) U^{\top} U \operatorname{diag}\left(\frac{\sigma_1}{\sqrt{\sigma_1^2+\varepsilon^2}}, \ldots, \frac{\sigma_s}{\sqrt{\sigma_s^2+\varepsilon^2}}\right) V^{\top}$. Since $U^{\top} U=I$ and $V^{\top} V=I$,
$$
P^{\top} \nabla \Psi_{\varepsilon}(P)=V \operatorname{diag}\left(\frac{\sigma_1^2}{\sqrt{\sigma_1^2+\varepsilon^2}}, \ldots, \frac{\sigma_s^2}{\sqrt{\sigma_s^2+\varepsilon^2}}\right) V^{\top} .
$$
Taking the trace,
$$
\left\langle P, \nabla \Psi_{\varepsilon}(P)\right\rangle_F=\sum_{i=1}^s \frac{\sigma_i^2}{\sqrt{\sigma_i^2+\varepsilon^2}}.
$$
If we use the full singular-value list $\sigma_1, \ldots, \sigma_q$, with zero padding beyond the rank, this becomes
$$
\left\langle P, \nabla \Psi_{\varepsilon}(P)\right\rangle_F=\sum_{i=1}^q \frac{\sigma_i(P)^2}{\sqrt{\sigma_i(P)^2+\varepsilon^2}}.
$$
Each term is nonnegative. Therefore
$$
\left\langle P, \nabla \Psi_{\varepsilon}(P)\right\rangle_F \geq 0 .
$$
Moreover,
$$
\frac{\sigma_i^2}{\sqrt{\sigma_i^2+\varepsilon^2}}=0 \Longleftrightarrow \sigma_i=0 .
$$
Thus
$$
\left\langle P, \nabla \Psi_{\varepsilon}(P)\right\rangle_F=0 \quad \Longleftrightarrow \quad \sigma_i(P)=0 \text { for all } i \quad \Longleftrightarrow \quad P=0 \text {. }
$$
Hence the dissipation functional satisfies
$$
\mathcal{D}_{\varepsilon}(\mu)=0 \quad \Longleftrightarrow \quad P=0 \quad \mu \text {-a.e. } \quad \Longleftrightarrow \quad \mu^P=\delta_0 .
$$
Consequently, we have that
$$
\frac{d}{d t} \Ham_{\varepsilon,\gamma}(t) \leq 0 .
$$
\end{proof}

\subsection{Proof of Theorem \ref{Theorem:Exponential convergence under explicit assumptions} as a scalar Hilbert-space corollary}\label{app:proof-thm3-corollary}

\begin{proof}
We apply the Hilbert-domain convergence theorem, Theorem \ref{thm:convergence-gated}, with the one-block scalar-output choice
\[
        \Theta=\mathcal{X}=\mathbb{R}^{m\times n},
        \qquad
        \mathcal{H}=\mathbb{R},
\]
where \(\Theta\) carries the Frobenius inner product and \(\mathcal{H}\) carries the usual Euclidean inner product on \(\mathbb{R}\).  The product-space block number is \(B=1\), so the product norm \(\|\cdot\|_{\Theta}\) is exactly \(\|\cdot\|_F\).  We use the same scalar feature map \(F:\mathcal{X}\to\mathbb{R}\) and the same outer loss \(R:\mathbb{R}\to\mathbb{R}\).  Therefore
\[
        m_{\rho}=\int_{\Theta}F(W)\,d\rho(W)
\]
is the same moment as in the main text, and the Hilbert-space objective \(R(m_{\rho})\) is exactly \(J(\rho)\).

We next identify the Hilbert-space force with the scalar matrix-space force.  For \(u\in\mathbb{R}\) and \(V\in\mathcal{X}\), the adjoint of \(DF(W):\mathcal{X}\to\mathbb{R}\) is characterized by
\[
        \langle DF(W)^*u,V\rangle_F
        =u\,DF(W)[V]
        =u\,\langle \nabla F(W),V\rangle_F
        =\langle u\nabla F(W),V\rangle_F .
\]
Hence
\[
        DF(W)^*u=u\nabla F(W).
\]
Taking \(u=\nabla R(m_{\rho})=R'(m_{\rho})\) gives
\[
        a_{\rho}(W)=DF(W)^*\nabla R(m_{\rho})
        =R'(m_{\rho})\nabla F(W),
\]
which is precisely the force \(a_t(W)\) used in Theorem \ref{Theorem:Exponential convergence under explicit assumptions}.  Thus the quantities \(A_t,U_t,C_t\) in \eqref{eq:convergence-quantities-gated} reduce exactly to the main-text quantities in Section \ref{Convergence, particle approximation, and hard Muon}.  Likewise, since \(B=1\),
\[
        \Psi^{\Theta}_{\varepsilon}(P)=\Psi_{\varepsilon}(P),
        \qquad
        \operatorname{Orth}^{\Theta}_{\varepsilon}(P)=\operatorname{Orth}_{\varepsilon}(P),
\]
and therefore \(K_t,D_t,H_t\) in Theorem \ref{thm:convergence-gated} reduce exactly to the corresponding \(K_t,D_t,H_t\) in Theorem \ref{Theorem:Exponential convergence under explicit assumptions}.

The assumptions also specialize exactly.  The kinetic estimates \eqref{eq:kinetic-K-gated}-\eqref{eq:kinetic-LG-gated} are precisely Assumption \ref{ass:Trajectory-level convergence hypotheses}(a).  For completeness, when the momentum support satisfies \(\|P\|_F\le B_P\), these constants are obtained as follows.  For \(s\in[0,B_P]\),
\[
        \sqrt{s^2+\varepsilon^2}-\varepsilon
        =\int_0^s \frac{r}{\sqrt{r^2+\varepsilon^2}}\,dr,
        \qquad
        \frac{d}{dr}\left(\frac{r}{\sqrt{r^2+\varepsilon^2}}\right)
        =\frac{\varepsilon^2}{(r^2+\varepsilon^2)^{3/2}}
        \ge \frac{\varepsilon^2}{(B_P^2+\varepsilon^2)^{3/2}}.
\]
Since \(r/\sqrt{r^2+\varepsilon^2}\) vanishes at \(r=0\), integration gives
\[
        \sqrt{s^2+\varepsilon^2}-\varepsilon
        \ge \frac{\kappa_K}{2}s^2,
        \qquad
        \kappa_K=\frac{\varepsilon^2}{(B_P^2+\varepsilon^2)^{3/2}}.
\]
Summing over singular values gives \(\Psi_{\varepsilon}(P)\ge(\kappa_K/2)\|P\|_F^2\).  Similarly,
\[
        \frac{s^2}{\sqrt{s^2+\varepsilon^2}}
        \ge \frac{s^2}{\sqrt{B_P^2+\varepsilon^2}}
        =\kappa_Ds^2,
        \qquad
        \kappa_D=(B_P^2+\varepsilon^2)^{-1/2},
\]
which gives \(\langle P,\operatorname{Orth}_{\varepsilon}(P)\rangle_F\ge \kappa_D\|P\|_F^2\).  The inequality
\[
        \sqrt{s^2+\varepsilon^2}-\varepsilon
        \le \frac{s^2}{\sqrt{s^2+\varepsilon^2}}
\]
is equivalent to \(\varepsilon\le \sqrt{s^2+\varepsilon^2}\), so \(\Psi_{\varepsilon}(P)\le \langle P,\operatorname{Orth}_{\varepsilon}(P)\rangle_F\), i.e. \(\chi=1\).  Finally, \(\operatorname{Orth}_{\varepsilon}(0)=0\) and the \(1/\varepsilon\)-Lipschitz property imply
\[
        \|\operatorname{Orth}_{\varepsilon}(P)\|_F\le \varepsilon^{-1}\|P\|_F,
\]
so \(L_G=1/\varepsilon\).

Assumption \ref{ass:Trajectory-level convergence hypotheses}(b) is exactly the Hilbert-domain PL and upper-gradient Assumption \ref{ass:PL-gated} after the above identification.  Assumption \ref{ass:Trajectory-level convergence hypotheses}(c) is exactly the curvature assumption \eqref{eq:curvature-assumed-gated} in the scalar matrix-space notation.  If one wants primitive sufficient conditions for this curvature assumption, they are obtained from Lemma \ref{lem:curvature-gated}: if along the trajectory
\[
        |R'(m_t)|\le M_R,
        \qquad |R''(m_t)|\le M_{R,2},
        \qquad \|\nabla F(W)\|_F\le M_F,
        \qquad \|\nabla^2F(W)[V]\|_F\le M_{F,2}\|V\|_F,
\]
then, in the Hilbert notation, \(M_D=M_F\) and \(M_{D,2}=M_{F,2}\), and Lemma \ref{lem:curvature-gated} gives
\[
        C_t'=\gamma A_t-\gamma C_t+S_t,
        \qquad
        |S_t|\le \frac{L_G}{\kappa_D}\left(M_F^2M_{R,2}+M_RM_{F,2}\right)D_t.
\]
Thus the scalar curvature condition is precisely the one-block instance of the Hilbert curvature condition.

All hypotheses of Theorem \ref{thm:convergence-gated} are therefore satisfied with the same \(\lambda,\Lambda,\kappa_K,\kappa_D,\chi,L_G,\sigma\).  In this one-block specialization the alignment constant in Theorem \ref{thm:convergence-gated} is
\[
        M_C=\sqrt{\frac{\Lambda}{\gamma\kappa_K}},
\]
which is exactly the constant appearing in Theorem \ref{Theorem:Exponential convergence under explicit assumptions}.  Applying Theorem \ref{thm:convergence-gated} gives
\[
        J(\rho_t)-J_{\star}
        \le
        \frac{\exp(-c_{\alpha,r}t)}{\gamma(1-\alpha M_C)}(H_0-\alpha C_0),
\]
with
\[
        c_{\alpha,r}=\frac{1}{1+\alpha M_C}
        \min\left\{\frac{d_{\alpha,r}}{\chi},\,2\lambda\alpha\left(1-\frac r2\right)\right\},
        \qquad
        d_{\alpha,r}=\gamma-\alpha\sigma-\frac{\alpha\gamma}{2r\kappa_D}.
\]
This is exactly \eqref{Disipation rate}, and the proof is complete.
\end{proof}

\subsection{Proof of Theorem \ref{Propagation of chaos} as a scalar Hilbert-space corollary}\label{app:proof-thm4-corollary}

\begin{proof}
We apply Theorem \ref{thm:poc-gated} with
\[
        \Theta=\mathcal{X}=\mathbb{R}^{m\times n},
        \qquad
        \mathcal{H}=\mathbb{R},
\]
again using the Frobenius inner product on \(\Theta\).  The phase space \(\Z=\Theta\times\Theta\) is therefore the same as \(\mathcal{Z}=\mathcal{X}\times\mathcal{X}\) in the main text, and \(\|\cdot\|_{\Theta}=\|\cdot\|_F\).

We first verify the assumptions.  Assumption \ref{ass:Basic smoothness} gives \(\|\nabla F(W)\|_F\le M_F\) and global Lipschitzness of \(\nabla F\).  Hence, for all \(W,\widetilde W\in\mathcal{X}\),
\[
        |F(W)-F(\widetilde W)|
        \le M_F\|W-\widetilde W\|_F,
\]
so the Hilbert-space Lipschitz constant \(L_F\) in Assumption \ref{ass:global-hilbert} may be taken to be \(M_F\).  Moreover, \(DF(W)[V]=\langle \nabla F(W),V\rangle_F\), so
\[
        \|DF(W)\|_{\Theta\to\mathbb{R}}=\|\nabla F(W)\|_F\le M_F,
\]
and the Hilbert constant \(M_D\) may be taken to be \(M_F\).  The global Lipschitz constant of \(DF\) is the global Lipschitz constant of \(\nabla F\).  Finally, the theorem assumes in addition that \(R'\) is globally bounded, while Assumption \ref{ass:Basic smoothness} assumes that \(R'\) is globally Lipschitz.  Since \(\nabla R(z)=R'(z)\) in \(\mathbb{R}\), these two bounds are exactly \eqref{eq:global-R-bound}-\eqref{eq:global-R-Lip}. Thus, the scalar data satisfy Assumption \ref{ass:global-hilbert}.

Under this identification, the Hilbert-space force in \eqref{eq:force-gated} becomes
\[
        a_{\mu}(W)=DF(W)^*\nabla R(m_{\mu})=R'(m_{\mu})\nabla F(W),
\]
as shown in the proof of Theorem \ref{Theorem:Exponential convergence under explicit assumptions}.  Therefore the finite-particle ODE \eqref{eq:finite-ode-gated} becomes
\[
        \dot W_i^N(t)=-\operatorname{Orth}_{\varepsilon}(P_i^N(t)),
        \qquad
        \dot P_i^N(t)=\gamma\left(R'\left(\frac1N\sum_{j=1}^{N}F(W_j^N(t))\right)\nabla F(W_i^N(t))-P_i^N(t)\right),
\]
which is exactly Equation \eqref{Finite particle ODE limit}.  The nonlinear characteristic system \eqref{eq:nonlinear-characteristics-gated} becomes
\[
        \dot{\bar W}_i(t)=-\operatorname{Orth}_{\varepsilon}(\bar P_i(t)),
        \qquad
        \dot{\bar P}_i(t)=\gamma\left(R'(m_t)\nabla F(\bar W_i(t))-\bar P_i(t)\right),
        \qquad
        m_t=\mathbb{E}F(\bar W_i(t)),
\]
which is the mean-field copy system stated in Theorem \ref{Propagation of chaos}.

Theorem \ref{thm:poc-gated} now gives, for each fixed \(i\) and every \(T<\infty\), a constant \(C_{\mathrm{poc}}(T,\varepsilon)<\infty\) such that
\[
        \sup_{0\le t\le T}\mathbb{E}\left[
        \|W_i^N(t)-\bar W_i(t)\|_F^2+
        \|P_i^N(t)-\bar P_i(t)\|_F^2
        \right]
        \le \frac{C_{\mathrm{poc}}(T,\varepsilon)}{N}.
\]
This is precisely \eqref{eq:POC}.  The \(k\)-particle Wasserstein estimate \eqref{eq:k-chaos-gated} in Theorem \ref{thm:poc-gated} implies convergence in law of every fixed \(k\) particles to \(\mu_t^{\otimes k}\), uniformly on finite time horizons.  This proves Theorem \ref{Propagation of chaos}.
\end{proof}

\subsection{Hard-Muon subsequential limit}
\begin{theorem}\label{Hard Muon limit theorem}
    Let \(\mu_t^{\varepsilon}\) solve Equation \eqref{Hamiltonian equation} with common initial law $\mu_0 \in \cP_1(\Z)$. For every \(T<\infty\), every sequence \(\varepsilon_k \downarrow 0\) has a subsequence converging in \(C\left([0, T] ; \mathcal{P}_1(\mathcal{Z})\right)\) to a curve \(\mu_t\). There exists a Borel field \(V_t(W, P)\) such that
\[
V_t(W, P) \in \partial\|P\|_{\text {nuc }} \quad \mu_t \mathrm{~d} t \text {-a.e. }
\]
and
\[
\partial_t \mu_t+\nabla_W \cdot\left(-V_t \mu_t\right)+\nabla_P \cdot\left(\gamma\left(R^{\prime}\left(m_t\right) \nabla F(W)-P\right) \mu_t\right)=0 .
\]
If \(P\) has full rank almost everywhere, then \(V_t=\operatorname{Orth}(P)\) almost everywhere.
\end{theorem}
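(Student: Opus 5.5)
The plan is a compactness argument on the characteristic (Lagrangian) level, followed by identification of the limit velocity through weak-$*$ limits of $\operatorname{Orth}_{\varepsilon}$ and closedness of the subdifferential graph of the nuclear norm.

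\textbf{Step 1: uniform-in-$\varepsilon$ estimates.} Represent $\mu_t^{\varepsilon}=\operatorname{Law}(W_t^\varepsilon,P_t^\varepsilon)$ through the nonlinear characteristic system of Theorem~\ref{McKean-Vlasov formulation}, with a common initial datum $(W_0,P_0)\sim\mu_0$. The key observation is that $\|\operatorname{Orth}_{\varepsilon}(P)\|_{\op}<1$, so $\|\operatorname{Orth}_\varepsilon(P)\|_F\le\sqrt q$ uniformly in $\varepsilon$. Under Assumption~\ref{ass:Basic smoothness} the force satisfies $\|a_t(W)\|_F\le M_RM_F$. Consequently:
\begin{itemize}
\item $\|W_t^\varepsilon-W_s^\varepsilon\|_F\le\sqrt q\,|t-s|$;
\item $\|P_t^\varepsilon-P_s^\varepsilon\|_F\le\gamma(M_RM_F+\sup_{r\le T}\|P_r^\varepsilon\|_F)|t-s|$;
\item by Gronwall, $\|P_t^\varepsilon\|_F\le\|P_0\|_F+M_RM_F$.
\end{itemize}
These give $W_1(\mu_t^\varepsilon,\mu_s^\varepsilon)\le C(1+\E\|P_0\|_F)|t-s|$, uniform first moments, and uniform integrability of the first moments, since everything is dominated by $\|W_0\|_F+\|P_0\|_F+C$. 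Tightness in $\cP_1(\Z)$ then follows, and Arzelà–Ascoli in $C([0,T];\cP_1(\Z))$ gives a convergent subsequence $\mu^{\varepsilon_k}\to\mu$.

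\textbf{Step 2: passing to the limit in the flux.} Consider the space-time measures $\mu_t^{\varepsilon}\,dt$ on $[0,T]\times\Z$ together with the vector-valued measures $E^\varepsilon:=\operatorname{Orth}_{\varepsilon}(P)\,\mu_t^\varepsilon\,dt$. Their densities are bounded by $\sqrt q$, so $E^{\varepsilon_k}$ converges weakly-$*$ along a further subsequence to some $E\ll\mu_t\,dt$, with Borel density $V_t$ satisfying $\|V_t\|_{\op}\le1$. This uses lower semicontinuity of the convex functional $\int f(dE/d\mu)\,d\mu$ with $f$ the indicator of the operator-norm ball, or equivalently a disintegration/Young-measure argument. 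The momentum drift $\gamma(R'(m_t^\varepsilon)\nabla F(W)-P)$ is continuous, and $m_t^\varepsilon\to m_t$ because $F$ is Lipschitz and we have $W_1$ convergence. Hence the weak form with $C_c^\infty$ test functions passes to the limit and yields the stated equation with $V_t$.

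\textbf{Step 3: identifying $V_t\in\partial\|P\|_{\nuc}$.} This is the main obstacle, since weak limits of the nonlinear map $\operatorname{Orth}_\varepsilon(P)$ need not be subgradients in general. I would use the Fenchel characterization: $V\in\partial\|P\|_{\nuc}$ if and only if $\|V\|_{\op}\le1$ and $\langle P,V\rangle_F\ge\|P\|_{\nuc}$. The pointwise bound
\[
\langle P,\operatorname{Orth}_\varepsilon(P)\rangle_F=\sum_i\frac{\sigma_i^2}{\sqrt{\sigma_i^2+\varepsilon^2}}\ \ge\ \|P\|_{\nuc}-q\varepsilon
\]
gives
\[
\int\phi\,\langle P,\operatorname{Orth}_{\varepsilon}(P)\rangle\,d\mu^\varepsilon dt\ \ge\ \int\phi\,\|P\|_{\nuc}\,d\mu^\varepsilon dt-q\varepsilon\int\phi
\]
for every nonnegative test function $\phi$. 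The difficulty is passing to the limit on the left: $P$ is multiplied by a weakly converging density, and $P$ itself moves with $\varepsilon$. I would first try to handle this through the joint weak-$*$ convergence of $E^\varepsilon$ paired against $\phi(t,W,P)P$, which is a continuous function of linear growth, together with uniform integrability of $\|P\|$ from Step 1. On the right, $\|P\|_{\nuc}$ is continuous, so $\int\phi\|P\|_{\nuc}$ converges. This yields $\langle P,V_t\rangle\ge\|P\|_{\nuc}$ $\mu_t\,dt$-a.e., and combined with $\|V_t\|_{\op}\le1$ the subgradient property follows. If the Young-measure route is cleaner, I would instead show that the limiting Young measure of $\operatorname{Orth}_\varepsilon(P)$ is supported in the closed convex set $\partial\|P\|_{\nuc}$ (closed graph of the subdifferential) and take its barycenter.

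\textbf{Step 4: full-rank case.} When $P$ has full rank, $\partial\|P\|_{\nuc}=\{\operatorname{Orth}(P)\}$ by the subdifferential formula in the proof of Proposition~\ref{Proposition 1}, because no nonzero $Z$ can satisfy $U^\top Z=0$ and $ZV=0$. Hence $V_t=\operatorname{Orth}(P)$ almost everywhere.
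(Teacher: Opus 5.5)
Your proposal is correct, and its architecture matches the paper's proof. Both use the uniform bound $\|\operatorname{Orth}_{\varepsilon}(P)\|_F\le\sqrt q$, characteristic estimates with Gronwall for $P$, and Arzel\`a--Ascoli in $C([0,T];\cP_1(\Z))$. Both extract the fluxes $\operatorname{Orth}_{\varepsilon_k}(P)\,\mu_t^{\varepsilon_k}\,dt$ weakly-$*$, pass to the limit in the weak form, and use the singleton subdifferential at full rank.

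You differ only in how $V_t$ is identified.

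The paper fixes $Q$ and integrates the convexity inequality $\Psi_{\varepsilon}(Q)\ge\Psi_{\varepsilon}(P)+\langle\operatorname{Orth}_{\varepsilon}(P),Q-P\rangle_F$ against nonnegative weights, then uses $\Psi_{\varepsilon}\to\|\cdot\|_{\nuc}$. This gives the full subgradient inequality at once, so $\|V_t\|_{\op}\le1$ comes for free. It is the generic argument for gradients of convex potentials converging locally uniformly.

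You instead use that $\|\cdot\|_{\nuc}$ is the support function of the operator-norm ball, so $\partial\|P\|_{\nuc}=\{V:\|V\|_{\op}\le 1,\ \langle P,V\rangle_F\ge\|P\|_{\nuc}\}$. Your Fenchel-gap bound $\langle P,\operatorname{Orth}_{\varepsilon}(P)\rangle_F\ge\|P\|_{\nuc}-q\varepsilon$ is valid, since $\sigma-\sigma^2/\sqrt{\sigma^2+\varepsilon^2}\le\varepsilon$. It yields a single scalar inequality in the limit. The cost is a separate closedness step for $\|V_t\|_{\op}\le1$. Your lower-semicontinuity remark covers it; concretely, test $E^{\varepsilon}$ against $\phi Z$ for $Z$ in a countable dense subset of the nuclear-norm unit ball. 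Your route is tailored to the Muon potential but makes the defect explicit ($O(q\varepsilon)$). The paper's route applies verbatim to other convex kinetic potentials.

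The main obstacle you flag in Step 3 does not actually arise. In the Eulerian picture $P$ is the spatial coordinate, not an $\varepsilon$-dependent random variable. So $\int\phi\,\langle P,\operatorname{Orth}_{\varepsilon}(P)\rangle_F\,d\mu_t^{\varepsilon}\,dt$ is exactly the pairing of $E^{\varepsilon}$ with the fixed function $\phi P$. That function is continuous and compactly supported when $\phi\in C_c([0,T]\times\Z)$, so convergence follows directly from weak-$*$ convergence of $E^{\varepsilon_k}$. Nonnegative compactly supported $\phi$ already suffice to conclude an inequality $\mu_t\,dt$-a.e. You therefore need neither uniform integrability of $\|P\|_F$ at this step nor the Young-measure fallback.
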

\begin{proof}
The limiting potential is $\Psi_0(P)=\|P\|_{\text {nuc }}$. If $P=U \Sigma V^{\top}$ has rank $r$, then

$$
\partial\|P\|_{\mathrm{nuc}}=\left\{U V^{\top}+Z: U^{\top} Z=0, Z V=0,\|Z\|_{\mathrm{op}} \leq 1\right\}
$$

Thus $\operatorname{Orth}(P)=U V^{\top}$ is a selected subgradient, and the subdifferential is singleton exactly when the rank is full in the smaller dimension.

For the compactness argument, note that $\left\|\operatorname{Orth}_{\varepsilon}(P)\right\|_F \leq \sqrt{q}$ uniformly in $\varepsilon$, and the force $R^{\prime}\left(m_t\right) \nabla F(W)$ is bounded under Assumption \ref{ass:Basic smoothness} on finite horizons after the same moment estimates as above. Along characteristics,

$$
\left\|W_t^{\varepsilon}-W_s^{\varepsilon}\right\|_F \leq \sqrt{q}|t-s|, \quad \frac{\mathrm{d}}{\mathrm{~d} t}\left\|P_t^{\varepsilon}\right\|_F \leq \gamma\left(C+\left\|P_t^{\varepsilon}\right\|_F\right) .
$$

Gronwall's inequality and the common initial law $\mu_0 \in \cP_1(\Z)$ give uniform integrability of the first moments of $\left\{\mu_t^{\varepsilon}: 0 \leq t \leq T, {\varepsilon}>0\right\}$. Together with the equicontinuity estimate in $W_1$, this yields relative compactness in $C\left([0, T] ; \cP_1(\Z)\right)$ by the Arzelà-Ascoli criterion for $W_1$-continuous probability curves.

The fluxes $\operatorname{Orth}_{\varepsilon_k}(P) \mu_t^{\varepsilon_k} \mathrm{~d} t$ are uniformly bounded vector-valued measures, so along a subsequence they converge weak-star to $V_t(W, P) \mu_t \mathrm{~d} t$ for a Borel field $V$ with $\|V\|_F \leq \sqrt{q}$. Convexity gives, for every $Q \in \mathcal{X}$,

$$
\Psi_{\varepsilon_k}(Q) \geq \Psi_{\varepsilon_k}(P)+\left\langle\operatorname{Orth}_{\varepsilon_k}(P), Q-P\right\rangle_F.
$$

Further $\Psi_{\varepsilon} \rightarrow \Psi_0$ locally uniformly. To pass to the limit with the unbounded test function $P$, first multiply by a compactly supported cutoff in $P$, pass to the limit using local uniform convergence $\Psi_{\varepsilon} \rightarrow \Psi_0$, and then remove the cutoff using the uniform first-moment bound. Therefore, using this standard cutoff-based strategy to handle the unboundedness of $P$ and passing to the limit in the integrated inequality with arbitrary nonnegative test weights yields
$$
\Psi_0(Q) \geq \Psi_0(P)+\left\langle V_t(W, P), Q-P\right\rangle_F
$$

for $\mu_t \mathrm{~d} t$-almost every $(W, P)$. This is precisely $V_t(W, P) \in \partial \Psi_0(P)$. Passing to the limit in the weak formulation gives the hard-Muon inclusion. If $P$ has full rank almost everywhere, the subdifferential formula forces $V_t=\operatorname{Orth}(P)$.
\end{proof}
\begin{remark}
[Why only subsequential uniqueness] At rank deficient \(P, \partial\|P\|_{\text {nuc }}\) is not a singleton. The regularized maps select limits of subgradients, but without an additional selection or uniqueness principle the hard limit is naturally a differential inclusion.\end{remark}

\subsection{Hilbert-valued probability functionals on product spaces}\label{sec:gated-hilbert-functional}

The scalar $F$-output theory in the main paper is based on a probability distribution over a single matrix space. In particular, the one-hidden layer neural network teacher-student mean-field and the transformer mixture-of-experts setting requires a probability distribution over parameter tuples and an expert-router parameter tuple, respectively. The parameter space is therefore enlarged before the variational and dynamical constructions are introduced.

\subsubsection{Extended product parameter space}\label{subsec:extended-theta}

Let
\begin{equation}\label{eq:theta-product-general}
        \Theta=\Thetaexp\times\Thetagate
        =\prod_{b=1}^{B}\R^{m_b\times n_b}
\end{equation}
be a finite-dimensional real Hilbert space.  The first $B_{\mathrm{exp}}$ blocks may represent expert parameters, while the remaining $B_{\mathrm{gate}}=B-B_{\mathrm{exp}}$ blocks represent router or gating parameters.  A generic point is written as
\begin{equation}\label{eq:theta-split}
        \theta=(\omega,\phi)=\left(\theta^{(1)},\ldots,\theta^{(B)}\right),
\end{equation}
where $\omega\in\Thetaexp$ and $\phi\in\Thetagate$.  The product Hilbert inner product is
\begin{equation}\label{eq:theta-inner-product}
        \ip{\theta}{\vartheta}_{\Theta}
        \coloneqq \sum_{b=1}^{B}\ip{\theta^{(b)}}{\vartheta^{(b)}}_{F},
        \qquad
        \norm{\theta}_{\Theta}^{2}
        \coloneqq \sum_{b=1}^{B}\norm{\theta^{(b)}}_{F}^{2}.
\end{equation}
The Wasserstein spaces below are built from the metric induced by $\norm{\cdot}_{\Theta}$.  The routing parameter is part of the state variable.  Consequently, any force field, momentum variable, mirror map, and Hamiltonian gradient is defined on the same extended space $\Theta$ and not only on the expert subspace.

Vector-valued and scalar router parameters can be included in \eqref{eq:theta-product-general} by treating vectors as one-column matrices and scalars as $1\times 1$ matrices.  Biases may also be absorbed into matrix blocks by appending a homogeneous coordinate to the input representation.  Thus the product matrix formulation covers the usual affine router scores without introducing a separate notation.

\subsubsection{Hilbert-valued feature maps with routing included}\label{subsec:hilbert-feature-routing}

Let $\cH$ be a real Hilbert space and let
\begin{equation}\label{eq:F-R-general-theta}
        F:\Theta\to\cH,
        \qquad
        \Rfunc:\cH\to\R.
\end{equation}
For $\rho\in\cP_1(\Theta)$ define the Bochner integral
\begin{equation}\label{eq:m-rho-theta}
        m_{\rho}\coloneqq \int_{\Theta}F(\theta)\dd\rho(\theta)
\end{equation}
whenever it is finite, and set
\begin{equation}\label{eq:J-rho-theta}
        \tf(\rho)\coloneqq \Rfunc(m_{\rho}).
\end{equation}
The map $F$ is allowed to contain the router score, the gate weight, the expert output, and any Hilbert-valued augmentation needed to express normalized gates.  In particular, if $\theta=(\omega,\phi)$ and $\psi_{\omega}$ is an expert output while $s_{\phi}$ is an input-dependent router score, then maps of the form
\begin{equation}\label{eq:generic-gated-feature}
        F(\omega,\phi)=\text{``input-indexed function involving }s_{\phi}\text{ and }\psi_{\omega}\text{''}
\end{equation}
are ordinary Hilbert-valued feature maps on $\Theta$.

\begin{proposition}[First variation on the extended gated space]\label{prop:first-var-gated}
Assume that $F$ is Bochner integrable under $\rho$ and that $\Rfunc$ is Frechet differentiable at $m_{\rho}$.  Then a valid first variation of $\tf$ at $\rho$ is
\begin{equation}\label{eq:first-var-gated}
        \frac{\delta\tf}{\delta\rho}(\rho)(\theta)
        =\ip{\nabla\Rfunc(m_{\rho})}{F(\theta)}_{\cH},
\end{equation}
up to an additive constant independent of $\theta$.  If $F$ is Frechet differentiable, then the Wasserstein force on the extended expert-router parameter is
\begin{equation}\label{eq:force-gated}
        a_{\rho}(\theta)
        \coloneqq \nabla_{\theta}\frac{\delta\tf}{\delta\rho}(\rho)(\theta)
        =DF(\theta)^{*}\nabla\Rfunc(m_{\rho})\in\Theta.
\end{equation}
If $\theta=(\omega,\phi)$, then
\begin{equation}\label{eq:force-split-gated}
        a_{\rho}(\theta)=\left(a_{\rho}^{\mathrm{exp}}(\omega,\phi),a_{\rho}^{\mathrm{gate}}(\omega,\phi)\right),
\end{equation}
where the two components are the projections of $DF(\omega,\phi)^{*}\nabla\Rfunc(m_{\rho})$ onto $\Thetaexp$ and $\Thetagate$, respectively.
\end{proposition}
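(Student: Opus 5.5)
The argument mirrors the proof of Proposition \ref{Proposition 3}, now with an $\cH$-valued moment and a product parameter space. I would carry it out in three steps.

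\textbf{Step 1 (directional derivative).} Fix a finite signed measure $\nu$ on $\Theta$ with $\nu(\Theta)=0$, under which $F$ is Bochner integrable. Set $\rho_s=\rho+s\nu$. Linearity of the Bochner integral gives
\[
m_{\rho_s}=m_\rho+s\int_\Theta F\,\dd\nu .
\]
The map $s\mapsto m_{\rho_s}$ is affine into $\cH$, and $\Rfunc$ is Fréchet differentiable at $m_\rho$. The chain rule then yields
\[
\left.\frac{\dd}{\dd s}\tf(\rho_s)\right|_{s=0}=\ip{\nabla\Rfunc(m_\rho)}{\int_\Theta F\,\dd\nu}_{\cH}.
\]
The one point needing care is moving the bounded linear functional $\ip{\nabla\Rfunc(m_\rho)}{\cdot}_\cH$ inside the Bochner integral. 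This is the standard commutation of bounded linear maps with Bochner integrals, and it gives
\[
\ip{\nabla\Rfunc(m_\rho)}{\int_\Theta F\,\dd\nu}_{\cH}=\int_\Theta\ip{\nabla\Rfunc(m_\rho)}{F(\theta)}_\cH\,\dd\nu(\theta),
\]
which is the defining identity for \eqref{eq:first-var-gated}. Because $\nu(\Theta)=0$, adding any constant independent of $\theta$ leaves this identity unchanged, which accounts for the ``up to an additive constant'' clause.

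\textbf{Step 2 (Wasserstein force).} Let $u=\nabla\Rfunc(m_\rho)$, held fixed in $\theta$. The function $g(\theta)=\ip{u}{F(\theta)}_\cH$ is the composition of $F$ with a bounded linear functional. Hence $g$ is Fréchet differentiable, with
\[
Dg(\theta)[V]=\ip{u}{DF(\theta)[V]}_\cH=\ip{DF(\theta)^*u}{V}_\Theta .
\]
By Riesz representation on the finite-dimensional Hilbert space $(\Theta,\ip{\cdot}{\cdot}_\Theta)$, this gives $\nabla_\theta g=DF(\theta)^*u$, which is \eqref{eq:force-gated}. The adjoint is taken with respect to the product inner product \eqref{eq:theta-inner-product}, matching the computation $DF(W)^*u=u\nabla F(W)$ in the scalar case.

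\textbf{Step 3 (expert/gate splitting).} Since $\Theta=\Thetaexp\oplus\Thetagate$ is an orthogonal sum under \eqref{eq:theta-inner-product}, decompose a test direction as $V=(V_\omega,V_\phi)$. Then
\[
\ip{a_\rho}{V}_\Theta=\ip{\pi_{\exp}a_\rho}{V_\omega}+\ip{\pi_{\mathrm{gate}}a_\rho}{V_\phi}.
\]
It follows that the components in \eqref{eq:force-split-gated} are the orthogonal projections, and they coincide with $D_\omega F^*u$ and $D_\phi F^*u$ respectively.

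The only mildly delicate step is the interchange in Step 1. Strictly, the perturbed measure $\rho_s$ need not be a probability measure, but the formal first-variation convention used throughout the paper permits signed perturbations of mass zero. Under Bochner integrability of $F$ under both $\rho$ and $|\nu|$, all the expressions above are finite.
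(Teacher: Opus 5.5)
Your proposal is correct and follows the paper's proof essentially step for step. Both use the affine perturbation $\rho_s=\rho+s\nu$ with Bochner linearity and the chain rule, then Riesz representation of $\theta\mapsto\ip{\nabla\Rfunc(m_\rho)}{F(\theta)}_{\cH}$ on $\Theta$, and finally the orthogonal product decomposition for the expert/gate split; your extra remarks (commuting the bounded functional with the Bochner integral, integrability of $F$ under $|\nu|$, and identifying the components as $D_\omega F^{*}u$ and $D_\phi F^{*}u$) only make explicit what the paper leaves implicit.
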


\begin{proof}
Let $\nu$ be a finite signed measure on $\Theta$ with $\nu(\Theta)=0$ and define $\rho_s=\rho+s\nu$ for $s$ in an interval on which the perturbation is meaningful.  By linearity of the Bochner integral,
\[
        m_{\rho_s}=m_{\rho}+s\int_{\Theta}F(\theta)\dd\nu(\theta).
\]
Frechet differentiability of $\Rfunc$ gives
\begin{align*}
        \left.\frac{\dd}{\dd s}\tf(\rho_s)\right|_{s=0}
        &=D\Rfunc(m_{\rho})\left[\int_{\Theta}F(\theta)\dd\nu(\theta)\right]  \\
        &=\ip{\nabla\Rfunc(m_{\rho})}{\int_{\Theta}F(\theta)\dd\nu(\theta)}_{\cH} \\
        &=\int_{\Theta}\ip{\nabla\Rfunc(m_{\rho})}{F(\theta)}_{\cH}\dd\nu(\theta).
\end{align*}
This is precisely the defining identity for the first variation, modulo constants independent of $\theta$.  If $F$ is differentiable, then for every $v\in\Theta$,
\begin{align*}
        D_{\theta}\left[\ip{\nabla\Rfunc(m_{\rho})}{F(\theta)}_{\cH}\right][v]
        &=\ip{\nabla\Rfunc(m_{\rho})}{DF(\theta)[v]}_{\cH} \\
        &=\ip{DF(\theta)^{*}\nabla\Rfunc(m_{\rho})}{v}_{\Theta}.
\end{align*}
The Riesz representation in the Hilbert space $\Theta$ gives \eqref{eq:force-gated}.  The decomposition \eqref{eq:force-split-gated} follows from the orthogonal product decomposition $\Theta=\Thetaexp\times\Thetagate$.
\end{proof}

\subsubsection{Regularity assumptions}\label{subsec:regularity-gated}

The smooth ODE and PDE theory can be stated under either global assumptions or localized finite-horizon assumptions.  The localized version is the natural one for transformer models, because attention, feed-forward, and router score maps are smooth on bounded parameter sets, while global Lipschitz constants on all of parameter space are generally unavailable.

\begin{asu}[Localized Hilbert smoothness on the extended space]\label{ass:localized-hilbert}
For every $R_{\Theta}<\infty$ and every $R_{\cH}<\infty$, the following quantities are finite on the indicated balls:
\begin{align*}
        M_F(R_{\Theta})&\coloneqq\sup_{\norm{\theta}_{\Theta}\le R_{\Theta}}\norm{F(\theta)}_{\cH},\qquad
        L_F(R_{\Theta})\coloneqq\sup_{\substack{\norm{\theta}_{\Theta},\norm{\vartheta}_{\Theta}\le R_{\Theta}\\ \theta\ne\vartheta}}
        \frac{\norm{F(\theta)-F(\vartheta)}_{\cH}}{\norm{\theta-\vartheta}_{\Theta}},\\
        M_D(R_{\Theta})&\coloneqq\sup_{\norm{\theta}_{\Theta}\le R_{\Theta}}\norm{DF(\theta)}_{\Theta\to\cH},\qquad
        L_D(R_{\Theta})\coloneqq\sup_{\substack{\norm{\theta}_{\Theta},\norm{\vartheta}_{\Theta}\le R_{\Theta}\\ \theta\ne\vartheta}}
        \frac{\norm{DF(\theta)-DF(\vartheta)}_{\Theta\to\cH}}{\norm{\theta-\vartheta}_{\Theta}},\\
        M_R(R_{\cH})&\coloneqq\sup_{\norm{z}_{\cH}\le R_{\cH}}\norm{\nabla\Rfunc(z)}_{\cH},\qquad
        L_R(R_{\cH})\coloneqq\sup_{\substack{\norm{z}_{\cH},\norm{z'}_{\cH}\le R_{\cH}\\ z\ne z'}}
        \frac{\norm{\nabla\Rfunc(z)-\nabla\Rfunc(z')}_{\cH}}{\norm{z-z'}_{\cH}}.
\end{align*}
When second-order convergence estimates are invoked, $F$ is twice Frechet differentiable on bounded $\Theta$-balls and $\Rfunc$ is twice Frechet differentiable on bounded $\cH$-balls, with bounded second derivatives on those balls.
\end{asu}

\begin{asu}[Global Hilbert smoothness]\label{ass:global-hilbert}
There exist constants $L_F,M_D,L_D,M_R,L_R<\infty$ such that for all $\theta,\vartheta\in\Theta$ and all $z,z'\in\cH$,
\begin{align}
        \norm{F(\theta)-F(\vartheta)}_{\cH}&\le L_F\norm{\theta-\vartheta}_{\Theta},\label{eq:global-F-Lip}\\
        \norm{DF(\theta)}_{\Theta\to\cH}&\le M_D,\label{eq:global-DF-bound}\\
        \norm{DF(\theta)-DF(\vartheta)}_{\Theta\to\cH}&\le L_D\norm{\theta-\vartheta}_{\Theta},\label{eq:global-DF-Lip}\\
        \norm{\nabla\Rfunc(z)}_{\cH}&\le M_R,\label{eq:global-R-bound}\\
        \norm{\nabla\Rfunc(z)-\nabla\Rfunc(z')}_{\cH}&\le L_R\norm{z-z'}_{\cH}.\label{eq:global-R-Lip}
\end{align}
\end{asu}

\begin{lemma}[Force estimates]\label{lem:force-estimates-gated}
Under Assumption \ref{ass:global-hilbert}, for every $\rho,\tilde\rho\in\cP_1(\Theta)$ and $\theta,\tilde\theta\in\Theta$,
\begin{equation}\label{eq:force-estimate-global-gated}
        \norm{a_{\rho}(\theta)-a_{\tilde\rho}(\tilde\theta)}_{\Theta}
        \le L_D M_R\norm{\theta-\tilde\theta}_{\Theta}+M_D L_R L_F W_1(\rho,\tilde\rho).
\end{equation}
Furthermore,
\begin{equation}\label{eq:force-bound-global-gated}
        \norm{a_{\rho}(\theta)}_{\Theta}\le M_D M_R.
\end{equation}
Under Assumption \ref{ass:localized-hilbert}, the same estimates hold with the corresponding localized constants whenever $\theta,\tilde\theta$ and the supports of $\rho,\tilde\rho$ lie in a common bounded $\Theta$-ball and $m_{\rho},m_{\tilde\rho}$ lie in a common bounded $\cH$-ball.
\end{lemma}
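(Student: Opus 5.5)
The plan is to split the difference at the point $(\tilde\rho,\theta)$, which gives
\[
a_{\rho}(\theta)-a_{\tilde\rho}(\tilde\theta)
=\bigl[DF(\theta)-DF(\tilde\theta)\bigr]^{*}\nabla\Rfunc(m_{\tilde\rho})
+DF(\theta)^{*}\bigl[\nabla\Rfunc(m_{\rho})-\nabla\Rfunc(m_{\tilde\rho})\bigr],
\]
and to bound each term with the triangle inequality. The following facts from Proposition \ref{prop:first-var-gated} will be used throughout.
\begin{itemize}
\item The force is $a_{\rho}(\theta)=DF(\theta)^{*}\nabla\Rfunc(m_{\rho})$.
\item The Hilbert adjoint has the same operator norm as the operator, so $\|DF(\theta)^{*}\|_{\cH\to\Theta}=\|DF(\theta)\|_{\Theta\to\cH}$.
\item Likewise $\|(DF(\theta)-DF(\tilde\theta))^{*}\|=\|DF(\theta)-DF(\tilde\theta)\|$.
\end{itemize}

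\textbf{First term.} By \eqref{eq:global-DF-Lip} and \eqref{eq:global-R-bound}, the first term is at most $L_D\|\theta-\tilde\theta\|_{\Theta}\,M_R$.

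\textbf{Second term.} By \eqref{eq:global-DF-bound} and \eqref{eq:global-R-Lip}, the second term is at most $M_D L_R\|m_\rho-m_{\tilde\rho}\|_{\cH}$. It remains to control the moment difference by $W_1$.

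\textbf{Moment difference (main step).} Take any coupling $\pi$ of $(\rho,\tilde\rho)$. Bochner integrability holds because $F$ is $L_F$-Lipschitz, so it has linear growth, and $\rho,\tilde\rho\in\cP_1$. Then
\[
\|m_\rho-m_{\tilde\rho}\|_{\cH}=\Bigl\|\int\bigl(F(\theta)-F(\vartheta)\bigr)\dd\pi\Bigr\|_{\cH}\le\int\|F(\theta)-F(\vartheta)\|_{\cH}\dd\pi\le L_F\int\|\theta-\vartheta\|_{\Theta}\dd\pi .
\]
The first inequality moves the norm inside the Bochner integral. Taking the infimum over couplings gives $\|m_\rho-m_{\tilde\rho}\|_{\cH}\le L_F W_1(\rho,\tilde\rho)$, and combining the two terms yields \eqref{eq:force-estimate-global-gated}. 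Alternatively, Kantorovich--Rubinstein duality applied to the scalar functions $\langle h,F\rangle_{\cH}$ with $\|h\|\le1$ gives the same bound. I expect this step to be the only one needing care, and it is still routine.

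\textbf{Uniform bound.} The bound \eqref{eq:force-bound-global-gated} is immediate: $\|a_\rho(\theta)\|_{\Theta}\le\|DF(\theta)\|\,\|\nabla\Rfunc(m_\rho)\|_{\cH}\le M_DM_R$.

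\textbf{Localized version.} Under Assumption \ref{ass:localized-hilbert}, rerun the same argument with the following substitutions.
\begin{itemize}
\item Suppose $\theta,\tilde\theta$ and both supports lie in a ball of radius $R_\Theta$. The segment-free Lipschitz constants $L_F(R_\Theta)$ and $L_D(R_\Theta)$ are defined as suprema over pairs in that ball, so they apply directly. In particular, they apply to every coupling pair $(\theta,\vartheta)$, since couplings are supported on $\operatorname{supp}\rho\times\operatorname{supp}\tilde\rho$.
\item Suppose $m_\rho,m_{\tilde\rho}$ lie in a ball of radius $R_{\cH}$. Then $M_R(R_{\cH})$ and $L_R(R_{\cH})$ apply at those two points.
\item Use $M_D(R_\Theta)$ for $\|DF(\theta)\|$.
\end{itemize}
Each inequality above is then replaced verbatim by its localized counterpart.
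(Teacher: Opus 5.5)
Your proof is correct and follows the paper's argument: the same two-term triangle-inequality split, the same bounds $L_DM_R\|\theta-\tilde\theta\|_{\Theta}$ and $M_DL_R\|m_\rho-m_{\tilde\rho}\|_{\cH}$, and the same coupling argument giving $\|m_\rho-m_{\tilde\rho}\|_{\cH}\le L_FW_1(\rho,\tilde\rho)$. The only difference is cosmetic: you insert the intermediate point $a_{\tilde\rho}(\theta)$, whereas the paper inserts $DF(\tilde\theta)^{*}\nabla\Rfunc(m_\rho)=a_\rho(\tilde\theta)$. Your remarks on Bochner integrability, and on couplings being supported in $\operatorname{supp}\rho\times\operatorname{supp}\tilde\rho$ in the localized case, fill in details the paper leaves implicit.
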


\begin{proof}
Let $u_{\rho}\coloneqq\nabla\Rfunc(m_{\rho})$.  Then
\begin{align*}
        a_{\rho}(\theta)-a_{\tilde\rho}(\tilde\theta)
        &=\left(DF(\theta)^{*}-DF(\tilde\theta)^{*}\right)u_{\rho}
          +DF(\tilde\theta)^{*}\left(u_{\rho}-u_{\tilde\rho}\right).
\end{align*}
The first term has norm at most $L_D\norm{\theta-\tilde\theta}_{\Theta}M_R$.  The second has norm at most
\[
        M_D L_R\norm{m_{\rho}-m_{\tilde\rho}}_{\cH}.
\]
For every coupling $\pi$ of $\rho$ and $\tilde\rho$,
\[
        m_{\rho}-m_{\tilde\rho}=\int_{\Theta\times\Theta}\left(F(\xi)-F(\eta)\right)\dd\pi(\xi,\eta),
\]
so
\[
        \norm{m_{\rho}-m_{\tilde\rho}}_{\cH}
        \le L_F\int\norm{\xi-\eta}_{\Theta}\dd\pi(\xi,\eta).
\]
Taking the infimum over $\pi$ yields $\norm{m_{\rho}-m_{\tilde\rho}}_{\cH}\le L_F W_1(\rho,\tilde\rho)$.  This proves \eqref{eq:force-estimate-global-gated}.  The bound \eqref{eq:force-bound-global-gated} follows directly from $\norm{DF(\theta)}\le M_D$ and $\norm{u_{\rho}}\le M_R$.  The localized statement is identical after restricting every estimate to the relevant bounded balls.
\end{proof}

\subsection{Regularized Muon mirror geometry on the extended product space}\label{sec:extended-mirror-geometry}

The regularized Muon mirror map is defined on the full product space $\Theta$, including both expert and router blocks.  The construction is block-separable, and therefore the Fenchel conjugacy and mirror-update interpretation are inherited from the single-matrix spectral construction.

\subsubsection{Single block Fenchel pair}\label{subsec:single-block-fenchel}

Fix a matrix block $\R^{m_b\times n_b}$ and set $q_b=\min(m_b,n_b)$.  For $P\in\R^{m_b\times n_b}$ define
\begin{equation}\label{eq:Psi-single-block-gated}
        \Psi_{\eps,b}(P)
        \coloneqq\sum_{r=1}^{q_b}\left(\sqrt{\sigma_r(P)^2+\eps^2}-\eps\right),
\end{equation}
where the singular values are padded by zeros.  For $G\in\R^{m_b\times n_b}$ define
\begin{equation}\label{eq:Phi-single-block-gated}
        \Phi_{\eps,b}(G)
        \coloneqq
        \begin{cases}
        \displaystyle \eps\sum_{r=1}^{q_b}\left(1-\sqrt{1-\sigma_r(G)^2}\right),& \norm{G}_{\op}\le 1,\\[0.8em]
        +\infty,&\norm{G}_{\op}>1.
        \end{cases}
\end{equation}
On the relative interior $\norm{G}_{\op}<1$, $\Phi_{\eps,b}$ is differentiable.  If $P=U\diag(\sigma_1,\ldots,\sigma_s)V^{\top}$ is a reduced SVD, then
\begin{equation}\label{eq:Orth-eps-single-block-gated}
        \nabla\Psi_{\eps,b}(P)
        =U\diag\left(\frac{\sigma_1}{\sqrt{\sigma_1^2+\eps^2}},\ldots,
        \frac{\sigma_s}{\sqrt{\sigma_s^2+\eps^2}}\right)V^{\top}
        \eqqcolon \operatorname{Orth}_{\eps,b}(P).
\end{equation}
Moreover,
\begin{equation}\label{eq:single-block-bounds-gated}
        \norm{\operatorname{Orth}_{\eps,b}(P)}_{\op}<1,
        \qquad
        \norm{\operatorname{Orth}_{\eps,b}(P)}_{F}\le\sqrt{q_b},
        \qquad
        \norm{\operatorname{Orth}_{\eps,b}(P)-\operatorname{Orth}_{\eps,b}(\tilde P)}_{F}\le \frac1\eps\norm{P-\tilde P}_{F}.
\end{equation}
The scalar pair
\[
        a\mapsto \sqrt{a^2+
\eps^2}-\eps,
        \qquad
        b\mapsto \eps(1-\sqrt{1-b^2})+\iota_{[-1,1]}(b)
\]
is Fenchel conjugate.  By von Neumann's trace inequality and the standard spectral-function conjugacy theorem, this scalar conjugacy lifts to
\begin{equation}\label{eq:single-block-conjugacy-gated}
        \Phi_{\eps,b}^{*}=\Psi_{\eps,b},
        \qquad
        \Psi_{\eps,b}^{*}=\Phi_{\eps,b}.
\end{equation}

\subsubsection{Product Fenchel pair, including router blocks}\label{subsec:product-fenchel-gate}

For $P=(P^{(1)},\ldots,P^{(B)})\in\Theta$ and $G=(G^{(1)},\ldots,G^{(B)})\in\Theta$, define
\begin{equation}\label{eq:product-Psi-Phi-gated}
        \Psieps^{\Theta}(P)
        \coloneqq\sum_{b=1}^{B}\Psi_{\eps,b}(P^{(b)}),
        \qquad
        \Phieps^{\Theta}(G)
        \coloneqq\sum_{b=1}^{B}\Phi_{\eps,b}(G^{(b)}).
\end{equation}
The product regularized Muon map is
\begin{equation}\label{eq:product-Orth-gated}
        \Ortheps^{\Theta}(P)
        \coloneqq\nabla\Psieps^{\Theta}(P)
        =\left(\operatorname{Orth}_{\eps,1}(P^{(1)}),\ldots,\operatorname{Orth}_{\eps,B}(P^{(B)})\right).
\end{equation}
The total block rank parameter is
\begin{equation}\label{eq:q-total-gated}
        \qtot\coloneqq\sum_{b=1}^{B}q_b.
\end{equation}
Equations \eqref{eq:single-block-bounds-gated} imply
\begin{equation}\label{eq:product-Orth-Lip-gated}
        \norm{\Ortheps^{\Theta}(P)-\Ortheps^{\Theta}(\tilde P)}_{\Theta}
        \le \frac1\eps\norm{P-\tilde P}_{\Theta},
        \qquad
        \norm{\Ortheps^{\Theta}(P)}_{\Theta}\le\sqrt{\qtot}.
\end{equation}

\begin{proposition}[Fenchel duality and mirror update on the extended expert-router space]\label{prop:extended-fenchel-mirror}
With respect to the product pairing \eqref{eq:theta-inner-product},
\begin{equation}\label{eq:extended-fenchel-dual}
        (\Phieps^{\Theta})^{*}=\Psieps^{\Theta},
        \qquad
        (\Psieps^{\Theta})^{*}=\Phieps^{\Theta}.
\end{equation}
For every $P\in\Theta$, the variational problem
\begin{equation}\label{eq:product-mirror-min-gated}
        G_{\eps}(P)
        =\argmin_{G\in\Theta}\left\{\ip{P}{G}_{\Theta}+\Phieps^{\Theta}(G)\right\}
\end{equation}
has the unique solution
\begin{equation}\label{eq:product-mirror-solution-gated}
        G_{\eps}(P)=-\nabla\Psieps^{\Theta}(P)=-\Ortheps^{\Theta}(P).
\end{equation}
Equivalently, if $G_{\eps,k}=-\Ortheps^{\Theta}(P_k)$, then
\begin{equation}\label{eq:mirror-bregman-gated}
        G_{\eps,k+1}
        =\argmin_{G\in\Theta}\left\{\ip{P_{k+1}-P_k}{G}_{\Theta}+D_{\Phieps^{\Theta}}(G,G_{\eps,k})\right\},
\end{equation}
where
\[
        D_{\Phieps^{\Theta}}(G,H)
        =\Phieps^{\Theta}(G)-\Phieps^{\Theta}(H)-\ip{\nabla\Phieps^{\Theta}(H)}{G-H}_{\Theta}
\]
whenever $H$ lies in the differentiability domain of $\Phieps^{\Theta}$.
\end{proposition}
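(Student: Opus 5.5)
The plan is to reduce everything to the single-block statements by block separability, mirroring the proofs of Proposition \ref{Proposition 2} and of the product-space duality for particles. For the conjugacy \eqref{eq:extended-fenchel-dual}, the product pairing \eqref{eq:theta-inner-product} is a sum over blocks and $\Phieps^{\Theta}$ in \eqref{eq:product-Psi-Phi-gated} is a sum of functions of distinct blocks. The supremum therefore splits:
\[
(\Phieps^{\Theta})^{*}(P)=\sup_{G}\sum_{b=1}^{B}\left\{\ip{P^{(b)}}{G^{(b)}}_F-\Phi_{\eps,b}(G^{(b)})\right\}=\sum_{b=1}^{B}\Phi_{\eps,b}^{*}(P^{(b)})=\Psieps^{\Theta}(P),
\]
where the last step uses \eqref{eq:single-block-conjugacy-gated}. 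The reverse identity $(\Psieps^{\Theta})^{*}=\Phieps^{\Theta}$ follows in the same way. It also follows from Fenchel--Moreau, since each $\Phi_{\eps,b}$ is proper, closed and convex, and so is their separable sum.

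For existence and uniqueness of the minimizer in \eqref{eq:product-mirror-min-gated}, I would again split the objective into the $B$ independent problems $\min_{G^{(b)}}\{\ip{P^{(b)}}{G^{(b)}}_F+\Phi_{\eps,b}(G^{(b)})\}$. Each one is the single-block version of \eqref{regularized G}, so I would reuse the argument of Proposition \ref{Proposition 2}. Since $\Phi_{\eps,b}$ is finite only on the compact spectral ball and is lower semicontinuous, a minimizer exists.

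Uniqueness is the step that needs the most care. The first-order argument in Proposition \ref{Proposition 2} presumes that the minimizer lies in the open ball $\norm{G}_{\op}<1$, where $\nabla\Phi_{\eps,b}$ exists. I would justify this via Fenchel duality instead. A point $G$ is a minimizer iff $-P\in\partial\Phi_{\eps,b}(G)$, and this holds iff $G\in\partial\Phi_{\eps,b}^{*}(-P)=\partial\Psi_{\eps,b}(-P)$. Because $\Psi_{\eps,b}$ is differentiable, with gradient $\operatorname{Orth}_{\eps,b}$ from \eqref{eq:Orth-eps-single-block-gated}, this subdifferential is the singleton $\{\operatorname{Orth}_{\eps,b}(-P)\}$. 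Since $\Psi_{\eps,b}$ is even, its gradient is odd, so $G^{(b)}=-\operatorname{Orth}_{\eps,b}(P^{(b)})$. This point automatically has operator norm $<1$ by \eqref{eq:single-block-bounds-gated}, consistent with the interior assumption. Assembling the blocks via \eqref{eq:product-Orth-gated} gives \eqref{eq:product-mirror-solution-gated}.

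For the Bregman form \eqref{eq:mirror-bregman-gated}, note that $H=G_{\eps,k}=-\Ortheps^{\Theta}(P_k)$ lies in the open product ball. There $\Phieps^{\Theta}$ is differentiable, and by the inverse-gradient relation $\nabla\Phieps^{\Theta}(H)=-P_k$. Expanding $D_{\Phieps^{\Theta}}(G,H)$ gives $\Phieps^{\Theta}(G)+\ip{P_k}{G}_{\Theta}$ plus a constant independent of $G$. Adding $\ip{P_{k+1}-P_k}{G}_{\Theta}$ yields the objective in \eqref{eq:product-mirror-min-gated} with $P=P_{k+1}$, up to a constant. By the uniqueness just shown, the minimizer is $-\Ortheps^{\Theta}(P_{k+1})=G_{\eps,k+1}$.
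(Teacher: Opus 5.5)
Your proposal is correct and follows the paper's proof: block separability of the supremum gives the conjugacy, inverting the gradient maps and using that $\nabla\Psieps^{\Theta}$ is odd identifies the minimizer, and expanding $D_{\Phieps^{\Theta}}(G,G_{\eps,k})$ with $\nabla\Phieps^{\Theta}(G_{\eps,k})=-P_k$ gives the Bregman form. The one refinement is the uniqueness step: the paper writes the first-order condition $0=P+\nabla\Phieps^{\Theta}(G_{\eps}(P))$, tacitly assuming the minimizer lies in the open spectral ball, whereas your equivalence $-P\in\partial\Phi_{\eps,b}(G)\iff G\in\partial\Psi_{\eps,b}(-P)$ rules out boundary minimizers and yields existence and uniqueness at once from the differentiability of $\Psi_{\eps,b}$.
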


\begin{proof}
By separability and \eqref{eq:single-block-conjugacy-gated},
\begin{align*}
        (\Phieps^{\Theta})^{*}(P)
        &=\sup_{G\in\Theta}\left\{\sum_{b=1}^{B}\ip{P^{(b)}}{G^{(b)}}_{F}-\sum_{b=1}^{B}\Phi_{\eps,b}(G^{(b)})\right\}\\
        &=\sum_{b=1}^{B}\sup_{G^{(b)}\in\R^{m_b\times n_b}}
          \left\{\ip{P^{(b)}}{G^{(b)}}_{F}-\Phi_{\eps,b}(G^{(b)})\right\}\\
        &=\sum_{b=1}^{B}\Psi_{\eps,b}(P^{(b)})=\Psieps^{\Theta}(P).
\end{align*}
The proof of $(\Psieps^{\Theta})^{*}=\Phieps^{\Theta}$ is identical.

The first-order optimality condition for \eqref{eq:product-mirror-min-gated} is
\[
        0=P+\nabla\Phieps^{\Theta}(G_{\eps}(P)).
\]
Since $\nabla\Phieps^{\Theta}$ and $\nabla\Psieps^{\Theta}$ are inverse maps between the interior of the effective domain of $\Phieps^{\Theta}$ and $\Theta$, and since $\Psieps^{\Theta}$ is even, its gradient is odd.  Therefore
\[
        G_{\eps}(P)=\nabla\Psieps^{\Theta}(-P)=-\nabla\Psieps^{\Theta}(P).
\]
This gives \eqref{eq:product-mirror-solution-gated}.

For \eqref{eq:mirror-bregman-gated}, use $\nabla\Phieps^{\Theta}(G_{\eps,k})=-P_k$.  Then
\begin{align*}
        D_{\Phieps^{\Theta}}(G,G_{\eps,k})
        &=\Phieps^{\Theta}(G)-\Phieps^{\Theta}(G_{\eps,k})
          -\ip{-P_k}{G-G_{\eps,k}}_{\Theta}\\
        &=\Phieps^{\Theta}(G)+\ip{P_k}{G}_{\Theta}+C_k,
\end{align*}
where $C_k$ is independent of $G$.  Adding $\ip{P_{k+1}-P_k}{G}_{\Theta}$ gives $\ip{P_{k+1}}{G}_{\Theta}+\Phieps^{\Theta}(G)+C_k$.  Thus the minimizer is exactly the minimizer of \eqref{eq:product-mirror-min-gated} with $P=P_{k+1}$.
\end{proof}

\begin{remark}[Persistence of the mirror-map interpretation]\label{rem:mirror-survives-gate}
The mirror-map interpretation survives on the extended gated product space because the mirror potential is defined on every block of $\Theta$, including the router blocks.  If $P=(P^{\mathrm{exp}},P^{\mathrm{gate}})$, then
\[
        \Ortheps^{\Theta}(P)=\left(\Ortheps^{\mathrm{exp}}(P^{\mathrm{exp}}),\Ortheps^{\mathrm{gate}}(P^{\mathrm{gate}})\right),
\]
and the Bregman update \eqref{eq:mirror-bregman-gated} is a single mirror step in the full expert-router Hilbert space.  There is no separate post-hoc extension of the mirror map after the gate is introduced; the router geometry is part of the product Fenchel pair from the beginning.
\end{remark}

\subsection{Finite-particle lift and exact regularized Muon scheme}\label{sec:finite-particle-gated}

Let $N\in\N$ and let
\[
        \boldsymbol\theta=(\theta_1,\ldots,\theta_N)\in\Theta^N,
        \qquad
        \theta_i=(\omega_i,\phi_i).
\]
The empirical measure is
\begin{equation}\label{eq:empirical-theta-gated}
        \rho^N_{\boldsymbol\theta}\coloneqq \frac1N\sum_{i=1}^{N}\delta_{\theta_i}.
\end{equation}
The lifted finite-particle objective is
\begin{equation}\label{eq:finite-lift-gated}
        \tf_N(\boldsymbol\theta)
        \coloneqq \tf(\rho^N_{\boldsymbol\theta})
        =\Rfunc\left(\frac1N\sum_{i=1}^{N}F(\theta_i)\right).
\end{equation}
The mean-field inner product on $\Theta^N$ is
\begin{equation}\label{eq:thetaN-avg-inner-gated}
        \ip{U}{V}_{\mf}
        \coloneqq\frac1N\sum_{i=1}^{N}\ip{U_i}{V_i}_{\Theta},
        \qquad
        \norm{U}_{\mf}^{2}
        \coloneqq\frac1N\sum_{i=1}^{N}\norm{U_i}_{\Theta}^{2}.
\end{equation}
Define
\begin{equation}\label{eq:mN-uN-gated}
        m_N(\boldsymbol\theta)\coloneqq\frac1N\sum_{j=1}^{N}F(\theta_j),
        \qquad
        u_N(\boldsymbol\theta)\coloneqq\nabla\Rfunc(m_N(\boldsymbol\theta)).
\end{equation}

\begin{proposition}[Particle gradient on the extended space]\label{prop:particle-gradient-gated}
Assume $F$ and $\Rfunc$ are differentiable.  The gradient of $\tf_N$ with respect to \eqref{eq:thetaN-avg-inner-gated} is
\begin{equation}\label{eq:particle-gradient-gated}
        \gradmf \tf_N(\boldsymbol\theta)=a^N(\boldsymbol\theta)
        =\left(a_1^N(\boldsymbol\theta),\ldots,a_N^N(\boldsymbol\theta)\right),
\end{equation}
where
\begin{equation}\label{eq:particle-force-gated}
        a_i^N(\boldsymbol\theta)
        \coloneqq DF(\theta_i)^{*}u_N(\boldsymbol\theta)\in\Theta.
\end{equation}
In particular, the gate component of $a_i^N$ is the gradient of the lifted objective with respect to the router blocks of particle $i$.
\end{proposition}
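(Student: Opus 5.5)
The proof is a direct chain-rule computation of the directional derivative of $\tf_N$, modeled on the scalar case in Proposition \ref{Proposition 3}. We then identify that derivative through the Riesz representation for the averaged pairing \eqref{eq:thetaN-avg-inner-gated}.

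Fix $\boldsymbol\theta\in\Theta^N$ and a direction $U=(U_1,\ldots,U_N)\in\Theta^N$. The map $m_N:\Theta^N\to\cH$ is a finite average of differentiable maps, so it is Fréchet differentiable. Its derivative is
\[
Dm_N(\boldsymbol\theta)[U]=\frac1N\sum_{i=1}^N DF(\theta_i)[U_i].
\]
Since $\Rfunc$ is Fréchet differentiable at $m_N(\boldsymbol\theta)$, the chain rule gives
\[
D\tf_N(\boldsymbol\theta)[U]=\ip{\nabla\Rfunc(m_N(\boldsymbol\theta))}{\frac1N\sum_{i=1}^N DF(\theta_i)[U_i]}_{\cH}
=\frac1N\sum_{i=1}^N\ip{u_N(\boldsymbol\theta)}{DF(\theta_i)[U_i]}_{\cH}.
\]

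Next we move each term to $\Theta$ using the Hilbert adjoint $DF(\theta_i)^*:\cH\to\Theta$. This is the same identity used in the proof of Proposition \ref{prop:first-var-gated}. It gives
\[
D\tf_N(\boldsymbol\theta)[U]=\frac1N\sum_{i=1}^N\ip{DF(\theta_i)^*u_N(\boldsymbol\theta)}{U_i}_{\Theta}=\ip{a^N(\boldsymbol\theta)}{U}_{\mf}.
\]
The pairing $\ip{\cdot}{\cdot}_{\mf}$ is a genuine inner product on the finite-dimensional space $\Theta^N$; it is a positive multiple of the product inner product. By Riesz uniqueness, $a^N(\boldsymbol\theta)$ is therefore the gradient of $\tf_N$ with respect to this pairing. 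This proves \eqref{eq:particle-gradient-gated} and \eqref{eq:particle-force-gated}.

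For the final claim, use the orthogonal splitting $\Theta=\Thetaexp\times\Thetagate$ from \eqref{eq:theta-inner-product}. Restrict $U$ to directions that vanish in every particle except $i$, and that are zero in the expert blocks of particle $i$.

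For such $U$, the identity above reads
\[
D\tf_N(\boldsymbol\theta)[U]=\frac1N\ip{a_i^{N,\mathrm{gate}}}{U_i^{\mathrm{gate}}}.
\]
So the gate projection of $a_i^N$ is the gradient of $\tf_N$ with respect to $\phi_i$, taken under the $\mf$-normalized inner product.

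There is no real obstacle here. The only point needing care is this $1/N$ factor. It is exactly absorbed by the averaged pairing, which is why $a_i^N$ carries no factor of $N$, in agreement with the scalar formula \eqref{gradient of J_N}.
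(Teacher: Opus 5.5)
Your proof is correct and matches the paper's argument: the chain rule gives $D\tf_N(\boldsymbol\theta)[U]=\ip{a^N(\boldsymbol\theta)}{U}_{\mf}$, which identifies the gradient under the averaged pairing. Your explicit treatment of the gate claim goes beyond the paper, which states it without proof. In particular, you correctly note that the gate block of $a_i^N$ is the partial gradient in $\phi_i$ only with respect to the $1/N$-weighted pairing, while the unweighted partial gradient is $N^{-1}$ times it.
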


\begin{proof}
For $U=(U_1,\ldots,U_N)\in\Theta^N$, the chain rule gives
\begin{align*}
        D\tf_N(\boldsymbol\theta)[U]
        &=\ip{\nabla\Rfunc(m_N(\boldsymbol\theta))}{\frac1N\sum_{i=1}^{N}DF(\theta_i)[U_i]}_{\cH}\\
        &=\frac1N\sum_{i=1}^{N}\ip{DF(\theta_i)^{*}u_N(\boldsymbol\theta)}{U_i}_{\Theta}\\
        &=\ip{a^N(\boldsymbol\theta)}{U}_{\mf}.
\end{align*}
This is the defining identity for the mean-field gradient.
\end{proof}

The particle-level mirror potentials are
\begin{equation}\label{eq:particle-mirror-potentials-gated}
        \mPsieps^{N}(P)
        \coloneqq\frac1N\sum_{i=1}^{N}\Psieps^{\Theta}(P_i),
        \qquad
        \mPhieps^{N}(G)
        \coloneqq\frac1N\sum_{i=1}^{N}\Phieps^{\Theta}(G_i).
\end{equation}
With respect to $\ip{\cdot}{\cdot}_{\mf}$, the conjugacy $(\mPhieps^{N})^{*}=\mPsieps^{N}$ and $(\mPsieps^{N})^{*}=\mPhieps^{N}$ follows by the same separability proof as Proposition \ref{prop:extended-fenchel-mirror}.

The exact finite-particle regularized Muon scheme on the extended expert-router space is
\begin{equation}\label{eq:discrete-gated-muon-full}
\begin{aligned}
        P_{i,k+1}&=\beta P_{i,k}+(1-\beta)a_i^N(\boldsymbol\theta_k),\\
        G_{i,k+1}&=\argmin_{G\in\Theta}\left\{\ip{P_{i,k+1}}{G}_{\Theta}+\Phieps^{\Theta}(G)\right\},\\
        \theta_{i,k+1}&=\theta_{i,k}+\eta G_{i,k+1}.
\end{aligned}
\end{equation}
Using Proposition \ref{prop:extended-fenchel-mirror}, this is equivalently
\begin{equation}\label{eq:discrete-gated-muon-reduced}
\begin{aligned}
        P_{i,k+1}&=\beta P_{i,k}+(1-\beta)DF(\theta_{i,k})^{*}\nabla\Rfunc\left(\frac1N\sum_{j=1}^{N}F(\theta_{j,k})\right),\\
        \theta_{i,k+1}&=\theta_{i,k}-\eta\Ortheps^{\Theta}(P_{i,k+1}).
\end{aligned}
\end{equation}
Writing $\theta_i=(\omega_i,\phi_i)$ and $P_i=(P_i^{\mathrm{exp}},P_i^{\mathrm{gate}})$, the second line updates expert and router blocks simultaneously:
\begin{equation}\label{eq:expert-router-update-split}
        \omega_{i,k+1}=\omega_{i,k}-\eta\Ortheps^{\mathrm{exp}}(P_{i,k+1}^{\mathrm{exp}}),
        \qquad
        \phi_{i,k+1}=\phi_{i,k}-\eta\Ortheps^{\mathrm{gate}}(P_{i,k+1}^{\mathrm{gate}}).
\end{equation}

\subsection{Finite-particle continuous-time limit}\label{sec:finite-ode-gated}

Let the inertial scaling be
\begin{equation}\label{eq:inertial-scaling-gated}
        \eta_h=h,
        \qquad
        \beta_h=1-\gamma h+r_h,
        \qquad
        \frac{r_h}{h}\to0,
\end{equation}
where $\gamma>0$ is fixed.  The common second-order-consistent choice is $r_h=O(h^2)$.

\begin{theorem}[Finite-$N$ ODE limit on the extended gated space]\label{thm:finite-ode-limit-gated}
Fix $N\in\N$, $\eps>0$, $\gamma>0$, and initial data $(\boldsymbol\theta_0,P_0)\in\Theta^N\times\Theta^N$.  Suppose Assumption \ref{ass:global-hilbert} holds.  Then the ODE system
\begin{equation}\label{eq:finite-ode-gated}
\begin{aligned}
        \dot\theta_i(t)&=-\Ortheps^{\Theta}(P_i(t)),\\
        \dot P_i(t)&=\gamma\left(DF(\theta_i(t))^{*}\nabla\Rfunc\left(\frac1N\sum_{j=1}^{N}F(\theta_j(t))\right)-P_i(t)\right),
\end{aligned}
\end{equation}
for $i=1,\ldots,N$, has a unique global solution.  Let $(\boldsymbol\theta_k^h,P_k^h)$ be the iterates of \eqref{eq:discrete-gated-muon-reduced} with scaling \eqref{eq:inertial-scaling-gated}, and let $(\boldsymbol\theta^h(t),P^h(t))$ be the continuous piecewise-linear interpolation satisfying $(\boldsymbol\theta^h(kh),P^h(kh))=(\boldsymbol\theta_k^h,P_k^h)$.  Then, for every $T<\infty$,
\begin{equation}\label{eq:finite-ode-convergence-gated}
        \sup_{0\le t\le T}\left(\norm{\boldsymbol\theta^h(t)-\boldsymbol\theta(t)}_{\mf}
        +\norm{P^h(t)-P(t)}_{\mf}\right)\to0
\end{equation}
as $h\downarrow0$.  If $r_h=O(h^2)$, the convergence rate is $O(h)$ on every finite time interval.
\end{theorem}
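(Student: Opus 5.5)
We treat the system \eqref{eq:finite-ode-gated} as an ODE on the finite-dimensional Hilbert space $\Theta^N\times\Theta^N$ with norm $\norm{\cdot}_{\mf}$. Write its vector field as $B_N(\boldsymbol\theta,P)=(-\Ortheps^{\Theta}(P_i),\gamma(a_i^N(\boldsymbol\theta)-P_i))_{i=1}^N$, where $a_i^N$ is the particle force of Proposition \ref{prop:particle-gradient-gated}.

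\emph{Step 1: Lipschitz bound.} We first show $B_N$ is globally Lipschitz. The $W$-component is $\eps^{-1}$-Lipschitz by \eqref{eq:product-Orth-Lip-gated}. For the force, apply Lemma \ref{lem:force-estimates-gated} to the empirical measures $\rho^N_{\boldsymbol\theta},\rho^N_{\tilde{\boldsymbol\theta}}$. The identity coupling gives $W_1(\rho^N_{\boldsymbol\theta},\rho^N_{\tilde{\boldsymbol\theta}})\le \frac1N\sum_j\norm{\theta_j-\tilde\theta_j}_\Theta\le\norm{\boldsymbol\theta-\tilde{\boldsymbol\theta}}_{\mf}$ by Cauchy--Schwarz. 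Hence
\[
\norm{a^N(\boldsymbol\theta)-a^N(\tilde{\boldsymbol\theta})}_{\mf}\le (L_DM_R+M_DL_RL_F)\norm{\boldsymbol\theta-\tilde{\boldsymbol\theta}}_{\mf}.
\]
So $B_N$ is Lipschitz with constant $L_B=\max\{\eps^{-1},\gamma(L_DM_R+M_DL_RL_F+1)\}$ up to a harmless factor. Cauchy--Lipschitz/Picard--Lindelöf then gives a unique global $C^1$ solution. Moreover the bounds $\norm{\Ortheps^\Theta}\le\sqrt{\qtot}$ and $\norm{a_i^N}\le M_DM_R$ give a priori estimates
\[
\norm{P(t)}_{\mf}\le \norm{P_0}_{\mf}+M_DM_R,\qquad \norm{\boldsymbol\theta(t)}_{\mf}\le\norm{\boldsymbol\theta_0}_{\mf}+\sqrt{\qtot}\,t.
\]

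\emph{Step 2: one-step consistency.} Write the scheme as $X_{k+1}=X_k+hB_N(X_k)+h\,e_k$. The momentum update gives $P_{i,k+1}-P_{i,k}=(\gamma h-r_h)(a_i^N-P_{i,k})$, so its error is $r_h(a_i^N-P_{i,k})$. The position update uses $P_{i,k+1}$ rather than $P_{i,k}$. The $\eps^{-1}$-Lipschitz property bounds this discrepancy by $h\eps^{-1}\norm{P_{i,k+1}-P_{i,k}}=O(h^2)$. Thus $\norm{e_k}_{\mf}\le C(h+r_h/h)(1+\norm{P_k^h}_{\mf})$.

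We also need a uniform bound on the discrete momenta. Since $P_{k+1}$ is a convex combination of $P_k$ and a force bounded by $M_DM_R$ (for $h$ small, $\beta_h\in(0,1)$), we get $\norm{P_k^h}_{\mf}\le\max\{\norm{P_0}_{\mf},M_DM_R\}$. Hence $\sup_k\norm{e_k}\to0$, and it is $O(h)$ when $r_h=O(h^2)$.

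\emph{Step 3: discrete Gronwall and interpolation.} Let $E_k=\norm{X_k-X(kh)}_{\mf}$. Taylor expansion of the exact solution, with $\dot X$ Lipschitz on bounded sets, gives
\[
E_{k+1}\le(1+L_Bh)E_k+h\norm{e_k}+O(h^2),
\]
so $E_k\le e^{L_BT}T(\sup_j\norm{e_j}+O(h))$. Piecewise-linear interpolation adds only an $O(h)$ term on each subinterval, because both the interpolant and $X$ have bounded velocities there. This yields \eqref{eq:finite-ode-convergence-gated} and the $O(h)$ rate. The main obstacle is the uniform-in-$k$ bound on $P_k^h$, needed to make $e_k$ small. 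The convex-combination argument above handles it; it is what removes the need for any localization under the global assumptions.
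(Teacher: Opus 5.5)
Your proposal is correct and follows essentially the same route as the paper: a global Lipschitz bound for $B_N$ from Lemma~\ref{lem:force-estimates-gated} together with the $1/\eps$-Lipschitz mirror map, then Picard--Lindel\"of, then a one-step consistency estimate (momentum error $r_h(a_i^N-P_{i,k})$, and an $O(h^2)$ error from using $P_{i,k+1}$ instead of $P_{i,k}$ in the position update), closed by discrete Gronwall. Your convex-combination bound $\norm{P_k^h}_{\mf}\le\max\{\norm{P_0}_{\mf},M_DM_R\}$ and the explicit interpolation step supply rigor the paper omits, since it only asserts the consistency bound ``locally uniformly on bounded sets'' without showing the discrete momenta stay bounded.
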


\begin{proof}
Define $a^N$ by \eqref{eq:particle-force-gated}.  The vector field on $\Theta^N\times\Theta^N$ is
\begin{equation}\label{eq:BN-gated}
        B_N(\boldsymbol\theta,P)
        =\left(-\Ortheps^{\Theta}(P_1),\ldots,-\Ortheps^{\Theta}(P_N),
        \gamma(a_1^N(\boldsymbol\theta)-P_1),\ldots,
        \gamma(a_N^N(\boldsymbol\theta)-P_N)\right).
\end{equation}
The map $P\mapsto\Ortheps^{\Theta}(P)$ is globally $1/\eps$-Lipschitz by \eqref{eq:product-Orth-Lip-gated}.  Lemma \ref{lem:force-estimates-gated} implies a global Lipschitz estimate for $a^N$ in the mean-field norm.  Indeed, for two particle configurations $\boldsymbol\theta,\tilde{\boldsymbol\theta}\in\Theta^N$,
\[
        \norm{m_N(\boldsymbol\theta)-m_N(\tilde{\boldsymbol\theta})}_{\cH}
        \le \frac1N\sum_{j=1}^{N}L_F\norm{\theta_j-\tilde\theta_j}_{\Theta}
        \le L_F\norm{\boldsymbol\theta-\tilde{\boldsymbol\theta}}_{\mf}.
\]
Consequently,
\[
        \norm{a_i^N(\boldsymbol\theta)-a_i^N(\tilde{\boldsymbol\theta})}_{\Theta}
        \le L_D M_R\norm{\theta_i-\tilde\theta_i}_{\Theta}
        +M_D L_R L_F\norm{\boldsymbol\theta-\tilde{\boldsymbol\theta}}_{\mf}.
\]
Squaring, averaging over $i$, and using $(x+y)^2\le2x^2+2y^2$ yields a global Lipschitz bound for $a^N$.  Hence $B_N$ is globally Lipschitz, and the Picard-Lindelof theorem gives a unique global solution of \eqref{eq:finite-ode-gated}.

The discrete momentum update satisfies
\begin{align*}
        P_{i,k+1}^h-P_{i,k}^h
        &=(1-\beta_h)\left(a_i^N(\boldsymbol\theta_k^h)-P_{i,k}^h\right)\\
        &=(\gamma h-r_h)\left(a_i^N(\boldsymbol\theta_k^h)-P_{i,k}^h\right)\\
        &=h\gamma\left(a_i^N(\boldsymbol\theta_k^h)-P_{i,k}^h\right)+o(h)
\end{align*}
locally uniformly on bounded sets.  For the position update,
\begin{align*}
        \theta_{i,k+1}^h-\theta_{i,k}^h
        &=-h\Ortheps^{\Theta}(P_{i,k+1}^h)\\
        &=-h\Ortheps^{\Theta}(P_{i,k}^h)
        -h\left(\Ortheps^{\Theta}(P_{i,k+1}^h)-\Ortheps^{\Theta}(P_{i,k}^h)\right).
\end{align*}
Since $\Ortheps^{\Theta}$ is $1/\eps$-Lipschitz and $P_{i,k+1}^h-P_{i,k}^h=O(h)$ on bounded sets, the second term is $O(h^2)$.  Therefore the one-step local truncation error of the scheme relative to the Euler discretization of \eqref{eq:finite-ode-gated} is $O(h^2)+o(h)h$.  A standard discrete Gronwall estimate for globally Lipschitz ODEs gives uniform convergence on $[0,T]$.  If $r_h=O(h^2)$, the local truncation error is $O(h^2)$ and the global error is $O(h)$.
\end{proof}

\begin{remark}[Localized finite-horizon form]\label{rem:localized-ode-gated}
The transformer specialization generally satisfies Assumption \ref{ass:localized-hilbert}, not Assumption \ref{ass:global-hilbert}.  The conclusion of Theorem \ref{thm:finite-ode-limit-gated} remains valid on each finite interval $[0,T]$ under Assumption \ref{ass:localized-hilbert}.  Indeed, the position velocity obeys
\[
        \norm{\dot\theta_i(t)}_{\Theta}=\norm{\Ortheps^{\Theta}(P_i(t))}_{\Theta}\le\sqrt{\qtot},
\]
so
\[
        \norm{\theta_i(t)}_{\Theta}\le \norm{\theta_i(0)}_{\Theta}+T\sqrt{\qtot}.
\]
All force-field constants needed on $[0,T]$ are therefore evaluated on a bounded parameter ball.  Once $\theta_i(t)$ is confined to that ball, the comparison inequality
\[
        \frac{\dd}{\dd t}\norm{P_i(t)}_{\Theta}
        \le \gamma\left(\sup_{0\le s\le T}\norm{a_i^N(\boldsymbol\theta(s))}_{\Theta}-\norm{P_i(t)}_{\Theta}\right)
\]
precludes finite-time momentum blow-up.  Local Lipschitzness on the resulting bounded phase-space region gives uniqueness and the same Euler convergence argument.
\end{remark}

\subsection{Phase-space mean-field equation}\label{sec:mean-field-gated}

Set
\begin{equation}\label{eq:phase-space-gated}
        \Z\coloneqq\Theta\times\Theta,
        \qquad
        z=(\theta,p).
\end{equation}
For $\mu\in\cP_1(\Z)$ let $\rho=(\pi_{\theta})_{\#}\mu$ and
\begin{equation}\label{eq:m-mu-gated}
        m_{\mu}\coloneqq\int_{\Z}F(\theta)\dd\mu(\theta,p)=\int_{\Theta}F(\theta)\dd\rho(\theta).
\end{equation}
Define
\begin{equation}\label{eq:a-mu-gated}
        a_{\mu}(\theta)\coloneqq DF(\theta)^{*}\nabla\Rfunc(m_{\mu})
\end{equation}
and the phase-space drift
\begin{equation}\label{eq:drift-mu-gated}
        b_{\mu}(\theta,p)
        \coloneqq\left(-\Ortheps^{\Theta}(p),\gamma(a_{\mu}(\theta)-p)\right).
\end{equation}

\begin{theorem}[Well-posed phase-space mean-field equation]\label{thm:mean-field-gated}
Assume Assumption \ref{ass:global-hilbert}.  For every $\mu_0\in\cP_1(\Z)$ there exists a unique curve
\[
        \mu\in C([0,\infty);\cP_1(\Z))
\]
such that, for every $\zeta\in C_c^{\infty}(\Z)$,
\begin{equation}\label{eq:weak-mean-field-gated}
        \frac{\dd}{\dd t}\int_{\Z}\zeta(\theta,p)\dd\mu_t(\theta,p)
        =\int_{\Z}\left[\ip{\nabla_{\theta}\zeta}{-\Ortheps^{\Theta}(p)}_{\Theta}
        +\ip{\nabla_p\zeta}{\gamma(a_{\mu_t}(\theta)-p)}_{\Theta}\right]\dd\mu_t.
\end{equation}
Equivalently,
\begin{equation}\label{eq:pde-mean-field-gated}
        \partial_t\mu_t+
        \nabla_{\theta}\cdot\left(-\Ortheps^{\Theta}(p)\mu_t\right)
        +\nabla_p\cdot\left(\gamma(a_{\mu_t}(\theta)-p)\mu_t\right)=0.
\end{equation}
The solution is the law of the nonlinear characteristic system
\begin{equation}\label{eq:nonlinear-characteristics-gated}
\begin{aligned}
        \dot\Theta_t&=-\Ortheps^{\Theta}(P_t),\\
        \dot P_t&=\gamma\left(DF(\Theta_t)^{*}\nabla\Rfunc\left(\E F(\Theta_t)\right)-P_t\right),
\end{aligned}
\end{equation}
with $(\Theta_0,P_0)\sim\mu_0$.
\end{theorem}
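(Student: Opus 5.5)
The plan is a fixed-point argument on the law of the nonlinear characteristic system \eqref{eq:nonlinear-characteristics-gated}; the PDE is then recovered by the chain rule, and uniqueness for the PDE follows from superposition. Fix $T<\infty$ and work on $\mathcal C_T\coloneqq C([0,T];\cP_1(\Z))$ with the metric $d_T(\mu,\nu)=\sup_{t\le T}e^{-\lambda t}W_1(\mu_t,\nu_t)$, where $\lambda$ will be chosen large.

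\textbf{Step 1: frozen-law flow.} Given $\nu\in\mathcal C_T$, set $m_t=m_{\nu_t}$. The frozen drift $b_{\nu_t}(\theta,p)=(-\Ortheps^{\Theta}(p),\gamma(a_{\nu_t}(\theta)-p))$ has the following properties.
\begin{itemize}
\item It is globally Lipschitz in $(\theta,p)$, with constant $\max(1/\eps,\gamma L_DM_R+\gamma)$, by \eqref{eq:product-Orth-Lip-gated} and Lemma \ref{lem:force-estimates-gated}.
\item It is continuous in $t$, since $t\mapsto m_{\nu_t}$ is continuous by $\|m_{\nu_t}-m_{\nu_s}\|_{\cH}\le L_FW_1(\nu_t,\nu_s)$.
\item It has linear growth.
\end{itemize}
Hence the flow map $X^\nu_t$ exists globally. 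Define $\Gamma(\nu)_t=(X^\nu_t)_\#\mu_0$. Moment propagation follows from $\|\theta_t\|\le\|\theta_0\|+t\sqrt{\qtot}$ and $\|p_t\|\le e^{-\gamma t}\|p_0\|+M_DM_R$. Together these give $\Gamma(\nu)_t\in\cP_1$ and $W_1(\Gamma(\nu)_t,\Gamma(\nu)_s)\le\E\|X_t-X_s\|\le C|t-s|$, so $\Gamma:\mathcal C_T\to\mathcal C_T$.

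\textbf{Step 2: contraction.} Couple two frozen flows driven by $\nu,\tilde\nu$ from the same initial point $z_0\sim\mu_0$. Lemma \ref{lem:force-estimates-gated} gives
\[
\tfrac{d}{dt}\|X^\nu_t-X^{\tilde\nu}_t\|\le L\|X^\nu_t-X^{\tilde\nu}_t\|+\gamma M_DL_RL_F\,W_1(\nu_t,\tilde\nu_t).
\]
By Gronwall, $W_1(\Gamma(\nu)_t,\Gamma(\tilde\nu)_t)\le \E\|X^\nu_t-X^{\tilde\nu}_t\|\le c\int_0^te^{L(t-s)}W_1(\nu_s,\tilde\nu_s)\,ds$. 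In the weighted metric this gives $d_T(\Gamma\nu,\Gamma\tilde\nu)\le \frac{c}{\lambda-L}d_T(\nu,\tilde\nu)$, a contraction for $\lambda$ large. Banach's fixed point theorem then yields a unique fixed point on each $[0,T]$. Because the fixed points on different horizons agree on overlaps, they patch together to a global curve in $C([0,\infty);\cP_1(\Z))$. This fixed point is exactly the law of \eqref{eq:nonlinear-characteristics-gated}, since $\E F(\Theta_t)=m_{\mu_t}$.

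\textbf{Step 3: weak PDE and uniqueness.} For $\zeta\in C_c^\infty(\Z)$, differentiate $\zeta(X_t)$ along characteristics, take expectations, and use dominated convergence (the drift has linear growth and $\mu_t$ has first moments). This gives \eqref{eq:weak-mean-field-gated}.

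The main obstacle is uniqueness among \emph{all} weak solutions, not only those built from characteristics. Let $\tilde\mu$ be any weak solution in $C([0,\infty);\cP_1)$. Freeze $m_{\tilde\mu_t}$; this makes the equation a \emph{linear} continuity equation with a drift that is Lipschitz in space, continuous in time, and satisfies $\int_0^T\!\int|b|\,d\tilde\mu_t\,dt<\infty$. By the Ambrosio superposition principle, combined with uniqueness of the characteristic ODE (or directly by the duality/uniqueness theorem for Lipschitz linear transport), $\tilde\mu_t=(X^{\tilde\mu}_t)_\#\mu_0$. Thus $\tilde\mu=\Gamma(\tilde\mu)$, and uniqueness of the fixed point gives $\tilde\mu=\mu$. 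The one point needing care is that the curve is only continuous in time, so I would verify the integrability condition explicitly, using the linear-growth bound and $\sup_{t\le T}\int\|z\|\,d\tilde\mu_t<\infty$.
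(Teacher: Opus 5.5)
Your proposal is correct and takes the same route as the paper. Both use a Picard/Banach fixed point for the McKean--Vlasov characteristic system, built on the $W_1$-Lipschitz dependence of the force on the law; the chain rule along characteristics then gives the weak form, and the superposition principle together with uniqueness of Lipschitz characteristics gives uniqueness among all weak solutions.

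You supply the details the paper calls ``standard'': the weighted-in-time metric, moment propagation, patching across horizons, and the integrability check needed for superposition. The only slip is the stated spatial Lipschitz constant $\max(1/\eps,\gamma L_DM_R+\gamma)$, which is not exact for the product norm. Something like $\sqrt{\eps^{-2}+2\gamma^2+2\gamma^2L_D^2M_R^2}$ works, and since any finite constant suffices, the argument is unaffected.
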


\begin{proof}
The drift is globally Lipschitz in $(\theta,p)$ and Lipschitz in the measure argument with respect to $W_1$.  The Lipschitz estimate in $(\theta,p)$ follows from \eqref{eq:product-Orth-Lip-gated} and Lemma \ref{lem:force-estimates-gated}.  For the measure argument,
\[
        \norm{a_{\mu}(\theta)-a_{\nu}(\theta)}_{\Theta}
        \le M_D L_R L_F W_1((\pi_{\theta})_{\#}\mu,(\pi_{\theta})_{\#}\nu)
        \le M_D L_R L_F W_1(\mu,\nu).
\]
The drift has at most linear growth in $p$ and bounded growth in the $\theta$ component.  The standard Picard iteration for McKean-Vlasov ODEs with Lipschitz drift gives a unique nonlinear process solving \eqref{eq:nonlinear-characteristics-gated}.  Setting $\mu_t=\Law(\Theta_t,P_t)$ and applying the chain rule to $\zeta(\Theta_t,P_t)$ gives \eqref{eq:weak-mean-field-gated}.  Conversely, the superposition principle for Lipschitz continuity equations implies that any weak solution is transported by the same characteristic flow.  The fixed-point uniqueness for the nonlinear characteristic equation therefore gives uniqueness of the PDE solution.
\end{proof}

\begin{remark}[Localized compact-support form]\label{rem:localized-mf-gated}
On a finite horizon $[0,T]$, Assumption \ref{ass:global-hilbert} can be replaced by Assumption \ref{ass:localized-hilbert} when $\mu_0$ has compact support.  The bound $\norm{\dot\Theta_t}_{\Theta}\le\sqrt{\qtot}$ confines the position support to a bounded $\Theta$-ball, and the momentum comparison estimate confines the momentum support to a bounded ball depending on $T$, the initial support, and the localized force bound.  On this bounded region the drift is Lipschitz, and the proof of Theorem \ref{thm:mean-field-gated} applies without modification.
\end{remark}

\subsection{Hamiltonian formulation and dissipation identity}\label{sec:hamiltonian-gated}

Define the regularized Muon Hamiltonian on phase-space probability measures by
\begin{equation}\label{eq:hamiltonian-gated}
        \Ham_{\eps,\gamma}(\mu)
        \coloneqq\int_{\Z}\Psieps^{\Theta}(p)\dd\mu(\theta,p)
        +\gamma\Rfunc\left(\int_{\Z}F(\theta)\dd\mu(\theta,p)\right).
\end{equation}

\begin{proposition}[First variation of the Hamiltonian]\label{prop:ham-first-var-gated}
A valid first variation of $\Ham_{\eps,\gamma}$ is
\begin{equation}\label{eq:ham-first-var-gated}
        \frac{\delta\Ham_{\eps,\gamma}}{\delta\mu}(\mu)(\theta,p)
        =\Psieps^{\Theta}(p)+\gamma\ip{\nabla\Rfunc(m_{\mu})}{F(\theta)}_{\cH},
\end{equation}
up to an additive constant independent of $(\theta,p)$.  Consequently,
\begin{equation}\label{eq:ham-gradients-gated}
        \nabla_p\frac{\delta\Ham_{\eps,\gamma}}{\delta\mu}=\Ortheps^{\Theta}(p),
        \qquad
        \nabla_{\theta}\frac{\delta\Ham_{\eps,\gamma}}{\delta\mu}=\gamma a_{\mu}(\theta).
\end{equation}
\end{proposition}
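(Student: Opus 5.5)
The proof follows the same pattern as the scalar computation in the proof of the Hamiltonian form theorem, and as Proposition \ref{prop:first-var-gated}. Fix $\mu\in\cP_1(\Z)$ and a finite signed measure $\nu$ on $\Z$ with $\nu(\Z)=0$ and finite first moment, for which $F(\theta)$ is Bochner integrable and $\Psieps^{\Theta}(p)$ is integrable. Set $\mu_s=\mu+s\nu$. Integrability is not an issue: $0\le\Psieps^{\Theta}(p)\le\sum_b\|p^{(b)}\|_{\nuc}\le\sqrt{\qtot}\,\|p\|_{\Theta}$, and $F$ has at most linear growth under either Assumption \ref{ass:global-hilbert} or its localized version on bounded supports. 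By linearity of the integral,
\[
\int\Psieps^{\Theta}\,d\mu_s=\int\Psieps^{\Theta}\,d\mu+s\int\Psieps^{\Theta}\,d\nu,
\qquad
m_{\mu_s}=m_\mu+s\int F(\theta)\,d\nu(\theta,p).
\]

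Next we differentiate at $s=0$. The kinetic part is affine in $s$, so its derivative is $\int\Psieps^{\Theta}\,d\nu$. For the potential part, Fr\'echet differentiability of $\Rfunc$ at $m_\mu$ and the chain rule give
\[
\gamma\,\ip{\nabla\Rfunc(m_\mu)}{\int F\,d\nu}_{\cH}.
\]
Moving the bounded linear functional $\ip{\nabla\Rfunc(m_\mu)}{\cdot}_{\cH}$ inside the Bochner integral, exactly as in Proposition \ref{prop:first-var-gated}, yields
\[
\left.\frac{d}{ds}\Ham_{\eps,\gamma}(\mu_s)\right|_{s=0}
=\int_\Z\Big[\Psieps^{\Theta}(p)+\gamma\ip{\nabla\Rfunc(m_\mu)}{F(\theta)}_{\cH}\Big]\,d\nu .
\]
This is the defining identity for the first variation. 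Because $\nu(\Z)=0$, the first variation is determined only up to an additive constant.

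For the gradients, $m_\mu$ is a constant with respect to $(\theta,p)$ once $\mu$ is fixed, and any additive constant drops out under differentiation. In $p$, only the first term depends on $p$, and its gradient is $\nabla\Psieps^{\Theta}(p)=\Ortheps^{\Theta}(p)$ by the block formula \eqref{eq:product-Orth-gated}. In $\theta$, the directional derivative along $v\in\Theta$ is
\[
\gamma\ip{\nabla\Rfunc(m_\mu)}{DF(\theta)[v]}_{\cH}
=\gamma\ip{DF(\theta)^*\nabla\Rfunc(m_\mu)}{v}_{\Theta}.
\]
By Riesz representation this gives $\gamma a_\mu(\theta)$, with $a_\mu$ as in \eqref{eq:a-mu-gated}.

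The only delicate point is justifying the differentiation. For $\Psieps^{\Theta}$ the dependence on $s$ is exactly linear, so nothing is needed there. For $\Rfunc$, the first-order Taylor expansion at $m_\mu$ suffices, since $m_{\mu_s}-m_\mu=O(s)$ in $\cH$. I therefore do not expect a genuine obstacle.
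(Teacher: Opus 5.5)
Your proposal is correct and follows essentially the same route as the paper: perturb along $\mu+s\nu$ with $\nu(\Z)=0$, differentiate the affine kinetic term and the potential term via Fr\'echet differentiability of $\Rfunc$, pull $\ip{\nabla\Rfunc(m_\mu)}{\cdot}_{\cH}$ inside the Bochner integral, and then read off the $p$- and $\theta$-gradients, the latter by the same adjoint/Riesz computation the paper imports from Proposition~\ref{prop:first-var-gated}. Your explicit integrability bound $0\le\Psieps^{\Theta}(p)\le\sqrt{\qtot}\,\norm{p}_{\Theta}$ is a small addition that the paper leaves implicit.
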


\begin{proof}
Let $\nu$ be a finite signed measure on $\Z$ with $\nu(\Z)=0$ and set $\mu_s=\mu+s\nu$.  Then
\begin{align*}
        \left.\frac{\dd}{\dd s}\Ham_{\eps,\gamma}(\mu_s)\right|_{s=0}
        &=\int_{\Z}\Psieps^{\Theta}(p)\dd\nu(\theta,p)
        +\gamma\ip{\nabla\Rfunc(m_{\mu})}{\int_{\Z}F(\theta)\dd\nu(\theta,p)}_{\cH}\\
        &=\int_{\Z}\left[\Psieps^{\Theta}(p)+\gamma\ip{\nabla\Rfunc(m_{\mu})}{F(\theta)}_{\cH}\right]\dd\nu(\theta,p).
\end{align*}
This proves \eqref{eq:ham-first-var-gated}.  Differentiating with respect to $p$ gives $\nabla\Psieps^{\Theta}(p)=\Ortheps^{\Theta}(p)$.  Differentiating with respect to $\theta$ and using Proposition \ref{prop:first-var-gated} gives the second identity in \eqref{eq:ham-gradients-gated}.
\end{proof}

\begin{theorem}[Damped Hamiltonian form]\label{thm:ham-form-gated}
The phase-space PDE \eqref{eq:pde-mean-field-gated} is equivalent to
\begin{equation}\label{eq:damped-ham-form-gated}
        \partial_t\mu_t
        +\nabla_{\theta}\cdot\left(\mu_t\left[-\nabla_p\frac{\delta\Ham_{\eps,\gamma}}{\delta\mu_t}\right]\right)
        +\nabla_p\cdot\left(\mu_t\left[\nabla_{\theta}\frac{\delta\Ham_{\eps,\gamma}}{\delta\mu_t}-\gamma p\right]\right)=0.
\end{equation}
\end{theorem}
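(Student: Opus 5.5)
The plan is to substitute the gradients of the first variation computed in Proposition \ref{prop:ham-first-var-gated} directly into \eqref{eq:damped-ham-form-gated} and check that the resulting velocity field coincides, term by term, with the drift $b_{\mu_t}$ of \eqref{eq:drift-mu-gated}. Both equations are continuity equations for the same unknown curve $\mu_t$. They will therefore be equivalent, in the weak sense \eqref{eq:weak-mean-field-gated}, as soon as their velocity fields agree $\mu_t$-almost everywhere for each $t$.

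\textbf{Step 1: fix the first variation at a given time.} Fix $t$ and write $m_t=m_{\mu_t}$. By Proposition \ref{prop:ham-first-var-gated}, the first variation $\frac{\delta\Ham_{\eps,\gamma}}{\delta\mu}(\mu_t)$ is determined up to an additive constant independent of $(\theta,p)$. Such a constant is annihilated by $\nabla_\theta$ and $\nabla_p$, so the flux in \eqref{eq:damped-ham-form-gated} is well defined regardless of which representative is chosen. Because $m_t$ is frozen as a scalar once $\mu_t$ is fixed, the spatial derivatives act only on $\Psieps^{\Theta}(p)$ and on $F(\theta)$.

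\textbf{Step 2: identify the two components of the velocity.} By \eqref{eq:ham-gradients-gated}, the $\theta$-component is
\[
-\nabla_p\frac{\delta\Ham_{\eps,\gamma}}{\delta\mu_t}=-\Ortheps^{\Theta}(p).
\]
This uses $\nabla\Psieps^{\Theta}=\Ortheps^{\Theta}$ from \eqref{eq:product-Orth-gated}. The $p$-component is
\[
\nabla_\theta\frac{\delta\Ham_{\eps,\gamma}}{\delta\mu_t}-\gamma p=\gamma a_{\mu_t}(\theta)-\gamma p=\gamma\bigl(a_{\mu_t}(\theta)-p\bigr).
\]
This uses the adjoint identity $\nabla_\theta\langle\nabla\Rfunc(m),F(\theta)\rangle_{\cH}=DF(\theta)^*\nabla\Rfunc(m)$ from Proposition \ref{prop:first-var-gated}. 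The two components together are exactly $b_{\mu_t}$, so the two PDEs coincide as distributional identities.

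\textbf{Step 3: transfer the notion of solution.} Testing against any $\zeta\in C_c^\infty(\Z)$ shows that the weak formulations are literally the same integral identity. Hence a curve is a weak solution of one equation if and only if it is a weak solution of the other, and the well-posedness from Theorem \ref{thm:mean-field-gated} transfers directly.

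\textbf{Main obstacle.} The only point needing care is the measurability and integrability of the flux, so that the weak form makes sense. This is handled by the bounds in \eqref{eq:product-Orth-Lip-gated} and Lemma \ref{lem:force-estimates-gated}: the flux is bounded in $\theta$ and has linear growth in $p$. Since $\mu_t\in\cP_1(\Z)$, the flux is integrable. Everything else is a direct identification.
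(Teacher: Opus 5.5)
Your proposal is correct and matches the paper's proof: substitute the gradients \eqref{eq:ham-gradients-gated} from Proposition \ref{prop:ham-first-var-gated} into \eqref{eq:damped-ham-form-gated}, and read off the $\theta$-velocity $-\Ortheps^{\Theta}(p)$ and the $p$-velocity $\gamma(a_{\mu_t}(\theta)-p)$, which are exactly the drift of \eqref{eq:pde-mean-field-gated}. Your extra remarks (additive constants killed by the gradients, identical weak formulations, integrability of the flux) are harmless but not needed; note only that $m_t$ is a fixed vector in $\cH$, not a scalar.
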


\begin{proof}
Substitution of \eqref{eq:ham-gradients-gated} into \eqref{eq:damped-ham-form-gated} gives $\theta$-velocity $-\Ortheps^{\Theta}(p)$ and $p$-velocity $\gamma a_{\mu_t}(\theta)-\gamma p$, which is exactly \eqref{eq:pde-mean-field-gated}.
\end{proof}

\begin{theorem}[Dissipation identity on the extended gated space]\label{thm:dissipation-gated}
Let $\mu_t$ solve \eqref{eq:pde-mean-field-gated}.  Assume that the first moment is finite on compact time intervals and that the test-function cutoff argument is justified by either Assumption \ref{ass:global-hilbert} or the localized compact-support conditions of Remark \ref{rem:localized-mf-gated}.  Then $t\mapsto\Ham_{\eps,\gamma}(\mu_t)$ is absolutely continuous on compact intervals and, for almost every $t$,
\begin{equation}\label{eq:dissipation-gated}
        \frac{\dd}{\dd t}\Ham_{\eps,\gamma}(\mu_t)
        =-\gamma\int_{\Z}\ip{p}{\Ortheps^{\Theta}(p)}_{\Theta}\dd\mu_t(\theta,p)\le0.
\end{equation}
Moreover,
\begin{equation}\label{eq:dissipation-integrand-gated}
        d_{\eps}^{\Theta}(p)
        \coloneqq\ip{p}{\Ortheps^{\Theta}(p)}_{\Theta}
        =\sum_{b=1}^{B}\sum_{r=1}^{q_b}
        \frac{\sigma_r(p^{(b)})^2}{\sqrt{\sigma_r(p^{(b)})^2+\eps^2}},
\end{equation}
so $d_{\eps}^{\Theta}(p)\ge0$, and $d_{\eps}^{\Theta}(p)=0$ if and only if $p=0$.
\end{theorem}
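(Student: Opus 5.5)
The plan is to repeat, on the product space, the argument used for the scalar Hamiltonian theorem (Theorem~\ref{Damped Hamiltonian structure}): split $\Ham_{\eps,\gamma}(\mu_t)$ into a kinetic part $K(t)=\int\Psieps^{\Theta}(p)\dd\mu_t$ and a potential part $\gamma\Rfunc(m_t)$ with $m_t=m_{\mu_t}$, and differentiate each through the weak formulation \eqref{eq:weak-mean-field-gated}. First I will enlarge the class of admissible test functions from $C_c^\infty(\Z)$ to $C^1$ functions with at most linear growth and bounded gradient, using the same cutoff $\zeta_R=\chi_R\zeta$. The error term $\zeta\nabla\chi_R$ is bounded by $C\mathbf 1_{\{\|(\theta,p)\|\ge R\}}$. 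It is integrated against a drift with linear growth in $p$, namely $\|b_{\mu_t}\|\le\sqrt{\qtot}+\gamma(M_DM_R+\|p\|)$. Finite first moments, uniform on $[0,T]$, together with dominated convergence, then send this term to zero, after integrating the weak form in time. This covers $\zeta=\Psieps^{\Theta}$, since $0\le\Psieps^{\Theta}(p)\le\sqrt{\qtot}\|p\|_\Theta$ and $\|\nabla\Psieps^{\Theta}\|\le\sqrt{\qtot}$ by \eqref{eq:product-Orth-Lip-gated}.

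The main obstacle is $F$, which is $\cH$-valued. I will handle it by testing componentwise. For each $h\in\cH$, the scalar function $\zeta_h(\theta)=\ip{h}{F(\theta)}_{\cH}$ is Lipschitz with gradient $DF(\theta)^*h$ of norm at most $M_D\|h\|$, so the cutoff argument gives
\begin{equation*}
\ip{h}{m_t-m_s}_{\cH}=-\int_s^t\!\int_\Z\ip{DF(\theta)^*h}{\Ortheps^{\Theta}(p)}_\Theta\dd\mu_r\dd r .
\end{equation*}
Since the integrand is bounded by $M_D\sqrt{\qtot}\|h\|$, the curve $m_t$ is Lipschitz in $\cH$, with the Bochner identity $m_t'=-\int DF(\theta)\Ortheps^{\Theta}(p)\dd\mu_t$ for a.e.\ $t$. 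Here I expect some care is needed: either identify the weak derivative with a strong Bochner derivative, which is automatic when $\cH$ is finite dimensional as in the transformer setting, or argue that a Lipschitz curve in a Hilbert space is a.e.\ differentiable, since Hilbert spaces have the Radon--Nikodym property. The chain rule for the Fr\'echet differentiable $\Rfunc$ then gives
\begin{equation*}
\tfrac{\dd}{\dd t}\gamma\Rfunc(m_t)=-\gamma\int\ip{a_{\mu_t}(\theta)}{\Ortheps^{\Theta}(p)}_\Theta\dd\mu_t .
\end{equation*}
The same integrated identities also yield absolute continuity of both pieces on compact intervals.

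Next I differentiate the kinetic part. Testing with $\Psieps^{\Theta}$, and using that $\nabla_p\Psieps^{\Theta}=\Ortheps^{\Theta}$ by Proposition~\ref{prop:ham-first-var-gated}, gives
\begin{equation*}
K'(t)=\gamma\int\ip{\Ortheps^{\Theta}(p)}{a_{\mu_t}(\theta)-p}_\Theta\dd\mu_t .
\end{equation*}
Adding the two derivatives, the cross terms $\pm\gamma\int\ip{a_{\mu_t}}{\Ortheps^{\Theta}(p)}\dd\mu_t$ cancel exactly, and what remains is \eqref{eq:dissipation-gated}.

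Finally, the integrand formula follows from block separability. By \eqref{eq:theta-inner-product} we have $\ip{p}{\Ortheps^{\Theta}(p)}_\Theta=\sum_b\ip{p^{(b)}}{\operatorname{Orth}_{\eps,b}(p^{(b)})}_F$. For each block, the reduced SVD computation from the single-matrix proof gives $\sum_r\sigma_r^2/\sqrt{\sigma_r^2+\eps^2}$. Each summand is nonnegative and vanishes iff $\sigma_r=0$, so $d^\Theta_\eps(p)=0$ iff every block has all singular values zero, i.e.\ iff $p=0$. This also gives the sign $\le0$.
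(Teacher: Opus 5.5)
Your proposal is correct and follows the paper's proof essentially step for step. Like the paper, you split $\Ham_{\eps,\gamma}(\mu_t)$ into the kinetic part $\int\Psieps^{\Theta}\dd\mu_t$ and $\gamma\Rfunc(m_t)$, test the weak form with $F$ and $\Psieps^{\Theta}$ through the cutoff argument, cancel the mixed term $\gamma\int\ip{a_{\mu_t}(\theta)}{\Ortheps^{\Theta}(p)}_{\Theta}\dd\mu_t$, and read off the integrand from the blockwise SVDs. Your componentwise handling of the $\cH$-valued moment (the Lipschitz bound on $m_t$, identifying $\dot m_t$, and absolute continuity) fills in details the paper only asserts, without changing the argument.
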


\begin{proof}
Let $G(p)=\Ortheps^{\Theta}(p)$ and $a_t(\theta)=a_{\mu_t}(\theta)$.  The PDE is the continuity equation with velocity
\[
        b_t(\theta,p)=(-G(p),\gamma(a_t(\theta)-p)).
\]
For smooth compactly supported $\zeta$,
\[
        \frac{\dd}{\dd t}\int\zeta\dd\mu_t
        =\int\left[\ip{\nabla_{\theta}\zeta}{-G(p)}_{\Theta}
        +\ip{\nabla_p\zeta}{\gamma(a_t(\theta)-p)}_{\Theta}\right]\dd\mu_t.
\]
The functions $\theta\mapsto F(\theta)$ and $p\mapsto\Psieps^{\Theta}(p)$ are admissible by the stated cutoff hypothesis.  For the feature moment,
\begin{equation}\label{eq:m-dot-gated}
        \dot m_t
        =\frac{\dd}{\dd t}\int F(\theta)\dd\mu_t(\theta,p)
        =\int DF(\theta)[-G(p)]\dd\mu_t(\theta,p).
\end{equation}
Therefore
\begin{align}\label{eq:R-dot-gated}
        \frac{\dd}{\dd t}\left[\gamma\Rfunc(m_t)\right]
        &=\gamma\ip{\nabla\Rfunc(m_t)}{\dot m_t}_{\cH}\nonumber\\
        &=-\gamma\int\ip{DF(\theta)^{*}\nabla\Rfunc(m_t)}{G(p)}_{\Theta}\dd\mu_t\nonumber\\
        &=-\gamma\int\ip{a_t(\theta)}{G(p)}_{\Theta}\dd\mu_t.
\end{align}
For the kinetic term,
\begin{align}\label{eq:K-dot-gated}
        \frac{\dd}{\dd t}\int\Psieps^{\Theta}(p)\dd\mu_t
        &=\int\ip{G(p)}{\gamma(a_t(\theta)-p)}_{\Theta}\dd\mu_t\nonumber\\
        &=\gamma\int\ip{a_t(\theta)}{G(p)}_{\Theta}\dd\mu_t
        -\gamma\int\ip{p}{G(p)}_{\Theta}\dd\mu_t.
\end{align}
Adding \eqref{eq:R-dot-gated} and \eqref{eq:K-dot-gated} cancels the mixed term and gives \eqref{eq:dissipation-gated}.  Formula \eqref{eq:dissipation-integrand-gated} follows by inserting the blockwise SVDs into \eqref{eq:product-Orth-gated}.  Each summand is nonnegative and vanishes exactly when the corresponding singular value is zero.  Hence the sum vanishes if and only if all blocks of $p$ vanish.
\end{proof}

\subsection{Continuous-time convergence under functional assumptions}\label{sec:convergence-gated}

The accelerated Muon flow is not an ordinary Wasserstein gradient flow for $\tf(\rho_t)$, and $\tf(\rho_t)$ need not be monotone.  The dissipated Lyapunov quantity is the damped Hamiltonian.  Exponential convergence of the objective gap follows from the standard force-norm assumptions, kinetic coercivity on bounded momentum sets, and a curvature estimate for the force-momentum alignment.

Throughout this section define
\begin{align}\label{eq:convergence-quantities-gated}
        K_t&\coloneqq\int_{\Z}\Psieps^{\Theta}(p)\dd\mu_t(\theta,p),
        &D_t&\coloneqq\int_{\Z}\ip{p}{\Ortheps^{\Theta}(p)}_{\Theta}\dd\mu_t(\theta,p),\nonumber\\
        J_t&\coloneqq\tf(\rho_t)=\Rfunc(m_t),
        &U_t&\coloneqq J_t-J_{\star},\qquad J_{\star}\coloneqq\inf_{\rho\in\cP_1(\Theta)}\tf(\rho),\nonumber\\
        H_t&\coloneqq K_t+\gamma U_t,
        &A_t&\coloneqq\int_{\Z}\norm{a_t(\theta)}_{\Theta}^{2}\dd\mu_t(\theta,p),\nonumber\\
        C_t&\coloneqq\int_{\Z}\ip{a_t(\theta)}{p}_{\Theta}\dd\mu_t(\theta,p),
        && a_t(\theta)=DF(\theta)^{*}\nabla\Rfunc(m_t).
\end{align}

\subsubsection{Kinetic coercivity on bounded momentum sets}\label{subsec:kinetic-gated}

\begin{lemma}[Kinetic constants for extended product Muon]\label{lem:kinetic-gated}
Assume $\norm{p}_{\Theta}\le B_P$ on the support of $\mu_t$ for all times under consideration.  Then
\begin{equation}\label{eq:kinetic-constants-gated}
        \kappa_K\coloneqq\frac{\eps^2}{(B_P^2+\eps^2)^{3/2}},
        \qquad
        \kappa_D\coloneqq\frac1{\sqrt{B_P^2+\eps^2}},
        \qquad
        L_G\coloneqq\frac1\eps,
        \qquad
        \chi\coloneqq1
\end{equation}
satisfy, for all such $p$,
\begin{align}
        \Psieps^{\Theta}(p)&\ge \frac{\kappa_K}{2}\norm{p}_{\Theta}^{2},\label{eq:kinetic-K-gated}\\
        \ip{p}{\Ortheps^{\Theta}(p)}_{\Theta}&\ge \kappa_D\norm{p}_{\Theta}^{2},\label{eq:kinetic-D-gated}\\
        \Psieps^{\Theta}(p)&\le \chi\ip{p}{\Ortheps^{\Theta}(p)}_{\Theta},\label{eq:kinetic-chi-gated}\\
        \norm{\Ortheps^{\Theta}(p)}_{\Theta}&\le L_G\norm{p}_{\Theta}.\label{eq:kinetic-LG-gated}
\end{align}
\end{lemma}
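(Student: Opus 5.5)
The plan is to reduce everything to scalar inequalities on singular values and then sum. By \eqref{eq:product-Psi-Phi-gated} and \eqref{eq:product-Orth-gated}, both $\Psieps^{\Theta}(p)$ and $\ip{p}{\Ortheps^{\Theta}(p)}_{\Theta}$ are sums over blocks $b$ and singular values $r\le q_b$ of the scalar quantities $\psi_\eps(s)=\sqrt{s^2+\eps^2}-\eps$ and $s^2/\sqrt{s^2+\eps^2}$, evaluated at $s=\sigma_r(p^{(b)})$; the second expression is exactly \eqref{eq:dissipation-integrand-gated}. We also have $\norm{p}_{\Theta}^2=\sum_b\sum_r\sigma_r(p^{(b)})^2$. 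The first thing to record is that each singular value obeys $\sigma_r(p^{(b)})\le\norm{p^{(b)}}_F\le\norm{p}_{\Theta}\le B_P$. It is therefore enough to prove, for every scalar $s\in[0,B_P]$, the three inequalities
\[
\psi_\eps(s)\ge\tfrac{\kappa_K}{2}s^2,\qquad \frac{s^2}{\sqrt{s^2+\eps^2}}\ge\kappa_D s^2,\qquad \psi_\eps(s)\le\frac{s^2}{\sqrt{s^2+\eps^2}}.
\]
Summing these over all $(b,r)$ then gives \eqref{eq:kinetic-K-gated}, \eqref{eq:kinetic-D-gated} and \eqref{eq:kinetic-chi-gated} with $\chi=1$.

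For the first inequality, write $\psi_\eps(s)=\int_0^s g(r)\,dr$, where $g(r)=r/\sqrt{r^2+\eps^2}$. Then $g(0)=0$ and $g'(r)=\eps^2/(r^2+\eps^2)^{3/2}\ge\kappa_K$ on $[0,B_P]$. This gives $g(r)\ge\kappa_K r$, and integrating yields $\psi_\eps(s)\ge\kappa_K s^2/2$. The second inequality is immediate because $\sqrt{s^2+\eps^2}\le\sqrt{B_P^2+\eps^2}$. For the third, rationalize: $\psi_\eps(s)=s^2/(\sqrt{s^2+\eps^2}+\eps)\le s^2/\sqrt{s^2+\eps^2}$.

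For \eqref{eq:kinetic-LG-gated}, no bound on $p$ is needed. Each block map $\operatorname{Orth}_{\eps,b}$ vanishes at $0$ and is $1/\eps$-Lipschitz by \eqref{eq:single-block-bounds-gated}. Hence \eqref{eq:product-Orth-Lip-gated} with $\tilde P=0$ gives $\norm{\Ortheps^{\Theta}(p)}_\Theta\le\eps^{-1}\norm{p}_\Theta$. Equivalently, one can use the singular-value bound $s/\sqrt{s^2+\eps^2}\le s/\eps$ and sum the squares.

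There is no real obstacle in this argument. The only point needing care is the reduction step: the global bound $\norm{p}_\Theta\le B_P$ must control every individual singular value. It does, because each $\sigma_r(p^{(b)})$ is at most $\norm{p^{(b)}}_F$. The constants $\kappa_K$ and $\kappa_D$ are deliberately crude, since they use $B_P$ as a uniform bound for every singular value.
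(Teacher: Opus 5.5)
Your proof is correct and follows the paper's argument essentially step for step. You reduce to singular values, use $\psi_\eps(s)=\int_0^s r(r^2+\eps^2)^{-1/2}\,dr$ with derivative bound $\kappa_K$ on $[0,B_P]$, bound $\kappa_D$ directly, use an elementary comparison for $\chi=1$, and use the $1/\eps$-Lipschitz property with $\Ortheps^{\Theta}(0)=0$ for $L_G$. Your explicit check that $\sigma_r(p^{(b)})\le\norm{p^{(b)}}_F\le\norm{p}_{\Theta}\le B_P$ fills in a step the paper leaves implicit.
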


\begin{proof}
For $s\in[0,B_P]$,
\[
        \sqrt{s^2+\eps^2}-\eps=\int_0^s\frac{r}{\sqrt{r^2+\eps^2}}\dd r.
\]
The derivative of $r\mapsto r/\sqrt{r^2+\eps^2}$ is $\eps^2/(r^2+\eps^2)^{3/2}$, which is at least $\kappa_K$ on $[0,B_P]$.  Since the derivative vanishes at $0$, integration gives
\[
        \sqrt{s^2+\eps^2}-\eps\ge\frac{\kappa_K}{2}s^2.
\]
Summing this over all singular values in all blocks gives \eqref{eq:kinetic-K-gated}.  Similarly,
\[
        \frac{s^2}{\sqrt{s^2+\eps^2}}\ge\frac{s^2}{\sqrt{B_P^2+\eps^2}}=\kappa_Ds^2,
\]
which gives \eqref{eq:kinetic-D-gated}.  The inequality
\[
        \sqrt{s^2+\eps^2}-\eps\le \frac{s^2}{\sqrt{s^2+\eps^2}}
\]
is equivalent to $\eps\le\sqrt{s^2+\eps^2}$, and summing gives \eqref{eq:kinetic-chi-gated}.  Finally, $\Ortheps^{\Theta}(0)=0$ and \eqref{eq:product-Orth-Lip-gated} gives \eqref{eq:kinetic-LG-gated}.
\end{proof}

\subsubsection{Curvature identity for the alignment term}\label{subsec:curvature-gated}

\begin{asu}[Second-order trajectory regularity]\label{ass:second-order-gated}
Along the trajectory there are constants $M_D,M_{D,2},M_R,M_{R,2}<\infty$ such that
\begin{equation}\label{eq:second-order-bounds-gated}
        \norm{DF(\theta)}_{\Theta\to\cH}\le M_D,
        \qquad
        \norm{D^2F(\theta)}_{\Theta\times\Theta\to\cH}\le M_{D,2},
\end{equation}
for $\mu_t$-almost every $\theta$, and
\begin{equation}\label{eq:R-second-order-bounds-gated}
        \norm{\nabla\Rfunc(m_t)}_{\cH}\le M_R,
        \qquad
        \norm{D^2\Rfunc(m_t)}_{\cH\to\cH}\le M_{R,2}.
\end{equation}
\end{asu}

\begin{lemma}[Curvature remainder on the extended space]\label{lem:curvature-gated}
Assume the hypotheses of Lemma \ref{lem:kinetic-gated} and Assumption \ref{ass:second-order-gated}.  Then $C_t$ is absolutely continuous and
\begin{equation}\label{eq:C-prime-gated}
        C_t'=\gamma A_t-\gamma C_t+S_t,
\end{equation}
where
\begin{equation}\label{eq:S-t-gated}
        S_t=\int_{\Z}\ip{\partial_t a_t(\theta)-D_{\theta}a_t(\theta)[\Ortheps^{\Theta}(p)]}{p}_{\Theta}\dd\mu_t(\theta,p),
\end{equation}
and
\begin{equation}\label{eq:S-bound-gated}
        |S_t|\le \sigma D_t,
        \qquad
        \sigma\coloneqq\frac{L_G}{\kappa_D}\left(M_D^2M_{R,2}+M_RM_{D,2}\right).
\end{equation}
\end{lemma}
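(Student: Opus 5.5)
The plan is to differentiate $C_t=\int_{\Z}\ip{a_t(\theta)}{p}_{\Theta}\dd\mu_t$ using the weak form \eqref{eq:weak-mean-field-gated} with the time-dependent test function $\zeta_t(\theta,p)\coloneqq\ip{a_t(\theta)}{p}_{\Theta}$, where $a_t(\theta)=DF(\theta)^{*}\nabla\Rfunc(m_t)$. For a $C^1$-in-time family of test functions the weak formulation gives
\[
\frac{\dd}{\dd t}\int\zeta_t\dd\mu_t=\int\partial_t\zeta_t\dd\mu_t+\int\left[\ip{\nabla_\theta\zeta_t}{-\Ortheps^{\Theta}(p)}_{\Theta}+\ip{\nabla_p\zeta_t}{\gamma(a_t(\theta)-p)}_{\Theta}\right]\dd\mu_t .
\]
Since $\nabla_p\zeta_t=a_t(\theta)$, the last term is exactly $\gamma A_t-\gamma C_t$. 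The $\theta$-gradient term is $-\int\ip{D_\theta a_t(\theta)[\Ortheps^{\Theta}(p)]}{p}_{\Theta}\dd\mu_t$, and the $\partial_t\zeta_t$ term is $\int\ip{\partial_t a_t(\theta)}{p}_{\Theta}\dd\mu_t$. Together these two give precisely $S_t$ in \eqref{eq:S-t-gated}, so \eqref{eq:C-prime-gated} follows.

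To make the first step rigorous, I will first record that $m_t$ is absolutely continuous. This follows from \eqref{eq:m-dot-gated}, which gives
\[
\dot m_t=-\int DF(\theta')[\Ortheps^{\Theta}(p')]\dd\mu_t(\theta',p').
\]
Hence, because $\Rfunc$ is $C^2$ on the relevant ball, $\partial_t a_t(\theta)=DF(\theta)^{*}D^2\Rfunc(m_t)[\dot m_t]$ exists for a.e.\ $t$ and is bounded. The test function $\zeta_t$ has linear growth in $p$ and is not compactly supported. I will justify its use with the cutoff $\chi_R$ from the proof of Theorem \ref{Damped Hamiltonian structure}, combined with the standard product-rule argument for time-dependent test functions (mollify in time, then pass to the limit). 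The hypothesis of Lemma \ref{lem:kinetic-gated} that $\norm{p}_{\Theta}\le B_P$ on the support makes the cutoff removal trivial here. I expect this justification step, namely time-dependence together with non-compact support, to be the main technical obstacle; the algebra itself is routine.

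For the bound \eqref{eq:S-bound-gated}, I treat the two pieces of $S_t$ separately, writing $G(p)=\Ortheps^{\Theta}(p)$.

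For the time-derivative piece, I use $\norm{\dot m_t}_{\cH}\le M_D L_G\int\norm{p'}_{\Theta}\dd\mu_t$, which relies on \eqref{eq:kinetic-LG-gated}. This gives $\norm{\partial_t a_t(\theta)}_{\Theta}\le M_D^2M_{R,2}L_G\int\norm{p'}\dd\mu_t$. Integrating against $\norm{p}$ then bounds the term by $M_D^2M_{R,2}L_G\left(\int\norm{p}\dd\mu_t\right)^2$. By Cauchy--Schwarz this is at most $M_D^2M_{R,2}L_G\int\norm{p}^2\dd\mu_t$.

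For the transport piece, note that $D_\theta a_t(\theta)[v]=\left(D^2F(\theta)[v,\cdot]\right)^{*}\nabla\Rfunc(m_t)$, so $\norm{D_\theta a_t(\theta)[v]}_{\Theta}\le M_RM_{D,2}\norm{v}_{\Theta}$. Taking $v=G(p)$ and using $\norm{G(p)}\le L_G\norm{p}$, this piece is bounded by $M_RM_{D,2}L_G\int\norm{p}^2\dd\mu_t$.

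Finally, \eqref{eq:kinetic-D-gated} gives $\int\norm{p}^2\dd\mu_t\le D_t/\kappa_D$. Summing the two pieces yields $|S_t|\le\frac{L_G}{\kappa_D}\left(M_D^2M_{R,2}+M_RM_{D,2}\right)D_t=\sigma D_t$.
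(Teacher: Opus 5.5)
Your proof is correct, and your bound on $S_t$ is the paper's bound. Both arguments split $S_t$ into the $\partial_t a_t$ piece, controlled by $M_D^2M_{R,2}\bigl(\int\norm{\Ortheps^{\Theta}(p)}_{\Theta}\dd\mu_t\bigr)\bigl(\int\norm{p}_{\Theta}\dd\mu_t\bigr)$, and the transport piece, controlled by $M_RM_{D,2}\int\norm{\Ortheps^{\Theta}(p)}_{\Theta}\norm{p}_{\Theta}\dd\mu_t$, and then close with $L_G$ and $\kappa_D$. The paper bounds each first-moment factor by a multiple of $D_t^{1/2}$, while you apply Jensen first; this amounts to the same thing.

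The real difference is how you obtain the identity \eqref{eq:C-prime-gated}. The paper argues in Lagrangian form: it differentiates $\ip{a_t(\theta_t)}{p_t}_{\Theta}$ along the nonlinear characteristics \eqref{eq:nonlinear-characteristics-gated} by the chain rule and then takes expectations. This uses the representation of $\mu_t$ as the law of the characteristic flow from Theorem \ref{thm:mean-field-gated}, plus a tacit domination argument to differentiate under the expectation. You argue in Eulerian form, testing the weak formulation \eqref{eq:weak-mean-field-gated} against the time-dependent function $\ip{a_t(\theta)}{p}_{\Theta}$.

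Your route works directly with weak solutions. It also makes the asserted absolute continuity of $C_t$ explicit, through $m_t$ and the chain rule for $\nabla\Rfunc(m_t)$; the paper states this without separate argument. The price is a product rule for time-dependent, non-compactly supported test functions.

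You can simplify that step by writing $C_t=\ip{\nabla\Rfunc(m_t)}{\int DF(\theta)[p]\dd\mu_t}_{\cH}$. Then all time dependence sits in the Lipschitz curve $t\mapsto\nabla\Rfunc(m_t)$, and only the time-independent test functions $\ip{h}{DF(\theta)[p]}_{\cH}$, $h\in\cH$, are needed.

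Also, the bound $\norm{p}_{\Theta}\le B_P$ only handles growth in $p$. Removing the cutoff in $\theta$ instead relies on the boundedness of $a_t$ and $D_\theta a_t$ on the support, which Assumption \ref{ass:second-order-gated} provides.
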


\begin{proof}
Use the characteristic representation.  Along a characteristic,
\[
        \dot\theta_t=-G(p_t),
        \qquad
        \dot p_t=\gamma(a_t(\theta_t)-p_t),
        \qquad
        G=\Ortheps^{\Theta}.
\]
The chain rule gives
\begin{align*}
        \frac{\dd}{\dd t}\ip{a_t(\theta_t)}{p_t}_{\Theta}
        &=\ip{\partial_t a_t(\theta_t)+D_{\theta}a_t(\theta_t)[\dot\theta_t]}{p_t}_{\Theta}
          +\ip{a_t(\theta_t)}{\dot p_t}_{\Theta}\\
        &=\ip{\partial_t a_t(\theta_t)-D_{\theta}a_t(\theta_t)[G(p_t)]}{p_t}_{\Theta}
          +\gamma\norm{a_t(\theta_t)}_{\Theta}^{2}
          -\gamma\ip{a_t(\theta_t)}{p_t}_{\Theta}.
\end{align*}
Integration with respect to the law of the characteristic gives \eqref{eq:C-prime-gated} and \eqref{eq:S-t-gated}.

It remains to bound $S_t$.  Write $u_t=\nabla\Rfunc(m_t)$, so $a_t(\theta)=DF(\theta)^{*}u_t$.  For $v\in\Theta$,
\[
        D_{\theta}a_t(\theta)[v]=D^2F(\theta)[v,\cdot]^{*}u_t,
\]
and hence
\begin{equation}\label{eq:Dtheta-a-bound-gated}
        \norm{D_{\theta}a_t(\theta)[G(p)]}_{\Theta}
        \le M_{D,2}M_R\norm{G(p)}_{\Theta}.
\end{equation}
Also,
\[
        \partial_t a_t(\theta)=DF(\theta)^{*}D^2\Rfunc(m_t)[\dot m_t].
\]
By \eqref{eq:m-dot-gated},
\[
        \dot m_t=-\int_{\Z}DF(\theta)[G(p)]\dd\mu_t(\theta,p),
\]
so
\begin{equation}\label{eq:m-dot-bound-gated}
        \norm{\dot m_t}_{\cH}\le M_D\int\norm{G(p)}_{\Theta}\dd\mu_t.
\end{equation}
Consequently,
\begin{equation}\label{eq:partial-a-bound-gated}
        \norm{\partial_t a_t(\theta)}_{\Theta}
        \le M_D M_{R,2}\norm{\dot m_t}_{\cH}
        \le M_D^2M_{R,2}\int\norm{G(p)}_{\Theta}\dd\mu_t.
\end{equation}
Using \eqref{eq:kinetic-D-gated}, \eqref{eq:kinetic-LG-gated}, and Cauchy's inequality,
\begin{align*}
        \int\norm{G(p)}_{\Theta}\dd\mu_t
        &\le L_G\int\norm{p}_{\Theta}\dd\mu_t
        \le \frac{L_G}{\sqrt{\kappa_D}}D_t^{1/2},\\
        \int\norm{p}_{\Theta}\dd\mu_t
        &\le \frac1{\sqrt{\kappa_D}}D_t^{1/2},\\
        \int\norm{G(p)}_{\Theta}\norm{p}_{\Theta}\dd\mu_t
        &\le L_G\int\norm{p}_{\Theta}^{2}\dd\mu_t
        \le \frac{L_G}{\kappa_D}D_t.
\end{align*}
Combining \eqref{eq:Dtheta-a-bound-gated} and \eqref{eq:partial-a-bound-gated} gives
\begin{align*}
        |S_t|
        &\le M_D^2M_{R,2}\left(\int\norm{G(p)}_{\Theta}\dd\mu_t\right)
        \left(\int\norm{p}_{\Theta}\dd\mu_t\right)
        +M_RM_{D,2}\int\norm{G(p)}_{\Theta}\norm{p}_{\Theta}\dd\mu_t\\
        &\le \frac{L_G}{\kappa_D}\left(M_D^2M_{R,2}+M_RM_{D,2}\right)D_t.
\end{align*}
This is \eqref{eq:S-bound-gated}.
\end{proof}

\subsubsection{Exponential convergence theorem}\label{subsec:exp-convergence-gated}

\begin{asu}[Functional PL and upper-gradient conditions]\label{ass:PL-gated}
Along the trajectory there exist constants $\lambda>0$ and $\Lambda<\infty$ such that
\begin{equation}\label{eq:PL-gated}
        A_t\ge2\lambda U_t,
\end{equation}
and
\begin{equation}\label{eq:upper-gradient-gated}
        A_t\le2\Lambda U_t.
\end{equation}
\end{asu}

\begin{theorem}[Exponential convergence under general Hilbert domain]\label{thm:convergence-gated}
Assume that the flow \eqref{eq:pde-mean-field-gated} is defined for all $t\ge0$ and that $U_t\ge0$.  Assume that the kinetic estimates \eqref{eq:kinetic-K-gated}-\eqref{eq:kinetic-LG-gated}, the curvature identity
\begin{equation}\label{eq:curvature-assumed-gated}
        C_t'=\gamma A_t-\gamma C_t+S_t,
        \qquad |S_t|\le \sigma D_t,
\end{equation}
and the PL/upper-gradient Assumption \ref{ass:PL-gated} hold for all $t\ge0$.  Lemma \ref{lem:kinetic-gated} and Lemma \ref{lem:curvature-gated} give sufficient conditions for the kinetic estimates and the curvature identity, respectively.  Define
\begin{equation}\label{eq:MC-gated}
        M_C\coloneqq \sqrt{\frac{\Lambda}{\gamma\kappa_K}}.
\end{equation}
Choose $r\in(0,2)$ and $\alpha>0$ such that
\begin{equation}\label{eq:alpha-conditions-gated}
        \alpha M_C<1,
        \qquad
        d_{\alpha,r}\coloneqq\gamma-\alpha\sigma-\frac{\alpha\gamma}{2r\kappa_D}>0.
\end{equation}
Let
\begin{equation}\label{eq:c-alpha-r-gated}
        c_{\alpha,r}
        \coloneqq\frac1{1+\alpha M_C}
        \min\left\{\frac{d_{\alpha,r}}{\chi},\,2\lambda\alpha\left(1-\frac r2\right)\right\}.
\end{equation}
Then
\begin{equation}\label{eq:convergence-estimate-gated}
        \tf(\rho_t)-J_{\star}
        \le \frac{\exp(-c_{\alpha,r}t)}{\gamma(1-\alpha M_C)}\left[H_0-\alpha C_0\right].
\end{equation}
\end{theorem}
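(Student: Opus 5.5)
The plan is a Lyapunov argument with the modified functional $L_t\coloneqq H_t-\alpha C_t$ described before Assumption \ref{ass:Trajectory-level convergence hypotheses}. We (i) show $L_t$ is equivalent to $H_t$, (ii) derive a differential inequality $L_t'\le -c\,H_t$, and (iii) close with Gronwall and the bound $\gamma U_t\le H_t$.

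\emph{Step 1 (equivalence).} By Cauchy--Schwarz, then \eqref{eq:upper-gradient-gated} and \eqref{eq:kinetic-K-gated},
\[
|C_t|\le A_t^{1/2}\Big(\int\|p\|_\Theta^2\,d\mu_t\Big)^{1/2}\le \sqrt{2\Lambda U_t}\,\sqrt{2K_t/\kappa_K}=2\sqrt{\tfrac{\Lambda}{\gamma\kappa_K}}\sqrt{(\gamma U_t)K_t}.
\]
By AM--GM, $2\sqrt{(\gamma U_t)K_t}\le \gamma U_t+K_t=H_t$, so $|C_t|\le M_C H_t$. Since $K_t\ge0$ and $U_t\ge0$, we get $H_t\ge0$. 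Hence
\[
(1-\alpha M_C)H_t\le L_t\le(1+\alpha M_C)H_t,
\]
and both bounds are positive multiples of $H_t$ because $\alpha M_C<1$.

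\emph{Step 2 (derivative of $L_t$).} Theorem \ref{thm:dissipation-gated} gives $H_t'=-\gamma D_t$, since $J_\star$ is constant. The curvature identity \eqref{eq:curvature-assumed-gated} then gives
\[
L_t'=-\gamma D_t-\alpha\gamma A_t+\alpha\gamma C_t-\alpha S_t\le -(\gamma-\alpha\sigma)D_t-\alpha\gamma A_t+\alpha\gamma C_t .
\]
For the cross term we use Young's inequality with parameter $r$, followed by \eqref{eq:kinetic-D-gated}:
\[
C_t\le \tfrac r2A_t+\tfrac1{2r}\int\|p\|_\Theta^2\,d\mu_t\le \tfrac r2 A_t+\tfrac{1}{2r\kappa_D}D_t .
\]
Substituting yields $L_t'\le -d_{\alpha,r}D_t-\alpha\gamma(1-\tfrac r2)A_t$. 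Next we apply $D_t\ge K_t/\chi$ from \eqref{eq:kinetic-chi-gated} and the PL bound $A_t\ge2\lambda U_t$. Both coefficients are nonnegative because $d_{\alpha,r}>0$ and $r<2$, so
\[
L_t'\le -\tfrac{d_{\alpha,r}}{\chi}K_t-2\lambda\alpha(1-\tfrac r2)\gamma U_t\le -\min\Big\{\tfrac{d_{\alpha,r}}{\chi},2\lambda\alpha(1-\tfrac r2)\Big\}H_t .
\]
Using $H_t\ge L_t/(1+\alpha M_C)$ from Step 1 gives $L_t'\le -c_{\alpha,r}L_t$.

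\emph{Step 3 (conclusion).} Gronwall gives $L_t\le e^{-c_{\alpha,r}t}L_0$. Then
\[
\gamma U_t\le H_t\le \frac{L_t}{1-\alpha M_C}\le \frac{e^{-c_{\alpha,r}t}}{1-\alpha M_C}\,(H_0-\alpha C_0),
\]
and dividing by $\gamma$ gives \eqref{eq:convergence-estimate-gated}.

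The main obstacle is regularity rather than algebra. The argument needs $L_t$ to be absolutely continuous so that Gronwall applies to the a.e.\ inequality. This follows from the absolute continuity of $H_t$ in Theorem \ref{thm:dissipation-gated}, together with the absolute continuity of $C_t$, which is supplied by the assumed curvature identity or by Lemma \ref{lem:curvature-gated}. A secondary point is to apply the Young and PL bounds in the right order so that only nonnegative coefficients get multiplied; this is exactly where $r\in(0,2)$ and $d_{\alpha,r}>0$ are used.
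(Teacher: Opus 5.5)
Your proposal is correct and follows the paper's proof essentially step for step. It uses the same bound $|C_t|\le M_C H_t$ (Cauchy--Schwarz, upper-gradient, kinetic lower bound, AM--GM), the same Young split with parameter $r$ in $L_t'$, the same use of $D_t\ge K_t/\chi$ and PL to reach $L_t'\le -c_{\alpha,r}L_t$, and the same Gronwall closing via $\gamma U_t\le H_t\le L_t/(1-\alpha M_C)$; your remark on absolute continuity of $L_t$ is a point the paper leaves implicit.
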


\begin{proof}
First, bound the alignment term.  By Cauchy's inequality, \eqref{eq:kinetic-K-gated}, and \eqref{eq:upper-gradient-gated},
\begin{align*}
        |C_t|
        &\le \left(\int\norm{a_t(\theta)}_{\Theta}^{2}\dd\mu_t\right)^{1/2}
              \left(\int\norm{p}_{\Theta}^{2}\dd\mu_t\right)^{1/2}\\
        &\le (2\Lambda U_t)^{1/2}\left(\frac{2K_t}{\kappa_K}\right)^{1/2}
        =2\sqrt{\frac{\Lambda}{\kappa_K}}\sqrt{U_tK_t}.
\end{align*}
Since $2\sqrt{U_tK_t}\le (K_t+\gamma U_t)/\sqrt{\gamma}=H_t/\sqrt{\gamma}$,
\begin{equation}\label{eq:C-H-bound-gated}
        |C_t|\le M_C H_t.
\end{equation}
Define
\[
        L_t\coloneqq H_t-\alpha C_t.
\]
Equation \eqref{eq:C-H-bound-gated} implies
\begin{equation}\label{eq:L-H-equivalence-gated}
        (1-\alpha M_C)H_t\le L_t\le(1+\alpha M_C)H_t.
\end{equation}
By Theorem \ref{thm:dissipation-gated}, $H_t'=K_t'+\gamma U_t'=-\gamma D_t$.  By the assumed curvature identity \eqref{eq:curvature-assumed-gated},
\begin{align*}
        L_t'
        &=H_t'-\alpha C_t'
        =-\gamma D_t-\alpha(\gamma A_t-\gamma C_t+S_t)\\
        &=-\gamma D_t-\alpha\gamma A_t+\alpha\gamma C_t-\alpha S_t.
\end{align*}
Use $|S_t|\le\sigma D_t$ and Young's inequality:
\begin{align*}
        C_t
        &\le\int\norm{a_t(\theta)}_{\Theta}\norm{p}_{\Theta}\dd\mu_t
        \le \frac r2A_t+\frac1{2r}\int\norm{p}_{\Theta}^{2}\dd\mu_t
        \le \frac r2A_t+\frac1{2r\kappa_D}D_t.
\end{align*}
Therefore
\begin{equation}\label{eq:L-prime-bound-gated}
        L_t'
        \le -\left(\gamma-\alpha\sigma-\frac{\alpha\gamma}{2r\kappa_D}\right)D_t
             -\alpha\gamma\left(1-\frac r2\right)A_t
        =-d_{\alpha,r}D_t-a_{\alpha,r}A_t,
\end{equation}
where $a_{\alpha,r}=\alpha\gamma(1-r/2)>0$.  By \eqref{eq:kinetic-chi-gated}, $D_t\ge K_t/\chi$.  By \eqref{eq:PL-gated}, $A_t\ge2\lambda U_t$.  Hence
\begin{align*}
        d_{\alpha,r}D_t+a_{\alpha,r}A_t
        &\ge \frac{d_{\alpha,r}}{\chi}K_t+2\lambda a_{\alpha,r}U_t\\
        &\ge q_{\alpha,r}(K_t+\gamma U_t)=q_{\alpha,r}H_t,
\end{align*}
where
\[
        q_{\alpha,r}
        =\min\left\{\frac{d_{\alpha,r}}{\chi},\frac{2\lambda a_{\alpha,r}}{\gamma}\right\}
        =\min\left\{\frac{d_{\alpha,r}}{\chi},2\lambda\alpha\left(1-\frac r2\right)\right\}.
\]
Combining this with \eqref{eq:L-prime-bound-gated} gives $L_t'\le -q_{\alpha,r}H_t$.  Since $L_t\le(1+\alpha M_C)H_t$, one has $H_t\ge L_t/(1+\alpha M_C)$, and therefore
\[
        L_t'\le -\frac{q_{\alpha,r}}{1+\alpha M_C}L_t=-c_{\alpha,r}L_t.
\]
Gronwall's inequality gives $L_t\le \exp(-c_{\alpha,r}t)L_0$.  Finally, $\gamma U_t\le H_t\le L_t/(1-\alpha M_C)$, and \eqref{eq:convergence-estimate-gated} follows.
\end{proof}

\begin{remark}[Meaning of the PL condition]\label{rem:PL-meaning-gated}
The PL condition \eqref{eq:PL-gated} is a property of the probability functional $\tf$ along the probability-flow trajectory.  It is not a consequence of cross-entropy smoothness alone and is not automatic for arbitrary transformer mixture-of-experts parameterizations.  In the transformer specialization below, smoothness and bounded-gradient properties of the loss and feature map verify the analytic assumptions needed for well-posedness, ODE limits, Hamiltonian dissipation, and curvature control.  Exponential convergence additionally requires \eqref{eq:PL-gated} and \eqref{eq:upper-gradient-gated}, as in the scalar theory.
\end{remark}

\begin{corollary}[Criticality of compact omega-limit points]\label{cor:criticality-gated}
Assume the trajectory $\{\mu_t:t\ge0\}$ is relatively compact in $\cP_1(\Z)$ and $\Ham_{\eps,\gamma}$ is bounded below.  Then
\[
        \int_0^{\infty}D_t\dd t<\infty.
\]
Every invariant omega-limit point is supported on $\{p=0\}$ and satisfies $a_{\mu}(\theta)=0$ for $\mu$-almost every $(\theta,p)$.
\end{corollary}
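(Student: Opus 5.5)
The plan is to integrate the dissipation identity of Theorem \ref{thm:dissipation-gated}, then pass the resulting integrability to limit points via LaSalle-type invariance. Integrating \eqref{eq:dissipation-gated} on $[0,T]$ gives
\[
\gamma\int_0^T D_t\dd t=\Ham_{\eps,\gamma}(\mu_0)-\Ham_{\eps,\gamma}(\mu_T)\le \Ham_{\eps,\gamma}(\mu_0)-\inf\Ham_{\eps,\gamma}.
\]
Since $D_t\ge0$, letting $T\to\infty$ yields $\int_0^\infty D_t\dd t<\infty$. This step only needs absolute continuity of $t\mapsto\Ham_{\eps,\gamma}(\mu_t)$, which the theorem already provides.

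For the limit points, I would read ``invariant omega-limit point'' as a $\mu$ whose mean-field solution $\nu_s$ with $\nu_0=\mu$ (Theorem \ref{thm:mean-field-gated}) remains in the omega-limit set $\omega$. By monotonicity and the lower bound, $\Ham_{\eps,\gamma}(\mu_t)$ decreases to a limit $\Ham_\infty$. I would then show that $\Ham_{\eps,\gamma}$ is continuous on the relatively compact set $\overline{\{\mu_t\}}\subset\cP_1(\Z)$. For the kinetic part, $\Psieps^{\Theta}$ is Lipschitz with linear growth, so it is $W_1$-continuous. For the potential part, $m_\mu$ is $W_1$-continuous by the estimate in Lemma \ref{lem:force-estimates-gated}, and $\Rfunc$ is continuous. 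It follows that $\Ham_{\eps,\gamma}\equiv\Ham_\infty$ on $\omega$.

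Next, take $\mu\in\omega$ with $\mu_{t_k}\to\mu$. Stability of the McKean--Vlasov flow in $W_1$ (a Gronwall argument on characteristics, as in Theorem \ref{thm:mean-field-gated}) gives $\mu_{t_k+s}\to\nu_s$ for each fixed $s$. Hence $\Ham_{\eps,\gamma}(\nu_s)=\Ham_\infty$ for all $s\ge0$, so $\int_\Z d_\eps^\Theta\dd\nu_s=0$ for a.e.\ $s$. By continuity of $s\mapsto\nu_s$ this holds for all $s$, including $s=0$. Since $d_\eps^\Theta(p)=0$ iff $p=0$ (Theorem \ref{thm:dissipation-gated}), the limit $\nu_s$ is supported on $\{p=0\}$ for every $s$. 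In particular $\mu$ is.

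To get criticality, I would test the weak form \eqref{eq:weak-mean-field-gated} along $\nu_s$ with $\zeta$ depending only on $p$, using the admissible function $\zeta(p)=\tfrac12\|p\|_\Theta^2$ via the cutoff argument. This gives
\[
0=\frac{\dd}{\dd s}\int\tfrac12\|p\|^2\dd\nu_s=\gamma\int\ip{p}{a_{\nu_s}(\theta)-p}\dd\nu_s,
\]
which is trivially zero and carries no information. So instead I would use a linear test in $p$: take $\zeta(\theta,p)=\ip{\xi(\theta)}{p}_\Theta$ with $\xi\in C_c^\infty$. Since $p=0$ $\nu_s$-a.e.\ for all $s$, the left-hand side vanishes. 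The right-hand side reduces to $\gamma\int\ip{\xi(\theta)}{a_{\nu_s}(\theta)}\dd\nu_s$, because the transport term carries a factor $G(0)=0$. Taking $s=0$ and letting $\xi$ range over $C_c^\infty$ gives $a_\mu(\theta)=0$ for $\mu$-a.e.\ $\theta$.

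The main obstacle is the continuity step: $W_1$-continuity of $\Ham_{\eps,\gamma}$ and of the flow map, together with justifying the unbounded linear test function. Under the global Hilbert assumptions these follow from linear growth and Lipschitz bounds combined with the cutoff argument already used in the paper. Under localized hypotheses I would instead restrict to the bounded region guaranteed by compactness.
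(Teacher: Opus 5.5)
Your proposal is correct and follows the same route as the paper. You integrate the dissipation identity to get $\int_0^\infty D_t\dd t<\infty$, use $d_\eps^\Theta(p)=0\iff p=0$ to place the limits on $\{p=0\}$, and then read off $a_\mu=0$ from the requirement that the $p$-velocity $\gamma a_\mu(\theta)$ not push mass off $\{p=0\}$.

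Your version is more complete than the paper's sketch. The paper skips the LaSalle step you supply, namely $W_1$-continuity of $\Ham_{\eps,\gamma}$ and of the flow map, which gives $D\equiv 0$ along the solution issued from $\mu$. It also leaves the last step informal, whereas your linear test $\ip{\xi(\theta)}{p}_\Theta$ makes it rigorous.

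The only inaccuracy is your closing remark on localized hypotheses. Relative compactness in $\cP_1(\Z)$ gives uniform integrability of first moments, not a bounded region.
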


\begin{proof}
The dissipation identity gives
\[
        \gamma\int_0^T D_t\dd t
        =\Ham_{\eps,\gamma}(\mu_0)-\Ham_{\eps,\gamma}(\mu_T)
        \le \Ham_{\eps,\gamma}(\mu_0)-\inf\Ham_{\eps,\gamma}.
\]
Letting $T\to\infty$ proves integrability.  Since $D_t=0$ if and only if $p=0$ by Theorem \ref{thm:dissipation-gated}, any invariant limit must be supported on $p=0$.  On this support, the $p$-velocity is $\gamma a_{\mu}(\theta)$.  Invariance forces $a_{\mu}(\theta)=0$ on the support; otherwise the measure immediately leaves $\{p=0\}$.
\end{proof}

\subsection{Discrete-time convergence of the regularized Hamiltonian-Muon map}\label{sec:discrete-time-convergence-gated}

The preceding convergence result, Theorem \ref{thm:convergence-gated}, is a continuous-time statement for the damped Hamiltonian probability flow.  We now prove a genuine fixed-step discrete-time convergence result for the natural semi-implicit Euler discretization of that flow.  The result is stated on the extended product space $\Theta$ because this notation covers the gated transformer setting.  The scalar matrix-valued theory in the main paper is recovered by taking $\Theta=\X=\R^{m\times n}$, $\cH=\R$, and
\[
        a_{\mu}(W)=\Rfunc'(m_{\mu})\nabla F(W),
        \qquad
        \Ortheps^{\Theta}=\Ortheps .
\]

Throughout this section the regularized Muon velocity and its Fenchel dissipation density are denoted explicitly by
\begin{equation}\label{eq:discrete-Geps-def}
        G_{\eps}(p)\coloneqq\Ortheps^{\Theta}(p)=\nabla\Psieps^{\Theta}(p),
        \qquad
        d_{\eps}(p)\coloneqq\ip{p}{G_{\eps}(p)}_{\Theta}.
\end{equation}
Thus every appearance of $G_{\eps}$, $d_{\eps}$, and $\Psieps^{\Theta}$ depends on the fixed regularization parameter $\eps>0$.  No unregularized hard-Muon map is used in the discrete convergence proof.
For a phase-space law $\mu\in\cP(\Theta\times\Theta)$ let
\begin{equation}\label{eq:discrete-force-law-def}
        \rho=(\pi_{\theta})_{\#}\mu,
        \qquad
        m_{\mu}=\int_{\Theta\times\Theta} F(\theta)\dd\mu(\theta,p),
        \qquad
        a_{\mu}(\theta)=DF(\theta)^*\nabla\Rfunc(m_{\mu}).
\end{equation}

\subsubsection{The law-level discrete map and the correct Hamiltonian scaling}\label{subsec:law-level-discrete-map}

Let $\eta_h>0$ be the position step and let $\beta_h\in(0,1)$ be the momentum retention coefficient.  Set
\[
        \delta_h\coloneqq1-\beta_h.
\]
Given $\mu_k$, define the force
\[
        a_k(\theta)\coloneqq a_{\mu_k}(\theta),
\]
and update the momentum first:
\begin{equation}\label{eq:discrete-law-momenta}
        p_k^+(\theta,p)
        \coloneqq \beta_h p+\delta_h a_k(\theta).
\end{equation}
The position is then transported using the regularized Muon direction evaluated at the updated momentum:
\begin{equation}\label{eq:discrete-law-map}
        T_k^h(\theta,p)
        \coloneqq\left(\theta-\eta_h G_{\eps}(p_k^+(\theta,p)),\,p_k^+(\theta,p)\right),
        \qquad
        \mu_{k+1}\coloneqq (T_k^h)_{\#}\mu_k.
\end{equation}
Equivalently, if $(\Theta_k,P_k)\sim\mu_k$, then
\begin{equation}\label{eq:random-variable-discrete-map}
        P_{k+1}=\beta_h P_k+(1-\beta_h)a_{\mu_k}(\Theta_k),
        \qquad
        \Theta_{k+1}=\Theta_k-\eta_hG_{\eps}(P_{k+1}),
        \qquad
        \mu_{k+1}=\Law(\Theta_{k+1},P_{k+1}).
\end{equation}
If $\mu_k=N^{-1}\sum_i\delta_{(\theta_{i,k},P_{i,k})}$ is empirical, then \eqref{eq:random-variable-discrete-map} is exactly the finite-particle regularized Muon scheme \eqref{eq:discrete-gated-muon-reduced}.

The scaling needed to approximate the damped Hamiltonian probability flow with finite damping parameter $\gamma>0$ is
\begin{equation}\label{eq:discrete-valid-scaling-exact}
        \eta_h=h,
        \qquad
        1-\beta_h=\gamma h,
        \qquad
        0<h<\frac1\gamma.
\end{equation}
Under this scaling, \eqref{eq:random-variable-discrete-map} is the semi-implicit Euler discretization of the characteristic equations
\[
        \dot \theta=-G_{\eps}(p),
        \qquad
        \dot p=\gamma(a_{\mu_t}(\theta)-p).
\]
More generally, the same fixed-step convergence mechanism is stable under the second-order-consistent inertial scaling
\begin{equation}\label{eq:discrete-valid-scaling-general}
        \eta_h=h+O(h^2),
        \qquad
        1-\beta_h=\gamma h+O(h^2).
\end{equation}
The exact relation $1-\beta_h=\gamma\eta_h$ is used first because it exposes the Hamiltonian cancellation.  The perturbative case \eqref{eq:discrete-valid-scaling-general} is discussed in Remark \ref{rem:second-order-beta-discrete}.  Keeping $\beta_h$ fixed independently of $h$ is not a finite-damping Hamiltonian discretization: in that case
\[
        \frac{P_{k+1}-P_k}{h}=\frac{1-\beta_h}{h}\left(a_{\mu_k}(\Theta_k)-P_k\right),
\]
so the relaxation rate diverges as $h\downarrow0$.

For the iterates generated by \eqref{eq:discrete-law-map}, define
\begin{equation}\label{eq:discrete-KUDAC}
\begin{aligned}
        K_k&\coloneqq\int \Psieps^{\Theta}(p)\dd\mu_k(\theta,p),
        &D_k&\coloneqq\int d_{\eps}(p)\dd\mu_k(\theta,p),\\
        J_k&\coloneqq \tf(\rho_k),
        &U_k&\coloneqq J_k-J_{\star},\\
        H_k&\coloneqq K_k+\gamma U_k,
        &A_k&\coloneqq\int\norm{a_k(\theta)}_{\Theta}^2\dd\mu_k(\theta,p),\\
        C_k&\coloneqq\int\ip{a_k(\theta)}{p}_{\Theta}\dd\mu_k(\theta,p),
        &L_k&\coloneqq H_k-\alpha C_k.
\end{aligned}
\end{equation}
Here $J_{\star}=\inf_{\rho\in\cP_1(\Theta)}\tf(\rho)$.  Since $\mu_{k+1}$ is the pushforward of $\mu_k$ by $T_k^h$, the post-momentum dissipation satisfies
\begin{equation}\label{eq:post-momentum-D}
        D_{k+1}
        =\int d_{\eps}\bigl(p_k^+(\theta,p)\bigr)\dd\mu_k(\theta,p).
\end{equation}

\begin{asu}[Uniform assumptions for the discrete trajectory]\label{ass:discrete-trajectory-gated}
The law sequence $(\mu_k)_{k\ge0}$ generated by \eqref{eq:discrete-law-map} satisfies the following properties.
\begin{enumerate}
        \item The objective gap is nonnegative: $U_k\ge0$ for every $k$.
        \item The PL and upper-gradient conditions hold at every discrete iterate:
        \begin{equation}\label{eq:discrete-PL-upper}
                A_k\ge2\lambda U_k,
                \qquad
                A_k\le2\Lambda U_k,
        \end{equation}
        where $\lambda>0$ and $\Lambda<\infty$.
        \item The momenta $p$ under $\mu_k$ and the post-momenta $p_k^+(\theta,p)$ under $\mu_k$ remain in a common bounded momentum ball on which Lemma \ref{lem:kinetic-gated} holds with constants $\kappa_K,\kappa_D,L_G,\chi$.
        \item The one-step segment
        \[
                \theta_{k,s}(\theta,p)
                \coloneqq \theta-s\eta_hG_{\eps}(p_k^+(\theta,p)),
                \qquad 0\le s\le1,
        \]
        remains in a region on which Assumption \ref{ass:second-order-gated} holds with constants $M_D,M_{D,2},M_R,M_{R,2}$.
\end{enumerate}
Set
\begin{equation}\label{eq:Bcurv-sigma-discrete}
        B_{\mathrm{curv}}\coloneqq M_RM_{D,2}+M_D^2M_{R,2},
        \qquad
        \sigma\coloneqq\frac{L_GB_{\mathrm{curv}}}{\kappa_D}.
\end{equation}
\end{asu}

\subsubsection{One-step estimates}\label{subsec:discrete-one-step-estimates}

\begin{lemma}[Second-order upper estimate for $\tf$ along a transport step]\label{lem:discrete-J-descent}
Let $\mu\in\cP(\Theta\times\Theta)$, let $\rho=(\pi_{\theta})_{\#}\mu$, and let $a_{\mu}(\theta)=DF(\theta)^*\nabla\Rfunc(m_{\mu})$.  Let $v\in L^2(\mu;\Theta)$ and define
\[
        \theta_s=\theta+sv(\theta,p),
        \qquad
        \rho_s=(\theta_s)_{\#}\mu,
        \qquad
        m_s=\int F(\theta_s)\dd\mu(\theta,p).
\]
Assume the segment $\{\theta_s:0\le s\le1\}$ lies in a region where the constants in Assumption \ref{ass:second-order-gated} are valid.  Then
\begin{equation}\label{eq:J-discrete-descent-lemma}
        \tf(\rho_1)-\tf(\rho_0)
        \le
        \int\ip{a_{\mu}(\theta)}{v(\theta,p)}_{\Theta}\dd\mu(\theta,p)
        +\frac{B_{\mathrm{curv}}}{2}\int\norm{v(\theta,p)}_{\Theta}^{2}\dd\mu(\theta,p).
\end{equation}
\end{lemma}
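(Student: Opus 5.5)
We reduce the statement to a one-dimensional Taylor estimate along the interpolation $s\mapsto\theta_s$ and bound the second derivative using Assumption \ref{ass:second-order-gated}. Set $j(s)\coloneqq\tf(\rho_s)=\Rfunc(m_s)$ with $m_s=\int F(\theta+sv)\dd\mu$. Since the segment lies in the region where $DF$ and $D^2F$ are bounded, we can differentiate under the Bochner integral by dominated convergence ($v\in L^2(\mu)\subset L^1(\mu)$):
\[
        m_s'=\int DF(\theta_s)[v]\dd\mu,
        \qquad
        m_s''=\int D^2F(\theta_s)[v,v]\dd\mu .
\]
The chain rule then gives $j'(s)=\ip{\nabla\Rfunc(m_s)}{m_s'}_{\cH}$ and
\[
        j''(s)=D^2\Rfunc(m_s)[m_s',m_s']+\ip{\nabla\Rfunc(m_s)}{m_s''}_{\cH}.
\]
At $s=0$, the definition of the adjoint yields $j'(0)=\int\ip{DF(\theta)^*\nabla\Rfunc(m_\mu)}{v}_{\Theta}\dd\mu=\int\ip{a_\mu}{v}_{\Theta}\dd\mu$, which is the linear term in \eqref{eq:J-discrete-descent-lemma}.

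Next we bound $j''$ uniformly in $s\in[0,1]$. For the first term we use $\norm{m_s'}_{\cH}\le M_D\int\norm{v}_{\Theta}\dd\mu\le M_D(\int\norm{v}^2_{\Theta}\dd\mu)^{1/2}$ by Jensen, so $|D^2\Rfunc(m_s)[m_s',m_s']|\le M_{R,2}M_D^2\int\norm{v}^2_{\Theta}\dd\mu$. For the second term, $|\ip{\nabla\Rfunc(m_s)}{m_s''}|\le M_RM_{D,2}\int\norm{v}_{\Theta}^2\dd\mu$. Together these give $j''(s)\le B_{\mathrm{curv}}\int\norm{v}^2_{\Theta}\dd\mu$. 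Taylor's formula with integral remainder, $j(1)-j(0)=j'(0)+\int_0^1(1-s)j''(s)\dd s$, then yields \eqref{eq:J-discrete-descent-lemma} with the factor $\tfrac12$.

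The main obstacle is that Assumption \ref{ass:second-order-gated} bounds $\nabla\Rfunc$ and $D^2\Rfunc$ only at the trajectory points $m_t$, whereas here we need these bounds at the intermediate moments $m_s$. We would read item 4 of Assumption \ref{ass:discrete-trajectory-gated} as asserting that the constants hold along the entire segment, both for $\theta_s$ and for the associated moments $m_s$, which lie in the convex hull generated by those $\theta_s$. If that reading is not available, we would instead use the localized constants of Assumption \ref{ass:localized-hilbert} on a ball containing every $m_s$; such a ball exists because $\norm{m_s}_{\cH}\le\sup\norm{F}$ over the bounded segment region. A second, more routine point is justifying differentiation under the integral twice, which follows from the uniform bounds $M_D,M_{D,2}$ and $v\in L^2(\mu)$.
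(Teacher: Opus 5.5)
Your argument is correct and is essentially the paper's proof in differentiated form. The paper writes $j'(s)=\int\ip{DF(\theta_s)^*\nabla\Rfunc(m_s)}{v}_{\Theta}\dd\mu$ and bounds $j'(s)-j'(0)\le sB_{\mathrm{curv}}\int\norm{v}_{\Theta}^2\dd\mu$ using the same two-term split as yours: the variation of $DF$ with $\nabla\Rfunc(m_0)$ held fixed, plus the variation of $\nabla\Rfunc$ through the moment. It then integrates in $s$, which needs only a first-order mean-value estimate rather than your second differentiation under the integral and Taylor remainder. The caveat you flag about needing $\nabla\Rfunc$ and $D^2\Rfunc$ bounds at the intermediate moments $m_s$ applies equally to the paper's step $\norm{\nabla\Rfunc(m_s)-\nabla\Rfunc(m_0)}_{\cH}\le M_{R,2}\norm{m_s-m_0}_{\cH}$. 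The paper handles it as you propose, by reading the hypothesis on the segment region as covering those moments.
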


\begin{proof}
For $0\le s\le1$, define $u_s\coloneqq\nabla\Rfunc(m_s)$ and
\[
        a_s(\theta_s)\coloneqq DF(\theta_s)^*u_s.
\]
By the chain rule,
\begin{equation}\label{eq:J-path-derivative-discrete}
        \frac{\dd}{\dd s}\tf(\rho_s)
        =\ip{\nabla\Rfunc(m_s)}{\frac{\dd}{\dd s}m_s}_{\cH}
        =\int\ip{DF(\theta_s)^*u_s}{v}_{\Theta}\dd\mu
        =\int\ip{a_s(\theta_s)}{v}_{\Theta}\dd\mu.
\end{equation}
We next compare $a_s(\theta_s)$ with $a_0(\theta)$.  The exact decomposition is
\[
        a_s(\theta_s)-a_0(\theta)
        =\left(DF(\theta_s)^*-DF(\theta)^*\right)u_0
          +DF(\theta_s)^*(u_s-u_0).
\]
Since $\norm{u_0}_{\cH}\le M_R$ and $\norm{D^2F}\le M_{D,2}$ on the segment,
\begin{equation}\label{eq:first-force-variation-bound}
        \norm{\left(DF(\theta_s)^*-DF(\theta)^*\right)u_0}_{\Theta}
        \le M_RM_{D,2}s\norm{v}_{\Theta}.
\end{equation}
Also,
\begin{align*}
        \norm{m_s-m_0}_{\cH}
        &=\norm{\int\left(F(\theta_s)-F(\theta)\right)\dd\mu}_{\cH}
        \le\int M_Ds\norm{v}_{\Theta}\dd\mu
        \le M_Ds\left(\int\norm{v}_{\Theta}^2\dd\mu\right)^{1/2}.
\end{align*}
Hence
\begin{equation}\label{eq:u-variation-bound-discrete}
        \norm{u_s-u_0}_{\cH}
        \le M_{R,2}\norm{m_s-m_0}_{\cH}
        \le M_DM_{R,2}s\left(\int\norm{v}_{\Theta}^2\dd\mu\right)^{1/2}.
\end{equation}
Using $\norm{DF(\theta_s)}\le M_D$ and Cauchy's inequality,
\begin{align*}
        \int\ip{a_s(\theta_s)-a_0(\theta)}{v}_{\Theta}\dd\mu
        &\le M_RM_{D,2}s\int\norm{v}_{\Theta}^2\dd\mu
        +M_D^2M_{R,2}s\left(\int\norm{v}_{\Theta}^2\dd\mu\right)^{1/2}
             \int\norm{v}_{\Theta}\dd\mu\\
        &\le sB_{\mathrm{curv}}\int\norm{v}_{\Theta}^2\dd\mu.
\end{align*}
Integrating \eqref{eq:J-path-derivative-discrete} from $s=0$ to $s=1$ and using $\int_0^1s\dd s=1/2$ yields \eqref{eq:J-discrete-descent-lemma}.
\end{proof}

\begin{lemma}[Discrete Hamiltonian increment]\label{lem:discrete-H-increment}
Assume the exact scaling \eqref{eq:discrete-valid-scaling-exact}, so that $\delta_h=\gamma\eta_h$ and $\beta_h=1-\delta_h$.  Under Assumption \ref{ass:discrete-trajectory-gated},
\begin{equation}\label{eq:H-increment-discrete}
        H_{k+1}-H_k
        \le
        -\gamma\eta_hD_{k+1}
        +\frac{\gamma^2\eta_h^2}{2\beta_h}A_k
        +\frac{\eta_h^2L_G^2}{2\kappa_D}\left(\frac{\gamma^2}{\beta_h}+\gamma B_{\mathrm{curv}}\right)D_{k+1}.
\end{equation}
\end{lemma}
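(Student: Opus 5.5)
The plan is to split $H_{k+1}-H_k=(K_{k+1}-K_k)+\gamma(U_{k+1}-U_k)$ and bound the two pieces separately. Each piece produces a cross term $\int\ip{a_k(\theta)}{G_{\eps}(p_k^+)}_{\Theta}\dd\mu_k$, with opposite signs. As in the continuous-time proof of Theorem \ref{thm:dissipation-gated}, these cross terms should cancel up to an $O(\eta_h^2)$ mismatch, and that mismatch is the only genuinely discrete error. Throughout I write $p^+=p_k^+(\theta,p)$ and integrate against $\mu_k$, using \eqref{eq:post-momentum-D} to identify $\int d_{\eps}(p^+)\dd\mu_k=D_{k+1}$ and $\int\Psieps^{\Theta}(p^+)\dd\mu_k=K_{k+1}$.

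\emph{Kinetic part.} Convexity of $\Psieps^{\Theta}$ at the point $p^+$ gives $\Psieps^{\Theta}(p^+)-\Psieps^{\Theta}(p)\le\ip{G_{\eps}(p^+)}{p^+-p}_{\Theta}$. I will express $p^+-p$ in terms of $p^+$ rather than $p$. From $p=(p^+-\delta_h a_k)/\beta_h$ one gets $p^+-p=\frac{\delta_h}{\beta_h}(a_k(\theta)-p^+)$. Hence
\[
K_{k+1}-K_k\le\frac{\delta_h}{\beta_h}\int\ip{a_k}{G_{\eps}(p^+)}_{\Theta}\dd\mu_k-\frac{\delta_h}{\beta_h}D_{k+1}.
\]
Since $\beta_h\le1$ and $d_{\eps}\ge0$, the last term is at most $-\gamma\eta_h D_{k+1}$.

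\emph{Potential part.} I apply Lemma \ref{lem:discrete-J-descent} with $\mu=\mu_k$ and $v=-\eta_hG_{\eps}(p^+)$. The $\theta$-marginal of $\mu_{k+1}$ is exactly $\rho_1$, and Assumption \ref{ass:discrete-trajectory-gated}(4) guarantees that the segment hypothesis holds. This gives
\[
\gamma(U_{k+1}-U_k)\le-\gamma\eta_h\int\ip{a_k}{G_{\eps}(p^+)}_{\Theta}\dd\mu_k+\frac{\gamma B_{\mathrm{curv}}\eta_h^2}{2}\int\norm{G_{\eps}(p^+)}_{\Theta}^2\dd\mu_k.
\]

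\emph{Combining.} With $\delta_h=\gamma\eta_h$, the cross terms leave the residual $\left(\frac{\gamma\eta_h}{\beta_h}-\gamma\eta_h\right)\int\ip{a_k}{G_{\eps}(p^+)}_{\Theta}\dd\mu_k=\frac{\gamma^2\eta_h^2}{\beta_h}\int\ip{a_k}{G_{\eps}(p^+)}_{\Theta}\dd\mu_k$. Young's inequality bounds this by
\[
\frac{\gamma^2\eta_h^2}{2\beta_h}\left(A_k+\int\norm{G_{\eps}(p^+)}_{\Theta}^2\dd\mu_k\right).
\]
The remaining step is to convert $\int\norm{G_{\eps}(p^+)}^2\dd\mu_k$ into $D_{k+1}$. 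This uses the kinetic constants of Lemma \ref{lem:kinetic-gated} applied to the post-momenta, which lie in the common ball by Assumption \ref{ass:discrete-trajectory-gated}(3): $\norm{G_{\eps}(p^+)}^2\le L_G^2\norm{p^+}^2\le \frac{L_G^2}{\kappa_D}d_{\eps}(p^+)$. Collecting the $D_{k+1}$ coefficients $\frac{\eta_h^2L_G^2}{2\kappa_D}\left(\frac{\gamma^2}{\beta_h}+\gamma B_{\mathrm{curv}}\right)$ then yields \eqref{eq:H-increment-discrete}.

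The main obstacle is the first step: choosing the convexity inequality at $p^+$, not at $p$, and rewriting $p^+-p$ through $p^+$. This is what makes the dissipation appear as $D_{k+1}$, which the bound requires, and the cross term then cancels against the potential part only up to the factor $1/\beta_h-1=\gamma\eta_h/\beta_h$. If convexity were applied at $p$, both the dissipation and the force term would instead be evaluated at the pre-update momentum, and the bound would not take the stated form.
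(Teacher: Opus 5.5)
Your proposal is correct and is essentially the paper's proof. It uses the same ingredients: convexity of $\Psieps^{\Theta}$ with the gradient taken at $p^+$, Lemma~\ref{lem:discrete-J-descent} with $v=-\eta_h G_{\eps}(p^+)$, cancellation of the mixed term via $\delta_h=\gamma\eta_h$, Young's inequality, and the kinetic constants on the post-momenta. The only difference is the order of bookkeeping. You substitute $p^+-p=\frac{\delta_h}{\beta_h}(a_k-p^+)$ before cancelling, whereas the paper cancels first using $p^+-p=\delta_h(a_k-p)$ and then rewrites $p$ through $p^+$. Both routes discard the same nonnegative term $\frac{\delta_h^2}{\beta_h}D_{k+1}$ and arrive at the identical bound.
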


\begin{proof}
All integrals below are with respect to $\mu_k$.  For readability write
\[
        p^+=p_k^+(\theta,p),
        \qquad
        g^+=G_{\eps}(p^+),
        \qquad
        a=a_k(\theta),
        \qquad
        \eta=\eta_h,
        \qquad
        \delta=\delta_h.
\]
Since $\Psieps^{\Theta}$ is convex and differentiable,
\begin{equation}\label{eq:kinetic-convex-step-discrete}
        \Psieps^{\Theta}(p^+)-\Psieps^{\Theta}(p)
        \le \ip{G_{\eps}(p^+)}{p^+-p}_{\Theta}
        =\delta\ip{g^+}{a-p}_{\Theta}.
\end{equation}
Therefore
\begin{equation}\label{eq:K-increment-first-discrete}
        K_{k+1}-K_k
        \le \delta\int\ip{g^+}{a-p}_{\Theta}\dd\mu_k.
\end{equation}
The position displacement is $v=-\eta g^+$.  Applying Lemma \ref{lem:discrete-J-descent} gives
\begin{equation}\label{eq:U-increment-first-discrete}
        U_{k+1}-U_k
        \le -\eta\int\ip{a}{g^+}_{\Theta}\dd\mu_k
        +\frac{B_{\mathrm{curv}}\eta^2}{2}\int\norm{g^+}_{\Theta}^{2}\dd\mu_k.
\end{equation}
Multiplying \eqref{eq:U-increment-first-discrete} by $\gamma$ and adding \eqref{eq:K-increment-first-discrete}, and then using $\delta=\gamma\eta$, gives the cancellation of the mixed force-transport term:
\begin{equation}\label{eq:H-cancellation-discrete}
        H_{k+1}-H_k
        \le -\delta\int\ip{p}{g^+}_{\Theta}\dd\mu_k
        +\frac{\gamma B_{\mathrm{curv}}\eta^2}{2}\int\norm{g^+}_{\Theta}^{2}\dd\mu_k.
\end{equation}
Now $p^+=p+\delta(a-p)$, hence
\[
        p=p^+-\delta(a-p),
        \qquad
        a-p^+=\beta_h(a-p).
\]
Consequently,
\begin{align}\label{eq:p-to-pplus-discrete}
        -\delta\int\ip{p}{g^+}_{\Theta}\dd\mu_k
        &=-\delta\int\ip{p^+}{g^+}_{\Theta}\dd\mu_k
          +\delta^2\int\ip{a-p}{g^+}_{\Theta}\dd\mu_k\notag\\
        &=-\delta D_{k+1}
          +\frac{\delta^2}{\beta_h}\int\ip{a-p^+}{g^+}_{\Theta}\dd\mu_k.
\end{align}
By Young's inequality and the nonnegativity of $\ip{p^+}{g^+}_{\Theta}$,
\begin{align}\label{eq:young-a-gplus-discrete}
        \int\ip{a-p^+}{g^+}_{\Theta}\dd\mu_k
        &\le \int\ip{a}{g^+}_{\Theta}\dd\mu_k
        \le \frac12 A_k+\frac12\int\norm{g^+}_{\Theta}^2\dd\mu_k.
\end{align}
By \eqref{eq:kinetic-D-gated} and \eqref{eq:kinetic-LG-gated},
\begin{equation}\label{eq:Gplus-by-Dplus-discrete}
        \int\norm{g^+}_{\Theta}^2\dd\mu_k
        \le L_G^2\int\norm{p^+}_{\Theta}^2\dd\mu_k
        \le \frac{L_G^2}{\kappa_D}D_{k+1}.
\end{equation}
Combining \eqref{eq:H-cancellation-discrete}-\eqref{eq:Gplus-by-Dplus-discrete} and using $\delta=\gamma\eta$ proves \eqref{eq:H-increment-discrete}.
\end{proof}

\begin{lemma}[Discrete alignment increment]\label{lem:discrete-C-increment}
Under Assumption \ref{ass:discrete-trajectory-gated},
\begin{equation}\label{eq:C-increment-discrete}
        C_{k+1}-C_k
        =\delta_h(A_k-C_k)+R_k,
\end{equation}
where the remainder obeys
\begin{equation}\label{eq:Rk-bound-discrete}
        |R_k|\le \eta_h\sigma D_{k+1},
        \qquad
        \sigma=\frac{L_GB_{\mathrm{curv}}}{\kappa_D}.
\end{equation}
\end{lemma}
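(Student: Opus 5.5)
The plan is a direct pushforward computation. Since $\mu_{k+1}=(T_k^h)_{\#}\mu_k$, I will write
\[
C_{k+1}=\int\ip{a_{k+1}(\theta^+)}{p^+}_{\Theta}\dd\mu_k(\theta,p),
\qquad
\theta^+=\theta-\eta_h g^+,\quad p^+=p_k^+(\theta,p),\quad g^+=G_{\eps}(p^+).
\]
I will then split the integrand as
\[
\ip{a_k(\theta)}{p^+}_{\Theta}+\ip{a_{k+1}(\theta^+)-a_k(\theta)}{p^+}_{\Theta}.
\]

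\textbf{Exact main term.} Using $p^+=p+\delta_h(a_k(\theta)-p)$, the first piece integrates to
\[
C_k+\delta_h\int\ip{a_k}{a_k-p}_{\Theta}\dd\mu_k=C_k+\delta_h(A_k-C_k).
\]
This yields the identity \eqref{eq:C-increment-discrete} with the remainder
\[
R_k\coloneqq\int\ip{a_{k+1}(\theta^+)-a_k(\theta)}{p^+}_{\Theta}\dd\mu_k .
\]

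\textbf{Bounding the force variation.} I will repeat the decomposition from the proof of Lemma \ref{lem:discrete-J-descent} at $s=1$ with $v=-\eta_h g^+$. Write $u_k=\nabla\Rfunc(m_k)$. Then
\[
a_{k+1}(\theta^+)-a_k(\theta)=\bigl(DF(\theta^+)^*-DF(\theta)^*\bigr)u_k+DF(\theta^+)^*(u_{k+1}-u_k).
\]
The first term has norm at most $M_RM_{D,2}\eta_h\norm{g^+}_{\Theta}$, by the second-derivative bound on the segment $\theta_{k,s}$ (Assumption \ref{ass:discrete-trajectory-gated}, item 4). For the second term, note that $m_{k+1}-m_k=\int(F(\theta^+)-F(\theta))\dd\mu_k$. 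This gives
\[
\norm{m_{k+1}-m_k}_{\cH}\le M_D\eta_h\int\norm{g^+}_{\Theta}\dd\mu_k ,
\]
and therefore
\[
\norm{u_{k+1}-u_k}_{\cH}\le M_{R,2}M_D\eta_h\int\norm{g^+}_{\Theta}\dd\mu_k .
\]
Substituting both bounds gives
\[
|R_k|\le \eta_h M_RM_{D,2}\int\norm{g^+}_{\Theta}\norm{p^+}_{\Theta}\dd\mu_k+\eta_h M_D^2M_{R,2}\Bigl(\int\norm{g^+}_{\Theta}\dd\mu_k\Bigr)\Bigl(\int\norm{p^+}_{\Theta}\dd\mu_k\Bigr).
\]

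\textbf{Conversion to $D_{k+1}$.} Item 3 of Assumption \ref{ass:discrete-trajectory-gated} lets me apply \eqref{eq:kinetic-LG-gated} to the post-momenta, giving $\norm{g^+}_{\Theta}\le L_G\norm{p^+}_{\Theta}$. Jensen's inequality gives $(\int\norm{p^+}_{\Theta}\dd\mu_k)^2\le\int\norm{p^+}_{\Theta}^2\dd\mu_k$. Together these bound both terms by $\eta_hL_GB_{\mathrm{curv}}\int\norm{p^+}_{\Theta}^2\dd\mu_k$. Applying \eqref{eq:kinetic-D-gated} at $p^+$ and then \eqref{eq:post-momentum-D} yields
\[
\int\norm{p^+}_{\Theta}^2\dd\mu_k\le\kappa_D^{-1}D_{k+1}.
\]
This gives $|R_k|\le\eta_h\sigma D_{k+1}$ with $\sigma=L_GB_{\mathrm{curv}}/\kappa_D$, which is \eqref{eq:Rk-bound-discrete}.

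\textbf{Main obstacle.} The delicate step is the measure-dependent part $u_{k+1}-u_k$. The Lipschitz bound for $\nabla\Rfunc$ with constant $M_{R,2}$ must hold on the whole segment joining $m_k$ to $m_{k+1}$, not only at the iterates. I would handle this exactly as in Lemma \ref{lem:discrete-J-descent}. The segment is $s\mapsto m_s=\int F(\theta_{k,s})\dd\mu_k$, and I will read item 4 of Assumption \ref{ass:discrete-trajectory-gated} as guaranteeing the second-order constants along this path. The mean value inequality applied along $m_s$ then gives the required bound.
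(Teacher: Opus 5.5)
Your proposal is correct and follows the paper's proof step for step: the same pushforward split of $C_{k+1}$ into the exact term $\delta_h(A_k-C_k)$ plus $R_k$, the same two-piece decomposition of $a_{k+1}(\theta^+)-a_k(\theta)$, and the same conversion to $D_{k+1}$ via $\norm{g^+}_{\Theta}\le L_G\norm{p^+}_{\Theta}$ and $\kappa_D\norm{p^+}_{\Theta}^2\le d_\eps(p^+)$. Your remark that the $M_{R,2}$ bound is needed along the moment path $s\mapsto m_s$, and not only at $m_k$ and $m_{k+1}$, addresses a point the paper passes over when it writes $\norm{u_{k+1}-u_k}_{\cH}\le M_{R,2}\norm{m_{k+1}-m_k}_{\cH}$; integrating $\norm{\dot m_s}_{\cH}\le M_D\eta_h\int\norm{g^+}_{\Theta}\dd\mu_k$ along that path gives exactly the same constant.
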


\begin{proof}
Again write $p^+=p_k^+(\theta,p)$, $g^+=G_{\eps}(p^+)$, $a=a_k(\theta)$, and $\theta^+=\theta-\eta_hg^+$.  Since $\mu_{k+1}=(T_k^h)_{\#}\mu_k$,
\[
        C_{k+1}=\int\ip{a_{k+1}(\theta^+)}{p^+}_{\Theta}\dd\mu_k.
\]
Therefore
\begin{align}\label{eq:C-increment-expansion-discrete}
        C_{k+1}-C_k
        &=\int\ip{a}{p^+-p}_{\Theta}\dd\mu_k
          +\int\ip{a_{k+1}(\theta^+)-a}{p^+}_{\Theta}\dd\mu_k\notag\\
        &=\delta_h(A_k-C_k)+R_k,
\end{align}
where
\[
        R_k\coloneqq\int\ip{a_{k+1}(\theta^+)-a_k(\theta)}{p^+}_{\Theta}\dd\mu_k.
\]
It remains to bound $R_k$.  Put $u_k=\nabla\Rfunc(m_k)$ and $u_{k+1}=\nabla\Rfunc(m_{k+1})$.  Then
\[
        a_{k+1}(\theta^+)-a_k(\theta)
        =\left(DF(\theta^+)^*-DF(\theta)^*\right)u_k
          +DF(\theta^+)^*(u_{k+1}-u_k).
\]
The first term satisfies
\begin{equation}\label{eq:alignment-first-term-discrete}
        \norm{\left(DF(\theta^+)^*-DF(\theta)^*\right)u_k}_{\Theta}
        \le M_RM_{D,2}\eta_h\norm{g^+}_{\Theta}.
\end{equation}
For the second term,
\begin{equation}\label{eq:alignment-moment-bound-discrete}
        \norm{m_{k+1}-m_k}_{\cH}
        =\norm{\int\left(F(\theta^+)-F(\theta)\right)\dd\mu_k}_{\cH}
        \le M_D\eta_h\int\norm{g^+}_{\Theta}\dd\mu_k,
\end{equation}
so
\begin{equation}\label{eq:alignment-second-term-discrete}
        \norm{DF(\theta^+)^*(u_{k+1}-u_k)}_{\Theta}
        \le M_D^2M_{R,2}\eta_h\int\norm{g^+}_{\Theta}\dd\mu_k.
\end{equation}
By Cauchy's inequality, \eqref{eq:alignment-first-term-discrete}, and \eqref{eq:alignment-second-term-discrete},
\begin{align*}
        |R_k|
        &\le M_RM_{D,2}\eta_h\int\norm{g^+}_{\Theta}\norm{p^+}_{\Theta}\dd\mu_k
        +M_D^2M_{R,2}\eta_h\left(\int\norm{g^+}_{\Theta}\dd\mu_k\right)
            \left(\int\norm{p^+}_{\Theta}\dd\mu_k\right).
\end{align*}
Using \eqref{eq:kinetic-D-gated} and \eqref{eq:kinetic-LG-gated},
\[
        \int\norm{g^+}_{\Theta}\norm{p^+}_{\Theta}\dd\mu_k
        \le L_G\int\norm{p^+}_{\Theta}^2\dd\mu_k
        \le \frac{L_G}{\kappa_D}D_{k+1},
\]
and
\[
        \left(\int\norm{g^+}_{\Theta}\dd\mu_k\right)
        \left(\int\norm{p^+}_{\Theta}\dd\mu_k\right)
        \le \frac{L_G}{\kappa_D}D_{k+1}.
\]
Combining the last three displays gives \eqref{eq:Rk-bound-discrete}.
\end{proof}

\subsubsection{Discrete modified-Lyapunov contraction}\label{subsec:discrete-lyapunov-contraction}

\begin{theorem}[Fixed-step discrete convergence of the regularized Hamiltonian-Muon map]\label{thm:discrete-convergence-gated}
Assume the exact scaling \eqref{eq:discrete-valid-scaling-exact} and Assumption \ref{ass:discrete-trajectory-gated}.  Define
\begin{equation}\label{eq:MC-discrete}
        M_C\coloneqq \sqrt{\frac{\Lambda}{\gamma\kappa_K}}.
\end{equation}
Choose $r\in(0,2)$ and $\alpha>0$ such that $\alpha M_C<1$.  For a fixed step size $h$ set $\eta_h=h$, $\beta_h=1-\gamma h$, and define
\begin{equation}\label{eq:d-a-h-discrete}
\begin{aligned}
        d_h
        &\coloneqq
        \gamma-\alpha\sigma-
        \frac{\alpha\gamma}{2r\beta_h\kappa_D}
        -h\frac{L_G^2}{2\kappa_D}
            \left(\frac{\gamma^2}{\beta_h}+\gamma B_{\mathrm{curv}}\right),\\
        a_h
        &\coloneqq
        \alpha\gamma\left(1-\frac{r}{2\beta_h}\right)
        -\frac{\gamma^2h}{2\beta_h}.
\end{aligned}
\end{equation}
Assume $d_h>0$, $a_h>0$, and define
\begin{equation}\label{eq:qh-ch-discrete}
        q_h\coloneqq
        \min\left\{
        \frac{d_h\beta_h^2\kappa_D}{L_G},\,
        \frac{a_h}{\frac{\gamma}{2\lambda}+\frac{L_G\gamma^2h^2}{\beta_h^2}}
        \right\},
        \qquad
        c_h\coloneqq\frac{q_h}{1+\alpha M_C}.
\end{equation}
If $hc_h\le1$, then for every $k\ge0$,
\begin{equation}\label{eq:discrete-L-contraction}
        L_{k+1}\le (1-hc_h)L_k.
\end{equation}
Consequently,
\begin{equation}\label{eq:discrete-objective-convergence}
        \tf(\rho_k)-J_{\star}
        \le
        \frac{(1-hc_h)^k}{\gamma(1-\alpha M_C)}
        \left[H_0-\alpha C_0\right]
        \le
        \frac{\exp(-c_hkh)}{\gamma(1-\alpha M_C)}
        \left[H_0-\alpha C_0\right].
\end{equation}

Moreover, the above positivity conditions are guaranteed by an explicit small-step bound.  Suppose in addition that
\begin{equation}\label{eq:discrete-small-h-continuous-condition}
        d_0\coloneqq \gamma-\alpha\sigma-\frac{\alpha\gamma}{2r\kappa_D}>0,
        \qquad
        a_0\coloneqq \alpha\gamma\left(1-\frac r2\right)>0.
\end{equation}
Set
\begin{equation}\label{eq:Bd-Ba-discrete}
        B_d\coloneqq
        \frac{\alpha\gamma^2}{r\kappa_D}
        +\frac{L_G^2}{2\kappa_D}\left(2\gamma^2+\gamma B_{\mathrm{curv}}\right),
        \qquad
        B_a\coloneqq \gamma^2(1+\alpha r),
\end{equation}
with the convention that $d_0/(2B_d)=+\infty$ if $B_d=0$ and $a_0/(2B_a)=+\infty$ if $B_a=0$.  Define
\begin{equation}\label{eq:qstar-discrete}
        q_{\star}\coloneqq
        \min\left\{
        \frac{d_0\kappa_D}{8L_G},\,
        \frac{a_0}{2\left(\frac{\gamma}{2\lambda}+4L_G\gamma^2\right)}
        \right\},
        \qquad
        c_{\star}\coloneqq\frac{q_{\star}}{1+\alpha M_C},
\end{equation}
and
\begin{equation}\label{eq:h-star-discrete}
        h_{\star}\coloneqq
        \min\left\{
        1,\,
        \frac1{2\gamma},\,
        \frac{d_0}{2B_d},\,
        \frac{a_0}{2B_a},\,
        \frac1{c_{\star}}
        \right\}.
\end{equation}
Then every $h\in(0,h_{\star}]$ satisfies the hypotheses $0<h<1/\gamma$, $d_h>0$, $a_h>0$, and $hc_{\star}\le1$, and the estimate
\begin{equation}\label{eq:discrete-objective-convergence-explicit}
        \tf(\rho_k)-J_{\star}
        \le
        \frac{(1-hc_{\star})^k}{\gamma(1-\alpha M_C)}
        \left[H_0-\alpha C_0\right]
        \le
        \frac{\exp(-c_{\star}kh)}{\gamma(1-\alpha M_C)}
        \left[H_0-\alpha C_0\right]
\end{equation}
holds for all $k\ge0$.
\end{theorem}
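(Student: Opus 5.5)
The plan is to mirror the continuous-time argument of Theorem \ref{thm:convergence-gated} at the level of one step, combining Lemma \ref{lem:discrete-H-increment} and Lemma \ref{lem:discrete-C-increment}. First, exactly as in the proof of Theorem \ref{thm:convergence-gated}, Cauchy--Schwarz together with \eqref{eq:kinetic-K-gated} and the upper-gradient bound in \eqref{eq:discrete-PL-upper} give $|C_k|\le M_CH_k$. Hence
\[
(1-\alpha M_C)H_k\le L_k\le(1+\alpha M_C)H_k
\]
for every $k$. Once the contraction \eqref{eq:discrete-L-contraction} is established, the bound $\gamma U_k\le H_k\le L_k/(1-\alpha M_C)$ together with $1-x\le e^{-x}$ yields \eqref{eq:discrete-objective-convergence}. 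So the core task is to show $L_{k+1}-L_k\le -hq_hH_k$, and then use $H_k\ge L_k/(1+\alpha M_C)$.

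To get this, write $L_{k+1}-L_k=(H_{k+1}-H_k)-\alpha(C_{k+1}-C_k)$. Bound the first difference by \eqref{eq:H-increment-discrete}. For the second, use \eqref{eq:C-increment-discrete} with $\delta_h=\gamma h$, so that it contributes $-\alpha\gamma hA_k+\alpha\gamma hC_k+\alpha h\sigma D_{k+1}$. The delicate point is that $C_k$ lives at the old momentum $p$, while the dissipation is $D_{k+1}$ at $p^+$. I would use $p=(p^+-\delta_h a)/\beta_h$, which gives
\[
C_k=\beta_h^{-1}\int\langle a,p^+\rangle\,d\mu_k-\delta_h\beta_h^{-1}A_k.
\]
The last term only helps. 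Young's inequality with parameter $r$ then gives $\int\langle a,p^+\rangle\le \tfrac r2A_k+\tfrac1{2r\kappa_D}D_{k+1}$. This produces exactly the $\beta_h^{-1}$-weighted terms in $d_h$ and $a_h$, so that
\[
L_{k+1}-L_k\le -h\,d_hD_{k+1}-h\,a_hA_k .
\]
The $-\gamma^2h/(2\beta_h)$ in $a_h$ comes from the $A_k$ term of \eqref{eq:H-increment-discrete}.

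The main obstacle is converting $D_{k+1}$ and $A_k$ into a multiple of $H_k=K_k+\gamma U_k$, since $D_{k+1}$ is evaluated after the step. For $A_k$ this is easy: the PL inequality in \eqref{eq:discrete-PL-upper} gives $A_k\ge2\lambda U_k$. For the kinetic part, I would lower bound $D_{k+1}$ by $K_k$. Since $p=(p^+-\delta_ha)/\beta_h$, we have $\|p\|^2\le \tfrac{2}{\beta_h^2}\bigl(\|p^+\|^2+\delta_h^2\|a\|^2\bigr)$. Combined with $K_k\le \tfrac{L_G}{2}\int\|p\|^2$ (from $\Psi_\varepsilon$ having an $L_G$-Lipschitz gradient vanishing at $0$) and $\int\|p^+\|^2\le D_{k+1}/\kappa_D$, this gives
\[
K_k\le \frac{L_G}{\beta_h^2\kappa_D}D_{k+1}+\frac{L_G\gamma^2h^2}{\beta_h^2}A_k .
\]
I would then split $A_k$ between absorbing this leakage term and producing $2\lambda U_k$. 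Optimizing the split should reproduce the two entries of $q_h$ in \eqref{eq:qh-ch-discrete}, giving $d_hD_{k+1}+a_hA_k\ge q_hH_k$. Tracking the constants here, in particular the factor $\gamma/(2\lambda)$ and the $h^2$ leakage, is where I expect most of the bookkeeping risk.

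Finally, for the explicit small-step part, assume $h\le\min\{1,1/(2\gamma)\}$. Then $\beta_h\ge1/2$, so $1/\beta_h\le 1+2\gamma h\le2$. I would expand $d_h\ge d_0-hB_d$ and $a_h\ge a_0-hB_a$ using $\beta_h^{-1}-1\le2\gamma h$. With $h\le d_0/(2B_d)$ and $h\le a_0/(2B_a)$ this gives $d_h\ge d_0/2$ and $a_h\ge a_0/2$. Inserting $\beta_h^2\ge1/4$ and $h^2/\beta_h^2\le4$ shows $q_h\ge q_\star$, hence $c_h\ge c_\star$. Since $L_k\ge0$, the contraction also holds with $c_\star$ in place of $c_h$, and $h\le1/c_\star$ ensures $1-hc_\star\ge0$. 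Iterating then yields \eqref{eq:discrete-objective-convergence-explicit}.
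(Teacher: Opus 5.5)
Your proposal is correct and follows the paper's proof essentially step for step: the same equivalence $(1-\alpha M_C)H_k\le L_k\le(1+\alpha M_C)H_k$, the same rewriting of $C_k$ through $p^+=\beta_h p+\gamma h\,a_k$ followed by Young's inequality to reach $L_{k+1}-L_k\le -hd_hD_{k+1}-ha_hA_k$, the same bound $H_k\le \frac{L_G}{\beta_h^2\kappa_D}D_{k+1}+\bigl(\frac{L_G\gamma^2h^2}{\beta_h^2}+\frac{\gamma}{2\lambda}\bigr)A_k$ (your ``optimal split'' of $A_k$ is exactly the minimum defining $q_h$), and the same small-step expansions $d_h\ge d_0-hB_d$, $a_h\ge a_0-hB_a$. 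The only cosmetic difference is that you pass from $c_h$ to $c_\star$ using $L_k\ge 0$, whereas the paper substitutes $q_\star$ directly into $L_{k+1}-L_k\le -hq_hH_k$ using $H_k\ge 0$; the two are equivalent.
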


\begin{proof}
The proof has five steps.

\paragraph{Step (i): equivalence between $L_k$ and $H_k$.}
By Cauchy's inequality, the upper-gradient condition in \eqref{eq:discrete-PL-upper}, and the kinetic lower bound \eqref{eq:kinetic-K-gated},
\begin{align}\label{eq:C-by-H-discrete}
        |C_k|
        &\le\left(\int\norm{a_k(\theta)}_{\Theta}^2\dd\mu_k\right)^{1/2}
              \left(\int\norm{p}_{\Theta}^2\dd\mu_k\right)^{1/2}
        \le (2\Lambda U_k)^{1/2}\left(\frac{2K_k}{\kappa_K}\right)^{1/2}
        \le M_CH_k.
\end{align}
The final inequality follows from $2\sqrt{U_kK_k}\le (K_k+\gamma U_k)/\sqrt\gamma=H_k/\sqrt\gamma$.  Thus
\begin{equation}\label{eq:L-H-equivalence-discrete}
        (1-\alpha M_C)H_k\le L_k\le(1+\alpha M_C)H_k.
\end{equation}

\paragraph{Step (ii): one-step decay of $L_k$ in terms of $D_{k+1}$ and $A_k$.}
By Lemmas \ref{lem:discrete-H-increment} and \ref{lem:discrete-C-increment}, and by $\delta_h=\gamma h$,
\begin{align}\label{eq:L-increment-start-discrete}
        L_{k+1}-L_k
        &=H_{k+1}-H_k-\alpha(C_{k+1}-C_k)\notag\\
        &\le H_{k+1}-H_k-\alpha\gamma h A_k+\alpha\gamma h C_k+\alpha h\sigma D_{k+1}.
\end{align}
Since $p^+=\beta_hp+\gamma h a_k(\theta)$,
\[
        p=\frac{p^+-\gamma h a_k(\theta)}{\beta_h}.
\]
Therefore
\begin{align}\label{eq:Ck-young-pplus-discrete}
        C_k
        &=\frac1{\beta_h}\int\ip{a_k(\theta)}{p^+}_{\Theta}\dd\mu_k
          -\frac{\gamma h}{\beta_h}A_k\notag\\
        &\le \frac1{\beta_h}\int\ip{a_k(\theta)}{p^+}_{\Theta}\dd\mu_k
        \le \frac{r}{2\beta_h}A_k+\frac1{2r\beta_h}\int\norm{p^+}_{\Theta}^2\dd\mu_k\notag\\
        &\le \frac{r}{2\beta_h}A_k+\frac1{2r\beta_h\kappa_D}D_{k+1}.
\end{align}
Substituting \eqref{eq:Ck-young-pplus-discrete} and \eqref{eq:H-increment-discrete} into \eqref{eq:L-increment-start-discrete} gives
\begin{equation}\label{eq:L-increment-D-A-discrete}
        L_{k+1}-L_k\le -hd_hD_{k+1}-ha_hA_k,
\end{equation}
with $d_h$ and $a_h$ defined in \eqref{eq:d-a-h-discrete}.

\paragraph{Step (iii): $D_{k+1}$ and $A_k$ dominate $H_k$.}
Because $G_{\eps}(0)=0$ and $G_{\eps}$ is $L_G$-Lipschitz,
\begin{equation}\label{eq:Psi-upper-LG-discrete}
        \Psieps^{\Theta}(p)=\int_0^1\ip{G_{\eps}(sp)}{p}_{\Theta}\dd s
        \le\int_0^1 L_Gs\norm{p}_{\Theta}^2\dd s
        =\frac{L_G}{2}\norm{p}_{\Theta}^2.
\end{equation}
Moreover,
\[
        \norm{p}_{\Theta}^2
        =\frac1{\beta_h^2}\norm{p^+-\gamma h a_k(\theta)}_{\Theta}^2
        \le \frac2{\beta_h^2}\norm{p^+}_{\Theta}^2
             +\frac{2\gamma^2h^2}{\beta_h^2}\norm{a_k(\theta)}_{\Theta}^2.
\]
Using \eqref{eq:kinetic-D-gated} and the PL inequality,
\begin{align}\label{eq:H-by-Dplus-A-discrete}
        H_k
        &=K_k+\gamma U_k\notag\\
        &\le \frac{L_G}{\beta_h^2\kappa_D}D_{k+1}
        +\left(\frac{L_G\gamma^2h^2}{\beta_h^2}+\frac{\gamma}{2\lambda}\right)A_k.
\end{align}
By the definition of $q_h$ in \eqref{eq:qh-ch-discrete}, \eqref{eq:H-by-Dplus-A-discrete} implies
\begin{equation}\label{eq:q-H-bound-discrete}
        d_hD_{k+1}+a_hA_k\ge q_hH_k.
\end{equation}
Combining \eqref{eq:L-increment-D-A-discrete} with \eqref{eq:q-H-bound-discrete} yields
\begin{equation}\label{eq:L-H-decay-discrete}
        L_{k+1}-L_k\le -hq_hH_k.
\end{equation}

\paragraph{Step (iv): contraction and objective decay for a fixed admissible $h$.}
Since $L_k\le(1+\alpha M_C)H_k$, we have $H_k\ge L_k/(1+\alpha M_C)$.  Hence \eqref{eq:L-H-decay-discrete} gives
\[
        L_{k+1}\le\left(1-h\frac{q_h}{1+\alpha M_C}\right)L_k
        =(1-hc_h)L_k.
\]
If $hc_h\le1$, iteration proves \eqref{eq:discrete-L-contraction}.  Finally, $\gamma U_k\le H_k\le L_k/(1-\alpha M_C)$ by \eqref{eq:L-H-equivalence-discrete}, so
\[
        U_k\le\frac{L_k}{\gamma(1-\alpha M_C)}
        \le\frac{(1-hc_h)^kL_0}{\gamma(1-\alpha M_C)}.
\]
Since $L_0=H_0-\alpha C_0$ and $(1-x)^k\le\exp(-kx)$ for $x\in[0,1]$, \eqref{eq:discrete-objective-convergence} follows.

\paragraph{Step (v): explicit sufficient upper bound on $h$.}
Assume \eqref{eq:discrete-small-h-continuous-condition}.  If $h\le1/(2\gamma)$, then $\beta_h=1-\gamma h\ge1/2$ and
\[
        \frac1{\beta_h}-1=\frac{\gamma h}{1-\gamma h}\le2\gamma h.
\]
Using \eqref{eq:d-a-h-discrete},
\begin{align*}
        d_h
        &=d_0-\frac{\alpha\gamma}{2r\kappa_D}\left(\frac1{\beta_h}-1\right)
          -h\frac{L_G^2}{2\kappa_D}\left(\frac{\gamma^2}{\beta_h}+\gamma B_{\mathrm{curv}}\right)\\
        &\ge d_0-h\left[\frac{\alpha\gamma^2}{r\kappa_D}
          +\frac{L_G^2}{2\kappa_D}\left(2\gamma^2+\gamma B_{\mathrm{curv}}\right)\right]
        =d_0-hB_d.
\end{align*}
Hence $h\le d_0/(2B_d)$ implies $d_h\ge d_0/2$.  Similarly,
\begin{align*}
        a_h
        &=a_0-\frac{\alpha\gamma r}{2}\left(\frac1{\beta_h}-1\right)-\frac{\gamma^2h}{2\beta_h}
        \ge a_0-\alpha r\gamma^2h-\gamma^2h
        =a_0-hB_a.
\end{align*}
Thus $h\le a_0/(2B_a)$ implies $a_h\ge a_0/2$.  If also $h\le1$, then
\[
        \frac{L_G\gamma^2h^2}{\beta_h^2}\le4L_G\gamma^2,
        \qquad
        \beta_h^2\ge\frac14.
\]
Consequently, for every $h\le h_{\star}$,
\[
        q_h\ge
        \min\left\{
        \frac{(d_0/2)(1/4)\kappa_D}{L_G},\,
        \frac{a_0/2}{\frac{\gamma}{2\lambda}+4L_G\gamma^2}
        \right\}=q_{\star}.
\]
Using \eqref{eq:L-H-decay-discrete} with $q_{\star}$ in place of $q_h$ gives
\[
        L_{k+1}\le\left(1-h\frac{q_{\star}}{1+\alpha M_C}\right)L_k
        =(1-hc_{\star})L_k.
\]
The final entry $h\le1/c_{\star}$ in \eqref{eq:h-star-discrete} makes this contraction factor nonnegative.  Iterating and using $\gamma U_k\le L_k/(1-\alpha M_C)$ proves \eqref{eq:discrete-objective-convergence-explicit}.
\end{proof}

\begin{remark}[Second-order momentum scaling]\label{rem:second-order-beta-discrete}
The exact identity $1-\beta_h=\gamma\eta_h$ is not essential.  Suppose instead
\[
        \delta_h\coloneqq1-\beta_h=\gamma\eta_h+\zeta_h,
        \qquad
        |\zeta_h|\le C_{\beta}\eta_h^2,
        \qquad
        \eta_h=h+O(h^2).
\]
In the proof of Lemma \ref{lem:discrete-H-increment}, the cancellation in \eqref{eq:H-cancellation-discrete} leaves the additional term
\[
        (\delta_h-\gamma\eta_h)\int\ip{a_k(\theta)}{G_{\eps}(p_k^+(\theta,p))}_{\Theta}\dd\mu_k.
\]
By Young's inequality and \eqref{eq:Gplus-by-Dplus-discrete}, its absolute value is bounded by
\[
        C_{\beta}\eta_h^2\left(\frac12 A_k+\frac{L_G^2}{2\kappa_D}D_{k+1}\right).
\]
All other appearances of $\delta_h$ are also $\gamma\eta_h+O(\eta_h^2)$.  Consequently the coefficients $d_h$ and $a_h$ in \eqref{eq:d-a-h-discrete} are changed only by $O(h)$ terms.  Thus Theorem \ref{thm:discrete-convergence-gated} remains true, with slightly smaller positive constants and a correspondingly smaller explicit threshold, under the standard inertial scaling \eqref{eq:discrete-valid-scaling-general}.  This is the valid scaling for a finite-damping Hamiltonian limit; keeping $\beta_h$ fixed as $h\downarrow0$ instead gives a singular overdamped relaxation of the momentum variable.
\end{remark}

\begin{corollary}[Finite-particle and scalar matrix-space forms]\label{cor:finite-scalar-discrete-convergence}
If $\mu_k=N^{-1}\sum_{i=1}^{N}\delta_{(\theta_{i,k},P_{i,k})}$, then the law update \eqref{eq:discrete-law-map} is equivalent to
\[
        P_{i,k+1}=\beta_hP_{i,k}+(1-\beta_h)a_i^N(\boldsymbol\theta_k),
        \qquad
        \theta_{i,k+1}=\theta_{i,k}-\eta_h\Ortheps^{\Theta}(P_{i,k+1}),
\]
and Theorem \ref{thm:discrete-convergence-gated} gives the same exponential bound for the finite-particle objective $\tf_N(\boldsymbol\theta_k)-J_{\star}$.  In the original scalar matrix-space setting, this reads
\[
        P_{i,k+1}=\beta_hP_{i,k}+(1-\beta_h)\Rfunc'(\bar F_k)\nabla F(W_{i,k}),
        \qquad
        W_{i,k+1}=W_{i,k}-\eta_h\Ortheps(P_{i,k+1}),
\]
with $\eta_h=h$ and $1-\beta_h=\gamma h+O(h^2)$.  Therefore the discrete-time convergence theorem applies directly to the natural valid discretization of the Hamiltonian probability flow \eqref{Mc-Kean Vlasov equation}-\eqref{Hamiltonian equation} and to its extended-space analogue \eqref{eq:pde-mean-field-gated}-\eqref{eq:damped-ham-form-gated}.
\end{corollary}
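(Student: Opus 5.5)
The plan is a direct specialization of Theorem~\ref{thm:discrete-convergence-gated} to empirical laws, followed by the scalar identification already used in the proofs of Theorems~\ref{Theorem:Exponential convergence under explicit assumptions} and \ref{Propagation of chaos}.

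First, take $\mu_k=N^{-1}\sum_{i}\delta_{(\theta_{i,k},P_{i,k})}$ and compute the pushforward $(T_k^h)_{\#}\mu_k$ from \eqref{eq:discrete-law-map}. For an empirical law, $m_{\mu_k}=N^{-1}\sum_j F(\theta_{j,k})=m_N(\boldsymbol\theta_k)$. Hence $a_{\mu_k}(\theta_{i,k})=DF(\theta_{i,k})^*\nabla\Rfunc(m_N(\boldsymbol\theta_k))=a_i^N(\boldsymbol\theta_k)$ by \eqref{eq:particle-force-gated}. Because $T_k^h$ acts atom by atom, the pushforward is again empirical, with atoms $(\theta_{i,k}-\eta_hG_\eps(P_{i,k+1}),P_{i,k+1})$ and $P_{i,k+1}=\beta_hP_{i,k}+(1-\beta_h)a_i^N(\boldsymbol\theta_k)$. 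This is exactly \eqref{eq:discrete-gated-muon-reduced}. By induction on $k$, every iterate stays empirical. Therefore $\rho_k=\rho^N_{\boldsymbol\theta_k}$ and $\tf(\rho_k)=\tf_N(\boldsymbol\theta_k)$ by \eqref{eq:finite-lift-gated}.

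Next, observe that $K_k,D_k,A_k,C_k$ in \eqref{eq:discrete-KUDAC} become particle averages. Assumption~\ref{ass:discrete-trajectory-gated} is imposed on this empirical sequence, so Theorem~\ref{thm:discrete-convergence-gated} applies verbatim and gives the exponential bound for $\tf_N(\boldsymbol\theta_k)-J_\star$. One point needs a remark. $J_\star$ is the infimum over all of $\cP_1(\Theta)$, and empirical measures belong to that set, so $U_k\ge0$ holds automatically. The PL hypothesis, however, is an assumption on the empirical trajectory with respect to this global $J_\star$, as in the statement.

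For the scalar form, set $\Theta=\X$ and $\cH=\R$. As in the proof of Theorem~\ref{Theorem:Exponential convergence under explicit assumptions}, this gives $DF(W)^*u=u\nabla F(W)$, so $a_i^N=\Rfunc'(\bar F_k)\nabla F(W_{i,k})$ and $\Ortheps^\Theta=\Ortheps$. This recovers \eqref{Finite particle regularized Muon scheme}.

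The step requiring care is the scaling. Theorem~\ref{thm:discrete-convergence-gated} is stated for the exact relation $1-\beta_h=\gamma h$, while the corollary allows $1-\beta_h=\gamma h+O(h^2)$. For that case I would invoke Remark~\ref{rem:second-order-beta-discrete}. The extra term $(\delta_h-\gamma\eta_h)\int\ip{a}{G_\eps(p^+)}\,d\mu_k$ is absorbed into $d_h$ and $a_h$ with $O(h)$ corrections. The other occurrences of $\delta_h$ in Lemma~\ref{lem:discrete-C-increment} and in \eqref{eq:Ck-young-pplus-discrete} are likewise perturbed by $O(h^2)$, which only shrinks the constants and the threshold $h_\star$.

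The final claim, that this scheme is the natural discretization of \eqref{Mc-Kean Vlasov equation}--\eqref{Hamiltonian equation}, follows from Theorem~\ref{thm:finite-ode-limit-gated}. Under this scaling, \eqref{eq:random-variable-discrete-map} is the semi-implicit Euler scheme for the characteristics of \eqref{eq:pde-mean-field-gated}.
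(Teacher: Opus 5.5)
Your proposal is correct and follows essentially the same route as the paper. The paper proves the corollary implicitly through three pieces: the observation after \eqref{eq:random-variable-discrete-map} that the law map on empirical measures is exactly the particle scheme; the scalar identification $\Theta=\X$, $\cH=\R$, $a_\mu(W)=\Rfunc'(m_\mu)\nabla F(W)$; and Remark~\ref{rem:second-order-beta-discrete} for the scaling $1-\beta_h=\gamma h+O(h^2)$. Your added observation that $U_k\ge 0$ holds automatically for empirical laws, since they lie in $\cP_1(\Theta)$, is a small clarification the paper does not spell out.
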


\subsection{Propagation of chaos}\label{sec:poc-gated}

The finite-$N$ ODE \eqref{eq:finite-ode-gated} is the interacting-particle approximation of the nonlinear characteristic equation \eqref{eq:nonlinear-characteristics-gated}.

\begin{theorem}[Propagation of chaos on the Hilbert space domain]\label{thm:poc-gated}
Assume Assumption \ref{ass:global-hilbert} and $\mu_0\in\cP_2(\Z)$.  Let $(\theta_i^N(t),P_i^N(t))_{i=1}^{N}$ solve the $N$-particle ODE \eqref{eq:finite-ode-gated} with i.i.d. initial data distributed according to $\mu_0$.  Let $(\bar\theta_i(t),\bar P_i(t))_{i\ge1}$ be i.i.d. nonlinear mean-field copies solving \eqref{eq:nonlinear-characteristics-gated} with the same initial data, i.e.
\[
        (\theta_i^N(0),P_i^N(0))=(\bar\theta_i(0),\bar P_i(0)).
\]
Then for every $T<\infty$ there exists $C_{\mathrm{poc}}(T,\eps)<\infty$ such that, for each fixed $i$,
\begin{equation}\label{eq:poc-rate-gated}
        \sup_{0\le t\le T}\E\left[\norm{\theta_i^N(t)-\bar\theta_i(t)}_{\Theta}^{2}
        +\norm{P_i^N(t)-\bar P_i(t)}_{\Theta}^{2}\right]
        \le \frac{C_{\mathrm{poc}}(T,\eps)}{N}.
\end{equation}
Consequently, for every fixed $k\in\N$,
\begin{equation}\label{eq:k-chaos-gated}
        \sup_{0\le t\le T}W_2^2\left(\Law((\theta_1^N(t),P_1^N(t)),\ldots,(\theta_k^N(t),P_k^N(t))),\mu_t^{\otimes k}\right)
        \le \frac{kC_{\mathrm{poc}}(T,\eps)}{N}.
\end{equation}
Furthermore, if
\[
        \mu_t^N=\frac1N\sum_{i=1}^{N}\delta_{(\theta_i^N(t),P_i^N(t))},
        \qquad
        \bar\mu_t^N=\frac1N\sum_{i=1}^{N}\delta_{(\bar\theta_i(t),\bar P_i(t))},
\]
then
\begin{equation}\label{eq:empirical-poc-gated}
        \sup_{0\le t\le T}\E W_2^2(\mu_t^N,\mu_t)
        \le \frac{2C_{\mathrm{poc}}(T,\eps)}{N}
        +2\sup_{0\le t\le T}\E W_2^2(\bar\mu_t^N,\mu_t),
\end{equation}
and the second term tends to zero as $N\to\infty$.
\end{theorem}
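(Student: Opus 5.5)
The plan is the standard synchronous-coupling argument of Sznitman type. We exploit the fact that the only interaction enters through the Hilbert-valued moment $m_N(\boldsymbol\theta)=N^{-1}\sum_j F(\theta_j)$.

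\textbf{Step 1: error decomposition.} Set $e_i(t)\coloneqq(\theta_i^N-\bar\theta_i,\,P_i^N-\bar P_i)$ and $\bar m_t\coloneqq\E F(\bar\theta_i(t))=m_{\mu_t}$. Subtracting \eqref{eq:nonlinear-characteristics-gated} from \eqref{eq:finite-ode-gated} gives
\[
\tfrac{\dd}{\dd t}\norm{e_i}^2
= -2\ip{\theta_i^N-\bar\theta_i}{\Ortheps^{\Theta}(P_i^N)-\Ortheps^{\Theta}(\bar P_i)}_{\Theta}
+2\gamma\ip{P_i^N-\bar P_i}{a_i^N(\boldsymbol\theta^N)-a_{\mu_t}(\bar\theta_i)-(P_i^N-\bar P_i)}_{\Theta}.
\]
The first term is bounded by $\eps^{-1}\norm{e_i}^2$, using \eqref{eq:product-Orth-Lip-gated}.

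For the force, we write
\[
a_i^N-a_{\mu_t}(\bar\theta_i)=\bigl(DF(\theta_i^N)^*-DF(\bar\theta_i)^*\bigr)u_N+DF(\bar\theta_i)^*\bigl(\nabla\Rfunc(m_N(\boldsymbol\theta^N))-\nabla\Rfunc(\bar m_t)\bigr),
\]
as in Lemma \ref{lem:force-estimates-gated}. The first piece is at most $L_DM_R\norm{\theta_i^N-\bar\theta_i}$. The second is at most $M_DL_R\norm{m_N(\boldsymbol\theta^N)-\bar m_t}_{\cH}$, and we split this norm by the triangle inequality into
\[
\norm{m_N(\boldsymbol\theta^N)-m_N(\bar{\boldsymbol\theta})}_{\cH}\le \frac{L_F}{N}\sum_j\norm{\theta_j^N-\bar\theta_j}
\quad\text{and}\quad
\norm{\tfrac1N\sum_j F(\bar\theta_j)-\bar m_t}_{\cH}.
\]

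\textbf{Step 2: the fluctuation term.} This is where the $1/N$ rate comes from, and it is the only genuinely probabilistic step. The $\bar\theta_j(t)$ are i.i.d. with mean $\bar m_t$ in the Hilbert space $\cH$, so orthogonality of centered independent summands gives
\[
\E\norm{\tfrac1N\sum_j F(\bar\theta_j(t))-\bar m_t}_{\cH}^2=\frac1N\operatorname{Var}F(\bar\theta_1(t)).
\]
This variance is bounded by $\E\norm{F(\bar\theta_1(t))-F(\bar\theta'_1(t))}^2$ for an independent copy, hence by $2L_F^2\E\norm{\bar\theta_1(t)}^2$ up to constants. It is finite uniformly on $[0,T]$ because $\mu_0\in\cP_2(\Z)$ and $\norm{\dot{\bar\theta}}\le\sqrt{\qtot}$ give $\E\norm{\bar\theta_1(t)}^2\le 2\E\norm{\bar\theta_1(0)}^2+2T^2\qtot$. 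This is the main point needing care: the second moment must be controlled. Note that only $\theta$-moments enter, since $F$ does not depend on $p$.

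\textbf{Step 3: Gronwall and consequences.} Applying Young's inequality to all cross terms and taking expectations, we use exchangeability: $\E\norm{e_j}^2$ is independent of $j$, because the joint system is symmetric under permutations and the initial data are i.i.d. Jensen's inequality gives $\E(\frac1N\sum_j\norm{\theta_j^N-\bar\theta_j})^2\le\E\norm{e_1}^2$. This yields
\[
\tfrac{\dd}{\dd t}\E\norm{e_1}^2\le C_1(\eps)\,\E\norm{e_1}^2+\frac{C_2(T)}{N},
\qquad C_1=1+\eps^{-1}+\gamma(2+L_DM_R+2M_DL_RL_F)+\cdots.
\]
Since $e_1(0)=0$, Gronwall gives \eqref{eq:poc-rate-gated} with $C_{\mathrm{poc}}=C_2Te^{C_1T}$; the $\eps$-dependence enters only through $C_1$.

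For \eqref{eq:k-chaos-gated}, we use the coupling $((\theta_i^N,P_i^N),(\bar\theta_i,\bar P_i))_{i\le k}$, whose second marginal is exactly $\mu_t^{\otimes k}$, and sum the $k$ bounds. For \eqref{eq:empirical-poc-gated}, we use the triangle inequality $W_2^2\le 2W_2^2(\mu^N_t,\bar\mu^N_t)+2W_2^2(\bar\mu^N_t,\mu_t)$. The first term is bounded via the index-matching coupling by $\frac1N\sum_i\norm{e_i}^2$, whose expectation is at most $C_{\mathrm{poc}}/N$. The second term vanishes as $N\to\infty$ by the law of large numbers for empirical measures in $W_2$ under a finite second moment. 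Uniformity in $t\le T$ follows from equicontinuity of the $W_2$-curves (the velocities are bounded in $L^2$) together with a finite-grid argument.
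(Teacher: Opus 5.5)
Your proposal is correct and follows essentially the same route as the paper. Both use a synchronous coupling, the force split from Lemma~\ref{lem:force-estimates-gated}, and a Lipschitz-plus-i.i.d.-fluctuation decomposition of the empirical moment, which yields the $C_T/N$ source term. Both then close with Gronwall from zero initial error, use index-matching couplings for the $k$-particle and empirical-measure bounds, and use a time-grid argument for uniformity in $t$. The differences are cosmetic: you estimate the combined error $\|e_i\|^2$ via Young's inequality, whereas the paper tracks the $\theta$- and $P$-errors separately. You also make the second-moment bound on $\bar\theta_1(t)$ explicit through the velocity bound $\sqrt{\qtot}$, which the paper leaves implicit.
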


\begin{proof}
Let
\[
        e_i^{\theta}(t)=\theta_i^N(t)-\bar\theta_i(t),
        \qquad
        e_i^{P}(t)=P_i^N(t)-\bar P_i(t),
\]
and set
\[
        u(t)=\E\left[\norm{e_i^{\theta}(t)}_{\Theta}^{2}+\norm{e_i^{P}(t)}_{\Theta}^{2}\right],
\]
which is independent of $i$ by exchangeability.  Since $\Ortheps^{\Theta}$ is $1/\eps$-Lipschitz,
\begin{align*}
        \frac{\dd}{\dd t}\norm{e_i^{\theta}}_{\Theta}^{2}
        &=-2\ip{e_i^{\theta}}{\Ortheps^{\Theta}(P_i^N)-\Ortheps^{\Theta}(\bar P_i)}_{\Theta}\\
        &\le \norm{e_i^{\theta}}_{\Theta}^{2}+\eps^{-2}\norm{e_i^P}_{\Theta}^{2}.
\end{align*}
For the momentum error,
\begin{align*}
        \frac{\dd}{\dd t}\norm{e_i^P}_{\Theta}^{2}
        &=2\gamma\ip{e_i^P}{a_i^N(\boldsymbol\theta^N)-a_{\mu_t}(\bar\theta_i)-e_i^P}_{\Theta}\\
        &\le \gamma\norm{a_i^N(\boldsymbol\theta^N)-a_{\mu_t}(\bar\theta_i)}_{\Theta}^{2}-\gamma\norm{e_i^P}_{\Theta}^{2}\nonumber\\
        &\le \gamma\norm{a_i^N(\boldsymbol\theta^N)-a_{\mu_t}(\bar\theta_i)}_{\Theta}^{2}.
\end{align*}
By the force decomposition used in Lemma \ref{lem:force-estimates-gated}, there exists a constant $C_a$ depending only on the global constants such that
\begin{equation}\label{eq:force-poc-decomp-gated}
        \norm{a_i^N(\boldsymbol\theta^N)-a_{\mu_t}(\bar\theta_i)}_{\Theta}^{2}
        \le C_a\norm{e_i^{\theta}}_{\Theta}^{2}+C_a\norm{m_N^N(t)-m_t}_{\cH}^{2},
\end{equation}
where
\[
        m_N^N(t)=\frac1N\sum_{j=1}^{N}F(\theta_j^N(t)),
        \qquad
        m_t=\E F(\bar\theta_i(t)).
\]
Decompose
\begin{align*}
        m_N^N(t)-m_t
        &=\frac1N\sum_{j=1}^{N}\left(F(\theta_j^N(t))-F(\bar\theta_j(t))\right)
          +\left[\frac1N\sum_{j=1}^{N}F(\bar\theta_j(t))-m_t\right].
\end{align*}
By Jensen's inequality and the Lipschitz property of $F$,
\[
        \E\norm{\frac1N\sum_{j=1}^{N}(F(\theta_j^N)-F(\bar\theta_j))}_{\cH}^{2}
        \le \frac1N\sum_{j=1}^{N}L_F^2\E\norm{e_j^{\theta}}_{\Theta}^{2}
        \le L_F^2u(t).
\]
The random variables $F(\bar\theta_j(t))$ are i.i.d.; hence
\[
        \E\norm{\frac1N\sum_{j=1}^{N}F(\bar\theta_j(t))-m_t}_{\cH}^{2}
        =\frac1N\E\norm{F(\bar\theta_i(t))-m_t}_{\cH}^{2}.
\]
On $[0,T]$ this variance is finite because $F$ is Lipschitz and $\mu_0\in\cP_2(\Z)$.  Therefore
\[
        \E\norm{m_N^N(t)-m_t}_{\cH}^{2}\le C_Tu(t)+\frac{C_T}{N}.
\]
Combining the previous differential inequalities gives
\[
        u'(t)\le C_Tu(t)+\frac{C_T}{N},
        \qquad
        u(0)=0.
\]
Gronwall's inequality proves \eqref{eq:poc-rate-gated}.  The $k$-particle estimate \eqref{eq:k-chaos-gated} follows by coupling each interacting particle with its mean-field copy and summing squared errors.  Finally,
\begin{align*}
        W_2^2(\mu_t^N,\mu_t)
        &\le2W_2^2(\mu_t^N,\bar\mu_t^N)+2W_2^2(\bar\mu_t^N,\mu_t)\\
        &\le \frac2N\sum_{i=1}^{N}\left(\norm{e_i^{\theta}(t)}_{\Theta}^{2}+\norm{e_i^P(t)}_{\Theta}^{2}\right)+2W_2^2(\bar\mu_t^N,\mu_t).
\end{align*}
Taking expectations and using \eqref{eq:poc-rate-gated} proves \eqref{eq:empirical-poc-gated}.  The convergence of the i.i.d. empirical term, for every fixed $t$, follows from the law of large numbers in $W_2$ on finite-dimensional spaces with finite second moment. To obtain the supremum over $t \in[0, T]$, use the $L^2$-continuity of the nonlinear characteristics. The drift has at most linear growth and the second moments remain bounded on $[0, T]$, hence
$$
\mathbb{E}\left\|Z_t-Z_s\right\|^2 \leq C_T|t-s|^2 .
$$
The same estimate holds for the empirical nonlinear system $\bar{\mu}_t^N$. For a grid $0=t_0<\cdots<t_M=T$, this gives
$$
\sup _{t \in[0, T]} W_2\left(\bar{\mu}_t^N, \mu_t\right) \leq \max _{\ell} W_2\left(\bar{\mu}_{t_{\ell}}^N, \mu_{t_{\ell}}\right)+C_T \Delta t
$$
in expectation, up to the standard empirical and population continuity terms. First, we let $N \rightarrow \infty$ for fixed grid, then we let $\Delta t \downarrow 0$.
\end{proof}

\subsection{Hard Muon limit on the extended gated space}\label{sec:hard-limit-gated}

The limiting product potential is
\begin{equation}\label{eq:Psi0-product-gated}
        \Psi_0^{\Theta}(p)
        \coloneqq\sum_{b=1}^{B}\norm{p^{(b)}}_{\nuc}.
\end{equation}
For a block $p^{(b)}=U\Sigma V^{\top}$ of rank $r$, the nuclear-norm subdifferential is
\begin{equation}\label{eq:nuclear-subdiff-gated}
        \partial\norm{p^{(b)}}_{\nuc}
        =\left\{UV^{\top}+Z:U^{\top}Z=0,\,ZV=0,\,\norm{Z}_{\op}\le1\right\}.
\end{equation}
Thus
\begin{equation}\label{eq:subdiff-product-gated}
        \partial\Psi_0^{\Theta}(p)
        =\prod_{b=1}^{B}\partial\norm{p^{(b)}}_{\nuc}.
\end{equation}
The canonical hard Muon map is
\begin{equation}\label{eq:hard-product-Orth-gated}
        \Orth^{\Theta}(p)=\left(\Orth(p^{(1)}),\ldots,\Orth(p^{(B)})\right),
\end{equation}
which is a selected element of $\partial\Psi_0^{\Theta}(p)$.

\begin{theorem}[Subsequential hard-Muon limit on the extended gated space]\label{thm:hard-limit-gated}
Let $\mu_0\in\cP_1(\Z)$ and assume that the force field satisfies the global bounds of Assumption \ref{ass:global-hilbert}.  Let $\mu_t^{\eps}$ solve \eqref{eq:pde-mean-field-gated} with $\Ortheps^{\Theta}=\nabla\Psieps^{\Theta}$.  For every $T<\infty$, every sequence $\eps_k\downarrow0$ has a subsequence, not relabeled, such that
\[
        \mu_t^{\eps_k}\to\mu_t
        \quad\text{in }C([0,T];\cP_1(\Z)).
\]
There exists a Borel vector field $V_t(\theta,p)$ with
\begin{equation}\label{eq:hard-limit-subdiff-gated}
        V_t(\theta,p)\in\partial\Psi_0^{\Theta}(p)
        \qquad \mu_t\dd t\text{-a.e.}
\end{equation}
such that $\mu_t$ solves
\begin{equation}\label{eq:hard-limit-pde-gated}
        \partial_t\mu_t+\nabla_{\theta}\cdot(-V_t\mu_t)
        +\nabla_p\cdot\left(\gamma(a_{\mu_t}(\theta)-p)\mu_t\right)=0.
\end{equation}
If every block $p^{(b)}$ has full rank for $\mu_t\dd t$-almost every $(\theta,p)$, then $V_t(\theta,p)=\Orth^{\Theta}(p)$ almost everywhere and the limit is the canonical hard Muon flow.
\end{theorem}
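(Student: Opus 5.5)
The plan follows the scalar argument of Theorem \ref{Hard Muon limit theorem}, now carried out blockwise on $\Theta$. The steps are uniform estimates, compactness, identification of the limiting flux, the subdifferential inclusion, and passage to the limit in the weak form.

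\textbf{Step 1 (uniform estimates).} By \eqref{eq:product-Orth-Lip-gated}, $\norm{\Ortheps^{\Theta}(p)}_{\Theta}\le\sqrt{\qtot}$ uniformly in $\eps$. By \eqref{eq:force-bound-global-gated}, $\norm{a_{\mu}(\theta)}_{\Theta}\le M_DM_R$. Along the nonlinear characteristics \eqref{eq:nonlinear-characteristics-gated} this gives
\[
\norm{\Theta^\eps_t-\Theta^\eps_s}_{\Theta}\le\sqrt{\qtot}\,|t-s|,
\qquad
\norm{P^\eps_t}_{\Theta}\le \norm{P_0}_{\Theta}+M_DM_R,
\]
and $\norm{P^\eps_t-P^\eps_s}_{\Theta}\le \gamma(2M_DM_R+\norm{P_0}_{\Theta})|t-s|$. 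Taking the synchronous coupling of $\mu^\eps_t$ and $\mu^\eps_s$ yields
\[
W_1(\mu^\eps_t,\mu^\eps_s)\le C(1+\textstyle\int\norm{p}\,d\mu_0)\,|t-s|.
\]
The pointwise domination of the characteristics by $\norm{\Theta_0}+T\sqrt{\qtot}$ and $\norm{P_0}+M_DM_R$ gives uniform integrability of first moments, since these are controlled by the single $\mu_0\in\cP_1(\Z)$. Arzel\`a--Ascoli in $C([0,T];\cP_1(\Z))$ then produces a convergent subsequence $\mu^{\eps_k}\to\mu$.

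\textbf{Step 2 (limit flux and the inclusion).} The measures $\Ortheps^{\Theta}(p)\,\mu^{\eps_k}_t\,dt$ are $\Theta$-valued measures on $[0,T]\times\Z$ with total variation at most $\sqrt{\qtot}\,T$. Along a further subsequence they converge weak-star to some $\nu$. Since $\norm{\nu}\le\sqrt{\qtot}\,\mu_t\,dt$, the Radon--Nikodym theorem gives $\nu=V_t\,\mu_t\,dt$ with $\norm{V}\le\sqrt{\qtot}$. For the inclusion, I fix $Q\in\Theta$ and a nonnegative $\varphi\in C_c([0,T]\times\Z)$. Convexity of $\Psieps^{\Theta}$ gives
\[
\int\varphi\bigl[\Psieps^{\Theta}(Q)-\Psieps^{\Theta}(p)-\ip{\Ortheps^{\Theta}(p)}{Q-p}_{\Theta}\bigr]\,d\mu^\eps_t\,dt\ge0.
\]
The bounds $0\le\Psi_0-\Psi_\eps\le\qtot\eps$ give uniform convergence $\Psieps^{\Theta}\to\Psi_0^{\Theta}$. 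Because $\varphi$ has compact support, the term $\varphi\,\ip{\Ortheps^{\Theta}(p)}{p}$ is handled as $\varphi p$ tested against the flux, which converges by weak-star convergence. In the limit I obtain the same inequality with $V_t$ and $\Psi_0^{\Theta}$. Since $\varphi$ is arbitrary, the inequality holds $\mu_t\,dt$-a.e. for each $Q$ in a countable dense set, and by continuity for all $Q$. This gives \eqref{eq:hard-limit-subdiff-gated}, and by \eqref{eq:subdiff-product-gated} the inclusion holds blockwise.

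\textbf{Step 3 (limit equation).} I write the weak form \eqref{eq:weak-mean-field-gated} for $\mu^{\eps_k}$, integrated in time against $\zeta\in C_c^\infty(\Z)$. The transport term converges by the flux convergence. For the momentum term, Lemma \ref{lem:force-estimates-gated} gives
\[
\norm{a_{\mu^{\eps_k}_t}-a_{\mu_t}}_\infty\le M_DL_RL_F\,W_1(\mu^{\eps_k}_t,\mu_t)\to0
\]
uniformly in $t$. The map $(\theta,p)\mapsto\ip{\nabla_p\zeta}{\gamma(a_{\mu_t}(\theta)-p)}$ is bounded and continuous on the support of $\zeta$, so $W_1$-convergence suffices. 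This yields \eqref{eq:hard-limit-pde-gated}. If every block $p^{(b)}$ has full rank, the set \eqref{eq:nuclear-subdiff-gated} reduces to $\{UV^\top\}$, because the constraints $U^\top Z=0$ and $ZV=0$ force $Z=0$ when $r=\min(m_b,n_b)$. Hence $V_t=\Orth^{\Theta}(p)$.

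The main obstacle is Step 2. The inclusion has to be extracted from a product $\Ortheps^{\Theta}(p)\mu^\eps$ of two weakly converging objects, and only the flux, not $\Ortheps^{\Theta}(p)$ itself, converges. The linear-in-$p$ structure of the convexity inequality is what makes this work, together with the compact-support localization and the uniform first-moment bound to control the unbounded test function $p$.
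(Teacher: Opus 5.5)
Your proposal is correct and follows the same route as the paper's proof. Uniform velocity and force bounds along the characteristics give compactness in $C([0,T];\cP_1(\Z))$. The fluxes $\Ortheps^{\Theta}(p)\,\mu_t^{\eps_k}\,dt$ converge weak-star to $V_t\,\mu_t\,dt$. The inclusion $V_t\in\partial\Psi_0^{\Theta}(p)$ follows by passing to the limit in the integrated convexity inequality, which is affine in the velocity. The limit equation then follows from the force estimate of Lemma \ref{lem:force-estimates-gated}.

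Your version is somewhat more explicit than the paper's sketch, but these are refinements of the same argument:
\begin{itemize}
\item the Duhamel bound $\|P_t^{\eps}\|_{\Theta}\le\|P_0\|_{\Theta}+M_DM_R$;
\item the uniform estimate $0\le\Psi_0^{\Theta}-\Psieps^{\Theta}\le\qtot\,\eps$;
\item the observation that a compactly supported weight $\varphi$ makes $\varphi\,p$ an admissible test function for the flux, so no cutoff removal is needed.
\end{itemize}
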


\begin{proof}
The velocity $\Ortheps^{\Theta}(p)$ is uniformly bounded by $\sqrt{\qtot}$ for all $\eps>0$, and the force is bounded by Lemma \ref{lem:force-estimates-gated}.  Along characteristics,
\[
        \norm{\Theta_t^{\eps}-\Theta_s^{\eps}}_{\Theta}\le\sqrt{\qtot}|t-s|,
\]
and
\[
        \frac{\dd}{\dd t}\norm{P_t^{\eps}}_{\Theta}\le\gamma(M_D M_R+\norm{P_t^{\eps}}_{\Theta}).
\]
Gronwall's inequality gives a first-moment bound on $P_t^{\eps}$, uniformly for $t\in[0,T]$ and $\eps$.  These estimates imply tightness and equicontinuity of $\{\mu^{\eps}\}_{\eps>0}$ in $C([0,T];\cP_1(\Z))$, hence subsequential compactness.

Let $G^{\eps}(p)=\Ortheps^{\Theta}(p)$.  The uniform bound $\norm{G^{\eps}(p)}_{\Theta}\le\sqrt{\qtot}$ implies, after extraction, weak-star convergence of the fluxes $G^{\eps_k}(p)\mu_t^{\eps_k}\dd t$ to a vector-valued measure absolutely continuous with respect to $\mu_t\dd t$.  Its density is denoted by $V_t(\theta,p)$ and satisfies $\norm{V_t}\le\sqrt{\qtot}$.

It remains to identify $V_t(\theta,p)$ as a subgradient of $\Psi_0^{\Theta}$.  For every $q\in\Theta$ and every $p\in\Theta$, convexity of $\Psieps^{\Theta}$ gives
\begin{equation}\label{eq:convex-ineq-hard-gated}
        \Psieps^{\Theta}(q)\ge \Psieps^{\Theta}(p)+\ip{\Ortheps^{\Theta}(p)}{q-p}_{\Theta}.
\end{equation}
The functions $\Psieps^{\Theta}$ converge locally uniformly to $\Psi_0^{\Theta}$ as $\eps\downarrow0$.  Passing to the limit in the integrated form of \eqref{eq:convex-ineq-hard-gated}, with $p$ replaced by the momentum coordinate and with arbitrary bounded nonnegative test weights, gives
\[
        \Psi_0^{\Theta}(q)\ge \Psi_0^{\Theta}(p)+\ip{V_t(\theta,p)}{q-p}_{\Theta}
        \qquad \mu_t\dd t\text{-a.e.}
\]
This inequality for every $q\in\Theta$ is exactly $V_t(\theta,p)\in\partial\Psi_0^{\Theta}(p)$.

Passing to the limit in the weak formulation of \eqref{eq:pde-mean-field-gated} gives \eqref{eq:hard-limit-pde-gated}; the $p$-velocity term passes by the force-field continuity established in Lemma \ref{lem:force-estimates-gated} and the convergence of $\mu^{\eps_k}$ to $\mu$.  The final statement follows from \eqref{eq:nuclear-subdiff-gated}: if $p^{(b)}$ has full rank, the orthogonality constraints $U^{\top}Z=0$ and $ZV=0$ force $Z=0$, so the block subdifferential is the singleton $\{\Orth(p^{(b)})\}$.
\end{proof}

\subsection{Transformer MoE specialization with input-dependent routing}\label{sec:transformer-specialization-gated}

The abstract theory specializes to transformer mixture-of-experts models by choosing $\Theta$ to contain both an expert tuple and router parameters.  The input dependence of the gate is encoded in the Hilbert-valued feature map $F$.  Softmax normalization across experts is represented by augmenting the Hilbert output with numerator and denominator features.

\subsubsection{Expert-router parameter space}\label{subsec:transformer-theta-gated}

Let an input sequence be
\begin{equation}\label{eq:X-transformer}
        X\in\R^{L\times d}.
\end{equation}
A single expert is parameterized by
\begin{equation}\label{eq:expert-tuple-transformer}
        \omega=(Q,K,V,O,W_1,W_2),
\end{equation}
where
\begin{equation}\label{eq:expert-shapes-transformer}
        Q,K\in\R^{d\times d_k},\qquad
        V\in\R^{d\times d_v},\qquad
        O\in\R^{d_v\times d},
        \qquad
        W_1\in\R^{d\times d_f},\qquad
        W_2\in\R^{d_f\times d}.
\end{equation}
Thus
\begin{equation}\label{eq:Theta-exp-transformer}
        \Thetaexp
        =\R^{d\times d_k}\times\R^{d\times d_k}\times\R^{d\times d_v}\times\R^{d_v\times d}
        \times\R^{d\times d_f}\times\R^{d_f\times d}.
\end{equation}
Let the router parameter space be a finite product of matrix spaces
\begin{equation}\label{eq:Theta-gate-transformer}
        \Thetagate=\prod_{r=1}^{B_{\Gate}}\R^{a_r\times b_r}.
\end{equation}
A linear per-token router, for example, can be represented by a matrix $G\in\R^{d\times1}$ and score $s_G(X)_t=\ip{X_{t,:}}{G}_{\R^d}$; more general smooth router MLPs correspond to several matrix blocks in \eqref{eq:Theta-gate-transformer}.  The full particle parameter is
\begin{equation}\label{eq:Theta-full-transformer}
        \theta=(\omega,\phi)\in\Theta\coloneqq\Thetaexp\times\Thetagate.
\end{equation}
The Hilbert norm is the product Frobenius norm
\begin{equation}\label{eq:Theta-transformer-norm}
        \norm{\theta}_{\Theta}^{2}
        =\norm{Q}_{F}^{2}+\norm{K}_{F}^{2}+\norm{V}_{F}^{2}+\norm{O}_{F}^{2}
        +\norm{W_1}_{F}^{2}+\norm{W_2}_{F}^{2}+\sum_{r=1}^{B_{\Gate}}\norm{\phi^{(r)}}_{F}^{2}.
\end{equation}
The regularized Muon mirror map is
\begin{equation}\label{eq:Theta-transformer-Orth}
\begin{aligned}
        &\Ortheps^{\Theta}(P_Q,P_K,P_V,P_O,P_{W_1},P_{W_2},P_{\phi^{(1)}},\ldots)\\
        &\quad=\bigl(\Ortheps(P_Q),\Ortheps(P_K),\Ortheps(P_V),\Ortheps(P_O),
        \Ortheps(P_{W_1}),\Ortheps(P_{W_2}),\Ortheps(P_{\phi^{(1)}}),\ldots\bigr).
\end{aligned}
\end{equation}
with each $\Ortheps$ evaluated at the corresponding rectangular block.  Proposition \ref{prop:extended-fenchel-mirror} applies directly to this space.

\subsubsection{Smooth single-head attention-plus-FFN expert and router}\label{subsec:smooth-expert-router}

Let $\sigma:\R\to\R$ be a smooth activation applied rowwise.  Define the single-head attention map
\begin{equation}\label{eq:single-head-attention}
        A_{\omega}(X)
        \coloneqq \softmax\left(\frac{(XQ)(XK)^{\top}}{\sqrt{d_k}}\right)XVO\in\R^{L\times d},
\end{equation}
where softmax is applied rowwise.  Define the feed-forward map
\begin{equation}\label{eq:ffn-map}
        B_{\omega}(X)
        \coloneqq \sigma(A_{\omega}(X)W_1)W_2\in\R^{L\times d}.
\end{equation}
Let $R_{\mathrm{out}}:\R^{L\times d}\to\R^{L\times C}$ be a fixed output projection to token logits.  The expert logit function is
\begin{equation}\label{eq:expert-logit}
        \psi_{\omega}(X)\coloneqq R_{\mathrm{out}}B_{\omega}(X)\in\R^{L\times C}.
\end{equation}
Let
\begin{equation}\label{eq:router-score}
        s_{\phi}(X)\in\R^{L}
\end{equation}
be a smooth tokenwise router-score map.  Sequence-level routing is obtained as the special case in which all coordinates of $s_{\phi}(X)$ are identical or only one scalar score is stored.

\begin{lemma}[Local smoothness of transformer expert-router features]\label{lem:transformer-local-smoothness}
Assume the input set is bounded, $\norm{X}_{F}\le B_X$, and the activation $\sigma$ and router score map $\phi\mapsto s_{\phi}(X)$ have bounded derivatives up to order two on bounded parameter sets.  Then, for every $R<\infty$, the maps
\[
        \omega\mapsto\psi_{\omega}(X),
        \qquad
        \phi\mapsto s_{\phi}(X)
\]
are $C^2$ on $\{\norm{\theta}_{\Theta}\le R\}$ with first and second derivative bounds uniform over $\norm{X}_{F}\le B_X$.  The same statement holds as a map into $L^2(P_X)$ when $X$ is almost surely bounded.
\end{lemma}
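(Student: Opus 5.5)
The proof treats each map as a composition of finitely many elementary smooth maps. For each one we record explicit $C^2$ bounds on the bounded set $\{\norm{\theta}_{\Theta}\le R\}\times\{\norm{X}_F\le B_X\}$, and then combine them with the chain rule and the product rule for Fréchet derivatives. Every bound will depend only on $R$, $B_X$, $d_k$, the fixed matrix $R_{\mathrm{out}}$, and the derivative bounds of $\sigma$ and $s_\phi$. All estimates are therefore uniform in $X$.

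\textbf{Attention block.} The maps $(Q,K)\mapsto XQ,\,XK$ and $V\mapsto XV$ are linear, with operator norm at most $\norm{X}_{\op}\le B_X$. The score matrix $S=(XQ)(XK)^\top/\sqrt{d_k}$ is then a bounded bilinear map of $(Q,K)$. It is polynomial, hence $C^\infty$, and $\norm{S}_F\le B_X^2R^2/\sqrt{d_k}$ on the ball. The rowwise softmax is $C^\infty$ on all of $\R^{L\times L}$ with globally bounded first and second derivatives: its Jacobian on a row is $\diag(\pi)-\pi\pi^\top$, and its entries and those of the second derivative are bounded by absolute constants because $\pi$ lies in the simplex. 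Composing softmax with $S$, and multiplying by the bilinear factor $XVO$, which is bounded by $B_XR^2$, shows that $A_\omega(X)$ is $C^2$ with uniform bounds. The bounds come from the Leibniz rule for products of bounded multilinear maps together with the Faà di Bruno formula up to order two.

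\textbf{FFN, output and router.} The map $A_\omega(X)W_1$ is a product of a $C^2$ bounded map and the linear coordinate $W_1$, and its norm is bounded by some $C(R,B_X)$. The activation $\sigma$ is applied entrywise. By hypothesis it has bounded derivatives up to order two on the bounded range $\{\norm{Y}_F\le C(R,B_X)\}$, so $\sigma(A_\omega W_1)$ is $C^2$ with bounds from the chain rule. Multiplication by $W_2$ and then by the fixed linear map $R_{\mathrm{out}}$ preserves this property, which gives the claim for $\omega\mapsto\psi_\omega(X)$. The router statement $\phi\mapsto s_\phi(X)$ is exactly the hypothesis, provided its derivative bounds are uniform over $\norm{X}_F\le B_X$. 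I will read the hypothesis that way; this is the point to state explicitly.

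\textbf{$L^2(P_X)$ version and main obstacle.} The step needing care is the $L^2(P_X)$ statement. Pointwise $C^2$ bounds do not by themselves give Fréchet differentiability into $L^2(P_X)$. The remainder in the second-order Taylor expansion must be $o(\norm{h}^2)$ in $L^2$ norm, not merely pointwise. To handle this, I will use the integral form of Taylor's theorem,
\[
f(\theta+h)-f(\theta)-Df(\theta)h-\tfrac12D^2f(\theta)[h,h]=\int_0^1(1-t)\bigl(D^2f(\theta+th)-D^2f(\theta)\bigr)[h,h]\,dt.
\]
Uniform continuity of $D^2f$ in $\theta$ on the compact set (ball)$\times\{\norm{X}_F\le B_X\}$ then holds uniformly in $X$. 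This uniform continuity follows because all maps are $C^2$ jointly in $(\theta,X)$, provided $\sigma''$ and the router's second derivative are continuous. With it, the remainder is $o(\norm{h}^2)$ uniformly in $X$, hence also in $L^2(P_X)$ because $X$ is almost surely bounded. The same argument applies to the first-order terms, and dominated convergence gives continuity of $D^2f$ as a map into $L^2(P_X)$.
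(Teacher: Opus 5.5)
Your treatment of the pointwise claim is the paper's argument in more detail. The paper also bounds every matrix product on $\{\|\theta\|_\Theta\le R\}\times\{\|X\|_F\le B_X\}$, uses that matrix multiplication is polynomial and that softmax is $C^\infty$ with bounded derivatives, and notes that $\sigma$ is only evaluated on the bounded range of $A_\omega(X)W_1$. It then applies the chain and product rules. Your explicit bounds, the global boundedness of the softmax derivatives, and your remarks that the router hypothesis must hold uniformly in $X$ and that $C^2$ needs continuity of second derivatives are correct refinements.

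You take a different route for the $L^2(P_X)$ statement, and there your diagnosis is too pessimistic. The paper uses the pointwise bounds plus dominated convergence, and that is enough. Suppose $f(\cdot,X)$ is $C^2$ for each fixed $X$, with $\|D^2_\theta f(\theta,X)\|\le M$ on the ball. Then the first-order remainder is at most $M\|h\|^2/2$ uniformly in $X$. Set $g_\delta(X)=\sup_{0<\|h\|\le\delta}\|Df(\theta+h,X)-Df(\theta,X)-D^2f(\theta,X)h\|/\|h\|$. It is bounded by $2M$ and tends to $0$ for every $X$, so $\|g_\delta\|_{L^2(P_X)}\to0$, which is exactly Fr\'echet differentiability of $Df$ into $L^2(P_X)$. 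Continuity of $D^2f$ into $L^2(P_X)$ follows from the same domination.

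Your integral-Taylor and uniform-continuity route gives a stronger conclusion, namely remainders small uniformly in $X$, i.e.\ $C^2$ into $L^\infty$. But it needs $D^2_\theta f$ to be jointly continuous in $(\theta,X)$. That holds for the expert map, which is built from polynomials, softmax and a smooth $\sigma$. For the router it is an extra assumption, since the lemma only assumes regularity of $\phi\mapsto s_\phi(X)$ for each $X$. Either add joint continuity of $D^2_\phi s_\phi(X)$ as a hypothesis, or use the dominated-convergence argument, which needs only measurability in $X$.
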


\begin{proof}
On $\{\norm{\theta}_{\Theta}\le R\}$ and $\norm{X}_{F}\le B_X$, every matrix product in \eqref{eq:single-head-attention} and \eqref{eq:ffn-map} is bounded by a constant depending only on $R,B_X,L,d,d_k,d_v,d_f$.  Matrix multiplication is polynomial in the entries of $\omega$ and is therefore smooth with bounded derivatives on bounded sets.  The rowwise softmax is $C^{\infty}$ and all derivatives are bounded on bounded subsets of its input space.  The activation $\sigma$ has bounded derivatives on the bounded interval reached by $A_{\omega}(X)W_1$.  The composition and product rules for Frechet derivatives imply that $\omega\mapsto\psi_{\omega}(X)$ is $C^2$ with uniform derivative bounds.  The assumed smoothness of the router score gives the corresponding conclusion for $\phi\mapsto s_{\phi}(X)$.  The $L^2(P_X)$ statement follows from the pointwise derivative bounds and dominated convergence.
\end{proof}

\subsubsection{Token-level cross-entropy loss}\label{subsec:cross-entropy-transformer}

For a finite training set $\{(X_r,Y_r)\}_{r=1}^{n}$ with $Y_{r,t}\in\{1,\ldots,C\}$, set
\begin{equation}\label{eq:H-logit}
        \Hlogit\coloneqq\left(\R^{L\times C}\right)^n
\end{equation}
with averaged inner product
\begin{equation}\label{eq:H-logit-inner}
        \ip{f}{g}_{\Hlogit}
        \coloneqq\frac1{nL}\sum_{r=1}^{n}\sum_{t=1}^{L}\ip{f_{r,t}}{g_{r,t}}_{\R^C}.
\end{equation}
The token-level cross-entropy risk is
\begin{equation}\label{eq:cross-entropy}
        \Rfunc_{\CE}(f)
        \coloneqq\frac1{nL}\sum_{r=1}^{n}\sum_{t=1}^{L}
        \left[-f_{r,t,Y_{r,t}}+\log\left(\sum_{c=1}^{C}\e^{f_{r,t,c}}\right)\right].
\end{equation}
This is the standard negative log-likelihood/cross-entropy objective used with transformer token logits; label smoothing replaces the one-hot target below by a fixed target distribution.  The transformer architecture and training objective with cross-entropy and label smoothing were introduced in \citet{vaswani2017attention}; sparse transformer mixture-of-experts layers and switch routing are developed in \citet{shazeer2017outrageously} and \citet{fedus2021switch}.  Theoretical motivation for router-driven specialization and the role of smoothing/noisy routing appears in \citet{chen2022towards}.

\begin{proposition}[Cross-entropy satisfies the Hilbert loss assumptions]\label{prop:ce-properties}
The map $\Rfunc_{\CE}:\Hlogit\to\R$ is $C^{\infty}$, lower bounded by $0$, and its Hilbert gradient with respect to \eqref{eq:H-logit-inner} is
\begin{equation}\label{eq:CE-gradient}
        (\nabla\Rfunc_{\CE}(f))_{r,t}=\softmax(f_{r,t})-e_{Y_{r,t}},
\end{equation}
where $e_y$ is the $y$th coordinate vector.  Moreover,
\begin{equation}\label{eq:CE-bounds}
        \norm{\nabla\Rfunc_{\CE}(f)}_{\Hlogit}\le\sqrt2,
        \qquad
        \norm{\nabla\Rfunc_{\CE}(f)-\nabla\Rfunc_{\CE}(g)}_{\Hlogit}\le
        \norm{f-g}_{\Hlogit}.
\end{equation}
The same conclusions hold with label smoothing, replacing $e_{Y_{r,t}}$ by any target vector in the probability simplex.
\end{proposition}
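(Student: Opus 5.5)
The statement is Proposition \ref{prop:ce-properties}. The loss $\Rfunc_{\CE}$ is an average of independent tokenwise terms $\ell_{y}(v)=-v_{y}+\log\sum_c \e^{v_c}$ on $\R^C$, so the plan is to reduce everything to a single token and then reassemble with the averaged inner product \eqref{eq:H-logit-inner}.

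\emph{Smoothness and lower bound.} Smoothness is immediate, since log-sum-exp is a composition of $C^\infty$ maps and $\exp>0$. For the lower bound, $\log\sum_c \e^{v_c}\ge v_{y}$ because the sum dominates the single term $\e^{v_y}$, so each $\ell_y\ge0$ and hence $\Rfunc_{\CE}\ge0$.

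\emph{Gradient formula \eqref{eq:CE-gradient}.} The Euclidean gradient of $\ell_y$ is $\softmax(v)-e_y$. For $\Rfunc_{\CE}$, the directional derivative is
\[
D\Rfunc_{\CE}(f)[g]=\frac1{nL}\sum_{r,t}\ip{\softmax(f_{r,t})-e_{Y_{r,t}}}{g_{r,t}}_{\R^C}.
\]
The prefactor $1/(nL)$ in the loss matches the weight in \eqref{eq:H-logit-inner}, so this expression already has the form $\ip{\cdot}{g}_{\Hlogit}$. Riesz representation therefore gives \eqref{eq:CE-gradient} with no rescaling.

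\emph{Gradient bound.} Let $\pi=\softmax(v)$, a probability vector. Then
\[
\norm{\pi-e_y}^2=(1-\pi_y)^2+\sum_{c\ne y}\pi_c^2\le(1-\pi_y)^2+(1-\pi_y)^2\le2,
\]
using $\sum_{c\ne y}\pi_c^2\le\big(\sum_{c\ne y}\pi_c\big)^2$. Averaging over the $nL$ tokens preserves this bound, so $\norm{\nabla\Rfunc_{\CE}(f)}^2_{\Hlogit}\le 2$. With label smoothing the target $e_y$ becomes a simplex vector $\tau$. Then $\norm{\pi-\tau}^2\le\norm{\pi-\tau}_1\max_c|\pi_c-\tau_c|\le 2\cdot1$, because both vectors lie in the simplex.

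\emph{Lipschitz bound on the gradient.} This is the only step needing care, and it should be done tokenwise. The Jacobian of softmax is $\diag(\pi)-\pi\pi^\top$, which is symmetric positive semidefinite. Its largest eigenvalue is at most $\max_c\pi_c\le1$: for any unit $u$, $u^\top(\diag\pi-\pi\pi^\top)u\le\sum_c\pi_c u_c^2\le\max_c\pi_c$. One could sharpen this to $1/2$, but $1$ suffices here. By the mean value theorem along the segment from $g_{r,t}$ to $f_{r,t}$, each tokenwise difference is bounded by $\norm{f_{r,t}-g_{r,t}}$. Squaring and averaging with the same weights $1/(nL)$ gives the Lipschitz constant $1$ in $\Hlogit$. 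The target vector ($e_y$ or $\tau$) is a constant, so it cancels in the difference, and the label-smoothing case follows identically.
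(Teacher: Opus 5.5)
Your proposal is correct and follows essentially the same route as the paper: reduce to the tokenwise loss $\ell_y$, bound the softmax Jacobian $\diag(\pi)-\pi\pi^{\top}$ (which the paper describes as a categorical covariance matrix) by $1$ in operator norm, bound $\norm{\softmax(v)-e_y}_2$ by $\sqrt2$ using the simplex, and then reassemble everything through the $1/(nL)$-weighted averaged inner product. Your explicit computations (the Riesz step with matching weights, the quadratic-form eigenvalue bound, and the $\ell^1$--$\ell^\infty$ argument for simplex targets) fill in details the paper only asserts, and the label-smoothed lower bound follows from the same one-line fact, $\log\sum_c \e^{v_c}\ge\max_c v_c\ge\ip{\tau}{v}$.
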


\begin{proof}
For one token define
\[
        \ell_y(z)=-z_y+\log\sum_{c=1}^{C}\e^{z_c}.
\]
Then
\[
        \nabla\ell_y(z)=\softmax(z)-e_y.
\]
The Hessian is
\[
        \nabla^2\ell_y(z)=\diag(s)-ss^{\top},
        \qquad s=\softmax(z),
\]
which is the covariance matrix of a categorical random variable with probability vector $s$.  Hence it is positive semidefinite and its operator norm is at most $1$.  Thus $\nabla\ell_y$ is $1$-Lipschitz.  Also, $\norm{\softmax(z)-e_y}_2\le\sqrt2$ because both vectors lie in the probability simplex.  Averaging over $(r,t)$ with the inner product \eqref{eq:H-logit-inner} proves \eqref{eq:CE-gradient} and \eqref{eq:CE-bounds}.  Nonnegativity follows from $\log\sum_c\e^{z_c}\ge z_y$.  Smoothness follows from smoothness of log-sum-exp.  The label-smoothed case is identical because the target vector remains in the probability simplex.
\end{proof}

\subsubsection{Hilbert feature maps for input-dependent gates}\label{subsec:gate-feature-maps}

Two smooth gate encodings are directly covered by \eqref{eq:J-rho-theta}.

\paragraph{Unnormalized smooth gates.}
Let $g_{\phi}(X)\in\R^L$ be a nonnegative smooth input-dependent gate.  Define the empirical logit-space feature map
\begin{equation}\label{eq:unnormalized-gate-feature}
        F_{\mathrm{un}}(\omega,\phi)
        \coloneqq\left(g_{\phi}(X_r)\odot\psi_{\omega}(X_r)\right)_{r=1}^{n}
        \in\Hlogit,
\end{equation}
where $\odot$ denotes tokenwise multiplication, broadcasting the scalar gate at token $t$ across the $C$ logits.  Then
\begin{equation}\label{eq:unnormalized-pop-output}
        m_{\rho}=\int_{\Theta}F_{\mathrm{un}}(\omega,
\phi)\dd\rho(\omega,
\phi)
\end{equation}
is an input-dependent gated population output.

\paragraph{Softmax-normalized gates over the expert distribution.}
A finite softmax MoE with particles $(\omega_i,\phi_i)_{i=1}^{N}$ has tokenwise output
\begin{equation}\label{eq:finite-softmax-moe}
        M_N(X)_{t,:}
        =\frac{\sum_{i=1}^{N}\e^{s_{\phi_i}(X)_t}\psi_{\omega_i}(X)_{t,:}}
        {\sum_{j=1}^{N}\e^{s_{\phi_j}(X)_t}}.
\end{equation}
For a probability distribution $\rho$ over $\Theta$, the corresponding population output is
\begin{equation}\label{eq:population-softmax-moe}
        M_{\rho}(X)_{t,:}
        =\frac{\int_{\Theta}\e^{s_{\phi}(X)_t}\psi_{\omega}(X)_{t,:}\dd\rho(\omega,\phi)}
        {\int_{\Theta}\e^{s_{\phi}(X)_t}\dd\rho(\omega,\phi)}.
\end{equation}
Although \eqref{eq:population-softmax-moe} is not a single unnormalized average of logits, it is exactly of the form \eqref{eq:J-rho-theta} after Hilbert-output augmentation.  Define
\begin{equation}\label{eq:H-aug}
        \Haug\coloneqq\Hlogit\oplus\left(\R^{L}\right)^n
\end{equation}
with product Hilbert inner product, and define
\begin{equation}\label{eq:augmented-feature}
        F_{\mathrm{soft}}(\omega,\phi)
        \coloneqq\left(
        \left(\e^{s_{\phi}(X_r)}\odot\psi_{\omega}(X_r)\right)_{r=1}^{n},
        \left(\e^{s_{\phi}(X_r)}\right)_{r=1}^{n}
        \right)
        \in\Haug.
\end{equation}
Then
\begin{equation}\label{eq:augmented-moment}
        m_{\rho}^{\mathrm{soft}}
        =\int_{\Theta}F_{\mathrm{soft}}(\omega,\phi)\dd\rho(\omega,\phi)
        =(N_{\rho},D_{\rho}).
\end{equation}
On the open set $D_{r,t}>0$, define
\begin{equation}\label{eq:Gamma-normalization}
        \Gamma(N,D)_{r,t,c}\coloneqq\frac{N_{r,t,c}}{D_{r,t}}.
\end{equation}
The softmax-gated risk is
\begin{equation}\label{eq:R-softgate}
        \Rfunc_{\mathrm{softgate}}(N,D)
        \coloneqq \Rfunc_{\CE}(\Gamma(N,D)).
\end{equation}
Therefore
\begin{equation}\label{eq:J-softgate}
        \tf(\rho)=\Rfunc_{\mathrm{softgate}}\left(\int_{\Theta}F_{\mathrm{soft}}(\theta)\dd\rho(\theta)
        \right)
\end{equation}
is exactly of the Hilbert-valued form \eqref{eq:J-rho-theta}.  For the empirical measure $\rho^N=N^{-1}\sum_i\delta_{(\omega_i,\phi_i)}$, the factors $1/N$ cancel between numerator and denominator, and \eqref{eq:J-softgate} gives exactly \eqref{eq:finite-softmax-moe}.

\begin{lemma}[Local smoothness of the softmax-gate feature and loss]\label{lem:softgate-smoothness}
Assume $\norm{X_r}_{F}\le B_X$ for all training inputs, and assume the expert and router maps satisfy Lemma \ref{lem:transformer-local-smoothness}.  On each bounded parameter ball $\norm{\theta}_{\Theta}\le R$, the maps $F_{\mathrm{un}}$ and $F_{\mathrm{soft}}$ are $C^2$ with bounded first and second derivatives.  In the normalized case, if $D_{r,t}\ge\delta>0$ on the moment set reached by the trajectory, then $\Rfunc_{\mathrm{softgate}}$ is $C^2$ with bounded gradient and bounded Hessian on that set.
\end{lemma}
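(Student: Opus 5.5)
The plan is to obtain everything from Lemma \ref{lem:transformer-local-smoothness} together with the product and chain rules for Fr\'echet derivatives. The normalized loss will be handled by composing with the quotient map $\Gamma$ and then with $\Rfunc_{\CE}$, whose regularity is recorded in Proposition \ref{prop:ce-properties}. Throughout we fix a ball $\{\norm{\theta}_{\Theta}\le R\}$. Since there are only finitely many training inputs, each with $\norm{X_r}_F\le B_X$, every constant below is a maximum over finitely many $r$. The Hilbert norms on $\Hlogit$ and $\Haug$ are finite averages of Euclidean norms, so derivative bounds transfer coordinatewise.

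\textbf{Unnormalized features.} For $F_{\mathrm{un}}$, each component $g_\phi(X_r)\odot\psi_\omega(X_r)$ is a bilinear (tokenwise) product of two maps. The gate $g_\phi$ is assumed smooth with bounded derivatives up to order two on bounded sets, as for the router score. The expert logit $\psi_\omega$ is $C^2$ with uniform bounds by Lemma \ref{lem:transformer-local-smoothness}. The Leibniz rule gives
\[
D^2(g\odot\psi)=D^2g\odot\psi+2\,Dg\odot D\psi+g\odot D^2\psi .
\]
Each term is bounded by products of the sup norms of $g,\psi$ and their first two derivatives on the ball, and these sup norms are finite by continuity on a compact set.

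\textbf{Softmax-gate features.} For $F_{\mathrm{soft}}$ we do the same with $g=\e^{s_\phi}$. The exponential is smooth, and $s_\phi(X_r)$ ranges over a bounded set by Lemma \ref{lem:transformer-local-smoothness}. Hence $\e^{s_\phi}$ and its first two derivatives (which involve $\e^{s}$, $\e^{s}Ds$, $\e^{s}(Ds\otimes Ds+D^2s)$) are bounded on the ball. The denominator component $\e^{s_\phi(X_r)}$ is then covered by the same bound, and the numerator component follows from the Leibniz rule as before.

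\textbf{The normalized loss.} On the set $\{D_{r,t}\ge\delta\}$, the map $\Gamma(N,D)=N/D$ is $C^\infty$. Its derivatives are
\[
D\Gamma=\frac{1}{D}\,dN-\frac{N}{D^2}\,dD,
\]
and second derivatives involve $N/D^3$ and $1/D^2$. These are therefore bounded by powers of $\delta^{-1}$ times $\norm{N}$. We bound $\norm{N}$ on the moment set reached by the trajectory: it is an average of $F_{\mathrm{soft}}$ over parameters in the bounded region, and $F_{\mathrm{soft}}$ is bounded there by the previous step (trajectory boundedness as in (T4)). Then the chain rule for $\Rfunc_{\mathrm{softgate}}=\Rfunc_{\CE}\circ\Gamma$ gives
\[
\nabla\Rfunc_{\mathrm{softgate}}=D\Gamma^*\nabla\Rfunc_{\CE}(\Gamma),
\qquad
D^2\Rfunc_{\mathrm{softgate}}=D\Gamma^*\,D^2\Rfunc_{\CE}\,D\Gamma+\langle\nabla\Rfunc_{\CE},D^2\Gamma\rangle .
\]
Proposition \ref{prop:ce-properties} supplies $\norm{\nabla\Rfunc_{\CE}}\le\sqrt2$ and $\norm{D^2\Rfunc_{\CE}}\le1$ globally. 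Combined with the bounds on $D\Gamma$ and $D^2\Gamma$, this yields a bounded gradient and Hessian.

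\textbf{Main obstacle.} The main subtlety is the normalized case. The bounds on $\Gamma$ require both the lower bound $\delta$ and an upper bound on $N$. They also hold only on the moment set, not on all of $\Haug$, so the statement must be read as local to a $\delta$-neighborhood region. If this region is not convex, we would apply the chain-rule bounds pointwise and avoid any mean-value arguments that need segments to stay inside it.
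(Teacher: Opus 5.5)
Your proposal is correct and follows essentially the same route as the paper. The paper also obtains regularity of $F_{\mathrm{un}}$ and $F_{\mathrm{soft}}$ from Lemma~\ref{lem:transformer-local-smoothness} plus the product rule, using that the exponential has bounded derivatives on bounded score ranges. For the loss it likewise bounds $D\Gamma$ and $D^2\Gamma$ in terms of $\delta$ and a bound on $N$, then applies the chain rule to $\Rfunc_{\CE}\circ\Gamma$ with Proposition~\ref{prop:ce-properties}. Your explicit appeal to trajectory boundedness to control $\norm{N}$, and your explicit smoothness assumption on $g_\phi$, make precise what the paper uses only implicitly when it works ``on a set where $D_{r,t}\ge\delta$ and $N,D$ are bounded.''
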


\begin{proof}
The maps $\psi_{\omega}(X_r)$ and $s_{\phi}(X_r)$ are $C^2$ with bounded derivatives on bounded parameter balls by Lemma \ref{lem:transformer-local-smoothness}.  The exponential map has bounded derivatives on bounded score intervals.  Products of $\e^{s_{\phi}}$ with $\psi_{\omega}$ therefore have bounded derivatives up to order two.  This proves the statement for $F_{\mathrm{un}}$ and $F_{\mathrm{soft}}$.

For the normalized risk, $\Gamma(N,D)=N/D$ is $C^{\infty}$ on $D_{r,t}>0$.  On a set where $D_{r,t}\ge\delta$ and $N,D$ are bounded, its first and second derivatives are bounded by constants depending on $\delta$ and the bounds on $N,D$.  Proposition \ref{prop:ce-properties} gives bounded first and second derivatives of $\Rfunc_{\CE}$ on logit space.  The chain rule gives the stated bounds for $\Rfunc_{\mathrm{softgate}}=\Rfunc_{\CE}\circ\Gamma$.
\end{proof}

\subsubsection{Hard top-$k$ routing and noisy routing}\label{subsec:hard-routing}

Exact hard top-$k$ or switch routing uses the discontinuous map that selects the largest router scores.  This map is not covered by the smooth ODE, Hamiltonian, curvature, and propagation-of-chaos theorems above.  There are two mathematically consistent ways to connect hard routing to the present theory.

First, hard routing can be replaced by a smooth relaxation, such as softmax with temperature, Gumbel-softmax, or a differentiable expected router.  In the Gumbel-max case, the expected top-1 selection probabilities are precisely softmax probabilities, so the augmented feature \eqref{eq:augmented-feature} applies directly.  More generally, if the smoothed expected router weight is a $C^2$ function of $(X,\theta)$ on bounded sets, then it can be included in $F$ exactly as in \eqref{eq:unnormalized-gate-feature} or \eqref{eq:augmented-feature}.

Second, exact hard routing can be treated as a nonsmooth limit.  The limiting evolution is then a differential inclusion, analogous to the hard-Muon limit in Section \ref{sec:hard-limit-gated}.  At score ties, the router subdifferential or selection correspondence is set-valued.  A smooth-flow convergence proof must therefore be replaced by compactness plus graph-convergence arguments for the router selection map.

The MoE analysis of \citet{chen2022towards} emphasizes that sparse top-1 routing is discontinuous and that injected random noise smooths the routing probabilities.  Their smoothing lemma gives Lipschitz dependence of the route probabilities on the router scores when the noise density is bounded.  Such a smoothed expected router is compatible with the present Hilbert-valued framework whenever the resulting expected gate is used as the gate component of $F$.

\subsubsection{Specialization theorem}\label{subsec:specialization-theorem}

\begin{theorem}[Transformer MoE specialization with router parameters included]\label{thm:transformer-specialization-gated}
Consider the extended parameter space \eqref{eq:Theta-full-transformer} with product Frobenius geometry \eqref{eq:Theta-transformer-norm} and blockwise regularized Muon map \eqref{eq:Theta-transformer-Orth}.  Consider the single-head attention-plus-FFN expert \eqref{eq:single-head-attention}-\eqref{eq:expert-logit}, a smooth router score map \eqref{eq:router-score}, and the token-level cross-entropy loss \eqref{eq:cross-entropy}.  Let $F$ be either the unnormalized gate feature \eqref{eq:unnormalized-gate-feature} or the augmented softmax-normalized feature \eqref{eq:augmented-feature}.  Assume:
\begin{itemize}
        \item[(T1)] the training inputs are bounded, $\norm{X_r}_{F}\le B_X$;
        \item[(T2)] the activation and router score maps are $C^2$ in the parameters on bounded parameter sets;
        \item[(T3)] in the normalized-gate case, the denominator satisfies $D_{\rho,r,t}\ge\delta>0$ on the trajectory;
        \item[(T4)] the trajectory remains in a bounded parameter and momentum region for the time interval or asymptotic regime under consideration;
        \item[(T5)] for exponential convergence, the PL and upper-gradient assumptions \eqref{eq:PL-gated}-\eqref{eq:upper-gradient-gated} hold along the trajectory.
\end{itemize}
Then all constructions and results in Sections \ref{sec:gated-hilbert-functional}-\ref{sec:hard-limit-gated} apply to this transformer MoE.  In particular:
\begin{enumerate}
        \item the first variation is
        \[
            \frac{\delta\tf}{\delta\rho}(\rho)(\theta)=\ip{\nabla\Rfunc(m_{\rho})}{F(\theta)}_{\cH},
        \]
        with $\cH=\Hlogit$ in the unnormalized case and $\cH=\Haug$ in the normalized case;
        \item the Wasserstein force on the expert-router tuple is
        \[
            a_{\rho}(\theta)=DF(\theta)^{*}\nabla\Rfunc(m_{\rho})\in\Theta;
        \]
        \item the finite-$N$ regularized Muon scheme updates both expert and router blocks by
        \[
            P_{i,k+1}=\beta P_{i,k}+(1-\beta)a_i^N(\boldsymbol\theta_k),
            \qquad
            \theta_{i,k+1}=\theta_{i,k}-\eta\Ortheps^{\Theta}(P_{i,k+1});
        \]

        \item under the inertial scaling \eqref{eq:inertial-scaling-gated}, the finite-particle ODE limit, mean-field PDE, Hamiltonian formulation, dissipation identity, and hard-Muon subsequential limit hold as stated above on the bounded region specified by (T1)-(T4), while, propagation of chaos, as stated above, holds under the global Lipschitz Assumption \ref{ass:global-hilbert}, or under an explicitly stated localized compact-support version with uniform support bounds;
        
        \item under (T5) and the curvature assumptions, the convergence estimate \eqref{eq:convergence-estimate-gated} holds for the transformer MoE cross-entropy objective.
\end{enumerate}
\end{theorem}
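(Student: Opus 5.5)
The proof is a verification that the transformer MoE data fit the hypotheses of the abstract results of Sections \ref{sec:gated-hilbert-functional}--\ref{sec:hard-limit-gated}, applied in their localized form. Set $\cH=\Hlogit$ with $\Rfunc=\Rfunc_{\CE}$ in the unnormalized case, and $\cH=\Haug$ with $\Rfunc=\Rfunc_{\mathrm{softgate}}$ in the normalized case. The parameter space $\Theta$ of \eqref{eq:Theta-full-transformer} is a finite product of matrix blocks carrying the Frobenius product geometry \eqref{eq:Theta-transformer-norm}. Proposition \ref{prop:extended-fenchel-mirror} therefore applies verbatim, and gives the blockwise mirror map \eqref{eq:Theta-transformer-Orth} together with item 3 (the finite-$N$ scheme \eqref{eq:discrete-gated-muon-reduced}). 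Items 1 and 2 follow directly from Propositions \ref{prop:first-var-gated} and \ref{prop:particle-gradient-gated}. These need only Fréchet differentiability of $F$ and $\Rfunc$, which Lemma \ref{lem:softgate-smoothness} and Proposition \ref{prop:ce-properties} supply.

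The main step is to establish Assumption \ref{ass:localized-hilbert}, together with the second-order Assumption \ref{ass:second-order-gated}, on the bounded region of (T4).

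\emph{Feature map.} Under (T1)--(T2), Lemma \ref{lem:transformer-local-smoothness} and Lemma \ref{lem:softgate-smoothness} give that $F_{\mathrm{un}}$ and $F_{\mathrm{soft}}$ are $C^2$ on each ball $\norm{\theta}_\Theta\le R_\Theta$, with bounded $F$, $DF$ and $D^2F$. Since the training set is finite, $\cH$ is finite-dimensional, so these bounds hold in the $\cH$-norm. Lipschitz constants of $F$ and $DF$ on the ball follow by the mean value theorem. This yields $M_F(R_\Theta)$, $L_F$, $M_D$, $L_D$ and $M_{D,2}$.

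\emph{Outer loss, unnormalized case.} Proposition \ref{prop:ce-properties} gives global bounds $M_R=\sqrt2$ and $L_R=1$. The Hessian bound $\norm{\diag(s)-ss^\top}\le1$ gives $M_{R,2}$.

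\emph{Outer loss, normalized case.} (T3) places the moments in the set $\{D\ge\delta\}$ intersected with a bounded set. The bounded set comes from bounded $F$ on the parameter ball. On this set Lemma \ref{lem:softgate-smoothness} gives bounded $\nabla\Rfunc$ and $D^2\Rfunc$. The Lipschitz constant $L_R$ follows on the convex hull, or on a small neighborhood of the trajectory, as long as $\delta/2$ remains a lower bound there. I would state this restriction explicitly.

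The expected obstacle is that all the abstract theorems are proved under the global Assumption \ref{ass:global-hilbert}, while the transformer model is only locally smooth. I would handle this with a cutoff argument. Multiply $F$ by a smooth cutoff equal to $1$ on the ball reached by (T4), or modify $\Rfunc$ outside a neighborhood of the reachable moment set. The resulting globally Lipschitz surrogate system satisfies Assumption \ref{ass:global-hilbert}. Theorems \ref{thm:finite-ode-limit-gated}, \ref{thm:mean-field-gated}, \ref{thm:ham-form-gated}, \ref{thm:dissipation-gated} and \ref{thm:hard-limit-gated} then apply to the surrogate. By (T4), the surrogate trajectory coincides with the true trajectory, which gives item 4.

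Remarks \ref{rem:localized-ode-gated} and \ref{rem:localized-mf-gated} already provide the a priori confinement. Position speeds are bounded by $\sqrt{\qtot}$, and momenta obey a comparison inequality. So for finite horizons, (T4) is automatically satisfied for compactly supported $\mu_0$, except for the denominator condition (T3). For propagation of chaos, the initial data are i.i.d. with possibly unbounded support, so I would state it either under Assumption \ref{ass:global-hilbert} or for compactly supported $\mu_0$ with the cutoff argument. This is exactly the dichotomy stated in item 4.

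For item 5, I would invoke Lemma \ref{lem:kinetic-gated}, using the bounded momentum ball from (T4), to obtain $\kappa_K$, $\kappa_D$, $L_G$ and $\chi$. Lemma \ref{lem:curvature-gated}, with the constants found above, gives the curvature identity \eqref{eq:curvature-assumed-gated} with an explicit $\sigma$. Combining these with the PL and upper-gradient assumptions (T5), Theorem \ref{thm:convergence-gated} yields \eqref{eq:convergence-estimate-gated}. Nonnegativity $U_t\ge0$ holds because $J_\star$ is an infimum and $\Rfunc_{\CE}\ge0$.
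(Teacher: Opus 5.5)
Your proposal is correct and follows the paper's proof. Both verify Assumptions~\ref{ass:localized-hilbert} and~\ref{ass:second-order-gated} on the (T4) region using Proposition~\ref{prop:ce-properties} and Lemmas~\ref{lem:transformer-local-smoothness} and~\ref{lem:softgate-smoothness}, then apply the abstract propositions and theorems in localized form, with Lemmas~\ref{lem:kinetic-gated} and~\ref{lem:curvature-gated} and (T5) feeding Theorem~\ref{thm:convergence-gated}. Your cutoff-surrogate construction makes the paper's appeal to ``localized forms'' rigorous for all the cited results at once, and two small simplifications are available: $\{D\ge\delta\}$ intersected with a ball is already convex, so $L_R$ needs no $\delta/2$ margin, and (T3) follows from position confinement, since $D_{\rho,r,t}=\int \mathrm{e}^{s_\phi(X_r)_t}\,d\rho\ge \mathrm{e}^{-S}$ whenever $|s_\phi(X_r)_t|\le S$ on the bounded parameter ball.
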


\begin{proof}
The parameter space \eqref{eq:Theta-full-transformer} is a finite product of matrix spaces.  Proposition \ref{prop:extended-fenchel-mirror} therefore gives the product Fenchel duality and mirror-map interpretation on the full expert-router space.  Proposition \ref{prop:ce-properties} verifies the bounded-gradient and Lipschitz-gradient properties of token-level cross-entropy on logit space.  Lemma \ref{lem:transformer-local-smoothness} gives local $C^2$ regularity of the expert and router score maps on bounded parameter regions.  Lemma \ref{lem:softgate-smoothness} gives local $C^2$ regularity of the gated Hilbert feature maps and, in the normalized case, of the normalization-composed risk under the denominator lower bound.

These statements verify Assumption \ref{ass:localized-hilbert} and Assumption \ref{ass:second-order-gated} on the region reached by the trajectory.  Assumption (T4) provides the bounded trajectory region needed for the localized finite-horizon versions of well-posedness and ODE convergence.  Thus the first variation, particle gradient, discrete scheme, finite-$N$ ODE limit, mean-field equation, Hamiltonian formulation, dissipation identity, and hard-limit statements follow from Propositions \ref{prop:first-var-gated}, \ref{prop:particle-gradient-gated}, and Theorems \ref{thm:finite-ode-limit-gated}, \ref{thm:mean-field-gated}, \ref{thm:ham-form-gated}, \ref{thm:dissipation-gated}, and \ref{thm:hard-limit-gated}, using their localized forms where appropriate.  If global constants are imposed instead of localized constants, Theorem \ref{thm:poc-gated} gives propagation of chaos.  Under (T5), Theorem \ref{thm:convergence-gated} gives \eqref{eq:convergence-estimate-gated}.
\end{proof}

\subsubsection{Consequences for the gated transformer-MoE model}\label{sec:summary-gated}

A distribution over transformer experts with input-dependent routing is represented as a distribution over extended particles
\[
        \theta_i=(Q_i,K_i,V_i,O_i,W_{1,i},W_{2,i},\phi_i)\in\Thetaexp\times\Thetagate.
\]
The empirical law is
\[
        \rho_{\boldsymbol\theta}^{N}=\frac1N\sum_{i=1}^{N}\delta_{\theta_i}.
\]
For smooth unnormalized routing, the model output is encoded by $m_{\rho}=\int F_{\mathrm{un}}(\theta)\dd\rho(\theta)$.  For softmax-normalized routing across experts, the numerator and denominator are encoded by the augmented Hilbert moment $m_{\rho}^{\mathrm{soft}}=\int F_{\mathrm{soft}}(\theta)\dd\rho(\theta)$ and the output is recovered by the smooth normalization map $\Gamma$.  Thus input dependence of the gate is not an obstruction: it is part of the input-indexed Hilbert feature map.  The only obstruction to the smooth theory is discontinuity of exact hard top-$k$ selection; that case requires smoothing or a nonsmooth differential-inclusion treatment.

The mirror map and Fenchel conjugate are defined on the full space $\Theta$ rather than only on the expert matrices.  Hence the regularized Muon step is a genuine product-space mirror step for the pair consisting of expert and router parameters.  The Hamiltonian probability flow evolves on phase space $\Theta\times\Theta$ and dissipates according to
\[
        \frac{\dd}{\dd t}\Ham_{\eps,\gamma}(\mu_t)
        =-\gamma\int_{\Theta\times\Theta}\ip{p}{\Ortheps^{\Theta}(p)}_{\Theta}\dd\mu_t(\theta,p)\le0.
\]
Under the standard PL, upper-gradient, bounded-momentum, and bounded-curvature assumptions, the objective gap decays exponentially according to \eqref{eq:convergence-estimate-gated}.  Finite particles converge to the mean-field law by propagation of chaos under the global Lipschitz version of the assumptions, and the hard Muon dynamics is recovered as a subsequential nonsmooth limit as $\eps\downarrow0$.

\subsection{Synthetic experiments on finite-particle Muon dynamics}
\label{sec:synthetic-experiments}

We use two deterministic synthetic experiment classes to test the finite-particle dynamics developed above.  The goal is not to benchmark large-scale training, but to isolate the phenomena predicted by the Hamiltonian formulation. All reported runs are full-batch deterministic runs with seed $0$, double precision, zero initial momentum, $h=\eta=0.01$, $\gamma=1$, and therefore $\beta=1-\gamma h=0.99$.  The modified Lyapunov diagnostic uses $\alpha=0.01$.

For each setting we compare four update rules.  After the common momentum update
\begin{equation}
    P_{i,k+1}=\beta P_{i,k}+(1-\beta)a_{i,k},
\end{equation}
we update
\begin{equation}
    \theta_{i,k+1}=\theta_{i,k}-\eta G(P_{i,k+1}).
\end{equation}
The regularized Muon choice is $G(P)=\Ortheps(P)$, applied blockwise on product spaces.  We also include the ideal hard polar factor $G(P)=\Orth(P)$, a Newton-Schulz approximation to the polar factor using five fifth-order iterations, and the Euclidean momentum baseline $G(P)=P$.  For regularized Muon, the plotted $\varepsilon$ values are chosen to show both the smooth regime and the nearly-hard regime.

\subsubsection{Experiment 1: matrix mean matching}
\label{subsec:exp1-setup}

The first experiment is a linear mean-matching problem on a single matrix block.  The parameter space is $\Theta=\X=\R^{16\times 8}$ with Frobenius inner product.  We set
\begin{equation}
    F(W)=W,
    \qquad
    \Rfunc(A)=\frac12\norm{A-\bar W_\star}_F^2,
    \qquad
    \bar W_\star=\frac1M\sum_{j=1}^M W_{j,\star} .
    \label{eq:exp1-F-R}
\end{equation}
Thus
\begin{equation}
    J(\rho)=\frac12\norm{\int W\,\dd\rho(W)-\bar W_\star}_F^2
\end{equation}
and the finite-particle objective is
\begin{equation}
    J_N(W_1,\ldots,W_N)
    =\frac12\norm{\frac1N\sum_{i=1}^N W_i-\bar W_\star}_F^2 .
    \label{eq:exp1-JN}
\end{equation}
The mean-field force has the closed form
\begin{equation}
    a_{i,k}=\frac1N\sum_{\ell=1}^N W_{\ell,k}-\bar W_\star,
    \qquad i=1,\ldots,N,
    \label{eq:exp1-force}
\end{equation}
so all particles see the same force.  This makes the experiment a clean test of the momentum-to-update map rather than of modeling complexity.  Since $J_\star=0$ is attainable, the plotted objective is also the objective gap.

We report two target/approximation-particle choices: a single target particle with an overparameterized $10$-particle approximation, $(M,N)=(1,10)$, and a four-particle target with a $32$-particle approximation, $(M,N)=(4,32)$.  The target and initial matrices are sampled from centered Gaussian ensembles scaled by $1/\sqrt{8}$.  Each run is executed for $10{,}000$ iterations.  For $(M,N)=(1,10)$ the regularized Muon values are $\varepsilon\in\{1,3\cdot 10^{-2},10^{-3},10^{-8}\}$; for $(M,N)=(4,32)$ they are $\varepsilon\in\{1,10^{-1},10^{-2},10^{-4}\}$.

\subsubsection{Experiment 2: product-space teacher-student particles}
\label{subsec:exp2ab-setup}

The second experiment is a nonlinear teacher-student problem on a product matrix space.  Each particle is
\begin{equation}
    \theta_i=(A_i,B_i)\in \Theta=\R^{p\times r}\times \R^{r\times d},
    \qquad (d,r,p)=(10,6,4).
\end{equation}
The product-space inner product is
\begin{equation}
    \ip{(A,B)}{(\widetilde A,\widetilde B)}_{\Theta}
    =\ip{A}{\widetilde A}_F+\ip{B}{\widetilde B}_F .
\end{equation}
We draw frozen inputs $x_s\in\R^d$, $s=1,\ldots,S$, with $S=320$, and generate a frozen teacher output
\begin{equation}
    y_s=f_\star(x_s),
    \qquad
    f_\star(x)=\frac1M\sum_{j=1}^M A_{j,\star}
    \tanh\!\left(\frac{B_{j,\star}x}{\sqrt d}\right).
    \label{eq:exp2-teacher}
\end{equation}
The teacher matrices are sampled with $A_{j,\star}$ scaled by $1/\sqrt r$ and $B_{j,\star}$ scaled by $1/\sqrt d$.  The feature map is the empirical prediction tensor
\begin{equation}
    F(A,B)=\left(A\tanh\!\left(\frac{Bx_s}{\sqrt d}\right)\right)_{s=1}^S
    \in \R^{S\times p},
    \label{eq:exp2-F}
\end{equation}
and the outer loss is the squared empirical $L^2$ loss
\begin{equation}
    \Rfunc(Z)=\frac1{2Sp}\sum_{s=1}^S \norm{Z_s-y_s}_2^2 .
    \label{eq:exp2-R}
\end{equation}
Consequently,
\begin{equation}
    J_N((A_i,B_i)_{i=1}^N)
    =\frac1{2Sp}\sum_{s=1}^S
    \norm{\frac1N\sum_{i=1}^N A_i\tanh\!\left(\frac{B_i x_s}{\sqrt d}\right)-y_s}_2^2 .
    \label{eq:exp2-JN}
\end{equation}
The Muon map is applied separately to the $A$- and $B$-momentum blocks.  Unlike Experiment~1, this objective is nonconvex and the particles do not share a common force.

We use two particle choices.  The first is an overparameterized approximation to a small teacher, $(M,N)=(3,12)$.  The second is a matched-particle comparison, $(M,N)=(10,10)$.  In both cases the student is initialized randomly, rather than near the teacher, with initial scale $0.1$.  Each run is executed for $2{,}000$ iterations.  For $(M,N)=(3,12)$ the regularized Muon values are $\varepsilon\in\{10^{-1},10^{-3},10^{-4},10^{-5}\}$; for $(M,N)=(10,10)$ they are $\varepsilon\in\{10^{-1},10^{-3},10^{-5},10^{-7}\}$.

\begin{table}[t]
\centering
\small
\caption{Experimental configurations used in the main figures.  $M$ is the number of target/teacher particles and $N$ is the number of approximation particles.  All runs use $h=\eta=0.01$, $\gamma=1$, $\beta=0.99$, $\alpha=0.01$, full-batch gradients, and seed $0$.}
\label{tab:experiment-configurations}
\begin{tabular}{@{}llcccl@{}}
\toprule
Class & Parameter space & $M$ & $N$ & Iters. & Regularized $\varepsilon$ values shown \\
\midrule
Exp.~1 & $\R^{16\times 8}$ & $1$ & $10$ & $10{,}000$ & $1,\;3\cdot 10^{-2},\;10^{-3},\;10^{-8}$ \\
Exp.~1 & $\R^{16\times 8}$ & $4$ & $32$ & $10{,}000$ & $1,\;10^{-1},\;10^{-2},\;10^{-4}$ \\
Exp.~2 & $\R^{4\times 6}\times \R^{6\times 10}$ & $3$ & $12$ & $2{,}000$ & $10^{-1},\;10^{-3},\;10^{-4},\;10^{-5}$ \\
Exp.~2 & $\R^{4\times 6}\times \R^{6\times 10}$ & $10$ & $10$ & $2{,}000$ & $10^{-1},\;10^{-3},\;10^{-5},\;10^{-7}$ \\
\bottomrule
\end{tabular}
\end{table}

\subsubsection{Results and interpretation}
\label{subsec:experiment-results}

Figure~\ref{fig:exp1-main} shows the objective and Hamiltonian traces for the matrix mean-matching experiment.  The main qualitative point is that the smooth regularization changes the finite-step behavior near equilibrium.  The hard polar map and the Newton-Schulz polar approximation keep an essentially fixed normalized update direction whenever the momentum is nonzero; as a result, for this finite step size they settle into a small residual floor around $10^{-3}$ in objective value.  In contrast, the regularized map satisfies $\Ortheps(P)\approx P/\varepsilon$ when $\norm{P}$ is small relative to $\varepsilon$, so the update size shrinks near the optimum and the objective can decay to numerical precision.  The Euclidean momentum baseline also reaches numerical precision here because Experiment~1 is a quadratic mean-matching problem with a shared force; thus this experiment should be read as a diagnostic for the regularization mechanism, not as evidence that Muon is always faster than Euclidean momentum.

\begin{figure}[t]
\centering
\begin{minipage}{0.49\linewidth}
\centering
\includegraphics[width=\linewidth]{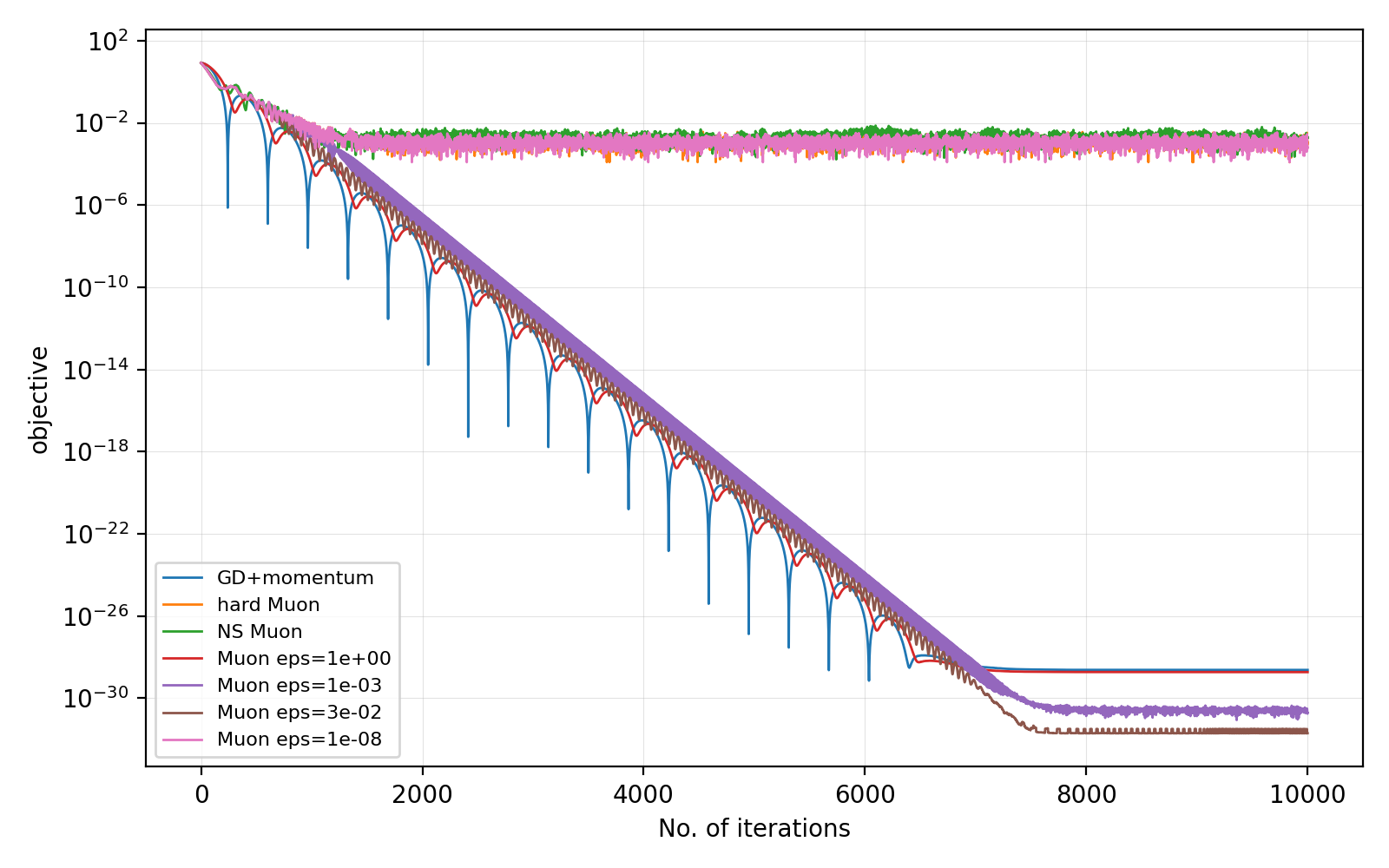}
\end{minipage}\hfill
\begin{minipage}{0.49\linewidth}
\centering
\includegraphics[width=\linewidth]{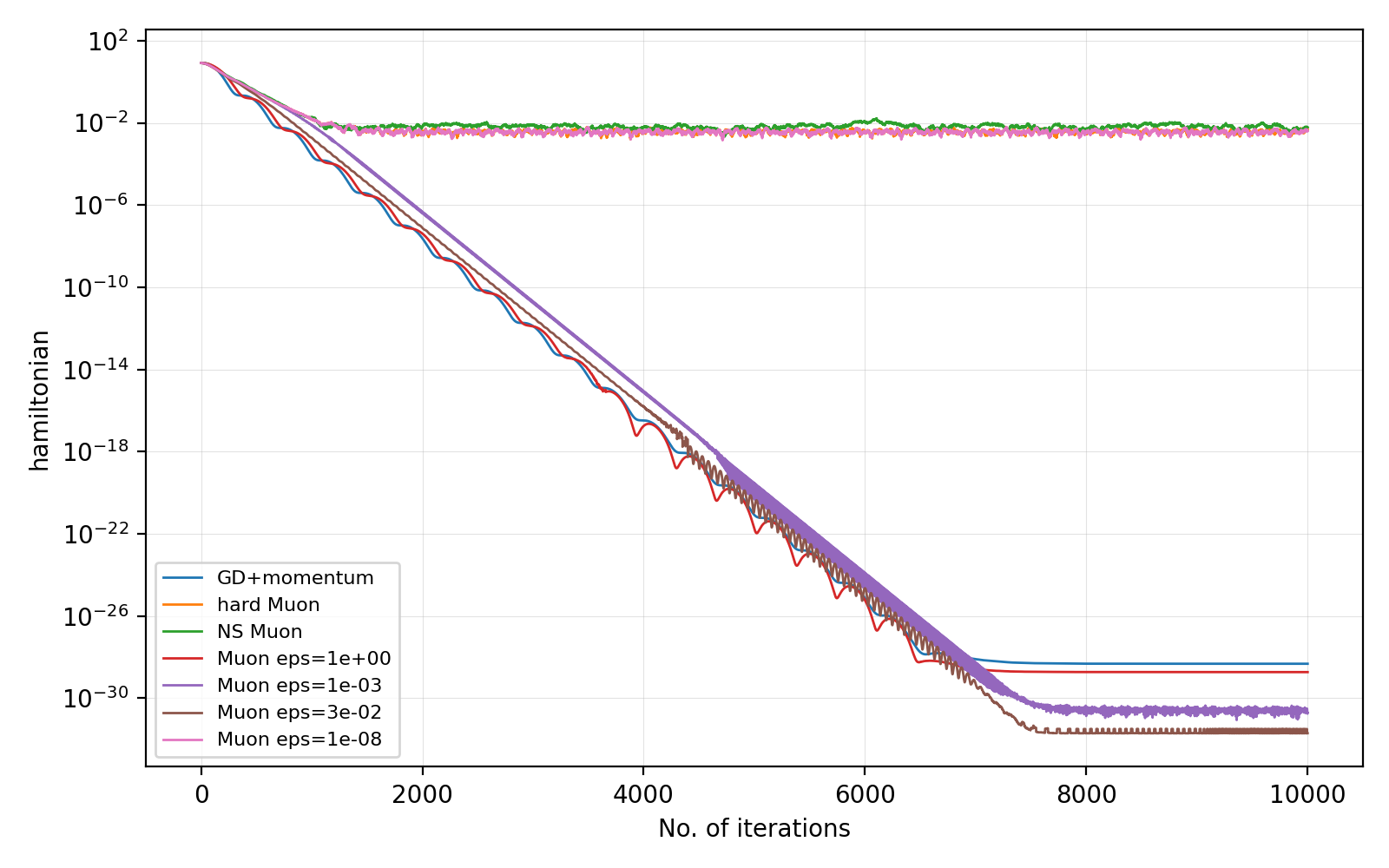}
\end{minipage}

\vspace{0.4em}
\begin{minipage}{0.49\linewidth}
\centering
\includegraphics[width=\linewidth]{Images/Exp1/4targetparticle/objective_experiment_exp1__matrix_shape_16x8__target_particles_4__particles_32__h_0.01__gamma_1.0.png}
\end{minipage}\hfill
\begin{minipage}{0.49\linewidth}
\centering
\includegraphics[width=\linewidth]{Images/Exp1/4targetparticle/hamiltonian_experiment_exp1__matrix_shape_16x8__target_particles_4__particles_32__h_0.01__gamma_1.0.png}
\end{minipage}
\caption{Experiment~1: matrix mean matching.  Top: $(M,N)=(1,10)$.  Bottom: $(M,N)=(4,32)$.  Left panels show $J_N$ and right panels show the Hamiltonian $K+\gamma J_N$ on a logarithmic scale.  Smooth regularized Muon with moderate $\varepsilon$ reaches numerical precision, while the hard and Newton-Schulz polar directions plateau at a finite-step residual floor.}
\label{fig:exp1-main}
\end{figure}

Figure~\ref{fig:exp2-main} shows the nonlinear product-space teacher-student experiment.  This setting is more representative of the product-space theory: the force depends on each particle, the parameter has two matrix blocks, and the loss is nonconvex.  The Euclidean momentum baseline barely decreases the objective over the plotted horizon.  Hard Muon and Newton-Schulz Muon reduce the loss by several orders of magnitude, confirming that spectral normalization is useful in this product-space particle model.  The regularized runs display the expected $\varepsilon$ tradeoff.  A large value such as $\varepsilon=10^{-1}$ is too smooth and behaves closer to a damped Euclidean update, while very small values approach the hard polar map.  Intermediate small values, especially $\varepsilon=10^{-5}$ in the plotted neural settings, give the best final losses among the displayed regularized runs.  The Hamiltonian panels follow the same qualitative decay pattern and provide the energy diagnostic predicted by the damped Hamiltonian identity.

\begin{figure}[t]
\centering
\begin{minipage}{0.49\linewidth}
\centering
\includegraphics[width=\linewidth]{Images/Exp2/3targetparticle/objective_experiment_exp2_ab_particles__target_particles_3__particles_12__h_0.01__gamma_1.0.png}
\end{minipage}\hfill
\begin{minipage}{0.49\linewidth}
\centering
\includegraphics[width=\linewidth]{Images/Exp2/3targetparticle/hamiltonian_experiment_exp2_ab_particles__target_particles_3__particles_12__h_0.01__gamma_1.0.png}
\end{minipage}

\vspace{0.4em}
\begin{minipage}{0.49\linewidth}
\centering
\includegraphics[width=\linewidth]{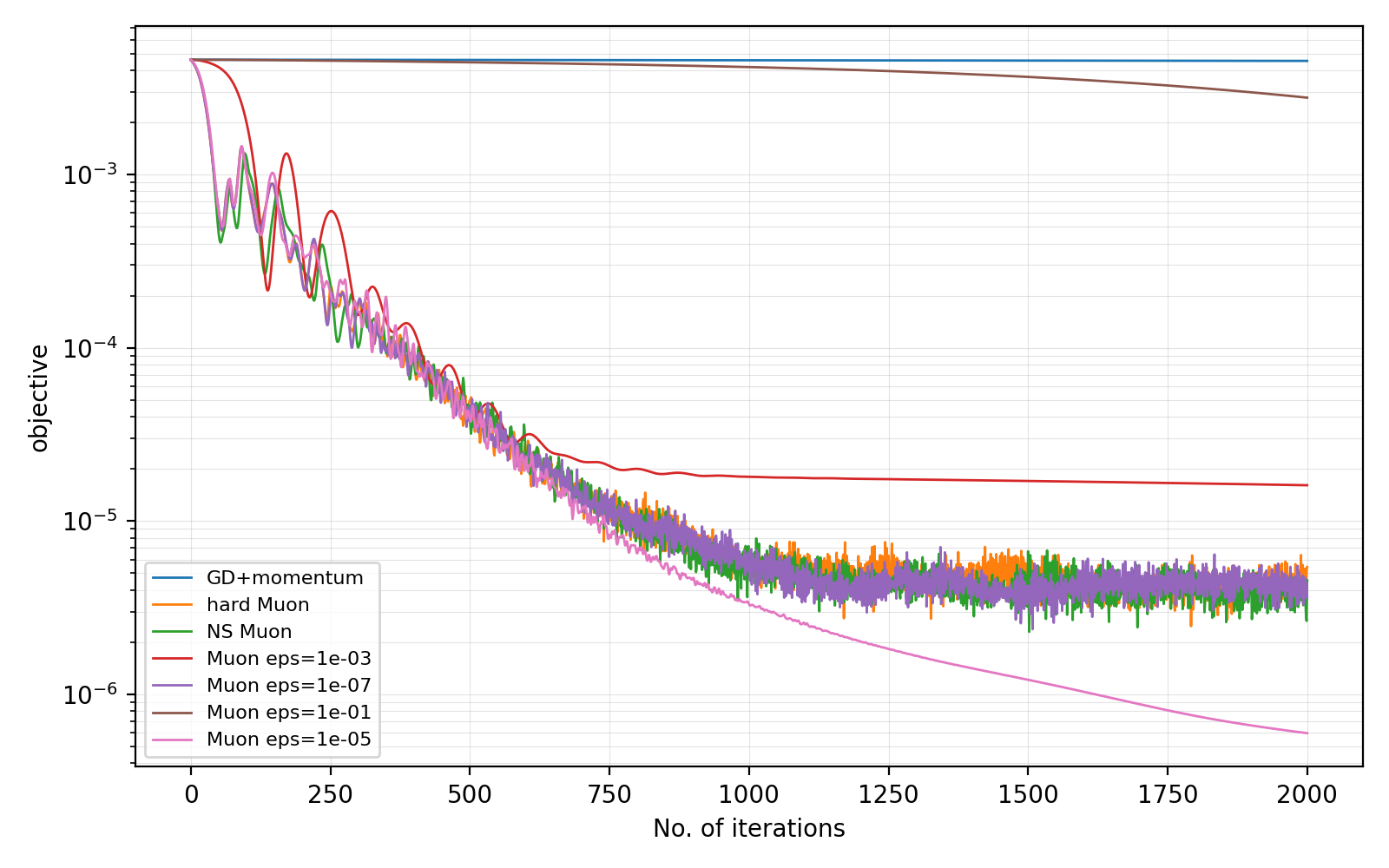}
\end{minipage}\hfill
\begin{minipage}{0.49\linewidth}
\centering
\includegraphics[width=\linewidth]{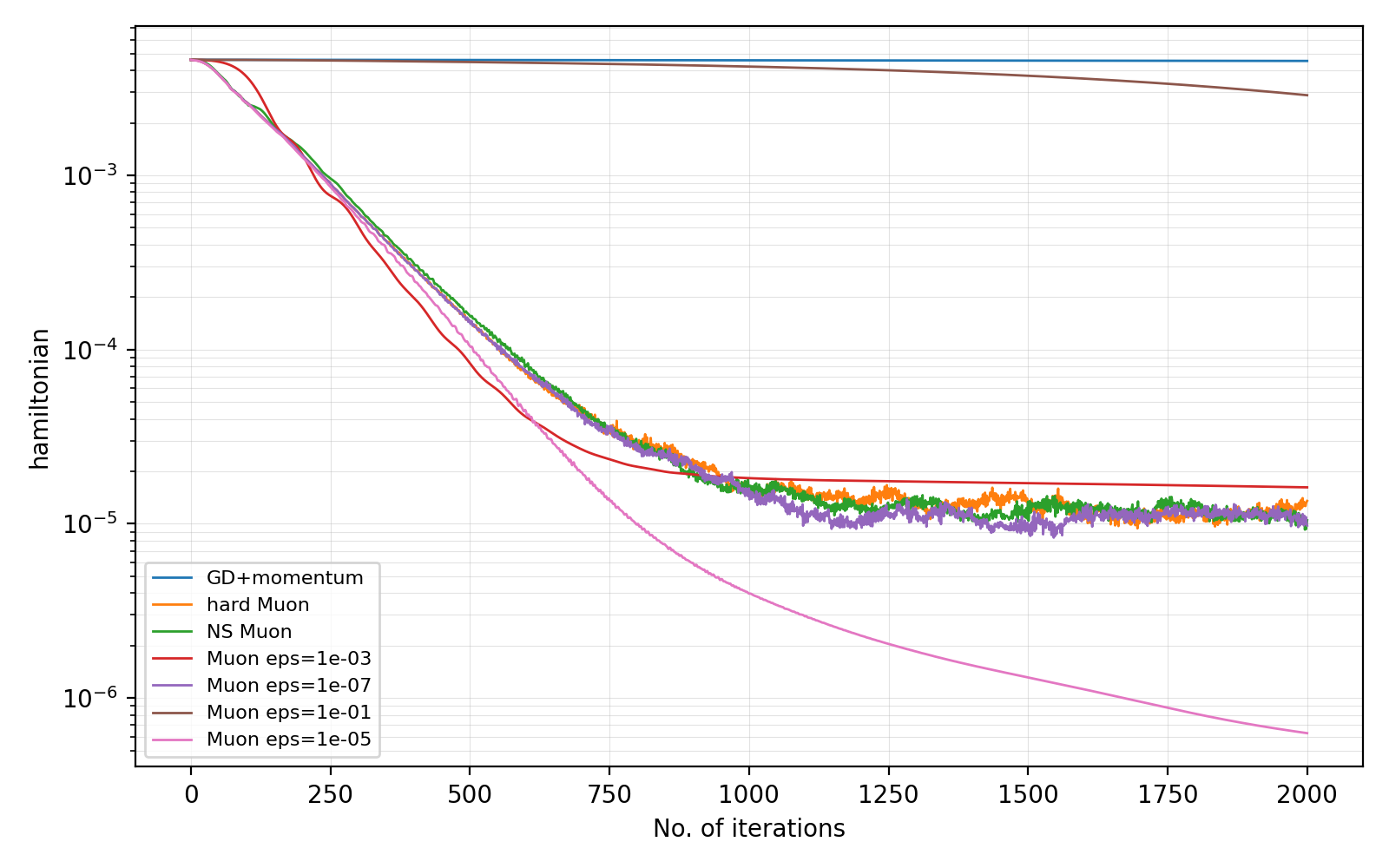}
\end{minipage}
\caption{Experiment~2: product-space teacher-student particles with $(d,r,p)=(10,6,4)$ and $S=320$ frozen inputs.  Top: overparameterized case $(M,N)=(3,12)$.  Bottom: matched-particle case $(M,N)=(10,10)$.  Left panels show $J_N$ and right panels show $K+\gamma J_N$.  Spectral Muon-type updates substantially outperform Euclidean momentum in this nonlinear product-space setting; the regularized map interpolates between overly smooth Euclidean-like behavior and the hard polar regime.}
\label{fig:exp2-main}
\end{figure}

\begin{table}[t]
\centering
\scriptsize
\caption{Final objective values at the last plotted iteration.  The ``best regularized'' column selects the best value among the plotted $\varepsilon$ values for that row.  These are single-seed deterministic diagnostics, so the table should be interpreted as a mechanistic comparison rather than a statistical benchmark.}
\label{tab:final-objectives}
\begin{tabular}{@{}lccccc@{}}
\toprule
Setting & GD+momentum & hard Muon & NS Muon & best regularized Muon & best $\varepsilon$ \\
\midrule
Exp.~1, $(M,N)=(1,10)$   & $2.4\cdot 10^{-29}$ & $1.1\cdot 10^{-3}$ & $1.9\cdot 10^{-3}$ & $2.0\cdot 10^{-32}$ & $3\cdot 10^{-2}$ \\
Exp.~1, $(M,N)=(4,32)$   & $8.5\cdot 10^{-31}$ & $1.1\cdot 10^{-3}$ & $1.4\cdot 10^{-3}$ & $1.5\cdot 10^{-33}$ & $10^{-2}$ \\
Exp.~2, $(M,N)=(3,12)$   & $1.1\cdot 10^{-2}$  & $4.4\cdot 10^{-6}$ & $2.8\cdot 10^{-6}$ & $1.3\cdot 10^{-7}$  & $10^{-5}$ \\
Exp.~2, $(M,N)=(10,10)$  & $4.5\cdot 10^{-3}$  & $5.4\cdot 10^{-6}$ & $4.6\cdot 10^{-6}$ & $6.0\cdot 10^{-7}$  & $10^{-5}$ \\
\bottomrule
\end{tabular}
\end{table}

The results support three conclusions.  First, the regularized operator is not merely a numerical perturbation of hard Muon: at finite step size it removes the non-vanishing update floor of the hard polar map near equilibrium.  Second, in the nonlinear product-space problem, the spectral Muon geometry is substantially more effective than raw Euclidean momentum.  Third, $\varepsilon$ has an interpretable role.  Large $\varepsilon$ over-smooths the map; extremely small $\varepsilon$ behaves like hard Muon; intermediate small $\varepsilon$ preserves the spectral acceleration while still allowing a smooth Hamiltonian interpretation.  These observations are consistent with the theoretical picture in which $\varepsilon>0$ supplies the smooth mirror map and the Hamiltonian dissipation diagnostic, while the hard Muon dynamics are recovered only as a singular limit.
\FloatBarrier
\subsection{Consequences of this probabilistic view and future directions}
The probability-flow perspective gives several concrete consequences that are difficult to see from the discrete update alone. First, it identifies the correct energy. For Euclidean gradient flow, the objective decreases directly. For Muon with momentum, the objective can increase transiently because momentum can be misaligned with the force. The Hamiltonian $K_t+\gamma J\left(\rho_t\right)$ is the quantity that dissipates exactly. This distinction is not a weakness of the method- it is the signature of acceleration.

Second, the theory explains the role of $\varepsilon$. A positive $\varepsilon$ gives a smooth, Lipschitz vector field and therefore a classical ODE/PDE theory. The hard optimizer is recovered as $\varepsilon \downarrow 0$, but the limiting equation is set-valued at rank-deficient momenta. Thus $\varepsilon$ has a dual interpretation - algorithmically it is a soft orthogonalization parameter, and analytically it is the regularity parameter that selects a smooth Hamiltonian flow before taking a nonsmooth limit.

Third, the particle formulation makes the mean-field limit explicit. The empirical law $N^{-1} \sum_i \delta_{\left(W_i, P_i\right)}$ is not an auxiliary construction, it is the object whose limit is the phase-space law $\mu_t$. Propagation of chaos then says that any fixed number of particles behaves asymptotically like independent nonlinear characteristics. This is the mathematical bridge between a finite collection of matrix blocks and a population optimization model.

Fourth, the alignment term $C_t$ gives an interpretable measure of whether momentum helps or hurts descent. Positive $C_t$ means the momentum is aligned with the force in the sense used by the Lyapunov proof. Negative $C_t$ means the system has kinetic energy but is initially moving against the current force. The convergence proof does not assume this term is always positive, instead, it subtracts a small multiple of $C_t$ and controls the resulting Lyapunov function.

Fifth, the transformer extension shows that input-dependent routing is compatible with probability measures over parameters. The route score does not have to be external to the measure. Once the particle is enlarged to $\theta=(\omega, \varphi)$, both expert outputs and router scores are ordinary components of a Hilbert-valued feature map. Softmax normalization across experts is represented by augmenting the feature moment with numerator and denominator components. This keeps the objective in the form $R\left(\int F \mathrm{~d} \rho\right)$ and lets the same first-variation and Hamiltonian machinery apply.

Several directions remain open. The most important is to verify or replace the PL and upper-gradient assumptions in concrete transformer regimes. A second direction is to prove uniqueness or selection principles for the hard-Muon differential inclusion at rank-deficient momenta. A third is to analyze stochastic mini-batch noise, finite Newton-Schulz approximation error, and adaptive choices of $\varepsilon$ within the Hamiltonian framework.


\end{document}